\newif\ifanon\anonfalse 
\definecolor{niceRed}{RGB}{190,38,38}
\definecolor{niceYellow}{HTML}{f5b400}
\definecolor{blueGrotto}{HTML}{059DC0}
\definecolor{royalBlue}{HTML}{057DCD}
\definecolor{navyBlue}{HTML}{0B579C}
\definecolor{limeGreen}{HTML}{81B622}
\definecolor{nicePurple}{HTML}{9c27b0}
\definecolor{lightRoyalBlue}{HTML}{def2ff}  
\definecolor{gold}{HTML}{ffa300}
\newcommand{\white}[1]{\textcolor{white}{#1}}
\pgfplotsset{compat=1.17}
\tikzset{
  myNodeFlex/.style={
    draw,
    rectangle,
    rounded corners,
    text centered,
    minimum height=1.5em,
  }
}
\tikzset{
  myNode/.style={
    draw,
    rectangle,
    rounded corners,
    text centered,
    minimum height=1.5em,
    minimum width=3cm,
    text width=5cm,    
  }
}
\tikzset{
  myNodeNarrow/.style={
    draw,
    rectangle,
    rounded corners,
    text centered,
    minimum height=1.5em,
    minimum width=1cm,
  }
}
\tikzset{
  myNodeWide/.style={
    draw,
    rectangle,
    rounded corners,
    text centered,
    minimum height=1.5em,
    minimum width=6cm,
  }
}
\theoremstyle{plain} 
\newtheorem{theorem}{Theorem}[section]
\newtheorem{corollary}[theorem]{Corollary}
\newtheorem{lemma}[theorem]{Lemma}
\newtheorem{fact}[theorem]{Fact}
\newtheorem{claim}[theorem]{Claim}
\newtheorem{assumption}{Assumption}
\newtheorem{inftheorem}[theorem]{Informal Theorem}
\newtheorem{infcorollary}[theorem]{Informal Corollary}
\newtheorem{definition}{Definition}
\newtheorem*{definition*}{Definition}
\newtheorem{problem}{Problem}
\newtheorem{openproblem}{Open Problem}
\theoremstyle{definition} 
\newtheorem{remark}[theorem]{Remark}
\theoremstyle{remark}
\crefname{section}{Section}{Sections}
\crefname{theorem}{Theorem}{Theorems}
\crefname{lemma}{Lemma}{Lemmas}
\crefname{problem}{Problem}{Problems}
\crefname{program}{Program}{Programs}
\crefname{definition}{Definition}{Definitions}
\crefname{conjecture}{Conjecture}{Conjectures}
\crefname{corollary}{Corollary}{Corollaries}
\crefname{construction}{Construction}{Constructions}
\crefname{conjecture}{Conjecture}{Conjectures}
\crefname{claim}{Claim}{Claims}
\crefname{observation}{Observation}{Observations}
\crefname{proposition}{Proposition}{Propositions}
\crefname{fact}{Fact}{Facts}
\crefname{question}{Question}{Questions}
\crefname{problem}{Problem}{Problems}
\crefname{remark}{Remark}{Remarks}
\crefname{example}{Example}{Examples}
\crefname{equation}{Equation}{Equations}
\crefname{appendix}{Section}{Sections}
\crefname{algorithm}{Algorithm}{Algorithms}
\crefname{model}{Model}{Models}
\crefname{figure}{Figure}{Figures}
\crefname{infassumption}{Informal Assumption}{Informal Assumptions}
\crefname{inftheorem}{Informal Theorem}{Informal Theorems}
\crefname{infdefinition}{Informal Definition}{Informal Definitions}
\crefname{minftheorem}{Main Informal Theorem}{Main Informal Theorems}
\crefname{maintheorem}{Main Theorem}{Main Theorems}
\crefname{assumption}{Assumption}{Assumptions}
\crefname{step}{Step}{Steps}
\crefname{result}{Result}{Results}
\crefname{event}{Event}{Events}
\crefname{none}{}{}
\newlist{asmpenum}{enumerate}{1} %
\setlist[asmpenum]{label={\arabic*.},ref=\theassumption.{\arabic*}}
\crefname{asmpenumi}{Assumption}{Assumptions}
\newcommand{\yesnum}{\refstepcounter{equation}\tag{\theequation}}
\newcommand{\tagnum}[2]{%
    \refstepcounter{equation}%
    \tag{#1) \ (\theequation}%
    \protected@write \@auxout {}{%
        \string \newlabel {#2}{{\theequation}{\thepage}{}{equation.\theequation}{}}%
    }%
}
\renewcommand{\eqref}[1]{\textup{\eqrefform@{\ref{#1}}}}
\let\eqrefform@\tagform@
\newcommand{\changetag}[1]{%
  \renewcommand\tagform@[1]{\maketag@@@{(\ignorespaces#1\unskip\@@italiccorr)}}%
}
\newcommand{\Stackrel}[2]{\stackrel{\mathmakebox[\widthof{\ensuremath{#2}}]{#1}}{#2}}
\newcommand{\quadtext}[1]{\quad\text{#1}\quad}
\newcommand{\qquadtext}[1]{\qquad\text{#1}\qquad}
\newcommand{\quadand}{\quadtext{and}}
\newcommand{\qquadand}{\qquadtext{and}}
\def\abs#1{\left| #1 \right|}
\def\sabs#1{| #1 |}
\newcommand{\given}{\;\middle|\;}
\newcommand{\sinparen}[1]{(#1)}
\newcommand{\binparen}[1]{\big(#1\big)}
\newcommand{\sinbrace}[1]{\{#1\}}
\newcommand{\inbrace}[1]{\left\{#1\right\}}
\newcommand{\inparen}[1]{\left(#1\right)}
\newcommand{\insquare}[1]{\left[#1\right]}
\newcommand{\inangle}[1]{\left\langle#1\right\rangle}
\let\norm\relax
\newcommand{\norm}[1]{\ensuremath{\left\lVert #1 \right\rVert}}
\newcommand{\midsepremove}{\aboverulesep = 0mm \belowrulesep = 0mm}
\newcommand{\midsepdefault}{\aboverulesep = 0.605mm \belowrulesep = 0.984mm}
\newcommand{\N}{\mathbb{N}}
\newcommand{\R}{\mathbb{R}}
\newcommand{\evE}{\ensuremath{\mathscr{E}}}
\newcommand{\evF}{\ensuremath{\mathscr{F}}}
\newcommand{\evG}{\ensuremath{\mathscr{G}}}
\newcommand{\E}{\operatornamewithlimits{\mathbb{E}}} 
\newcommand{\Ex}{\E}
\newcommand{\cov}{\ensuremath{\operatornamewithlimits{\rm Cov}}}
\newcommand\ind{\mathds{1}}
\newcommand{\argmin}{\operatornamewithlimits{arg\,min}}
\newcommand{\tv}[2]{\operatorname{d}_{\mathsf{TV}}\sinparen{#1,#2}}
\newcommand{\kl}[2]{\operatornamewithlimits{\mathsf{KL}}\sinparen{#1\|#2}} 
\newcommand{\chidiv}[2]{\chi^2\inparen{#1\|#2}}
\newcommand{\renyi}[3]{\cR_{#1}\inparen{#2\|#3}}
\newcommand{\zo}{\ensuremath{\inbrace{0, 1}}}
\newcommand{\sfrac}[2]{{#1/#2}} 
\newcommand{\nfrac}[2]{\nicefrac{#1}{#2}}
\newcommand{\opt}{\ensuremath{\mathrm{OPT}}}
\newcommand{\poly}{\mathrm{poly}}
\newcommand{\polylog}{\mathrm{polylog}}
\newcommand{\vc}{\textrm{\rm VC}}
\newcommand{\supp}{\operatorname{supp}}
\newcommand{\iid}{i.i.d.}
\newcommand{\unif}{{\rm Unif}}
\newcommand{\tick}{\ding{51}}
\let\cross\relax
\newcommand{\cross}{\ding{55}}
\newcommand{\eps}{\varepsilon}
\renewcommand{\epsilon}{\varepsilon}
\newcommand*{\tran}{{\mathpalette\@tran{}}}
\newcommand*{\@tran}[2]{\raisebox{\depth}{$\m@th#1\intercal$}}
\mathchardef\NABLA"272
\newcommand*{\Nabla}{\boldsymbol\NABLA}
\let\nabla\Nabla
\newcommand{\wh}[1]{\widehat{#1}}
\renewcommand{\tilde}{\widetilde}
\newcommand{\wt}[1]{\widetilde{#1}}
\newcommand{\customcal}[1]{\euscr{#1}}
\newcommand{\cB}{\customcal{B}}
\newcommand{\cD}{\customcal{D}}
\newcommand{\cE}{\customcal{E}}
\newcommand{\cL}{\customcal{L}}
\newcommand{\cM}{\customcal{M}}
\newcommand{\cN}{\customcal{N}}
\newcommand{\cP}{\customcal{P}} 
\newcommand{\cQ}{\customcal{Q}} 
\newcommand{\cR}{\customcal{R}}
\newcommand{\cX}{\customcal{X}}
\DeclareMathAlphabet{\mathdutchcal}{U}{dutchcal}{m}{n}
\SetMathAlphabet{\mathdutchcal}{bold}{U}{dutchcal}{b}{n}
\DeclareMathAlphabet{\mathdutchbcal}{U}{dutchcal}{b}{n}
\DeclareMathAlphabet\urwscr{U}{urwchancal}{b}{n}%
\DeclareMathAlphabet\rsfscr{U}{rsfso}{m}{n}
\DeclareMathAlphabet\euscr{U}{eus}{m}{n}
\DeclareMathAlphabet\stixcal{LS2}{stixcal}{m} {n}
\renewcommand{\paragraph}[1]{\medskip \noindent\textbf{#1}~}
\newcommand{\itparagraph}[1]{\medskip \noindent\textit{#1}~}
\newcommand{\ie}{\textit{i.e.}}
\newcommand{\eg}{\textit{e.g.}}
\newcommand{\hypo}[1]{\mathdutchcal{#1}}
\newcommand{\hyD}{\hypo{D}}
\newcommand{\hyF}{\hypo{F}}
\newcommand{\hyH}{\hypo{H}}
\newcommand{\hyP}{\hypo{P}}
\newcommand{\hyS}{\hypo{S}}
\newcommand{\pERM}{Pessimistic-ERM}
\newcommand{\lreg}{\textsf{L1-Regression}}
\newcommand{\constrainedreg}{$\mathsf{Constrained\text{-}Regression}$} %
\renewcommand{\d}{{\rm d}}
\newcommand{\optset}{{H^\star}}
\newcommand{\cDtrue}{\cD^\star}
\newcommand{\cDgiven}{\cD}
\newcommand{\cPtrue}{\cP^\star}
\newcommand{\unlabeled}{{{\cD}}}
\renewcommand{\tilde}{\widetilde}
\newcommand{\hstar}{h^\star}
\newcommand{\muStar}{\mu^\star}
\newcommand{\SigmaStar}{\Sigma^\star}
\newcolumntype{L}[1]{>{\raggedright\let\newline\\\arraybackslash\hspace{0pt}}m{#1}}
\newcolumntype{C}[1]{>{\centering\let\newline\\\arraybackslash\hspace{0pt}}m{#1}}
\newcolumntype{R}[1]{>{\raggedleft\let\newline\\\arraybackslash\hspace{0pt}}m{#1}}
\newcommand{\np}{\textsf{NP}} 
\newcommand\blfootnote[1]{%
  \begingroup
  \renewcommand\thefootnote{}\footnote{#1}%
  \addtocounter{footnote}{-1}%
  \endgroup
}
\title{Smoothed Analysis of Learning from Positive Samples}
\author{ 
        \centering
        \hspace{4mm}
        \begin{tabular}{C{4.75cm}C{4.75cm}C{4.75cm}}
        {\bf Jane H. Lee} & {\bf Anay Mehrotra} & {\bf Manolis Zampetakis}\\[2mm]
        Yale University & Stanford University & Yale University\\[-4mm]
        {\small\phantom{....................}} \mbox{\small\href{mailto:jane.h.lee@yale.edu}{\texttt{jane.h.lee@yale.edu}}} & {\small \phantom{............}} \mbox{\small\href{mailto:anaymehrotra1@gmail.com}{\texttt{anaymehrotra1@gmail.com}}} & \mbox{\small\href{mailto:manolis.zampetakis@yale.edu}{\texttt{manolis.zampetakis@yale.edu}}}
        \\
        \end{tabular} 
} 
\date{}
\begin{document}

\maketitle
\thispagestyle{empty}

\begin{abstract}
    Binary classification from positive-only samples is a variant of PAC learning where the learner receives \iid{} positively labeled samples and aims to learn a classifier that, with high probability, achieves low classification error.
    Previous work by \citet[STOC]{natarajan1987learning}, \citet[Thesis]{graus1989lower}, and \citet[Machine~Learning]{shvaytser1990positiveonly} characterized learnability in this setting and revealed a largely negative picture: almost no interesting classes, including two-dimensional halfspaces, are learnable from positive-only examples.
    This poses significant challenges for the plethora of applications of positive-only learning from bioinformatics to ecology, where practitioners rely on heuristics for learning. 
    
    In this work, we initiate a smoothed analysis of positive-only learning.
    We assume we have access to samples from a reference distribution $\cD$ such that the true data distribution $\cDtrue$ is \emph{smooth} with respect to it.
    Our first result demonstrates that, in stark contrast to the worst-case setting, all VC classes become learnable in the smoothed model, requiring $O(\nfrac{\vc}{\eps^2})$ positive samples to guarantee $\eps$-classification error.
    We then present a computationally efficient algorithm for any concept class that admits $\poly(\eps)$-approximation by degree-$k$ polynomials (whose range is lower-bounded by a constant) with respect to $\cD$ in the $L_1$-norm.
    The algorithm runs in time $\poly(d^{k}/\eps)$, which qualitatively matches the running time of the \lreg{} algorithm.
    This smoothed analysis contributes to the growing body of work designing better learning guarantees under smoothness  \citep[J.~ACM]{haghtalab2024smoothJACM} \cite[COLT]{pmlr-v247-chandrasekaran24a}.

    \blfootnote{{Accepted for presentation at the 58th ACM Symposium on Theory of Computing (STOC), 2026}}
    Our results also imply faster or more general algorithms for the following problems:
    \begin{enumerate}[leftmargin=5mm]
        \item \textbf{Estimation under unknown truncation,} where we give the first polynomial sample and time algorithm for estimating the parameters of an exponential family distribution from samples truncated to an unknown set $S^\star$ that is approximable by non-negativepolynomials in $L_1$-norm.
        This improves upon \cite[FOCS]{Kontonis2019EfficientTS} \cite[FOCS]{lee2024unknown}, which required strong approximation with respect to $L_2$.
        \item \textbf{Truncation detection,} where we present the first algorithm for detecting whether given samples have been truncated (or not) for a broad class of distributions, including non-product distributions. 
        This improves upon \cite[STOC]{de2024detecting} who were limited to product distributions.
        \item \textbf{Learning with a list of reference distributions,} as a corollary of our main result on smoothed analysis. We obtain analogous sample and computational complexity results in the more general setting where we do not have access to (samples from) a reference distribution $\cD$ but rather only have access to samples from a list of $O(1)$ distributions one of which witnesses the smoothness of $\cDtrue$. 
        This naturally arises if list-decoding algorithms are used to learn samplers for $\cDtrue$ from corrupted data.
    \end{enumerate}

\end{abstract}
 
\clearpage

\thispagestyle{empty}
{   
    \tableofcontents
}
\thispagestyle{empty}

\clearpage
\pagenumbering{arabic}

    \vspace{-4mm}
\section{Introduction} \label{sec:intro}
    \vspace{-2mm}
    
    In the celebrated PAC Learning framework \cite{valiant1984theory}, the goal is to learn a binary concept from perfectly labeled samples by minimizing misclassification error of future predictions. 
    In many real-world settings, however, while positively labeled samples are readily available, negatively labeled samples are scarce or costly to obtain. {This was motivated as a problem even in the very first papers of \citet{valiant1984deductive} where he discusses the learning task of classifying elephants from non-elephants and he mentions: \emph{``While it may be reasonable to discuss the distribution of the attributes of elephants, we may prefer not discussing the distribution of the attributes of non-elephants.''}}
    This issue arises, also, in medical diagnosis, where confirmed cases of a condition are recorded but healthy controls are unlabeled \cite{elkan2008pu}, or in web search and recommendation systems, where user engagement indicates positive feedback but non-engagement is ambiguous \cite{liu2003building}. 
    These practical challenges can be modeled through a natural extension of the PAC framework by \citet{natarajan1987learning}, where the learner only observes samples drawn from the underlying distribution of positive data.
    \vspace{-7mm}
    \begin{definition}[PAC Learning with Positive-Only Samples; \cite{natarajan1987learning}]
        \label{def:positiveOnly}
         Let $\cDtrue$ be an unknown distribution over the feature space $\cX$ and let $\hstar \in \hyH$ be the target concept.
         The learner receives $n$ i.i.d.\ samples drawn according to the conditional distribution $\cPtrue = \cDtrue {\mid} (\hstar(X) {=} 1)$, that is, the distribution $\cDtrue$ conditioned on the event that $\hstar(X) = 1$.
         As in the standard PAC model, the goal is to output a hypothesis $\wh{h}\colon \cX\to \zo$ that achieves small misclassification error with respect to the original distribution $\cDtrue$, \ie{}, 
         \vspace{-2mm}
         \[
            \Pr\nolimits_{X\sim \cDtrue}\sinparen{\wh{h}(X)\neq \hstar(X)} \to 0 \quad \text{as} \quad n \to \infty\,. \vspace{-2mm}
         \]
        Observe that although the samples are drawn from $\cPtrue$ the performance of $\wh{h}$ is evaluated in both the positive and the negative regions.
        \vspace{-5pt}
    \end{definition}
     \citet{natarajan1987learning} was the first to study \textit{learning from positive samples} and he characterized the classes that can be \textit{properly} PAC learned from only positive samples.\footnote{\citet{natarajan1987learning}'s characterization also requires the learners' errors to be one-sided and, in particular, have zero false positives. Later works relaxed this requirement (for improper learning) \citep{shvaytser1990positiveonly,kivinen1995one,graus1989lower}.}
     Subsequently, \citet{shvaytser1990positiveonly,kivinen1995one,graus1989lower} completed the characterization for improper learners and bounded the sample complexity for the problem. These results are perceived mostly as negative results since they demonstrated that even fundamental classes such as two-dimensional halfspaces cannot be properly learned from positive-only samples.
        
    To bypass this worst-case bottleneck, and due to the practical relevance of learning from positive-only data, several works explored \emph{average-case} versions of the problem.
    For instance, a line of work (\eg{}, \cite{frieze1996linearTransformations,Nguyen2009learningParallelepiped,anderson2013simplices,de2015satisfying,Kontonis2019EfficientTS,canonne2020satisfying,he2023robustMoments,lee2024unknown}) designs computationally efficient algorithms for learning specific hypothesis classes (\eg{}, simplices, parallelepipeds, halfspaces, polynomial threshold functions, or concepts with small \textit{surface area}) from positive samples, when crucially, the underlying distribution $\cDtrue$ is typically assumed to be either the uniform distribution on the Boolean Cube or Gaussian.\footnote{We note that the notion of error in some of these works is slightly different from the one in \cref{def:positiveOnly}; \eg{}, \cite{anderson2013simplices} define the error of $\wh{h}$ as $\tv{\unif{}(\wh{h})}{\unif{}(\hstar)}$ where $\unif{}(h)$ is the uniform distribution over $\inbrace{x\in \cX\colon h(x)=1}.$}
    \enlargethispage{\baselineskip}

    In the majority of the many practical applications of learning from positive-only data, parametric assumptions on $\cDtrue$ naturally can be unrealistic.
    Indeed, the negative results to positive-only learning are a significant barrier for the use of machine learning in critical applications where negative examples are unavailable \citep{jaskie2022positive}.
    In response, a substantial body of work in applied sciences, from bioinformatics \cite{ming2016pupylation,pengyi2019adaSampling,zhao2008geneFunction,xiao2008niologicalSequence,yang2012positive,luigi2010geneRegulatory} to ecology \cite{steven2004ecologyMaxEntropy}, has developed heuristics for learning from positive-only samples.
    Sometimes these heuristics come with rigorous statistical guarantees but, then as the works above, require strong additional assumptions such as exact access to samples from the marginal distribution $\cDtrue$, which, naturally, can be unavailable in practice (see \cref{sec:realWorldApplications} for a discussion).

    Hence, all existing works on positive-only learning with provable guarantees either require strong assumptions on $\cDtrue$ (\eg{}, Gaussianity or exact sample access) or are limited to restrictive hypotheses classes, raising the following fundamental question:
    \begin{mdframed}
        \emph{Can we learn general concept classes from positive-only samples without strong assumptions on $\cDtrue$?}
    \end{mdframed}
    \vspace{-7pt}
    
    \paragraph{Our Contribution.} To answer this question we initiate the study of \textit{smoothed analysis of learning from positive-only samples} which lies in between the worst-case analysis of positive-only learning where no assumptions are made on $\cDtrue$ (but even halfspaces are not learnable), and the average-case analysis where many classes are learnable (but strong assumptions are made on $\cDtrue$, \eg{}, Gaussianity). Our work conceptually follows earlier smoothed analysis of learning problems (\eg{}, \cite{haghtalab2024smoothJACM}) where the goal is to bypass worst-case negative results for online learning. In particular, we only make the assumption that the underlying distribution $\cDtrue$ is \textit{smooth} with respect to a reference distribution $\cD$\footnote{If $\cD{=}\cDtrue$, then smoothness holds; hence, the smoothed model strictly generalizes the assumption that $\cDtrue$ is known. Furthermore, if $\cDtrue$ is Gaussian, we can find an appropriate reference measure $\cD$ under minimal assumptions (see, \eg{}, \cite{Kontonis2019EfficientTS}) so the smoothed model also generalizes Gaussianity of $\cD$ up to these minimal assumptions.} (see \cref{asmp:smoothness}).
    
    We show that, surprisingly, the smooth model is just as tractable as the average-case model. In particular, our main results are:
    \vspace{-4pt}
    \begin{enumerate}[leftmargin=15pt,itemsep=-1pt]
      \item \textbf{Learnability of VC-classes from Positive-only Data (\cref{thm:sampleComplexity}).} We show that any class $\hyH$ with $\vc{}(\hyH)<\infty$ can be learned with $O(\vc{}(\hyH)/\eps^2)$ positive samples in the smoothed model.
      \item \textbf{Efficient Learnability from Positive-only Data (\cref{thm:main}).} We show that any class $\hyH$ approximable by non-negativepolynomials in $L_1$-norm can be computationally efficiently learned in the smoothed model.
      \vspace{-4pt}
     \end{enumerate}
     One important aspect of our results is that we place only mild assumptions about the knowledge of the reference measure $\cD$. Parallels can be made to the line of work on smoothed analysis of online learning where the first papers assumed perfect knowledge of the reference measure $\cD$ \cite{haghtalab2020smooth,haghtalab2024smoothJACM} but follow-up works eliminated the knowledge of $\cD$ completely \cite{block2024performanceempiricalriskminimization,blanchard2024agnostic}. Our results only rely on the mild assumption that we have sample access to $\cD$ and otherwise $\cD$ can be arbitrary and we do not need to know, \eg{}, its density. Because of this very mild assumption on $\cD$ our results have also implication for other important problems in learning theory:
     \vspace{-3pt}
     \enlargethispage{\baselineskip}
     \begin{enumerate}[leftmargin=15pt,itemsep=-3pt]
         \item[$\blacktriangleright$] \textbf{Truncated Statistics (\cref{sec:intro:application:truncation}).} Our positive-only learning results improve the estimation of high-dimensional distributions under unknown truncation or censoring \cite{daskalakis2018efficient, Kontonis2019EfficientTS}.
         \item[$\blacktriangleright$] \textbf{Detecting Truncation (\cref{sec:intro:application:detection}).} Our positive-only learning can also be used in detection algorithms of whether a dataset has been subject to censoring \cite{de2023detectingConvex, de2024detecting}.
        \item[$\blacktriangleright$] \textbf{Learning with a list of reference distributions (\cref{sec:intro:application:listDecoding}).}
            Finally, we generalize our algorithms to work when we do not have sample access to the reference distribution $\cD$, but only have a list of $O(1)$ distributions one of which witnesses the smoothness of $\cDtrue$ (\cref{sec:intro:application:listDecoding}).
            This naturally arises when a sampler for $\cDtrue$ must be learned from a data-source with high-corruption rate necessitating the use of list decoding algorithms.
     \end{enumerate}

     \paragraph{Roadmap.} 
        In \cref{sec:intro:PIU}, we give a more precise formulation of the smoothed model and then provide an informal statement of our main results in \cref{sec:informal}.
        This is followed in \cref{sec:intro:application} by informal statements of applications and generalizations that we highlighted above.
        Next, we overview the key ideas in our proof in \cref{sec:technicalOverview}. 
        Finally, we discuss some real-world applications where positive-only learning arises but $\cD$ is not exactly known (\cref{sec:intro:realWorldApplications}), open problems (\cref{sec:intro:takeawaysOpenProblems}), and additional related work (\cref{sec:intro:relatedWorks}).

\subsection{Smoothed Analysis of Positive-Only Learning} 
\label{sec:intro:PIU}
    Toward developing a more realistic theoretical framework that yields better practical algorithms, we adopt the smoothed analysis framework introduced by \citet{spielman2004smooth}. 
    Smoothed analysis was originally developed to explain why algorithms with poor worst-case complexity often succeed in practice: it studies algorithm performance under small perturbations of worst-case inputs, effectively smoothing the input distribution.
    In learning theory, this philosophy has been adapted by requiring that the unknown data distribution $\cDtrue$ is smoothed with respect to a reference measure $\cD$. The smoothness condition that we use in this paper is more relaxed than usual the notion of smoothness that has been used in learning theory by \cite{haghtalab2024smoothJACM,block2024performanceempiricalriskminimization,blanchard2024agnostic}, and places less assumptions on the distribution $\cDtrue$.

\begin{restatable}[Generalized Smoothness]{assumption}{smoothness}\label{asmp:smoothness}
    There are known constants $\sigma \in (0,1]$ and $q\geq 1$ and a distribution $\cD$, such that, $\cDtrue$ is smooth with respect to $\cD$ in the following sense: 
            for any measurable set $S$,
            \[
                \cDtrue(S)\leq \frac{1}{\sigma}\cdot \cD(S)^{\sfrac{1}{q}}\,.
            \]
    {We often refer to $\cDtrue$ as the unlabeled distribution and $\cD$ as the imperfect unlabeled distribution.}
\end{restatable}
This assumption upper bounds the unlabeled distribution's mass on any set $S$ in terms of the reference or imperfect unlabeled distribution's mass.
To gain some intuition, suppose that $\cD$ is a uniform mixture of the unlabeled distribution $\cDtrue$ and some other ``bad'' distribution $\cB$, \ie{}, $\cD=0.5\,\cDtrue+0.5\,\cB$.
Then, \mbox{for any set $S$, $\cDtrue(S) \leq 2\, \cD(S)$ and, hence, smoothness holds with $(\sigma,q)=(\nfrac{1}{2},1)$.}
 
    For $q=1$, this assumption reduces to the notion of smoothness considered by recent works on smoothed online learning \cite{manthey2020smoothed,haghtalab2022efficientSmooth,haghtalab2020smooth,haghtalab2024smoothJACM,block2024performanceempiricalriskminimization,blanchard2024agnostic}. 
When $q=2$, generalized smoothness is implied by $\chidiv{\cDtrue}{\cD} = O(1)$ and, for larger $q$, it is implied by bounds on the suitable Rényi divergence (see \cref{sec:distCloseness}).\footnote{The $\chi^2$-divergence of $\cP$ with respect to $\cQ$ is $\chidiv{\cP}{\cQ}=\int_x \inparen{\nfrac{\d \cP(x)}{\d \cQ(x)}}^2 \d \cQ(x) - 1$. We define Rényi divergences in \cref{sec:preliminary}, although understanding their definition is not crucial for our results.} 

As we will present in the next section, generalized smoothness suffices to design sample-efficient algorithms. For our computationally efficient algorithms though, we need to additionally assume that the positive sample set $\optset$ has non-trivial mass.
\begin{restatable}[Non-Trivial Fraction of Positives]{assumption}{posFraction}\label{asmp:posFraction}
    For some known $\alpha>0$, $\Pr_{x\sim \cDtrue}\inparen{h^\star(x)=1}\geq \alpha$.
\end{restatable} 

\noindent 
    This assumption is also required by prior works on positive-only learning that design computationally efficient algorithms (\eg{}, \cite{Kontonis2019EfficientTS,lee2024unknown}).
    We note that the worst-case impossibility results for learning with positive-only samples continue to hold even under \cref{asmp:posFraction}; for completeness, we prove this in \cref{sec:impossibility}.

\subsection{Our Main Results}\label{sec:informal}
In this section, we present our main results on smoothed analysis for positive-only learning.
Then, in \cref{sec:intro:application}, we present the applications of these results to other learning theory problems, as we highlighted before. %
\subsubsection{Sample Efficiency of Smoothed Positive-Only Learning}
Our first result provides the sample complexity of positive-only learning in the smoothed model.
\begin{restatable}[Sample Complexity]{theorem}{sampleComplexity}
    \label{infthm:sampleComplexity}\label{thm:sampleComplexity}
    Suppose \cref{asmp:smoothness} holds. %
    Fix any $\eps,\delta\in (0,\nfrac{1}{2})$.
    There is an algorithm that, given $\eps,\delta$ and $n=\wt{O}\binparen{\!\inparen{\vc{}(\hyH)+\log{\nfrac{1}{\delta}}} / {{(\eps \sigma)}}^{2q}}$ independent samples from $\cPtrue$ and $\cD$, outputs a hypothesis $\wh{h}$ such that, with probability $1-\delta$,
    \[ 
        \Pr\nolimits_{\cDtrue}\binparen{\wh{h}(x)\neq h^\star(x)}\leq \eps\,.
    \]
\end{restatable}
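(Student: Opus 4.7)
The plan is to analyze a constrained ERM over $\hyH$. Given $n_P \approx n_D \approx n$ independent samples from $\cPtrue$ and $\cDgiven$, form empirical measures $\wh{\cPtrue}$ and $\wh{\cDgiven}$ and output
\[
\hat h \in \argmin_{h \in \hyH,\ \wh{\cPtrue}(\{h = 0\}) \leq \tau}\ \wh{\cDgiven}(\{h(x) = 1\}),
\]
with slack $\tau = c(\eps\sigma)^q$ for a small constant $c > 0$. Because $\cPtrue(\{h^\star = 0\}) = 0$, the target $h^\star$ is always feasible, so ERM optimality yields $\wh{\cDgiven}(\{\hat h = 1\}) \leq \wh{\cDgiven}(\{h^\star = 1\})$.

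The first step is standard VC uniform convergence on both sample streams: choosing $n = \wt O\big((\vc(\hyH)+\log(\nfrac{1}{\delta}))/\tau^2\big)$, the empirical probabilities $\wh{\cPtrue}(\{h=1\})$ and $\wh{\cDgiven}(\{h=1\})$ differ from their true counterparts by at most $\tau$, uniformly over $h \in \hyH$, with probability at least $1 - \delta$. Substituting $\tau = c(\eps\sigma)^q$ produces the claimed bound $n = \wt O\big((\eps\sigma)^{-2q}(\vc(\hyH)+\log(\nfrac{1}{\delta}))\big)$.

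Next, I decompose the error. Writing $A = \{\hat h = 1\}$ and $B = \{h^\star = 1\}$, the false-negative term is controlled by the positive-sample constraint: $\cDtrue(B \setminus A) = \cDtrue(B) \cdot \cPtrue(\{\hat h = 0\}) \leq 2\tau \leq \nfrac{\eps}{2}$. For the false-positive term, \cref{asmp:smoothness} gives $\cDtrue(A \setminus B) \leq \sigma^{-1}\cdot \cDgiven(A\setminus B)^{1/q}$, so it remains to show $\cDgiven(A \setminus B) \leq (\nfrac{\eps\sigma}{2})^q$. Combining the ERM optimality $\cDgiven(A) \leq \cDgiven(B) + 2\tau$ with the identity $\cDgiven(A\setminus B) - \cDgiven(B\setminus A) = \cDgiven(A)-\cDgiven(B)$ gives $\cDgiven(A\setminus B) \leq 2\tau + \cDgiven(B\setminus A)$, reducing the task to bounding the $\cDgiven$-mass of the false-negative region $B \setminus A$.

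The main obstacle I foresee is precisely this last bound on $\cDgiven(B \setminus A)$: generalized smoothness is one-sided and only yields $\cDgiven(B\setminus A) \geq (\sigma\,\cDtrue(B\setminus A))^q$, which is in the wrong direction, so small $\cDtrue$-mass of the false-negative region does not automatically translate to small $\cDgiven$-mass. To close the argument I plan to invoke uniform convergence also on the symmetric-difference class $\hyH \triangle \hyH$ (still of VC dimension $O(\vc(\hyH))$) and exploit ERM optimality more finely: any $\hat h$ for which $\cDgiven(B\setminus A)$ were large would admit a feasible competitor of strictly smaller $\wh{\cDgiven}$-mass, contradicting the choice of $\hat h$. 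Making this competitor argument precise — essentially showing that the minimum-$\cDgiven$-mass ERM cannot leave a large $\cDgiven$-hole inside $B$ while still covering the positive sample — is where I expect to invest most of the technical effort; if the combinatorial argument is insufficient, a fallback is to strengthen the ERM with a stabilization penalty built from an inner cover of $B$ extracted from the positive samples.
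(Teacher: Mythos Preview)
Your plan has a genuine gap: the algorithm you analyze is a \emph{proper} learner (it outputs $\hat h\in\hyH$), and proper PIU learning is provably impossible under \cref{asmp:smoothness} alone. The paper makes this explicit (see \cref{rem:necessityOfImproper}), and the obstacle you flag --- that one-sided smoothness gives no control on $\cDgiven(B\setminus A)$ --- is exactly the manifestation of this impossibility, not a technicality to be patched.

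Concretely, take $\hyH$ to be one-dimensional halfspaces, $\cDgiven=\unif[-2,2]$, and consider two PIU instances: $\cDtrue=\unif[-2,1]$ with $h^\star=\{x\ge -1\}$, and $\cDtrue=\unif[-1,2]$ with $h^\star=\{x\le 1\}$. Both yield the same $\cPtrue=\unif[-1,1]$ and the same $\cDgiven$, so your algorithm sees identical data. Both $\{x\le 1\}$ and $\{x\ge -1\}$ are feasible for your constrained ERM with $\wh{\cDgiven}$-mass $3/4$, and any halfspace has mass at least $3/4-O(\tau)$ subject to the constraint; by symmetry no ``competitor of strictly smaller $\wh{\cDgiven}$-mass'' exists to break the tie. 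Whichever one your ERM returns incurs $\cDtrue$-error $1/3$ on one of the two instances. Your fallback of an ``inner cover of $B$ from positive samples'' cannot help either, since the positive samples live in $[-1,1]$ in both instances and carry no information distinguishing them.

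The paper's algorithm is genuinely different: it is \emph{improper}. It runs \pERM{} iteratively $T=\nfrac{1}{\eps}$ times, at step $i$ restricting the unlabeled sample to $U\cap H_1\cap\cdots\cap H_{i-1}$, and outputs $H=\bigcap_i H_i$ (which lies in $\hyH_{\cap T}$, not $\hyH$). The false-positive bound comes from a disjointness/pigeonhole argument: the sets $(U\cap H_1\cap\cdots\cap H_{i-1})\setminus H_i$ are disjoint and nonincreasing in size, so the last one has at most $\eps|U|$ points; feasibility of $\optset$ for the last instance then gives $|U\cap H\setminus\optset|\le\eps|U|$, and uniform convergence over $\hyH_{\cap T}$ plus smoothness finishes. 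In the example above this produces $H\approx[-1,1]$, which is correct for both instances but is not a halfspace.
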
 
An implication of this theorem is that smooth positive-only learning is also characterized by the VC-dimension. 
Interestingly, however, the algorithm in \cref{thm:sampleComplexity} is not empirical risk minimization (ERM). In fact, it is not too hard to show that ERM provably fails in this setting (see \cref{sec:intro:technicalOverview:challenges}).
Instead, \cref{thm:sampleComplexity} proposes an iterative procedure that solves a constrained optimization problem, which we refer to as \textit{\pERM{}}. 
This yields an improper learning algorithm, and we further show that proper learning is in fact impossible for (smooth) positive-only learning (see \cref{rem:necessityOfImproper}). 
Both the failure of ERM and the impossibility of proper learning underscore the fundamental gap between classical analysis of positive-only learning and the smoothed analysis framework studied in this paper. 
\smallskip

\noindent \textbf{Improved Sample Complexity for $q = 1$.} Under an additional assumption  (\cref{asmp:densityLB}) that complements generalized smoothness (\cref{asmp:smoothness}) by providing a lower bound on $\cDtrue(S)$ for measurable sets, we further improve the sample complexity in the case $q = 1$ to $\wt{O}\sinparen{\nfrac{\vc{}(\hyH)}{\eps}}$ (\cref{thm:sampleComplexity:bothSided}). This matches the optimal rate for realizable learning and is therefore optimal for smooth positive-only learning with $q = 1$, despite the latter being a harder problem. 
To see this, note that in realizable PAC Learning, one has sample access to $\cDtrue$---one can simply ignore the labels of the samples---and this is sufficient to ensure smoothness with $q=\sigma=1$.
{Under this assumption, prior works \cite{denis1990positive,lee2024unknown} can also be used for positive-only learning, but \cref{thm:sampleComplexity:bothSided} improves their sample complexity: \cite{lee2024unknown} requires $O(\eps^{-4})$ samples versus our $O(\eps^{-1})$; \cite{denis1990positive} only handles $q=1$ where it matches our sample complexity of $O(\eps^{-1})$ while we can also handle $q>1$ (\cref{thm:sampleComplexity}). Finally, we stress that without this assumption (\cref{asmp:densityLB}), these works are inapplicable while our main results (\cref{thm:sampleComplexity,thm:main}) remain valid.}

\subsubsection{Smooth Computational Efficiency of Positive-Only Learning} \label{sec:intro:computationallyEfficientResult}

We next investigate the smooth computational complexity of positive-only learning under the additional assumption of a non-trivial mass of positive samples (\cref{asmp:posFraction}). 

\begin{inftheorem}[Computational Complexity; see \cref{thm:main}]\label{mainthm:algo}\label{infthm:main}
    Suppose \cref{asmp:smoothness,asmp:posFraction} hold and fix any constant $C > 0$. 
    Fix $\eps,\delta\in (0,\nfrac{1}{2})$, and let $k$ be such that degree-$k$ polynomials with range $[-C,\infty)$ $\zeta$-approximate $\hyH$ in \mbox{$L_1$-norm} with respect to $\cD$ for $\zeta\leq \inparen{\nfrac{\eps\alpha\sigma}{(C+1)}}^{\poly(q)}$.
    Then, there is an algorithm that, given $n=\poly(d^k,\nfrac{1}{\zeta},\log{\nfrac{1}{\delta}})$ independent samples from $\cPtrue$ and $\cD$, outputs a hypothesis $\wh{h}$ such that, with probability $1-\delta$, 
    \[
        \Pr\nolimits_{\cDtrue}\binparen{\wh{h}(x)\neq h^\star(x)}\leq \eps\,.
    \]
    The algorithm runs in time $\poly(n)$.
\end{inftheorem}
    Therefore, efficient smooth positive-only learning is possible when the optimal hypothesis $h^{\star}$ has a low complexity in the sense that low-degree polynomials whose range is bounded below can well approximate it under distributions $\cD$ (\cref{def:poly_approx}). 
        The degree of polynomials $k$ required typically scales as $k = \poly\!\inparen{\nfrac{1}{\eps}}$. 
        This exponential dependence on $\nfrac{1}{\eps}$ is necessary for any SQ algorithm (such as the one in \cref{mainthm:algo}) due to a recent lower bound \cite{diakonikolas2024statistical} (see \cref{rem:SQlowerbound} for details).

        Notably, \cref{mainthm:algo} only requires an approximation with respect to the accessible unlabeled distribution $\cD$ and not the inaccessible distribution $\cDtrue$.
        This enables the use of recent testable learning techniques when certain sandwiching polynomials exist \cite{rubinfeld2023testing,gollakota2023momentMatching,gollakota2023universal} to ensure that the required polynomial approximation guarantee holds by, roughly, verifying that the moments of samples from $\cD$ match those of a known reference distribution (see \cref{rem:testing} for details). 
        Finally, we note that the requirement of $L_1$-approximability is natural and is required by nearly all existing computationally efficient agnostic learning algorithms.
        We additionally require the range of the $L_1$-approximating polynomials to be lower bounded by some constant. 
        This requirement is always satisfied whenever $L_1$-sandwiching polynomials exist but is much weaker.
        For instance, it holds for all classes with finite Gaussian surface area when $\cD$ is Gaussian \cite{2026nonNegativeApproximation}.

\subsection{Generalizations and Application to Truncated Statistics}\label{sec:intro:application} 

In this section, we {first present applications of our main results to get the fastest and/or most general known algorithms for problems in truncated statistics and, then, generalize our results.}

\subsubsection{Truncated Estimation with Unknown Survival Set}\label{sec:intro:application:truncation}
        Our {main results have} implications for efficient statistical inference from truncated samples, a problem with a long history of study in Statistics and Econometrics \cite{Galton1897,Pearson1902,PearsonLee1908,Lee1915,fisher31,maddala1983limited,Cohen91,hannon1999estimation,raschke2012inference} that has attracted considerable recent interest in Theoretical Computer Science \cite{daskalakis2018efficient, Kontonis2019EfficientTS, daskalakis2019computationally, trunc_regression_unknown_var, ons_switch_grad, lee2023learning, diakonikolas2024statistical,galanis2024discreteTruncated,lee2024unknown,nagarajan2020truncatedMixtureEM,fotakis2020booleanProductTruncated,nagarajan2023EMMixtureTruncation,tai2023mixtureCensored,kouridakis2026truncation}.

        One important goal in truncated statistics is to estimate the parameters of a distribution when samples are observed only if they fall in some \textit{unknown} survival set $S^\star\subseteq\R^d$.
        A bit more formally, 
        there is a parametric family $\hyD=\inbrace{\cD(\theta)\colon \theta}$ with target parameter $\theta^\star$. 
        The goal is to find an estimate $\wh{\theta}$ of $\theta^\star$ given samples from the truncation\footnote{The truncation of a distribution $\cD$ to a set $S$ is the conditional distribution of $\cD$, conditioned on the event that a point lies in $S$. When $\cD$ has a density, the truncated density is given by $\frac{1}{\cD(S)} \cdot \ind\inparen{x \in S} \cdot \cD(x)$} of the target distribution $\cDtrue = \cD(\theta^\star)$ to the set $S^\star$ -- without \textit{any} access to $S^\star$.
        For this problem to be tractable, it is necessary to assume a lower bound on the mass of $S^\star$, specifically that $\cDtrue(S^\star)\geq \alpha$.
        The state-of-the-art algorithm for this problem \cite{lee2024unknown} works in three phases:
        \begin{enumerate}[itemsep=0pt]
            \item \textit{Phase 1:}~ First, it finds an unlabeled density $\cD$ close to $\cDtrue$ (in the sense of \cref{asmp:smoothness});
            \item \textit{Phase 2:}~ Then, it learns $S$ such that the symmetric difference $\cDtrue(S\triangle S^\star)\approx 0$; and 
            \item \textit{Phase 3:}~ Finally, it recovers $\theta^\star$ using  learned set $S$. 
        \end{enumerate}
        The bottleneck in their algorithm is Phase 2 which falls under the setting of \cref{infthm:main} and, substituting the algorithm in Phase 2 by the algorithm in \cref{infthm:main} gives us the following improvement upon \citet{lee2024unknown}'s result.

        \begin{infcorollary}[Truncated Estimation with Unknown Survival Set; see \cref{cor:truncation}]\label{infcor:truncation} 
            Let the family $\hyD=\inbrace{\cD(\theta)\colon \theta}$ satisfy the assumptions of \cite{lee2024unknown} and let $\alpha=\Omega(1)$. %
            Fix any $\eps, \delta \in (0, \nfrac{1}{2})$, and let $S^\star$ be $\poly(\eps)$-approximable by degree-$k$ polynomials with range within $[-O(1),\infty)$ in $L_1$-norm with respect to $\hyD$.
            There is an algorithm that given $n = \poly(d^k, \nfrac{1}{\eps}, \log{\nfrac{1}{\delta}})$ independent samples from $\cD(\theta^\star)$ truncated to $S^\star$ 
            outputs parameter $\wh{\theta}$ such that with probability $1-\delta$, 
            \[ \tv{\cD(\wh{\theta})}{\cD(\theta^\star)} \leq \eps\,.\]
            The algorithm runs in time $\poly(n)$. 
        \end{infcorollary} 
        We stress that before our work the only known algorithms in this setting are by \cite{Kontonis2019EfficientTS,lee2024unknown}, both of which require $S^\star$ to be approximable in $L_2$-norm.
        In contrast, the above result works with just an approximation in the $L_1$-norm with, \eg{}, non-negative polynomials.
        This change enables us to obtain the following new results from \cref{infcor:truncation}:
        \begin{center}
            \textit{The first efficient algorithm for truncated parameter estimation, when $S^\star$ is an arbitrary\\ function of $O(1)$ halfspaces}\footnote{
                Functions of $k$ halfspaces are a significant generalization of intersections of $k$ halfspaces and are of the form:
                $b(h_1,h_2,\dots,h_k)$ where $b$ is an arbitrary boolean function $b\colon \zo^k\to \zo$ and $h_1,h_2,\dots,h_k$ are $k$ halfspaces.
                In particular, they can capture complex \textit{non-convex} sets, while intersections of halfspaces are necessarily convex.
                } \textit{and $\hyD$ sub-exponential and satisfies the assumptions of \cite{lee2024unknown}.}
        \end{center}
        Notably, this holds for Gaussians and product exponential distributions. 
        For the specific case of Gaussians, this is known already due to \cite{lee2024unknown} and the $L_2$-sandwiching approximators for decision trees of halfspaces \cite{klivans2023testable,gopalan_2010_fooling}.

    \subsubsection{Detecting Truncation}\label{sec:intro:application:detection}
        Our next application is also from Truncated Statistics \cite{Cohen91,maddala1983limited}.
        In particular, to the problem of testing whether a given set of samples is \iid{} from a distribution $\cQ$ or \iid{} from the truncation $\cQ(S^\star)$ of $\cQ$ to some unknown set $S^\star$.
        
        A growing line of works in Theoretical Computer Science study this problem \cite{de2023detectingConvex,he2023testingjuntatruncation,beretta2026feature,de2024detecting}.
        It is not too hard to show (see the work of \citet*{de2024detecting}) that if the mass of $S^\star$ under $\cQ$ is very close to $1$ then this problem is hopeless.
        This is roughly because as the mass $\cQ(S^\star)$ approaches 1, the distributions $\cQ$ and $\cQ(S^\star)$ become indistinguishable. 
        Hence, one focuses on the case where $\cQ(S^\star)\leq 1-\beta$ for some known constant $\beta=\Omega(1)$.
        \citet*{de2024detecting} study this problem in the special case where $S^\star$ is a degree-$k$ polynomial threshold function (PTF) and give a $O(d^{k/2})$ sample and time algorithm that works when $\cQ$ satisfies the following:
        \begin{enumerate}[itemsep=0pt,label=(C\arabic*)]
            \item $\cQ$ is a hypercontractive;
            \label[none]{asmp:detectingTruncation:hypercontractive}
            \item $\cQ$ is an \iid{} product distribution;\label[none]{asmp:detectingTruncation:product}
            \item $\cQ$ is \textit{known} in the following sense: we can \textit{exactly} compute an orthonormal basis of polynomials up to degree $k$ in time $O(d^{k/2})$.
            \label[none]{asmp:detectingTruncation:access}
        \end{enumerate}
        Our \cref{mainthm:algo} implies efficient algorithms for this problem for a broader set of truncation sets (beyond PTFs) and it also allows $\cQ$ to be a non-product distribution and does not require it to be known as long as one has sample access to $\cQ$ (relaxing \cref{asmp:detectingTruncation:product,asmp:detectingTruncation:access}).
        \begin{infcorollary}[see \cref{cor:detectingTruncation,cor:detectingTruncation:gen}]\label{infcor:detectingTruncation}
            Consider the setting described above with $\beta=\Omega(1)$.
            Suppose one has an efficient sampling oracle for $\cQ$.
            Fix any $\delta\in (0,\nfrac{1}{2})$ and $k\geq 1$ such that $S^\star$ is $O(1)$-approximable by degree-$k$ polynomials $q\colon \R^d\to (-O(1),\infty)$ in the $L_1$-norm with respect to $\cQ$.
            There is an algorithm that given $n=d^{k}\cdot \polylog({\nfrac{1}{\delta}})$ samples from $\cQ$ or $\cQ(S^\star)$, with probability $1-\delta$, correctly detects whether the samples were from $\cQ$ or $\cQ(S^\star)$ in $\poly(n)$ time.
            
            Moreover, the algorithm also works (with the same guarantees) with an \emph{approximate} sampling oracle--that outputs samples from a distribution $\cR$ such that $\cQ$ is $(\sigma,q)$-smooth (for $q=O(1)$ and $\sigma=\Omega(1)$; see \cref{asmp:smoothness}) with respect to $\cR$. 
            (In particular, it is sufficient to have $\chidiv{\cR}{\cQ}=O(1)$; \cref{lem:closenessFromChiSquare}.)
        \end{infcorollary}
        This result, in particular, shows that detection of truncation is possible with only an \textit{approximate} sampling oracle to $\cQ$, which is significantly less restrictive than \cref{asmp:detectingTruncation:access}, and even when $\cQ$ is not a product distribution, which is much weaker than \cref{asmp:detectingTruncation:product}.
        The latter, in particular, makes progress on an open question of \cite{de2024detecting} asking for efficient truncation-detection algorithms that work for non-product distributions; see the associated STOC talk \cite{de2024detectingFOCStalk}.
        Further, since the above algorithm can handle truncation sets beyond PTFs, we get novel algorithms in the following case:
        \begin{center}
            \textit{$S^\star$ is any function of $O(1)$-halfspaces and $\cQ$ is \textit{any} sub-exponential distribution \cite{ksv2026sandwiching}.}
            
        \end{center}
        Recall that functions of halfspaces can be non-convex.
        We stress that prior to our work no efficient algorithms for detecting truncation were known for non-convex hypothesis classes beyond PTFs, even when $\cQ$ was \textit{known} to be $\cN(0, I)$.
        Further, even for a single halfspace, no efficient algorithms were known to detect truncation when only sample access to $\cQ$ was available.

    \begin{remark}
        While our algorithm is applicable in much more general settings than prior works \cite{de2023detectingConvex,de2024detecting}, in the specific cases where their algorithms can be applied, the number of samples needed are $\poly(d)$ {(for convex bodies under Gaussian distributions)} and $O(d^{k/2})$ {(for PTFs under hypercontractive product distributions satisfying \ref{asmp:detectingTruncation:access})}. 
        These results are particularly surprising because the sample complexity is smaller than the sample complexity of learning $S^\star$ in the same settings, which are $d^{O(\sqrt{d})}$ and $O(d^k)$ respectively. 
        Since our approach is based on a learning algorithm, we do not obtain this saving and require $d^{O(\sqrt{d})}$ and $O(d^k)$ samples respectively but we are able to design a more widely applicable algorithm. 
    \end{remark} 

\subsubsection{Learning with Positive Samples and Lists of Reference Samples}\label{sec:intro:application:listDecoding}

Following our main results, one may ask if positive-only learning is possible with less information than access to an underlying ``reference'' distribution $\cD$. In this section, {we show that this is indeed true:} in addition to the positive samples, it is sufficient to have a list of distributions that contains one \textit{unknown} distribution that ``witnesses'' the smoothness of $\cDtrue$.
\begin{definition}[List Decoding Model]
    \label{def:list}
    Fix a distribution $\cDtrue$ over domain $\cX$ and numbers $\ell,q\geq 1$ and $\sigma\in (0,1]$.
    In an instance of the list decoding model, specified by parameters $(\cDtrue,\ell,\sigma,q)$, one is provided a list of $\ell$ distributions $\cD_1, \cD_2, \dots, \cD_\ell$ over $\cX$ such that for some \textit{unknown} index $1\leq i^\star\leq \ell$,
    {$\cD_{i^\star}$ is such that $\cDtrue$} is $(\sigma,q)$-smooth with respect to $\cD_{i^\star}$ in the sense of \cref{asmp:smoothness}. 
    The distributions $\unlabeled_j$ (for $j\neq i^\star$) can be arbitrary.
\end{definition}
This model generalizes smooth positive-only learning, which assumes sample access to $\cD_{i^\star}$. 
It is inspired by real-world scenarios where the unlabeled samples are scraped from unreliable sources that may be adversarially corrupted, which, in some cases, can be used to obtain a list of unlabeled samples satisfying \cref{def:list} (see \cref{rem:list}).

Our main result for this list-decoding model shows that it is also characterized by VC dimension and its computational complexity can also be bounded by polynomial approximability.
\begin{inftheorem}[see \cref{thm:listDecoding}]\label{infthm:list}
    Consider a PAC learning instance with      
        class $\hyH$,
        feature distribution $\cDtrue$, and 
        target $h^\star\in \hyH$.  
    Consider the corresponding instance of the List Decoding Model in \cref{def:list} with $\sigma=\Omega(1)$ and $\ell=O(1)$ and parameter $q\geq 1$.
    \begin{itemize}
        \item[$\triangleright$] \textit{(Sample Complexity)}~~ 
            $\wt{O}\!\inparen{\eps^{-2q}\cdot \inparen{\vc{}(\hyH)+\log{\nfrac{1}{\delta}}}}$ independent positive samples are sufficient to $(\eps,\delta)$-PAC learn $\hyH$ with respect to $\cDtrue$.
        \item[$\triangleright$] \textit{(Computational Complexity)}~~
            Suppose $\cDtrue(\optset)\geq \Omega(1)$.
            Let $k\geq 1$ be such that $\hyH$ is $\zeta$-approximable (with range $[-O(1),\infty)$) by degree-$k$ polynomials in the $L_1$-norm with respect to $\cD_{i^\star}$ for $\zeta\leq \eps^{\,O(q^2)}$.
            Then, $n=\poly(d^k,\nfrac{1}{\zeta},\log{\nfrac{1}{\delta}})$ positive samples and $\poly(n)$ time is sufficient to $(\eps,\delta)$-PAC learn $\hyH$ with respect to $\cDtrue$.
    \end{itemize}
\end{inftheorem}
This theorem significantly extends our main result (\cref{infthm:main}), which is equivalent to the special case of $\ell=1$. Despite this, interestingly, minimal changes to the algorithm in our main result (\cref{infthm:main}) are sufficient to give the above result. The proof of \cref{infthm:list} appears in \cref{sec:intro:application:listDecoding} along with a formal statement.

\begin{remark}[Obtaining a List of Distributions From Adversarially Corrupted Data]\label{rem:list}
  In this remark, we show that if one is given adversarially corrupted unlabeled samples, then one can satisfy the guarantees of the above list-decoding model. Suppose that $\cDtrue$ is a Gaussian distribution $\cN(\muStar,\SigmaStar)$. Consider a data source that is $(1-\gamma)$-corrupted for some $\gamma\in (0,1]$: with probability $\gamma$, the source outputs a sample from $\cDtrue$, and, otherwise with probability $1-\gamma$, it outputs an arbitrary sample that may depend on the past samples and the knowledge of the learning algorithm. In \cref{sec:advCorruption}, we show that one can efficiently convert an $(1-\gamma)$-corrupted source for any $\gamma>0$, into one satisfying the requirements of the List Decoding Model with $\ell= \poly(\nfrac{1}{\gamma})$. Moreover, under mild assumptions, we also extend this to the case where $\cDtrue$ is an exponential family distribution.
\end{remark}

\subsection{Technical Overview}\label{sec:technicalOverview}
    {We initiate the smooth analysis of positive-only learning.}
Below we briefly explain the key ideas and challenges in the proofs of our main results. %

    \subsubsection{Challenges in Using Existing Approaches for PU Learning}
        \label{sec:intro:technicalOverview:challenges}
        First, we discuss challenges in using existing techniques from positive and unlabeled (PU) learning, which is a usual method to bypass the impossibility results in positive-only learning.

        \paragraph{{Issue I: ERM is insufficient}.}
            A standard approach to learning with positive and unlabeled samples is to reduce the problem to agnostic learning.
            In this reduction, the agnostic learning instance is created by imagining all unlabeled samples to be negative samples.
            The resulting feature distribution is a mixture of the form (for some weight $\gamma\in (0,1)$):
            \[
                \cM = \gamma \cPtrue + (1-\gamma) \cD\,.
                \yesnum\label{eq:overview:mixture}
            \]
            If the unlabeled samples are drawn from the underlying distribution $\cDtrue$ (\ie{}, if $\cD=\cDtrue$), then $\optset$ (the set of positive examples) is also the optimal solution for the agnostic learning problem and ERM is sufficient for learning.
            However, in general, when $\cD \neq \cDtrue$, the optimal hypothesis of the resulting problem can be very different from $\optset$ and, hence, ERM is not sufficient and a different approach is needed to even bound the sample complexity of {smooth positive-only learning} (see \cref{rem:necessityOfImproper}).

        \paragraph{Additional Challenges to Get Computational Efficiency.}
        To illustrate additional challenges in obtaining computationally efficient algorithms, let's suppose that $\cD$ satisfies the following strengthening of \cref{asmp:smoothness}: for all measurable sets $S$,
            \[
                \cDtrue(S)
                    \leq O(\cD(S)^{1/q})
                \qquadand
                \cDtrue(S)
                    \geq \Omega(\cD(S)^{q})
                \,,
                \yesnum\label{eq:bounded_chisquare}
            \]
            These requirements turn out to be sufficient to ensure that $\optset$ is (close to) the optimal solution of the agnostic learning problem created above and, hence, to ensure that ERM learns $\optset$.\footnote{See \cref{rem:densityLB} for a discussion of why the assumption in \eqref{eq:bounded_chisquare} is significantly stronger than \cref{asmp:smoothness}.}
            Thus, under the simplification in \cref{eq:bounded_chisquare}, the main challenge is developing an efficient algorithm to solve the agnostic learning problem created.
            However, agnostic learning is computationally challenging  \cite{guruswami2009hardness, feldman2006new, daniely2016complexity}.
            Almost all efficient algorithms for agnostic learning are based on the now-standard \lreg{} algorithm of \citet*{kalai2008agnostically}.
            As a result, all of these algorithms only work if the hypothesis class $\hyH$ is approximable by polynomials with respect to the feature distribution, say, $\cQ$.
            (The specific definition of polynomial approximation is not necessary to follow the remainder of the technical overview; the definition appears in \cref{sec:preliminary}.)
            Hence, for efficiently learning $\hyH$ one needs a bound on the degree of polynomials which approximate $\hyH$ with respect to $\cQ$.
            There are several challenges in obtaining these bounds.

            \paragraph{Issue II~~(Standard Polynomial-Approximation Guarantees Do Not Apply to $\cM$).} First, existing results on polynomial approximation are only known for special distributions $\cQ$ -- such as log-concave distributions or the uniform distribution over the Boolean hypercube.
            However, because distribution $\cPtrue$ and $\cD$ can be very different from each other, their mixture $\cM$ (\cref{eq:overview:mixture}) may not be log-concave or uniform (see \cref{fig:mixture}).\footnote{One might suspect that it is sufficient to have separate approximating polynomials $p_1(\cdot)$ and $p_2(\cdot)$ with respect to $\cPtrue$ and $\cD$ respectively.
            Unfortunately, this is not the case: for using \lreg{}, we require both of these polynomials to be the same, \ie{}, $p_1=p_2$.} 
            \begin{figure}[htb!]
                \centering
                \vspace{4mm}
                \includegraphics[width=0.75\linewidth]{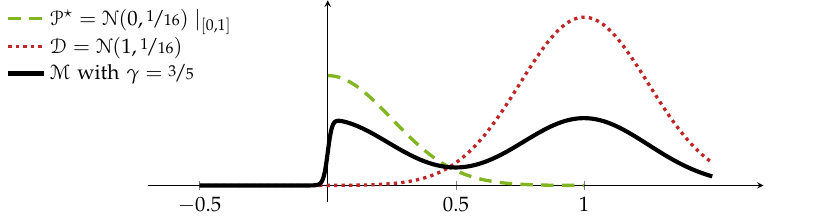}
                \caption{Example of mixture distribution $\cM$ constructed by the distribution of positive examples $\cPtrue$ and the ``imperfect'' distribution of unlabeled examples $\cD$ (also see \cref{eq:overview:mixture}). 
                Here, $\cPtrue$ is a truncation of a Gaussian distribution to a convex set (an interval, namely, $\optset=[0,1]$) and $\cD$ is a Gaussian distribution.
                Nevertheless, $\cM$ is non-log-concave.
                }
                \label{fig:mixture}
            \end{figure}
            
            \paragraph{Issue III~~(Approximating Polynomial w.r.t.\ $\cD$ Can Be Poor for $\cPtrue$ {{and vice versa}}):}
                One idea to overcome this is to use the facts that (1) $\cPtrue$ is a truncation of $\cDtrue$ to a set of constant mass (namely, $\optset$; \cref{asmp:posFraction}) and (2) $\cDtrue$ and $\cD$ are close to each other (they satisfy \eqref{eq:bounded_chisquare}).
                Using this, one can show that if $p\colon\R^d\to \R$ $\eps$-approximates a hypothesis $h$ with respect to $\cD$ in $L_q$-norm for the constant $q$ from \eqref{eq:bounded_chisquare}, 
                then it is an $O(\eps)$-approximating polynomial with respect to $\cPtrue$ in $L_1$-norm.
                (The same also holds if we convert an approximating polynomial for $\cPtrue$ to one for $\cD$.)
                Hence, to ensure $L_1$-norm approximation with respect to both distributions, $p(\cdot)$ needs to approximate $h$ in $L_q$-norm for one of the two distributions.
                This is a very strong requirement, and in fact, to the best of our knowledge, no useful approximation results are known for $q \geq 3$.%

        \enlargethispage{\baselineskip}
            
            \paragraph{Issue IV~~(Using Bridge Distributions Needs $L_2$-Approximation and May Not Exist):} 
                Another idea is to use the concept of a bridge distribution, which has appeared in some recent works (\eg{}, \cite{Kontonis2019EfficientTS,kalavasis2024transfer,lee2024unknown}).
                In some special cases, these works show that even though $\cD$ and $\cDtrue$ are far from each other -- say, they have a very large $\chi^2$-divergence -- one can construct a bridge distribution $\mu$ such that $\mu$ is close to both $\cD$ and $\cDtrue$ in the $\chi^2$-divergence (\ie{}, $\chidiv{\cD}{\mu},\chidiv{\cDtrue}{\mu}$ are small).
                (This construction crucially uses the fact that the $\chi^2$-divergence does not satisfy the triangle inequality.)
                If such a bridge distribution exists, then one can convert an $L_2$-approximating polynomial with respect to $\mu$  to an $L_1$-approximating polynomial with respect to \textit{both} $\cD$ and $\cPtrue$.
                This approach has a few downsides:
                First, it only works in restricted cases where a bridge distribution exists.
                Second, a bridge, even if it exists, might not admit approximating polynomials.
                Finally, even if the first two issues can be handled, this approach inherently require{s} an $L_2$-approximating polynomial to transfer an approximating polynomial with respect to $\mu$ to an approximating polynomial with respect to $\cD$ and $\cPtrue$ (using, \eg{}, the Cauchy--Schwarz inequality).
            In contrast, our result works with $L_1$-approximating polynomials with respect to just $\cD$ with a lower bound on their range.

        \subsubsection{Our Approach: Reduction to Constrained Learning} 
            Instead of trying to efficiently solve the above reduction to agnostic learning, we take a dramatically different approach:
            we reduce our problem to several \textit{constrained} learning problems.
            The starting point of this approach is practical heuristics used for learning from positive and unlabeled samples which find a hypothesis $H$ that (when viewed as a set) contains the minimum number of unlabeled samples subject to selecting all positive samples \cite{bekker2020learning}.
            This can be formalized as the following program: given sets $P$ and $U$ of positive and unlabeled samples and a tolerance $\rho\geq 0$ %
            \[
                \argmin_{H\in \hyH}~~ \frac{\abs{H\cap U}}{\abs{U}}
                \,,\quad\text{ such that}\,,\quad 
                \frac{\abs{H\cap P}}{\abs{P}} \geq 1-\rho
                \,.
                \tag{\pERM{}}
            \]

        \paragraph{Sample-Efficient Algorithm (see \cref{sec:sampleEfficiency}).}
            In general, the optimal solution of \pERM{} can be far from $\optset.$
            At the core, this is because \cref{asmp:smoothness} allows $\supp(\cDtrue)$ to be a strict subset of $\supp(\cD)$.
            To gain some intuition about why this introduces challenges, consider \cref{fig:intersection_halfspaces}, which shows that due to $\supp(\cDtrue)\subsetneq \supp(\cD)$, a low-VC-dimension hypothesis $H$ (say a halfspace), can appear like a high-VC-dimension halfspace (say intersections of $k$ halfspaces for large $k$) when provided samples from $\cD$.\footnote{In some of the earlier applications (\eg{}, learning from smoothed positive examples and the practical applications in \cref{sec:intro:realWorldApplications}) one cannot ensure that $\supp(\cD)=\supp(\cDtrue)$ and, so it is crucial to develop algorithms that rely solely on \cref{asmp:smoothness}.}
            To obtain a sample-efficient algorithm, our main insight is to solve many instances of the \pERM{} program. 
            Concretely, we solve $\nfrac{1}{\eps}$ instances of the program, where the instances are constructed iteratively and, roughly speaking, the $i$-th instance focuses on the region $B_{i-1}$ of the domain where the $(i-1)$-th instance is not guaranteed to perform ``well'' (by deleting all unlabeled examples which are not in $B_{i-1}$). 
            At the end we output the intersection of the hypotheses $H_1, H_2,\dots,H_{1/\eps}$ which are solutions to each instance.
            Since in practical applications one must often deal with $\supp(\cDtrue)\subsetneq \supp(\cD)$ (\cref{sec:intro:realWorldApplications}), we believe this approach of solving many instances of \pERM{} may also be of interest in practice.

        \begin{figure}[h!]
            \centering
            \begin{subfigure}[b]{0.3\textwidth}
                \includegraphics[width=0.85\textwidth]{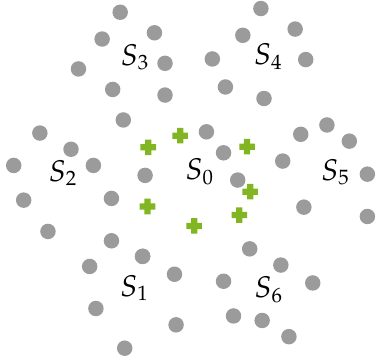}
                \caption{\textit{{Positive and Imperfect Unlabeled} Samples}}
                \label{fig:sintersection_halfspaces_1}
            \end{subfigure}
            ~~~
            \begin{subfigure}[b]{0.32\textwidth}
                \includegraphics[width=\textwidth]{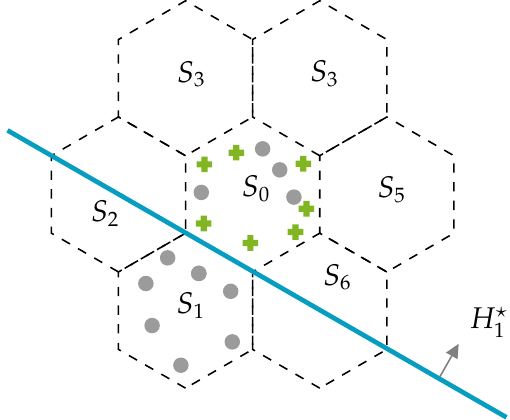}
                \caption{\textit{{Positive and Unlabeled} Samples with $\optset$ as halfspace}}
                \label{fig:intersect_halfspaces_2}
            \end{subfigure}
            ~~~
            \begin{subfigure}[b]{0.32\textwidth}
                \includegraphics[width=\textwidth]{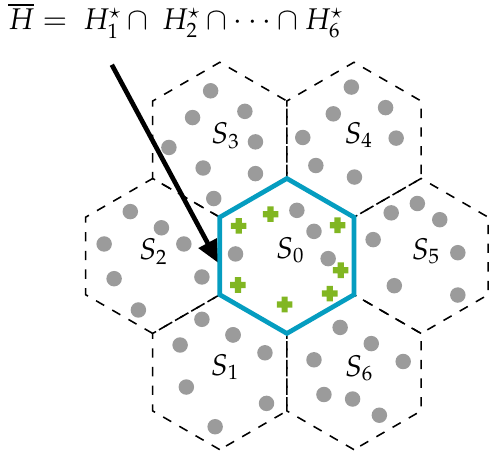}
                \caption{\textit{{Positive and Unlabeled} Samples with $\optset$ as an intersection of six halfspaces}}
                \label{fig:intersect_halfspaces_3}
            \end{subfigure}
            \caption{
                This figure illustrates one challenge in learning with {Positive and Imperfect Unlabeled} samples {which arise in smoothed positive-only learning}.
                Subfigure (a) presents the observed positive and (imperfect) unlabeled samples.
                Given these examples, it is impossible to distinguish between Scenarios (b) and (c). 
                Hence, in particular, due to imperfections in the unlabeled samples a hypothesis with low VC-dimension (\eg{}, a single halfspace in (b)) can appear like a high-VC-dimension hypothesis (\eg{}, intersections of many halfspaces in (c)).
            } 
            \label{fig:intersection_halfspaces}
        \end{figure}

        \paragraph{Computationally Efficient Algorithm (see \cref{sec:computationalEfficiency}).}
            A natural idea is to convert a sample-efficient algorithm into a computationally efficient one by solving \pERM{} efficiently.
            However, one cannot use techniques for designing efficient learning algorithms in the realizable setting to solve this problem since the optimal value  of \pERM{} is generally bounded away from zero (so does not fit the standard template for efficient realizable learning algorithms that reduce the problem to checking feasibility of, \eg{}, a linear program).  
            At this point, given the challenges we discussed for efficiently implementing ERM (in the previous sub-section), efficiently implementing \pERM{} might appear hopeless: 
            instead of solving an unconstrained learning problem, which is already hard, we now have to solve a constrained learning problem -- that is seemingly harder.

            A key advantage of this approach is that it decouples the samples from $\cD$ and $\cPtrue$.
            Therefore, while solving this program, we avoid dealing with mixture distributions, which we have seen are challenging to deal with.
            We note that this is why \cref{infthm:main} only requires $L_1$-approximation with respect to $\cD$, which we have sample access to.
            This contrasts the attempts we discussed earlier in \cref{sec:intro:technicalOverview:challenges} that require a stronger approximation guarantee (in $L_q$-norm, for which no results are known when $q \geq 3$) or use the existence of some ``bridge'' distributions that are only known to exist for specific distribution families (\eg{}, Gaussians).  
            In contrast, our algorithm only needs an approximation guarantee in $L_1$-norm (regardless of the value of $q$ in \cref{asmp:smoothness}) and makes no further assumptions about the distribution family $\cD$ and $\cDtrue$ belong to (in particular, they can be non-Gaussian).

            Next, we describe how we design an efficient approximation algorithm to solve (one instance of) \pERM{}. 
            This combined with the iterative sample-efficient algorithm described above is in spirit sufficient to complete the proof of \cref{thm:main}.
            That said, compared to the analysis for sample complexity, we need to account for some additional challenges.
                First, we need to ensure that one can efficiently obtain samples from each of the {positive-only learning} instances constructed (\cref{lem:main:rejectionSampling}). 
                Second, to solve each instance efficiently, we need to ensure that the $L_1$-approximation guarantee is maintained even though the $i$-th instance focuses on the subdomain $\R^d\setminus B_{i-1}$, which is determined by the $(i-1)$-th instance.
                Finally, since we only solve each \pERM{} instance approximately, it is possible that after discarding unlabeled samples within $B_{i-1}$, it no longer holds that $\supp(\cD)\supseteq \supp(\cPtrue)$, resulting in invalid {smooth positive-only learning} instances and we need to adapt the construction to avoid this (\cref{lem:main:assumptions}).

        \paragraph{An Approximation Algorithm for \pERM{} (see \cref{thm:constReg} and  \cref{sec:efficientPERM}).} 
            Up to this point, we have made several careful decisions to arrive at the program we want to solve; while avoiding several bottlenecks in other natural approaches. 
            Now, we are ready to introduce the final ingredients of our approach. 
            The final component of our proof is a constrained version of \lreg{} that ``approximately'' solves \pERM{}.
            Constrained versions of \lreg{} have appeared in other works, in particular for efficient reliable learning, which, roughly speaking, require classifiers to make no errors when they make a prediction but allows them to abstain from making a prediction on a fraction of the samples \cite{kanade2014independentRealiable,kalai2012reliable}.
            We provide a new constrained version of \lreg{} -- which we call \constrainedreg{} -- along with novel analysis showing that this algorithm is a pseudo-approximation algorithm for \pERM{} whenever $\hyH$ can be approximated by polynomials in $L_1$-norm with respect to $\cD$.
            
            At a high level, the algorithm operates as follows: it solves a constrained polynomial regression problem to find a polynomial $p(\cdot)$ and outputs a polynomial threshold function $\wh{h}(\cdot)=\mathds{1}\sinbrace{x \colon p(x)\geq t}$ constructed by thresholding $p(\cdot)$ at a suitable value $t$.
            The proof of correctness leverages ideas from the analysis of \lreg{}, but needs to account for some technicalities:
            First, we show that as long as $t$ is suitably bounded away from 1, then using standard uniform convergence arguments $\wh{h}$ is feasible for \pERM{}.
            This adjustment, however, is inconsistent with \lreg{}, which selects the best threshold in $[0, 1]$.
            We show that \lreg{}'s analysis is robust to this change.
            Second, unlike \lreg{}, we aim to show that \constrainedreg{} outputs a hypothesis that is feasible for a constrained optimization problem (\pERM{}); and significant additional analysis is needed to establish feasibility and pseudo-optimality.
        
        \subsubsection{Learning in the List Decoding Model}
            The List Decoding Model (\cref{def:list}) is even more challenging than the previous model because we lack sample access to a distribution close to $\cDtrue$. A natural idea is to combine the list of distributions $\cD_1,\dots,\cD_\ell$ to approximate $\cDtrue$, for example via the uniform mixture $\cM$. This approach is promising since one can show that $(\cDtrue,\cM)$ satisfies \cref{asmp:smoothness} with $\wt{\sigma}=\sigma \ell^{-1/q}$ and $q'=q$: indeed, for any measurable set $S$, we have $\cM(S)^{1/q}\geq \ell^{-1/q}\cD_{i^\star}(S)^{1/q}\geq \sigma\ell^{-1/q}\cDtrue(S)$, where the last inequality holds as $(\cDtrue,\cD_{i^\star}$) satisfies \cref{asmp:smoothness}. While this suffices for bounding sample complexity, achieving computational efficiency is challenging because we lack guarantees on $\cD_i$ for $i\neq i^\star$, causing polynomial approximability of $\optset$ with respect to $\cM$ to fail. Other approaches that combine $\cD_1,\dots,\cD_\ell$ to yield a distribution $\cM$ close to $\cDtrue$ in the sense of \cref{asmp:smoothness} also encounter the failure of polynomial approximation. 
            Instead of trying to combine the list of distributions, we run the algorithm from \cref{infthm:main} separately on each unlabeled distribution $\cD_i$ to obtain hypotheses $H_1,H_2,\dots,H_\ell$ and output the intersection of these hypotheses $\bigcap_i H_i$.

    \subsection{Practical Motivation for {Smoothed Positive-Only Learning}}
    \label{sec:realWorldApplications}
    \label{sec:intro:realWorldApplications} 
    \noindent While we have highlighted our results insofar as how they apply to other learning theory problems, we next motivate the study of smoothed {analysis} of positive-only learning itself.
    We begin by discussing real-world applications where obtaining ``perfect'' unlabeled examples is difficult or even impossible, and, hence, existing methods must grapple with \textit{imperfect} unlabeled data -- precisely the challenge that we formalize. %
    For further examples, see the surveys by \citet{elkan2008pu,plessis15convex} and a recent book \cite{jaskie2022positive} on learning from positive and unlabeled data.
    
    \paragraph{Applications in Bioinformatics and Medicine.} %
    Consider \textit{disease-causing gene identification} -- a central task in Genetics and Molecular Biology with major healthcare implications \cite{mordelet2011prodige,yang2012positive}.
    Identification is very expensive due to its reliance on highly skilled human labor and precise experimental procedures. 
    Here, ML algorithms, if given sufficient and good-quality data, can expedite discovery by flagging promising candidates for experimental validation.
    However, due to the high false negative rates in experimental methods, high-quality negative examples are scarce; a gene that appears benign under one condition may be disease-causing under another \cite{reviewPUBioformatics2021}.

    Similar issues affect a wide array of applications in Bioinformatics and Medicine -- from biological sequence classification, to drug interaction identification, to identifying undiagnosed diseases from health records \cite{elkan2008pu}.
    In response, starting from the seminal works of \cite{hur2006proteinProtein,elkan2008pu}, a substantial body of work designs and uses heuristics for PU learning; a partial list is as follows \cite{ming2016pupylation,pengyi2019adaSampling,zhao2008geneFunction,xiao2008niologicalSequence,yang2012positive,luigi2010geneRegulatory}.
    For an in-depth discussion on the challenges and promise of PU learning across several Bioinformatics applications, see \citet{reviewPUBioformatics2021}.

    \itparagraph{Reasons for Imperfect Unlabeled Examples: Source Mismatch.}
        The feature distribution of unlabeled samples often differs from that of the positive examples when drawn from different sources \cite{jaskie2022positive}.
        For instance, validated disease-causing genes (positive examples) come from specialized studies, while unlabeled examples may be sampled from all known genes \cite{mordelet2011prodige,reviewPUBioformatics2021}.
        Similarly, in medical applications, positive data might be collected from specialized and costly programs at a single hospital, whereas unlabeled data are aggregated nationwide, leading to discrepancies in feature distribution (\eg{}, demographics and measurement conditions) \cite{Hassanzadeh2018ClinicalDC}. 
        In both illustrative examples, the imperfect unlabeled distribution has a strictly larger support (\eg{}, it is the set of all known genes) than the (perfect) unlabeled distribution (\eg{}, which are supported on only relevant genes), so the upper bound in \cref{asmp:smoothness} is natural.

  \paragraph{Applications in User-Specified Identification Tasks.}
      User-specified identification tasks -- also known as one-vs-rest classification, inlier-based outlier detection, and anomaly detection -- involve a user labeling a \textit{small} set of positive examples from a large pool of unlabeled data.
      Examples include automated face tagging, email classification, and remote sensing\footnote{Remote sensing identifies images with target concepts (\eg{}, roads, rivers) from Earth and planetary imagery.} \cite{plessis15convex,bekker2020learning,wenkai2011remoteSensingPU}.

      \itparagraph{Reasons for Imperfect Unlabeled Examples.}
      Here, reasons for covariate shift include mismatch in the time of collection of positive and unlabeled samples and corruptions in the unlabeled samples.
      In more detail, in some cases, unlabeled data is augmented with data generated by generative AI methods \cite{doi:10.1126/science.adi6000,rubin2004multiple,raghunathan2021syntheticData,jordon2022synthetic}, which
            can introduce complex or even adversarial shifts.
            Nevertheless, tools from the learning-with-corrupted-data literature can help identify a \textit{list} of unlabeled sources that ``approximates'' the true distribution (\cref{rem:list}); that turns out to be sufficient for learning (\cref{sec:intro:application:listDecoding}).
      Again in this setting, since unlabeled data is augmented, we expect the imperfect unlabeled samples to have a larger support than the perfect unlabeled samples---making the bound in \cref{asmp:smoothness} natural.

    \subsection{Takeaways and Open Problems}\label{sec:takeawaysOpenProblems}\label{sec:intro:takeawaysOpenProblems}
        {We initiate the smooth analysis of positive-only learning. While the problem seems important on its own} {due to its connections to} practical settings (\cref{sec:realWorldApplications}), it also has surprising connections to truncated statistics. 
        
        {The smooth perspective on the problem leads to algorithms that} work under significantly weaker assumptions {(just smoothness and $L_1$-approximability)} than those required by existing algorithms in {practical applications and truncated statistics} ({\eg{}, exact access to unlabeled samples or $L_2$-approximability; see } \cref{sec:intro:realWorldApplications,sec:intro:application}). 
        This applicability to seemingly unrelated problems is an important aspect of our results, that we hope will inspire further research.
        To this end, we identify and present some open problems that arise from our work before discussing further related works, formal statements of our results, and proofs. 
        
        Our algorithm is a modification of the \lreg{} algorithm, and proving its correctness required significant new analysis.
        In light of this, our first open problem is:
        \begin{openproblem}
            \textit{Is there a black-box reduction from efficient {smooth positive-only learning} to efficient agnostic learning?}
        \end{openproblem}
        Such a reduction will, in particular, demonstrate that the computational complexity of {smooth positive-only learning} is at most as hard as that of agnostic learning.

        Next, while for a general hypothesis class, exponential dependence on $\nfrac{1}{\eps}$ is necessary for any SQ algorithm (see \cref{rem:SQlowerbound}), it may be possible to design polynomial-time algorithms for specific hypothesis classes and distribution families for $\cD$ and $\cDtrue$. 
        In this direction, we leave the following open problems whose counterparts in the realizable PAC learning model are already solved (by \citet{vempala2010convexGaussian}  for the first problem and by the folklore implementation of ERM via linear programming for the second problem \cite{mohri2018foundations}).
        \begin{openproblem}         
             Is there an algorithm that in the {smooth positive-only learning} model finds an $\wh{h}$ satisfying the guarantee of \cref{thm:main} in $\poly(\nfrac{d}{\eps})$  time for the following classes $\hyH$ and distributions $(\cDtrue,\cD)$?
            \begin{itemize}[itemsep=0pt]
                \item[$\triangleright$] $\hyH$ is the set of intersections of $O(1)$ halfspaces and $\sinparen{\cDtrue,\cD}$ are unknown Gaussians; and 
                \item[$\triangleright$] $\hyH$ is {the set of (single) halfspaces} and $\sinparen{\cDtrue,\cD}$ are arbitrary distributions.
            \end{itemize}
        \end{openproblem} 
        Even in the special case where $\cDtrue$ and $\cD$ are Gaussian and $\hyH$ is the family of halfspaces (a special case of Problem 2.1), this might require a different approach than solving \pERM{}, as we believe solving \pERM{} is \np{}-hard in this case.
        Indeed, in this special case, a $\poly(\nfrac{d}{\eps})$ algorithm is known, but it utilizes a very different moment-based approach \cite{lee2024unknown}.
        Along these lines, since the initial arXiv posting of this work, \cite{kouridakis2026truncation} designed a $\poly(\nfrac{dk}{\eps})$ time implementation of \cref{thm:sampleComplexity} when $\hyH$ is the set of unions of at most $k$ intervals in $\R$.
        
        Next, consider the smooth positive-only learning model where smoothness holds with $q=1$.
        \cref{thm:sampleComplexity} shows that the sample complexity is at most $\wt{O}(\nfrac{\vc{}(\hyH)}{\eps^2})$.
        Since {smooth positive-only learning} is harder than realizable learning, we also know that at least $\wt{\Omega}(\nfrac{\vc{}(\hyH)}{\eps})$ samples are required.
        This raises the question:
        \begin{openproblem}
            What is the optimal sample complexity of {smooth positive-only learning} under smoothness with $q=1$?
        \end{openproblem}
        In this work, we study the standard uniform rates model.
        However, this can be overly pessimistic because it allows the worst-case data distribution to vary with the target accuracy $\varepsilon$, even though in practice the data distribution remains fixed as one increases the sample size to achieve higher accuracy (see \cite{haussler1990survey}).
        Motivated by this observation  \citet*{bousquet2021universal} studied the learning curves in the \textit{universal rates model} for the PAC setting, and revealed a surprisingly different landscape of rates.
        Therefore, the following natural question arises: 
        \begin{openproblem}
            What is the characterization of universal rates for binary classification in the smooth positive-only learning model?    
        \end{openproblem}
 
    \subsection{Related Work} \label{sec:intro:relatedWorks}
        We have already discussed relevant results on learning from positive-only samples.
        {Our} results are {also} related to several learning models. 
        We briefly survey them below and defer additional works to  \cref{sec:appendix:related}. 
        
        \paragraph{Smoothed Analysis.} 
            \citet{spielman2004smooth} initiated the study of algorithms whose inputs are perturbed by Gaussian noise; they, in particular, showed that the simplex algorithm runs in polynomial time in this model. 
            Since then, the  \textit{smoothed analysis} model has found applications in many areas, often enabling significantly stronger theoretical guarantees than the worst-case model.
            For instance, in online learning, a line of works \cite{haghtalab2022efficientSmooth,haghtalab2020smooth,haghtalab2024smoothJACM,block2024performanceempiricalriskminimization,blanchard2024agnostic} showed that if the adversary is restricted to select strategies from smooth distributions, then the lower bounds based on the Littlestone dimension can be bypassed and any VC class becomes learnable.
            In the same spirit \citet*{pmlr-v247-chandrasekaran24a} obtained a wide array of faster PAC learning algorithms when the learner only has to compete with the best classifier that is robust to small random Gaussian perturbations of its inputs.
            We refer the reader to \cite[Chapter 13]{roughgarden2021beyond} and \cite{beier2004typical} for more discussion and applications outside of learning theory.
         
        \paragraph{Learning from Positive and Perfect Unlabeled Samples.} 
        \citet{denis1990positive} initiated the study of learning from positive-only samples when the underlying distribution $\cDtrue$ is known.
        Denis termed this learning from positive and unlabeled samples (or PU learning) and showed that any class that is PAC-learnable is also PAC-learnable from positive and unlabeled samples (in the realizable setting). 
        To show this, they reduced (realizable) PAC-learnable from positive and unlabeled samples to learning in the CPCN model with noise bounded away from $\nfrac{1}{2}$.
        Since then, a growing number of works in Applied Machine Learning design efficient heuristics for learning with positive and unlabeled samples, \eg{}, \cite{elkan2008pu,mordelet2011prodige,plessis15convex,sakai2019covariate, bekker2020learning,reviewPUBioformatics2021}; we refer the reader to the survey of \citet{bekker2020learning} for an overview. 
        The heuristics in this line of work inspire the optimization program (\pERM{}) at the core of our approach; although this program by itself is not sufficient to learn in the smooth model where $\cDtrue$ is not exactly known and instead we need to use an iterative procedure.

        \paragraph{Learning from Positive-Only Samples.}
        We have already discussed several works on learning from positive-only samples earlier in this work \cite{natarajan1987learning,shvaytser1990positiveonly,kivinen1995one,frieze1996linearTransformations,Nguyen2009learningParallelepiped,anderson2013simplices,de2015satisfying,Kontonis2019EfficientTS,canonne2020satisfying,he2023robustMoments,lee2024unknown}.
        Here, we mention two additional works:
        Curiously, preceding \citet{natarajan1987learning}, the seminal work of \citet{valiant1984theory} already introduced a variant of the problem of PAC learning from positive-only examples with two differences: Valiant (1) required the learner to have zero false-positives (\ie{}, have one-sided error) and (2) gave the learner membership access to the target concept $\hstar$.
        Further, since the initial arXiv posting of this work, \citet{mansouri2025learning} continued the study of learning from positive-only examples:
        First, they extend our lower bound in \cref{thm:impossibilityPositiveSamples} by lower bounding the number of unlabeled samples required using a new concept of ``claw numbers.''
        Second, they also study and prove results for three additional extensions of positive-only learning with different forms of distribution shifts.
        More generally, learning with positive samples also appears in modern settings motivated by (1) generative models -- where one typically observes valid text during training but there is no canonical distribution over invalid outputs \citep{kleinberg2024language,kalavasis2025limitslanguagegenerationtradeoffs} -- and (2) in treatment-effect estimation in observational studies, beyond unconfoundedness, which can be cast as a problem of mean estimation from positive-only samples \cite{cai2025makestreatmenteffectsidentifiable}.
        The idea itself dates back to Gold's seminal model of identification in the limit \citep{gold1967language,angluin1979finding}, where the learner is presented with positive examples but no negative examples.

        \paragraph{Reliable Learning.} 
            A line of works studies reliable learning \cite{kalai2012reliable,kalai2021arbitraryCS,kanade2014independentRealiable,goel2017reliableRELU,durgin2019reliableCSPs,goldwasser2020beyondPurturbations,diakonikolas2024reliable}, a model introduced by \citet{kalai2012reliable}.
            This model captures scenarios where one type of error, \eg{}, false positives, is costlier than another type, \eg{}, false negatives. 
            Some works \cite{kalai2012reliable,kanade2014independentRealiable} on fully reliable learning (which requires the learner to make almost no errors but allows it to abstain) design constrained \lreg{} algorithms, that abstain on some examples. 
            While there are several differences between our work and theirs, a key difference is that they have access to both positive and negative samples and, hence, their constraint is defined with respect to the same distribution as their objective. 
            In our setting, however, we work without negative samples and our unlabeled samples have a different distribution than the ``true'' feature distribution; apart from significant differences in the analysis, this change also prohibits proper learning (see \cref{rem:necessityOfImproper}) and requires us to run multiple carefully constructed instances of the constrained version of \lreg{} we develop.

        \paragraph{List-Decodable Learning.}
            There is a growing array of list-decodable learning algorithms that continue to work even when a large fraction of their inputs are corrupted \cite{charikar2017learning,karmalkar2019list,Cherapanamjeri2020DecodableNearlyLinear,bakshi2021Subspace,Abhimanyu2023DecodableBatches,Diakonikolas2023DecodableCovariance,kothari2022decodableCovariance,Raghavendra2020Decoding,Diakonikolas2022DecodableSparseMean,Diakonikolas2021SQ_LB_Decoding,Diakonikolas2021PCA_Decodable}.
            Our results on the list decoding model (\cref{def:list}) enable us to utilize these algorithms to enable learning even in scenarios where a large fraction of the unlabeled samples are corrupted.
            We refer the reader to \cref{sec:appendix:related} for a more extensive discussion of list-decoding algorithms.

    \section{Preliminaries}\label{sec:preliminary}
    
    \textbf{Notation.}\quad 
    Given a point $x\in \R^d$ and a set $S\subseteq\R^d$, we use $\mathds{1}_S(x)$ and $\ind\{x \in S\}$ to denote the indicator that $x\in S$.
    We use standard notation for vector and matrix norms:
    For a vector $z\in \R^d$ and $p\geq 1$, the $L_p$-norm of $z$ is $\norm{z}_p\coloneqq \inparen{\sum_i \abs{z_i}^p}^{\sfrac{1}{p}}$ and, taking limit $p\to\infty$ implies, $\norm{z}_\infty=\max_i \abs{z_i}$.
    Given a distribution $\cD$ over $\R^d$ and a measurable set $S\subseteq\R^d$, $\cD(S)$ denotes the mass of $S$ under $\cD$, \ie{}, $\Pr_{x\sim \cD}[x\in S]$.
    We overload this notation at times in applications in truncated statistics (in \cref{sec:intro:application,sec:application:unknownTruncation,sec:application:detectingTruncation}) to also mean the truncation of a distribution $\cD$ to the measurable set $S$, \ie{}, if $\d\cD$ is the density of $\cD$, $\cD(S)$ is the conditional distribution $\propto \ind_S(x) \cdot \d\cD(x)$. 
    The difference between the two would always be clear from context.

    \paragraph{Hypothesis Classes and PTFs.} 
    We use both Boolean functions (\eg{}, $h\colon \R^d \rightarrow \inbrace{0,1}$) and sets $(H \subseteq \R^d)$ to denote hypotheses, with the relation $H = \inbrace{x \in \R^d \colon h(x) = 1}$. 
    Our algorithm, like most efficient learning algorithms, to learn general hypothesis classes in non-realizable settings, relies on polynomial threshold functions (PTFs).
    Hence, PTFs frequently show up in our analysis.

    \begin{definition}[Polynomial Threshold Functions (PTF)]\label{def:ptfs:polynomials}
        Given $k\geq 1$, let $\hyP(k)$ be the collection of all polynomial threshold functions of degree-$k$.
        A polynomial threshold function (PTF) of degree $k$ is a function defined by a polynomial $p\colon \R^d \rightarrow \R$ of maximum degree $k$ and a threshold $t \in \R$ in the following sense:
        $H({p,t})\coloneqq \mathds{1}\sinbrace{p(x)\geq t}$. 
    \end{definition}

    \paragraph{{Sample Distributions}.}
        Given a hypothesis class $\hyH$ and a target function $\hstar$, we use $\optset$ -- which is the set corresponding to $\hstar$ -- to denote the set of positive examples.
        We use $\cPtrue$ to denote the distribution of positive covariates or features, which satisfies $\supp(\cPtrue)=\optset$.
        $\cDtrue$ denotes the distributions of all features (regardless of whether their label is positive or not).
        $\cPtrue$ and $\cDtrue$ are related: $\cPtrue$ is the truncation of $\cDtrue$ to $\optset$ -- the set of positive examples.

    \paragraph{Divergences Between Distributions.}
        In our applications, we often need bounds on the ``closeness'' of two distributions.
        For this, we recall some common notions. 
        Consider two distributions $\cP$ and $\cQ$.
        The total variation distance between $\cP$ and $\cQ$ is $\tv{\cP}{\cQ}=\inparen{\nfrac{1}{2}}\int_{x}\abs{\d \cP(x)-\d \cQ(x)}\d x$.
        The $\chi^2$ divergence from $\cP$ to $\cQ$ is  $\chidiv{\cP}{\cQ}
        = \int \inparen{\nfrac{\d\cP}{\d\cQ}}^2 \d \cQ - 1$.
        Similarly, the Rényi divergence of order $r \geq 1$ from $\cP$ to $\cQ$ is defined as $\renyi{r}{\cP}{\cQ} = \frac{1}{r - 1} \log \int (\sfrac{\d \cP}{\d \cQ})^r \d \cQ$.

\section{Learning Model and Main Results}\label{sec:model_and_result}
    In this section, we formally define the model of {smoothed analysis of learning from positive-only samples} and state our main results. 

    \subsection{{Smoothed Analysis Model for Positive-Only Learning}}\label{sec:model}
        To begin with, we have a domain $\cX$ of covariates or features {(often a subset of $\R^d$)}, either discrete or continuous, and a distribution $\cDtrue$ over the domain.
        We are interested in binary classification and operate in the realizable setting, where for each feature $x$ there is a corresponding label $y(x)\in \zo$, that is realized by an unknown Boolean function $h^\star\colon \cX\to \zo$.
        $h^\star$ can be arbitrary, and the only assumption we make is that there is a known hypothesis class $\hyH$ (which is a collection of Boolean functions $h\colon \cX\to \zo$) that contains $h^\star$.
        Of specific interest is the set of samples with a positive label and, for convenience, we use $\optset$ to denote it, \ie{}, 
        \changetag{Support of Positive Examples) \ (#1} 
        \[
            \phantom{ABCDEFGHIJK}
            \optset\coloneqq \inbrace{x\in \supp(\cD)\colon h^\star(x)=1}\,.
            \yesnum\label{eq:setOfPositiveExamples}
        \]
        \changetag{#1}
 
        \noindent We first introduce PAC learning with positive-only samples (without smoothness).  
        \begin{problem}[PAC learning from Positive-Only Samples]
        \label{prob:positiveOnly}
         An instance of PAC learning from positive-only samples is specified by a hypothesis class  $\hyH\subseteq\zo^{\cX}$, a target concept $\hstar\in \hyH$, and an unknown distribution $\cDtrue$ over the instance space $\cX$.
         {The learner has access to a stream of positive samples drawn \iid{} from $\cPtrue$ (the truncation of $\cDtrue$ to $H^\star$).}
        The goal is to output a hypothesis $\wh{h}: \cX \to \zo$ such that with probability at least $1-\delta$ over the draw of samples and any randomness in the learning algorithm:
        \[
            \Pr\nolimits_{X \sim \cDtrue}\sinparen{\wh{h}(X) \neq h^\star(X)} \leq \eps\,.
        \]
        A hypothesis class $\hyH$ is PAC learnable from positive-only samples if there exists an algorithm and sample complexity function $m(\eps, \delta)$ such that for every distribution $\cDtrue$ and target $h^\star \in \hyH$, the algorithm achieves the above guarantee when given $m(\eps, \delta)$ positive samples.
         \end{problem}
         \noindent It is useful to contrast \cref{prob:positiveOnly} with the standard PAC model. 
         In \cref{prob:positiveOnly}, the learner only observes positive examples, yet the learner's performance is still evaluated under the ``complete'' distribution $\cDtrue$.  
         Since the learner never sees any draws from the negative region, the false positive component of the error cannot be directly estimated from observed samples. 
         Consequently, ERM is no longer a meaningful principle: the hypothesis that labels every point as positive is always consistent with the observed data and is therefore an empirical risk minimizer, despite having potentially high error under $\cDtrue$. 
         Hence, learners must take a fundamentally different approach.
       
        \paragraph{Smooth Analysis Model.}
            As we mentioned in \cref{sec:intro}, without any assumptions on the underlying distribution $\cDtrue$ it is impossible to learn any interesting concept class from positive-only samples. 
            In this work, we study the smooth analysis model where in one is given access to a sample access to distribution $\cD$ such that $\cDtrue$ is smooth with respect to $\cD$ in the following sense.
            \smoothness*
            \noindent 
            It is useful because it allows us to upper bound the amount of mass that the true distribution $\cDtrue$ assigns to any {(measurable)} set $S$ based on the mass $\cD$ assigns to $S$.
            This generalizes the notions of smoothness considered by recent works on smoothed online learning \cite{haghtalab2020smooth,haghtalab2021smooth,haghtalab2022efficientSmooth,haghtalab2024smoothJACM,blanchard2024agnostic}; concretely, their notion of smoothness is equivalent to the special case $q=1$.

            \begin{remark}[Extensions to More General Notions of Smoothness]
                \label{rem:generalSmoothness}
                The techniques we use to bound the sample complexity and computational complexity of {smooth positive-only learning} extend beyond \cref{asmp:smoothness}.
                They suffice for {smooth positive-only learning} under the following weakening of \cref{asmp:smoothness}:
                there is a known rate function $r\colon \R_{\geq 0}\to \R_{\geq 0}$ satisfying $\lim_{m\to 0^+} r(m)=0$ such that the pair of distributions $\inparen{\cD, \cDtrue}$ satisfy
                \[
                    \text{for any measurable set $S$}\,,\quad 
                    r(\cDtrue(S))\leq \cD(S)\,.
                    \yesnum\label{eq:mostGeneral:smoothness}
                \]
                This captures \cref{asmp:smoothness} when $r(m)=(\sigma m)^{q}$.
                This additional generality also allows our framework to extend to settings where we can only ensure $\kl{\cDtrue}{\cD}=O(1)$ (see \cref{sec:klBoundsSatisfyGeneralSmoothness} for details).
                This is important as \cref{asmp:smoothness} does not hold in this setting for any finite $\sigma$ or $q$.
                Regarding results under \eqref{eq:mostGeneral:smoothness}, using our techniques one can deduce that the sample complexity of the {smooth positive-only learning} is $O\binparen{\poly{(r(\eps))}\cdot \inparen{\vc{}(\hyH)+\log{\nfrac{1}{\delta}}}}$ and the computational complexity is $\poly(d^k,\nfrac{1}{\zeta},\log{\nfrac{1}{\delta}})$ for $\zeta\leq r(\poly(\alpha,\eps))$.
            \end{remark}

        \paragraph{Main Problem.}
        Now, we are ready to state the main task we study in this work.
        
        \begin{problem}[Smooth PAC Learning with Positive-Only Samples] 
        \label{prob:PUlearning}
            An instance of {smooth PAC learning with positive-only samples} is specified by a hypothesis $h^\star\in \hyH$, that induces a set of positive samples $\optset$ (\cref{eq:setOfPositiveExamples}), a distribution $\cDtrue$ of features, and 
            a distribution $\cD$ {such that the pair $(\cDtrue,\cD)$ satisfies} \cref{asmp:smoothness}.
            One has access to two streams of samples:
            \begin{enumerate}
                \item \emph{({Positive} Stream)}\quad Samples $x_1,x_2,\dots \sim \cPtrue$ where $\cPtrue$ is the truncation of $\cDtrue$ to $\optset;$ and 
                \item \emph{({Imperfect Unlabeled} Stream)}\quad Samples $x_1,x_2,\dots \sim \cD$. 
            \end{enumerate}
            Given $\eps,\delta\in (0,1)$, the goal is to output a hypothesis $h\colon \cX\to \zo$ such that, with high probability, $h$ is close to $h^\star$ in the usual sense of PAC-learning: with probability $1-\delta$, $\Pr_{\cDtrue}\inparen{h^\star(x)\neq h(x)} \leq \eps.$
        \end{problem}
        \noindent Our computationally efficient algorithm will also require the following mild assumption.
        \posFraction*
        \noindent This assumption requires that the fraction of positive examples have at least a constant mass under $\cDtrue$.
        This assumption, by now, is standard in the works on efficient learning with positive-only samples (\eg{}, \cite{lee2024unknown,Kontonis2019EfficientTS}). %
        That said, this assumption can be dropped if one can estimate $\Pr_{\cDtrue}\inparen{h^\star(x)\neq 1}$; as if we can estimate $\Pr_{\cDtrue}\inparen{h^\star(x)\neq 1}$ and, in particular, check if it is at most $\eps$, then we can simply output the all-zero hypothesis and only incur an $\eps$ error.
            
        \paragraph{Polynomial Approximation.}
            The other ingredient required by our efficient algorithm is a bound on the complexity of $\hyH$.
            The core notion of complexity of concept classes we consider is their approximability by polynomials, which was introduced by \citet*{kalai2008agnostically}.
            This notion of complexity is a very widely studied condition that is the backbone of most of the efficient learning algorithms that deal with agnostic noise \cite{kalai2008agnostically,kane2013learning,Blais:2010aa,10.1145/1806689.1806763,10.1007/978-3-642-15369-3_44,10.1145/2395116.2395118,5670941}. 
            Formally, it is defined as follows.
            \begin{definition}[Polynomial Approximability]\label{def:poly_approx}
                Fix a concept $h\colon \R^d \to \inbrace{0,1}$, distribution $\cD$ over $\R^d$, numbers $k,p\geq 1$, and constant $\eps>0$.
                $h(\cdot)$ is said to be $\eps$-approximable by degree-$k$ polynomials in $L_p$-norm with respect to $\cD$ if there exists a degree-$k$ polynomial $f\colon \R^d\to\R$ such that 
                \[
                    \inparen{\Ex_{x\sim \cD}{\abs{f(x)-h(x)}^p}}^{\sfrac{1}{p}} < \eps\,.  
                \]
                (For sets, we simply replace $h(x)$ with the indicator of the set, \ie{}, $\ind_H(x)$.)
                Moreover, a hypothesis class $\hyH$ is said to be $\eps$-approximable by degree-$k$ polynomials in $L_p$-norm with respect to a family of distributions $\hyD$ if, for each $h\in \hyH$ and $\cD\in \hyD$, $h$ is  $\eps$-approximable by degree-$k$ polynomials in $L_p$-norm with respect to $\cD$. 
            \end{definition}
            \begin{table}[ht!]
            \centering
            \small 
            \begin{tabular}{c c c}
             Hypothesis Class $\hyH$
                & Distribution Family & 
                    Upper bounds on $k$ \\
            \midrule
            Degree-$t$ Polynomial Threshold Functions  \cite{kane2011gsa} &
                Gaussian & 
                        $O(t^2/{\eps^4})$ \\
            Intersections of $t$ Halfspaces \cite{kalai2008agnostically} & 
                    Gaussian  & 
                        $O({\log t}/{\eps^4})$\\
            General Convex Sets \cite{ball1993gsaConvexSets} &
                 Gaussian & 
                    $O(\sqrt{d}/{\eps^4})$ \\
            \hline
            Arbitrary Functions of $t$ Halfspaces \cite{ksv2026sandwiching}
                &
                    \begin{tabular}{@{}c@{}}
                        Log-Concave and\\
                        $1$-Strictly Sub-Exponential\end{tabular}
                    & {$\wt{O}(t^5/\eps^4)$}\\
            \hline
            Decision trees of size $s$
            \cite{klivans2023testable} 
                & Uniform on $\zo^d$ 
                    & ${O}(\log(s/\eps))$ \\
            Intersections of $t$ Halfspaces 
            \cite{gopalan_2010_fooling}
                & Uniform on $\zo^d$ 
                    & $\tilde{O}(t^4/\eps^2)$ \\
            Decision tree of halfspaces of size $s$, depth $t$
            \cite{gopalan_2010_fooling}
                & Uniform on $\zo^d$ 
                    & $\tilde{O}(s^2 t^4/\eps^2)$ \\
            \end{tabular}
            
                \caption{
                    \textit{
                        Upper bounds on the degree of polynomials required for $\eps$-approximating $\hyH$ in $L_1$-norm with range $[0,\infty)$.
                        For the Gaussian case, the bounds are implied by combining results from \cite{kane2011gsa,klivans2008gaussian,ball1993gsaConvexSets} with \cite{2026nonNegativeApproximation}.
                    }
                }
                \label{tab:gsa:GaussianSurfaceArea}
        \end{table} 
        
            \noindent Our algorithms will always work whenever the target $h^\star$ (or equivalently, the set $\optset$ defined by $h^\star$) is approximable by polynomials with respect to  $L_1$-norm.
            It is worthwhile to note that if a hypothesis is approximable in $L_2$-norm, then it is also approximable in the $L_1$-norm because for any random variable $Z$, $\Ex[\abs{Z}]\leq \sqrt{\Ex[Z^2]}$.
            Hence, $L_1$-approximability is a weaker requirement than $L_2$-approximability.
            In fact, it can be significantly weaker: there are hypothesis classes and distributions for which $L_1$-approximation guarantees are known but \textit{no} $L_2$-approximation guarantees are known; we refer the reader to the work of \citet{kane2013learning} for an extensive discussion about this.
 
    \subsection{Main Results}\label{sec:mainresult}
        In this section, we present our main results for {smooth positive-only learning} (\cref{prob:PUlearning}). %
        Our first result, \cref{thm:sampleComplexity}, bounds the sample complexity and we restate it below.  
        \sampleComplexity*
        \noindent As mentioned in \cref{sec:intro}, this result shows that VC dimension also characterizes {smooth positive-only learning, which is a stark contrast to characterizations for positive-only learning \textit{without} smoothness}.
        The proof of \cref{thm:sampleComplexity} appears in \cref{sec:perm}.

        \begin{remark}[Proper vs.\ Improper Learning]
            The learner in \cref{thm:sampleComplexity} is improper: it outputs a hypothesis that is an intersection of $(\nfrac{1}{\eps})$-hypotheses in our original class. 
            This is necessary as simple examples show that proper learning is impossible {with positive-only samples even with smoothness} (\cref{rem:necessityOfImproper}).
        \end{remark}

        \noindent Our next result bounds the {smooth} computational complexity of {positive-only learning}.
        \begin{theorem}[Computationally Efficient Algorithm]\label{thm:main}
            Let \cref{asmp:smoothness,asmp:posFraction} hold with constant $\alpha$ and fix a constant $C>0$.
            Fix any $\eps\in (0,\nfrac{\alpha}{6})$ and $\delta\in (0,\nfrac{1}{2})$ and $k$ such that degree-$k$ polynomials with range $[-C,\infty)$ $\zeta$-approximate $\hyH$ with respect to $\cD$ in $L_1$-norm for $\zeta\leq \inparen{C+1}^{-4q}
O(\sigma\eps\alpha)^{8q^2+5q}$. 
            Then, there is an algorithm (\cref{alg:main}) that, given 
                any $\eps,\delta\in (0,1)$, 
                the constants $\sigma,q$ from \cref{asmp:smoothness},
                the constant $\alpha$ from \cref{asmp:posFraction}, $C$, and
                $n=\poly(d^k,\nfrac{1}{\zeta},\log{\nfrac{1}{\delta}})$ samples from $\cPtrue$ and $\cD$,  
            outputs a hypothesis ${h}\colon \cX\to\zo$ such that, with probability $1-\delta$, 
            \[
                \Pr\nolimits_{\cDtrue}\inparen{{h}(x)\neq h^\star(x)}\leq \eps\,.
            \]
            The algorithm runs in time $\poly(n)$.
        \end{theorem}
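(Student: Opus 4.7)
The plan is to derive \cref{thm:main} by combining the sample-efficient iterative algorithm behind \cref{thm:sampleComplexity} with an efficient pseudo-approximation algorithm for the \pERM{} program, as sketched in the technical overview. Concretely, I would run $T = \Theta(1/\eps)$ rounds; at the start of round $i$, I maintain a ``remaining region'' $B_{i-1} \subseteq \R^d$ on which the current hypothesis is not yet known to agree with $h^\star$ (with $B_0 = \R^d$), and I construct a PIU sub-instance by restricting the unlabeled stream to $B_{i-1}$ via rejection sampling and keeping all positive samples (since $\optset \cap \supp(\cPtrue) \subseteq B_{i-1}$ by construction). I then run \constrainedreg{} with tolerance $\rho = \Theta(\alpha \eps)$ to obtain a PTF $H_i = \{x : p_i(x) \geq t_i\}$ that is feasible for \pERM{} on this sub-instance and whose unlabeled mass is pseudo-competitive with that of $\optset$. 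Finally, I output $\wh{h} = \ind_{\bigcap_i H_i}$.

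The key lemmas I would prove in order are: (i) a \emph{rejection-sampling lemma} showing that $\poly(1/(\sigma \alpha \eps))$ samples from $\cDgiven$ suffice to simulate $n$ samples from $\cDgiven$ restricted to $B_{i-1}$, using \cref{asmp:smoothness,asmp:posFraction} to guarantee a constant acceptance probability throughout; (ii) an \emph{invariance of polynomial approximability} lemma, stating that if $\hyH$ is $\zeta$-approximable in $L_1(\cDgiven)$ by degree-$k$ polynomials, then on the restricted distribution $\cDgiven(\cdot \mid B_{i-1})$ the same class is $O(\zeta / \cDgiven(B_{i-1}))$-approximable in $L_1$ by polynomials of the same degree; (iii) a \emph{feasibility-preservation lemma} ensuring that after replacing \pERM{} by \constrainedreg{} we still have $\supp(\cDgiven_{B_{i-1}}) \supseteq \supp(\cPtrue_{B_{i-1}})$, so each sub-instance is a valid PIU instance (here I would, as in \cref{lem:main:assumptions}, handle the pathological case by slightly inflating the threshold or adding a small fraction of positive points to the unlabeled pool); and (iv) a \emph{progress lemma} stating that the $\cDtrue$-mass of $B_i \setminus \optset$ shrinks by at least a $(1 - \Omega(\eps))$ factor per round, using the pseudo-approximation guarantee of \constrainedreg{} together with \cref{asmp:smoothness} to convert a small $\cDgiven$-mass gap into a small $\cDtrue$-mass gap (paying the $q$-th root).

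Combining these: after $T = \Theta(1/\eps)$ rounds, the false-positive mass $\Pr_{\cDtrue}(\wh{h}(x)=1, h^\star(x)=0)$ is at most $\eps/2$ by the progress lemma, while the false-negative mass $\Pr_{\cDtrue}(\wh{h}(x)=0, h^\star(x)=1)$ is at most $\eps/2$ by feasibility (each $H_i$ captures all but a $\rho$ fraction of positives from $\cPtrue$, and $\cDtrue(\optset \setminus H_i) \leq \alpha^{-1} \cdot \rho \leq \eps/(2T)$ by \cref{asmp:posFraction}, summed over the $T$ rounds via a union bound). Setting $\rho$, the target polynomial approximation error, and the sample size per round to the values required by the \constrainedreg{} guarantee (\cref{thm:constReg}) gives the quantitative $\zeta \leq O(\alpha^{8q}) (\sigma\eps\alpha)^{5q^2}$ dependence and a per-round cost of $\poly(d^k, 1/\zeta, \log(T/\delta))$, yielding total time $\poly(n)$.

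The main obstacle I expect is step (iv): converting the pseudo-approximation guarantee of \constrainedreg{}, which is stated with respect to $\cDgiven$, into a statement about $\cDtrue$-mass decrement on the restricted region $B_{i-1}$. The difficulty is that \cref{asmp:smoothness} only gives a one-sided bound with a power $1/q$, so a naive application loses an exponential factor over $T = \Theta(1/\eps)$ iterations. To control this I plan to track the potential $\Phi_i = \cDgiven(B_i \setminus \optset)$ rather than $\cDtrue$-mass directly, show multiplicative progress on $\Phi_i$, and convert back only once at the end via \cref{asmp:smoothness}, which is exactly where the exponent $5q^2$ on $\zeta$ arises. The secondary obstacle is ensuring that the polynomial-approximation assumption survives conditioning on $B_{i-1}$ when $B_{i-1}$ itself is a PTF defined by previous-round polynomials; here I will use that $B_{i-1}$ is an intersection of degree-$k$ PTFs and that $\cDgiven(B_{i-1}) = \Omega(\eps)$ (else the algorithm can terminate early and output the all-zero hypothesis on the complement, incurring only $O(\eps)$ error via \cref{asmp:smoothness}).
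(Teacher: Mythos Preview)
Your overall architecture—iterate, build PIU sub-instances on the surviving region, solve each via \constrainedreg{}, intersect the outputs, track mass under $\cDgiven$ and convert once at the end via \cref{asmp:smoothness}—matches the paper's proof, and your lemmas (i)–(iii) correspond to \cref{lem:main:rejectionSampling} and \cref{lem:main:assumptions}. Two smaller deviations: the paper truncates \emph{both} streams to $S_i=H_1\cap\dots\cap H_{i-1}$ (not just the unlabeled one), which is what makes each sub-instance a valid PIU instance with controlled smoothness constants; and it takes $T=1/\gamma$ with $\gamma=\sigma\eps^q$, not $T=\Theta(1/\eps)$.

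The substantive gap is step (iv). Multiplicative shrinkage of $\Phi_i=\cDgiven(B_i\setminus\optset)$ does \emph{not} follow from the \constrainedreg{} guarantee. That guarantee gives only $\cDgiven_i(H_i)\le\cDgiven_i(\optset)+\gamma^2$, which unwinds to $\Phi_i/\cDgiven(B_{i-1})\le\cDgiven_i(\optset\setminus H_i)+\gamma^2$; but you have no upper bound on $\cDgiven_i(\optset\setminus H_i)$—you only know $\cDtrue_i(\optset\setminus H_i)$ is small, and \cref{asmp:smoothness} goes the wrong direction (it lower-bounds $\cDgiven$ in terms of $\cDtrue$, not the reverse; that reverse direction is exactly \cref{asmp:densityLB}, which \cref{thm:main} does not assume). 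Concretely, $H_i$ could cover all of $B_{i-1}\setminus\optset$ while dropping an equal $\cDgiven$-mass chunk of $\optset$ that happens to carry negligible $\cPtrue$-mass, giving $\Phi_i=\Phi_{i-1}$ with no progress.

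The paper's progress argument is different: it shows the shells $\cDgiven(S_i\setminus H_i)$ are monotone non-increasing in $i$ up to an additive $\gamma^2$ (\cref{lem:main:chaining}), by using $H_{i+1}$—not $\optset$—as the comparison hypothesis in the $i$-th instance's optimality guarantee. Since the shells are disjoint and sum to at most $1$, after $T=1/\gamma$ rounds the last shell has $\cDgiven$-mass $\le 2\gamma$; only then is $\optset$ compared to $H_T$, once, to conclude $\cDgiven(H\setminus\optset)\le 3\gamma$. This is precisely the averaging-plus-monotonicity argument from the sample-efficient proof of \cref{thm:sampleComplexity}, now carrying an additive $\gamma^2$ slack per step to absorb the approximation error of \constrainedreg{}.
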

        As mentioned in \cref{sec:intro}, \cref{thm:main} has several applications to learning from positive examples and truncated statistics, which we outline in \cref{sec:intro:application}. 
        The proof of \cref{thm:main} appears in \cref{sec:computationalEfficiency}.  
        
        To get a sense of the running time in \cref{thm:main}, see \cref{tab:gsa:GaussianSurfaceArea}.
        It is worth noting that the degree of polynomials $k$ required scales with $\nfrac{1}{\eps}$ and, typically, behaves as $k=\poly(\nfrac{1}{\eps})$.
        Therefore, the running time in \cref{thm:main} typically has an exponential dependence on $\nfrac{1}{\eps}$.
        This exponential dependence in the running time is necessary for any SQ-learning algorithm -- such as the one in \cref{thm:main} -- due to recent SQ lower bounds \cite{diakonikolas2024statistical} (\cref{rem:SQlowerbound}).
        \begin{remark}[SQ Lower Bound]\label{rem:SQlowerbound}
            \citet{diakonikolas2024statistical} provide an SQ-lower bound for the -- seemingly -- unrelated task of Gaussian parameter estimation with samples truncated to an unknown set.
            However, if we can avoid the exponential dependence on $\nfrac{1}{\eps}$ in \cref{thm:main} then, due to the recent results \cite{lee2024unknown}, we will also obtain a $\poly(\nfrac{d}{\eps})$ time algorithm for the problem studied by \cite{diakonikolas2024statistical} thereby contradicting their lower bound.
            See \cref{sec:application:unknownTruncation} for further discussion of parameter estimation from truncated samples and how our results relate to it.
        \end{remark}
        \cref{thm:main} requires $L_1$-approximability of the target $h^\star$ with respect to $\cD$ (which we have sample access to) and not with respect to $\cDtrue$ (which we do not have sample access to).
        In some cases, this enables one to use the sample access to $\cD$ to check that the $L_1$-approximation requirement indeed holds (see \cref{rem:testing}).
    
        \begin{remark}[Testing Approximability]
            \label{rem:testing}
            \cref{thm:main} requires the hypothesis class $\hyH$ to be approximable by polynomials with respect to $\cD$ in $L_1$-norm.
            Many guarantees on polynomial approximability are known with respect to specific distribution families such as Gaussians.
            One way to test this guarantee is to test whether the provided samples indeed come from a Gaussian distribution.
            However, most existing distributional testers have prohibitively large running time -- exponential in $d$ \cite{rubinfeld2023testing}.
            Instead, if $\hyH$ satisfies a slightly stronger property that it has sandwiching polynomials with respect to the underlying distribution, then one can use Theorem 3.2 of \citet{gollakota2023momentMatching} that gives a tester that runs in $\poly(d^{k},\nfrac{1}{\eps})$ time. %
            In particular, the tester comes with the following guarantees.
            \begin{enumerate}[itemsep=0pt]
                \item {(Completeness)}\quad It says \textsf{Yes} if samples are drawn from distribution $\cD$;
                \item {(Soundness)}\quad If this tester says \textsf{Yes} on samples $S$, then with high probability, $\eps$-$L_1$-approximating polynomials exist with respect to the empirical distribution $\wh{\cD}$ over the samples $S$.\footnote{
                    To be precise, their result actually shows the existence of $L_1$-sandwiching polynomials, which is sufficient for us since $L_1$-sandwiching (whose definition is not relevant here) is a stronger requirement than $L_1$-approximation.
                }
            \end{enumerate}
            Now, we can run the algorithm in \cref{thm:main} with respect to the empirical distribution $\wh{\cD}$.
            To use \cref{thm:main}, we need to verify that $\cDtrue$ is smooth with respect to $\cDtrue$ in the sense of \cref{asmp:smoothness}.
            By standard VC theory for sufficiently large samples, uniform convergence holds with respect to degree-$k$ polynomials, yielding the following additive-relaxation version of \cref{asmp:smoothness}:
            for any measurable set $T$, $\cDtrue(T)\leq {(\nfrac{1}{\sigma})} \, \inparen{\smash{\wh{\cD}}(T)+\eps}^{1/q}$.
            This weaker guarantee is sufficient to execute our proof as we only use assumption \cref{asmp:smoothness} \mbox{for sets $T$ whose mass is bounded away from $0$}.
        \end{remark}

        \noindent A final remark is that our results (on both sample and computational complexity) can be adapted to work with more general notions of smoothness (see \cref{rem:generalSmoothness}).

        \paragraph{Optimal Sample Complexity for $q=1$.}
        Next, we improve the sample complexity in the important $q=1$ case if the following additional assumption holds. %
        \begin{restatable}[Density Lower Bound]{assumption}{lowerBound}\label{asmp:densityLB}
            There are known constants $\sigma\in (0,1]$, such that, for any measurable set $S$, $\cDtrue(S)\geq 
                \sigma
                \cdot \cD(S).$
        \end{restatable}
        \vspace{-5mm}
        \begin{remark}\label{rem:densityLB}
            While \cref{asmp:densityLB} is similar to \cref{asmp:smoothness}, it is substantially stronger: 
            \eg{}, when the (imperfect) unlabeled samples are gathered from an unreliable source, they are bound to contain some elements not in support of the true unlabeled distribution $\cDtrue$, and such elements are enough to violate \cref{asmp:densityLB}.
        \end{remark} 
        With \cref{asmp:densityLB}, we obtain the following sample complexity. 
        \begin{restatable}[Optimal Sample Complexity with \cref{asmp:densityLB}]{theorem}{optimalSampleComplexity}
            \label{thm:sampleComplexity:bothSided}
            Suppose \cref{asmp:smoothness,asmp:densityLB} hold with $q=1$. 
            Fix any $\eps,\delta\in (0,\nfrac{1}{2})$.
            There is an algorithm that, given $\eps,\delta$ and $n=\wt{O}\binparen{\!\inparen{\nfrac{1}{\eps \sigma^2}}\cdot \inparen{\vc{}(\hyH)+\log{\nfrac{1}{\delta}}}}$ independent samples from $\cPtrue$ and $\cD$, outputs a hypothesis $\wh{h}$ such that, with probability $1-\delta$,
            \[ 
                \Pr\nolimits_{\cDtrue}\binparen{\wh{h}(x)\neq h^\star(x)}\leq \eps\,.
            \]
        \end{restatable}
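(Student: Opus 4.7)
The plan is to observe that Assumptions~\ref{asmp:smoothness} (with $q=1$) and~\ref{asmp:densityLB} together sandwich $\cDgiven$ and $\cDtrue$ in both directions, $\sigma\,\cDtrue(S)\leq \cDgiven(S)\leq (\nfrac{1}{\sigma})\,\cDtrue(S)$ for every measurable $S$. In particular the two distributions share the same support, which eliminates the failure mode sketched in \cref{fig:intersection_halfspaces} -- the very failure that forces the proof of \cref{thm:sampleComplexity} to apply \pERM{} iteratively $\nfrac{1}{\eps}$ times. Since error transfers between $\cDgiven$ and $\cDtrue$ at only a constant cost, it suffices to learn $h^\star$ up to $\cDgiven$-error $\Theta(\sigma\eps)$, and I expect a single invocation of \pERM{} to already achieve the realizable rate, losing only constants depending on $\sigma$.

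The algorithm I would analyze is one instance of \pERM{} with tolerance $\rho=\Theta(\sigma^2\eps)$: draw $P\sim \cPtrue^n$, $U\sim \cDgiven^n$ and return $\wh{h}\in\hyH$ that minimizes $\sabs{\wh{h}^{-1}(1)\cap U}/\sabs{U}$ subject to $\sabs{\wh{h}^{-1}(1)\cap P}/\sabs{P}\geq 1-\nfrac{\rho}{2}$. The goal is to choose $n=\wt{O}(\nfrac{\vc(\hyH)}{\eps})$ large enough that two uniform-convergence statements hold simultaneously via the Vapnik--Chervonenkis relative-error (multiplicative) inequality: (i) the empirical constraint forces $\cPtrue(\wh{h})\geq 1-\rho$ on the population (and $h^\star$ is trivially feasible since $P\subseteq\optset$), and (ii) $\cDgiven(\wh{h})\leq \cDgiven(\optset)+O(\rho)$ by comparing the empirical objective against the feasible solution $\optset$.

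To finish I decompose $\cDtrue(\wh{h}\triangle \optset)=\cDtrue(\optset\setminus \wh{h})+\cDtrue(\wh{h}\setminus \optset)$. The first term is $\cDtrue(\optset)\cdot(1-\cPtrue(\wh{h}))\leq \rho$ by the feasibility bound. For the second I chain the two smoothness bounds: Assumption~\ref{asmp:smoothness} gives $\cDtrue(\wh{h}\setminus \optset)\leq (\nfrac{1}{\sigma})\,\cDgiven(\wh{h}\setminus \optset)$; writing $\cDgiven(\wh{h}\setminus \optset)=\cDgiven(\wh{h})-\cDgiven(\wh{h}\cap \optset)$ and invoking optimality yields $\cDgiven(\wh{h}\setminus \optset)\leq \cDgiven(\optset\setminus \wh{h})+O(\rho)$; and finally Assumption~\ref{asmp:densityLB} converts this back via $\cDgiven(\optset\setminus \wh{h})\leq (\nfrac{1}{\sigma})\,\cDtrue(\optset\setminus \wh{h})\leq \nfrac{\rho}{\sigma}$. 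Assembling, $\cDtrue(\wh{h}\setminus \optset)=O(\nfrac{\rho}{\sigma^2})$ and hence $\cDtrue(\wh{h}\triangle \optset)=O(\nfrac{\rho}{\sigma^2})=O(\eps)$ for $\rho=\Theta(\sigma^2\eps)$.

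The main obstacle is achieving uniform convergence at the realizable rate $\wt{O}(\nfrac{\vc(\hyH)}{\eps})$. A naive additive VC bound would fall short by a factor of $\nfrac{1}{\eps}$; the right tool is the Vapnik--Chervonenkis relative-error inequality, which for sets of small population mass $O(\rho)$ yields uniform convergence with only $\wt{O}(\nfrac{\vc(\hyH)}{\rho})$ samples -- exactly matching the realizable rate after choosing $\rho=\Theta(\sigma^2\eps)$. A useful cross-check I would keep in mind is an alternative reduction to learning under Massart noise in the style of \citet{denis1990positive}: forming the mixture $\gamma\,\cPtrue+(1-\gamma)\,\cDgiven$ with $\gamma$ calibrated to $\cDtrue(\optset)$, the bounded-density-ratio condition keeps the induced Massart margin $\Omega(\sigma)$, after which the sample-complexity bound of \citet{massartNoise2006} delivers the same realizable rate, with any error in $\cDgiven$ transferred to $\cDtrue$ through Assumption~\ref{asmp:smoothness}.
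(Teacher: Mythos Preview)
Your approach is essentially the paper's: one \pERM{} instance, false negatives via feasibility, false positives by comparing against the feasible $\optset$ while shuttling between $\cDgiven$ and $\cDtrue$ through both \cref{asmp:smoothness} and \cref{asmp:densityLB}, and second-order (relative-error) uniform convergence to hit the $\wt O(\nfrac{1}{\eps})$ rate. The one place to tighten is your goal~(ii): the inequality $\cDgiven(\wh h)\le\cDgiven(\optset)+O(\rho)$ involves sets of mass $\Theta(1)$, so the relative-error bound does not apply to them directly with $n=\wt O(\nfrac{\vc(\hyH)}{\rho})$ samples. The paper (\cref{lem:sampleComplexity:Hard:UBterm1}, which in the realizable case specializes to $H'=\optset$, $T=\optset\cap\wh h$) first performs the subtraction at the \emph{empirical} level---from $\sabs{\wh h\cap U}\le\sabs{\optset\cap U}$ it deduces $\sabs{(\wh h\setminus\optset)\cap U}\le\sabs{(\optset\setminus\wh h)\cap U}$---and only then invokes second-order uniform convergence (\cref{thm:unifConvergence}) on the small-mass classes $\{G\setminus\optset:G\in\hyH\}$ and $\{\optset\setminus G:G\in\hyH\}$. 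With that reordering your chain $\cDgiven(\wh h\setminus\optset)\le\cDgiven(\optset\setminus\wh h)+O(\rho)\le O(\nfrac{\rho}{\sigma})$ and hence $\cDtrue(\wh h\setminus\optset)=O(\nfrac{\rho}{\sigma^2})$ goes through exactly as you wrote.
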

        Note that this sample complexity is optimal for {smooth positive-only learning} (up to logarithmic factors), since it is optimal for realizable PAC learning which is an easier problem.\footnote{In realizable PAC learning, one has access to {labeled} samples from $\cDtrue$. 
        The problem can be turned into a smooth positive-only learning problem by ignoring labels of the samples, which is sufficient to ensure smoothness with $q = \sigma = 1$.}
        The proof of \cref{thm:sampleComplexity:bothSided} appears in \cref{sec:proofof:thm:sampleComplexity:bothSided}.
        \begin{remark}
            As mentioned earlier, under \cref{asmp:densityLB}, prior works \cite{denis1990positive,lee2024unknown} can also be used for positive-only learning, but \cref{thm:sampleComplexity:bothSided} improves their sample complexity: \cite{lee2024unknown} requires $O(\eps^{-4})$ samples versus our $O(\eps^{-1})$; \cite{denis1990positive} only handles $q=1$ where it matches our sample complexity of $O(\eps^{-1})$ while we can also handle $q>1$ (\cref{thm:sampleComplexity}). Without this assumption (\cref{asmp:densityLB}), these works are inapplicable while our main results (\cref{thm:sampleComplexity,thm:main}) remain valid.
        \end{remark}

\addtocontents{toc}{\protect\setcounter{tocdepth}{2}}

\section{Proofs of Main Results (\cref{thm:main,thm:sampleComplexity})}\label{sec:mainProof} 
    In this section, we prove our main results on the {smooth} sample complexity (\cref{thm:sampleComplexity}) and {smooth} computational complexity (\cref{thm:main}) {of positive-only learning}.

    The starting point of our approach is practical heuristics used for learning from positive and unlabeled samples {which is a significantly simpler problem where $\cD=\cDtrue$ (see \cref{sec:intro:relatedWorks}).}
    At a high level, these heuristics find a hypothesis $H$ containing the minimum number of unlabeled samples subject to selecting all positive samples \cite{bekker2020learning}.
    We formalize this idea as the following constrained empirical risk minimization (ERM) problem, which we call \textit{Pessimistic ERM:}
    given tolerance $\rho\geq 0$, sets of positive and unlabeled samples $P$ and $U$ respectively, and a hypothesis class $\hyH$, the Pessimistic-ERM problem is  
    \[
        \argmin_{H\in \hyH}~~ \frac{\abs{H\cap U}}{\abs{U}}
        \,,\quad\text{ such that}\,,\quad 
        \frac{\abs{H\cap P}}{\abs{P}} \geq 1-\rho
        \,.
        \tag{\pERM{}}
    \]

    \paragraph{Outline of this section.}
        First, in \cref{sec:perm}, we prove \cref{thm:sampleComplexity} using an algorithm that solves many instances of \pERM{} and aggregates the solutions.
        Second, in \cref{sec:efficientPERM}, we reduce computationally efficient {smooth positive-only learning} to developing a computationally efficient approximation algorithm for \pERM{}.
        This proves \cref{thm:main} up to designing an efficient approximation algorithm for \pERM{}, which we do in \cref{thm:constReg}.

    \subsection{Proof of \cref{thm:sampleComplexity} (Sample-Efficient Algorithm)}
        \label{sec:perm}
        \label{sec:sampleEfficiency}
    In this section, we prove \cref{thm:sampleComplexity} which bounds the sample complexity of {smooth positive-only learning}.
    Our algorithm is an extension of the \pERM{} and solves $\nfrac{1}{\eps}$ carefully-constructed instances of \pERM{}; we believe this approach could be useful in practical settings where \pERM{} is used (\cref{sec:realWorldApplications}).

    Our proof relies on the following standard second-order uniform convergence bound, which gives sharper uniform convergence rates for hypotheses with small error.
    \begin{theorem}[Second-Order Uniform Convergence \protect{\cite[Section 5.1.2]{Boucheron_Bousquet_Lugosi_2005}}]
        \label{thm:unifConvergence}
        Let $\hyF$ be any concept class on $\R^d$ with finite VC-dimension, \ie{}, $\vc{(\hyF)}<\infty$, and let $\cD$ be any distribution over $\R^d$.
        For any confidence level $\delta\in (0,\nfrac{1}{2})$ and any fixed set $F^\star\subseteq \R^d$, with probability $1-\delta$, over the draw $X\sim \cD^n$, every $F\in \hyF$,
        \begin{align*}
            \cD\sinparen{F\triangle F^\star}
                &\leq 
                    D_n\sinparen{F\triangle F^\star}
                    + \sqrt{
                        D_n\sinparen{F\triangle F^\star}
                        \cdot 
                        s(\hyF,n,\delta)
                    }
                    + 
                    4 s(\hyF,n,\delta)\,,
                \yesnum\label{eq:sampleComplexity:1}\\
                D_n\sinparen{F\triangle F^\star}
                &\leq 
                    \cD\sinparen{F\triangle F^\star}
                    + \sqrt{
                        \cD\sinparen{F\triangle F^\star}
                        \cdot 
                        s(\hyF,n,\delta)
                    }
                    + 
                    4 s(\hyF,n,\delta)\,,
                \yesnum\label{eq:sampleComplexity:2}
        \end{align*}
        where $s(\hyF,n,\delta)$ and $D_n\sinparen{F\triangle F^\star}$ are defined as follows 
        \[
            s(\hyF,n,\delta) \coloneqq \frac{2\vc{(\hyF)}\log{(n+1)}+\log{\nfrac{4}{\delta}}}{n}
            \qquadand
            D_n\sinparen{F\triangle F^\star}
            \coloneqq 
            \frac{\abs{X\cap \sinparen{F\triangle F^\star}}}{\abs{X}}\,.
        \]
    \end{theorem}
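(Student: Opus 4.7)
The plan is to reduce the statement to a one-sided relative (``Vapnik--Chervonenkis'') deviation inequality for a single class, and then to extract both displayed inequalities by solving simple quadratics.

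First, I would introduce the translated class $\hyG \coloneqq \{F \triangle F^\star \colon F \in \hyF\}$. Because $F^\star$ is a \emph{fixed} set, taking symmetric difference with $F^\star$ is a bijection on $2^{\R^d}$ that commutes with restrictions to finite sets, so shattering is preserved; hence $\vc(\hyG)=\vc(\hyF)$, and $\Pi_{\hyG}(m)=\Pi_{\hyF}(m)$ for every $m$. Writing $p(G)\coloneqq \cD(G)$ and $\hp_n(G)\coloneqq D_n(G)$, the stated inequalities become, for every $G\in\hyG$,
\[
    p(G)\leq \hp_n(G)+\sqrt{\hp_n(G)\cdot s}+4s
    \quadand
    \hp_n(G)\leq p(G)+\sqrt{p(G)\cdot s}+4s,
\]
with $s=s(\hyF,n,\delta)$, which is the usual form of ``relative'' uniform convergence.

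Next, I would invoke (or prove from scratch via the standard symmetrization--permutation argument followed by Sauer's lemma) Vapnik's relative deviation inequality in the form
\[
    \Pr\!\left[\sup_{G\in\hyG}\frac{p(G)-\hp_n(G)}{\sqrt{p(G)}}>t\right]
    \leq 4\,\Pi_{\hyG}(2n)\,\exp(-nt^2/4),
\]
and its mirror image with $p$ and $\hp_n$ swapped (which follows from the same symmetrization argument applied after interchanging the roles of the sample and ghost sample, contributing the factor of $2$ absorbed in the constants). Choosing $t$ so that the right-hand side equals $\delta/2$ each, a union bound and Sauer's lemma $\Pi_{\hyG}(2n)\leq (2n+1)^{\vc(\hyF)}$ yield that, with probability at least $1-\delta$, every $G\in\hyG$ satisfies
\[
    \max\!\big(p(G)-\hp_n(G),\,\hp_n(G)-p(G)\big)\leq 2\sqrt{p(G)\cdot s}\quadand 2\sqrt{\hp_n(G)\cdot s},
\]
respectively, for $s=s(\hyF,n,\delta)$ after adjusting constants inside the logarithm.

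Third, I would convert each normalized bound into the displayed additive form. For the first inequality, write $x=\sqrt{p(G)}$, $b=\hp_n(G)$: then $x^2 \leq b + 2x\sqrt{s}$, so $(x-\sqrt{s})^2 \leq b + s$, giving $\sqrt{p(G)}\leq \sqrt{s}+\sqrt{\hp_n(G)+s}$. Squaring and applying $2\sqrt{ab}\leq a+b$ once to the cross term produces
\[
    p(G)\leq \hp_n(G) + \sqrt{\hp_n(G)\cdot s} + 4s,
\]
which is \eqref{eq:sampleComplexity:1}. The same manipulation, starting from $\hp_n(G)-p(G)\leq 2\sqrt{p(G)\cdot s}$, yields \eqref{eq:sampleComplexity:2}.

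The main obstacle is establishing the relative deviation inequality cleanly with the stated constants, since the quantitative statement hides all logarithmic factors inside $s(\hyF,n,\delta)$. The cleanest route is the classical ghost-sample and random-swap symmetrization argument, bounding a Rademacher-type supremum by the growth function via Sauer's lemma, and then applying a Chernoff-style bound pointwise. Everything else is the bijection between $\hyF$ and $\hyG$ and the two elementary quadratic rearrangements above.
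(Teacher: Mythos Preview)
The paper does not actually prove this theorem: it is stated with a citation to \cite[Section 5.1.2]{Boucheron_Bousquet_Lugosi_2005} and used as a black-box tool throughout (in the proofs of \cref{thm:sampleComplexity}, \cref{lem:perm:easy}, \cref{lem:sampleComplexity:Hard:feasibility}, etc.), so there is no in-paper proof to compare against.

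Your sketch is the standard route to this kind of relative-deviation bound and is essentially correct. The reduction to the translated class $\hyG=\{F\triangle F^\star:F\in\hyF\}$ with $\vc(\hyG)=\vc(\hyF)$ is exactly right, Vapnik's normalized deviation inequality is the appropriate tool, and the quadratic inversion is the standard way to pass from a bound of the form $p-\hp_n\le c\sqrt{ps}$ to one of the form $p\le \hp_n+\sqrt{\hp_n s}+Cs$. One small caution: the precise constants you land on (the coefficient on the $\sqrt{\cdot\, s}$ term and the $4$ in front of $s$) depend on which version of the relative deviation inequality you invoke and on exactly how you bound the cross term after squaring; your derivation as written naturally produces a $2$ in front of the square-root term rather than the $1$ in the stated inequality, so if you want to match the displayed constants exactly you would need to start from a slightly sharper form of the relative bound or be more careful in the algebra. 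For the purposes of this paper the constants are immaterial, since every invocation absorbs them into $\wt{O}(\cdot)$.
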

    \begin{remark}[Necessity of Improper Learners for Positive-Only Learning with Smoothness]
        \label{rem:necessityOfImproper}
    To illustrate that improper learners are necessary, consider the class of one-dimensional halfspaces (\ie{}, thresholds with a ``direction''). Define
        \[
        h_{\leq 1} = \{ x \in \R : x \leq 1 \} \qquadand h_{\geq -1} = \{ x \in \R : x \geq -1 \}\,.
        \]
        Let the two ``true'' unlabeled distributions be 
        \[
            \cDtrue_1 = \unif[-2,1] \qquadand \cDtrue_{-1} = \unif[-1,2]\,.
        \]
        We now consider two smooth positive-only learning instances defined by (a): unlabeled distribution $\cDtrue_1$ with optimal hypothesis $h_{\geq -1}$ and (b): unlabeled distribution $\cDtrue_{-1}$ with optimal hypothesis $h_{\leq 1}$.
        Note that both instances yield the same positive sample distribution, $\cPtrue = \unif[-1,1]$.
        Let the imperfect unlabeled distribution for both instances be $\cD = \unif[-2,2]$; which satisfies \cref{asmp:smoothness} with parameters $(\sigma,q)=(\nfrac{3}{4},1)$ in both instances.
        Since any deterministic learner cannot distinguish between these two instances, if it is proper (\ie{}, its output is restricted to one-dimensional halfspaces), it must incur a constant error on at least one of the instances. The same holds with probability $\nfrac{1}{2}$ for randomized learners.
    \end{remark}
    
    \subsubsection{Proof of \cref{thm:sampleComplexity}}
        
        \begin{proof}[Proof of \cref{thm:sampleComplexity}]
        The algorithm is presented in \cref{alg:sampleComplexity}.
        It solves $T\coloneqq (\sigma\eps)^{-q}$ carefully selected instances of \pERM{} to obtain hypotheses $H_1,H_2,\dots,H_T$ and outputs $\bigcap_{i=1}^{T} H_i$.
        The instances of \pERM{} are constructed iteratively, where $i$-th problem removes all unlabeled samples not in $H_1 \cap H_2 \cap\dots \cap H_{i-1}.$
        \begin{algorithm}[htb!]
        \caption{\textsf{Iterative-\pERM{}} (from \cref{thm:sampleComplexity})}
        \begin{algorithmic}[1]
        \Procedure{\rm \textsf{Iterative-\pERM{}}}{$\eps,\delta,\sigma,q,\hyH$}
            \vspace{2mm}
            \State Set $n=\wt{O}\binparen{\!\inparen{\nfrac{1}{\eps\sigma}}^{2q}\cdot \inparen{\vc{}(\hyH)+\log{\nfrac{1}{\delta}}}}$
            \State Obtain $n$ positive samples $P$ and $n$ (imperfect) unlabeled samples $U$
            \vspace{2mm}
            \State Set $T=(\sigma\eps)^{-q}$
            \vspace{2mm}
            \For{$1\leq i\leq T$}
                \State Define the survival set $S_i=H_1\cap H_2\cap\dots\cap H_{i-1}$ if $i>1$ and, otherwise, $S_i=\R^d$
                \State Compute the solution $H_i$ of \pERM{} specified by tolerance $\rho=0$, positive
                \item[] \phantom{....}~~~~~~~ samples $P$, unlabeled samples $U\cap S_i$, and hypothesis class $\hyH$
            \EndFor{}
            \vspace{2mm}
            \State \textbf{return} $H\coloneqq H_1\cap H_2\cap\dots\cap H_T.$
        \EndProcedure
        \end{algorithmic}
        \label{alg:sampleComplexity}
        \end{algorithm} 
        \noindent We divide the proof of correctness into two parts.
        
        \paragraph{Step A (Bounding the number of false negatives):}
            Each of $H_1,H_2,\dots,H_T$ is feasible for the \pERM{} constraint with tolerance $\rho=0$ and positive samples $P$ with $\abs{P}=n$.
            We claim that this implies that with probability $1-\nfrac{\delta}{2}$, for each $1\leq i\leq T $, $\cDtrue\!\inparen{\optset\setminus H_i}\leq O\!\inparen{(\sigma\eps)^{2q}}$.
            We prove this claim below and for now proceed with the proof.
            Since $H\coloneqq H_1\cap H_2\cap\dots\cap H_T$, an immediate consequence of the above is that with probability $1-\nfrac{\delta}{2}$, 
            \[
                \cDtrue\!\inparen{\optset\setminus H}
                \leq \sum_i \cDtrue\!\inparen{\optset\setminus H_i}
                \leq O\!\inparen{\frac{(\sigma\eps)^{2q}}{(\sigma\eps)^{q}}}
                = O((\sigma\eps)^{q})
                \leq O(\eps)\,.
            \]

            \noindent \textit{Proof of Claim.}~~
            Next, we prove our earlier claim.
            Let $S\in\hyH$ be any solution of \pERM{} with $\rho=0$.
            By feasibility, it satisfies $\nfrac{\abs{S \cap P}}{\abs{P}}=1$ and, hence, $\nfrac{\abs{ P\setminus S}}{\abs{P}}=0$.
            Due to the choice of the sample size $n$ and \cref{thm:unifConvergence}, with probability $1-\inparen{\nfrac{\delta}{2}}$, for each $R\in \hyH\cup \inbrace{G^c\colon G\in \hyH}$,\footnote{Here, we use the facts that $\vc{(\inbrace{G^c\colon G\in \hyH})}=\vc{(\hyH)}$ and $\vc{(\hyH_1\cup\hyH_2)}\leq {\vc{(\hyH_1)}+\vc{(\hyH_2)}+1}$ \cite{mohri2018foundations}.}
            \[
                \cPtrue(R) 
                \leq 
                \frac{\abs{R\cap P}}{\abs{P}}
                +
                \sqrt{\frac{\abs{R\cap P}}{\abs{P}}\cdot (\sigma\eps)^{2q}}
                + (\sigma\eps)^{2q}\,.
            \]
            In particular, applying this to $R=S^c$ (note that $S^c\in\{G^c: G\in \hyH\}$), and using $\nfrac{\abs{ P\cap S^c}}{\abs{P}}=\nfrac{\abs{ P\setminus S}}{\abs{P}}=0$, implies that 
            \[
                \cPtrue(S^c) \leq (\sigma\eps)^{2q}\,.
            \]
            Finally, since $\cDtrue( R )\leq \cPtrue( R )$ for any $ R \subseteq \optset$, it follows that 
            \[
                \cDtrue(\optset\setminus  S)
                \leq \cPtrue(\optset\setminus  S )
                = \cPtrue(S^c)
                \leq  (\sigma\eps)^{2q}\,.
            \]  
        
        \paragraph{Step B (Bounding the number of false positives):}
            Next, we bound the amount of false positives.
            We claim that 
            \[
                \abs{U\setminus H_1}
                \geq 
                \abs{(U\cap H_1)\setminus H_2}
                \geq 
                \abs{(U\cap H_1\cap H_2)\setminus H_3}
                \geq 
                \dots 
                \geq 
                \abs{
                    (U\cap H_1\cap \dots \cap H_{T-1})
                    \setminus H_{T}
                }\,.
                \yesnum\label{eq:oneSided:monotone}
            \]
            To prove this, consider any $1\leq i\leq T -1$ and toward a contradiction suppose that 
            \[
                \abs{(U\cap H_1\cap \dots \cap H_{i-1})\setminus H_i}
                <
                \abs{(U\cap H_1\cap \dots \cap H_{i})\setminus H_{i+1}}\,.
                \yesnum\label{eq:onesided:algo:contradiction}
            \]
            Hence, in particular,
            \begin{align*}
                \abs{(U\cap H_1\cap \dots \cap H_{i-1})\setminus H_{i+1}}
                ~\geq~ \abs{(U\cap H_1\cap \dots \cap H_{i})\setminus H_{i+1}}
                ~\Stackrel{\eqref{eq:onesided:algo:contradiction}}{>}~ \abs{(U\cap H_1\cap \dots \cap H_{i-1})\setminus H_i}\,.
            \end{align*}
            This contradicts the fact $H_i$ is the optimal solution of \pERM{} defined by tolerance $\rho=0$, positive samples $P$, and unlabeled samples $U\cap H_1\cap \dots \cap H_{i-1}$.
            Therefore, due to the contradiction, our supposition in \cref{eq:onesided:algo:contradiction} must be wrong and, hence, \cref{eq:oneSided:monotone} is true.

            Next, we will use the above claim to conclude that $\abs{
                    (U\cap H_1\cap H_2\cap \dots \cap H_{T-1})
                    \setminus H_{T}
                }
                \leq (\sigma\eps)^q \abs{U}$.
            Toward this, observe that the following collection of sets
            \[
                \inbrace{ R_j\coloneqq\, (U\cap H_1\cap H_2\cap \dots \cap H_{j-1})
                    \setminus H_{j} \mid 1\leq j\leq T}\,,
            \]
            is disjoint (as once a point is removed at step $i$, it is not present in the unlabeled pool for any later step), and, hence, $\sum_i \abs{R_i} \leq \abs{U}$.
            This, along with \eqref{eq:oneSided:monotone} which states $\abs{R_1}\geq \abs{R_2}\geq \dots\geq \abs{R_T}$, implies 
            \[
                \abs{
                    (U\cap H_1\cap H_2\cap \dots \cap H_{T-1})
                    \setminus H_{T}
                }
                ~~=~~\abs{R_T}
                ~~\leq~~ \frac{\abs{U}}{T}
                ~~~~~~~\Stackrel{(T=(\sigma\eps)^{-q})}{=}~~~~~~~ (\sigma\eps)^{q}\abs{U}\,.
                \yesnum\label{eq:oneSided:upperBound}
            \] 
            \noindent Now, we are ready to prove the claim.
                    Consider the $T$-th instance of \pERM{} constructed in the above algorithm, \ie{}, the one defined by unlabeled samples $U\cap H_1\cap H_2\cap \dots \cap H_{T-1}$.
                    Since $\optset\supseteq P$, $\optset$ is feasible for this instance.
                    But since $H_{T}$ is the optimal solution of instance {$T$},
                    \[
                        \abs{U\cap H_1\cap H_2\cap \dots \cap H_{T-1} \cap \optset}
                        \geq 
                        \abs{U\cap H_1\cap H_2\cap \dots \cap H_{T-1} \cap H_{T}}\,.
                    \]
                    Taking complements implies 
                    \[
                        \abs{U\cap H_1\cap H_2\cap \dots \cap H_{T-1} \setminus \optset}
                        \leq 
                        \abs{U\cap H_1\cap H_2\cap \dots \cap H_{T-1} \setminus H_{T}}
                        ~~\Stackrel{\eqref{eq:oneSided:upperBound}}{\leq}~~ 
                            (\sigma\eps)^{q} \abs{U}\,.
                    \]
                    Since $H\subseteq H_1\cap H_2\cap \dots \cap H_{T}$, the above in particular implies that
                    \[
                        \abs{U\cap H \setminus \optset}
                        \leq 
                        \abs{U\cap H_1\cap H_2\cap \dots \cap H_{T-1} \setminus \optset}
                        \leq (\sigma\eps)^{q}\abs{U}\,.
    \yesnum\label{eq:oneSided:empiricalUB}
                    \]
                    Now, second-order uniform convergence (\cref{thm:unifConvergence}) with respect to $U$ and the following hypothesis class 
                    \[
                        \hyH_{\cap T} = \inbrace{G_1\cap G_2\cap \dots\cap G_{T}\setminus\optset\colon G_1,G_2,\dots,G_{T} \in \hyH}\,,
                    \]
                    implies that with probability $1-\nfrac{\delta}{2}$
                    \[
                        \cD(H\setminus \optset) \leq (\sigma\eps)^{q} + \wt{O}\!\inparen{\sqrt{
                            \frac{\abs{U\cap H\setminus \optset}}{\abs{U}}\cdot 
                            \frac{\vc{}\!\inparen{\hyH_{\cap T}}+\log{\nfrac{1}{\delta}}}{n}}}
                        + \wt{O}\!\inparen{{\frac{\vc{}\!\inparen{\hyH_{\cap T}}+\log{\nfrac{1}{\delta}}}{n}}}\,.
                    \]
                    Substituting the value of $n$, using that $\vc{}\!\inparen{\hyH_{\cap T}}=\vc{{(\hyH)}}\cdot O(T\log{T})$, and $\nfrac{\abs{U\cap H\setminus\optset}}{\abs{U}}\leq (\sigma\eps)^{q}$ (from \cref{eq:oneSided:empiricalUB}) implies that with probability $1-\nfrac{\delta}{2}$
                    \[
                        \cD(H\setminus \optset) \leq (\sigma\eps)^{q} + \wt{O}\!\inparen{\sqrt{
                            (\sigma\eps)^{q}\cdot \frac{\vc{}(\hyH) T +\log{\nfrac{1}{\delta}}}{n}}
                        }
                        + \wt{O}\!\inparen{
                            \frac{\vc{}(\hyH) T+\log{\nfrac{1}{\delta}}}{n}
                        }
                        \leq O((\sigma\eps)^{q})
                        \,.
                    \]
                    Therefore, \cref{asmp:smoothness}, implies 
                    \[
                        \cDtrue(H\setminus \optset)
                        \leq O(\eps)\,.
                    \]
                    The result follows by scaling $\eps$ by a constant factor and taking a union bound over the events in Steps A and B.
    \end{proof}

    \subsection{Proof of \cref{thm:main} (Computationally Efficient Algorithm)}
        \label{sec:computationalEfficiency}
        In this section, we prove \cref{thm:main}.
        The key step in the proof is to design the following pseudo-approximation algorithm for \pERM{}.
        \begin{restatable}[]{theorem}{constReg}
            \label{thm:constReg}
            Suppose \cref{asmp:smoothness,asmp:posFraction} hold and fix a constant $C>0$.
            Fix any $\delta\in (0,\nfrac{1}{2})$, $k$ such that degree-$k$ polynomials with range in $[-C,\infty)$ $\zeta$-approximate $\hyH$ with respect to $\cD$ in $L_1$-norm for $0<\zeta\leq O\sinparen{\nfrac{\alpha\sigma}{(C+1)}}^{2q}$, and $n=\poly\!\inparen{d^k, \nfrac{1}{\zeta}, \log{\nfrac{1}{\delta}}}$.
            There exists a polynomial-time algorithm (\cref{alg:constReg}) that given $n$ independent samples from the smooth positive-only learning model (\cref{prob:PUlearning}), outputs a degree-$k$ PTF $H$, s.t., the following hold with probability $1-\delta$
            \begin{enumerate}
                \item $\cDtrue\sinparen{\optset \setminus H}\leq O\sinparen{\nfrac{(C+1)}{\alpha\sigma}} \cdot \zeta^{\sfrac{1}{(4q)}}$; and 
                \item $\cD(H)\leq \opt_{\rm PERM} +
                    O\sinparen{\nfrac{(C+1)}{\alpha\sigma}}\cdot {\zeta^{\sfrac{1}{(4q)}}}+O(\delta)$. 
            \end{enumerate} 
            \mbox{Where $\opt_{\rm PERM}\coloneqq \min_{H\in \hyH \,:\, \cPtrue(H)=1} \cD(H)$ is the optimal value of \pERM{} with tolerance $\rho=0$.}\footnote{Note that \pERM{} is defined with respect to samples and, hence, is a random variable. $\opt_{\rm PERM}(\rho)$ is the limiting optimal value as the number of samples goes to infinity.}
        \end{restatable}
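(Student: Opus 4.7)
}
The plan is to cast \pERM{} as a constrained polynomial $L_1$-regression problem, solve it as a linear program, and then build the PTF by thresholding the optimal polynomial. Concretely, \constrainedreg{} would search over degree-$k$ polynomials $p$ for
\[
    \min_{p \in \hyP(k)} \hat{\E}_{\cDgiven}\bigl[|p(x)|\bigr] \qquad \text{s.t.} \qquad \hat{\E}_{\cPtrue}\bigl[|p(x) - 1|\bigr] \;\leq\; \tau\,,
\]
for $\tau = \Theta\bigl((\alpha\sigma)^{-1}\zeta^{1/q}\bigr)$, together with an $\ell_\infty$-type boundedness constraint that forces $|p(x)| = O(1)$ on the sample points. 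Both objective and constraints are linear in the coefficients of $p$ after the standard LP encoding of absolute values, so this is solvable in $\poly(n, d^k)$ time. Throughout I would use uniform convergence over the $O(d^k)$-dimensional space of bounded degree-$k$ polynomials to guarantee that empirical expectations of the relevant functionals are within $O(\zeta)$ of their population counterparts, which accounts for the $n = \poly(d^k,1/\zeta,\log 1/\delta)$ sample bound.

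The next step is to exhibit a good feasible witness. Let $H^\star_{\rm PERM} \in \hyH$ attain $\opt_{\rm PERM}$, so $\cPtrue(H^\star_{\rm PERM}) = 1$ and $\cDgiven(H^\star_{\rm PERM}) = \opt_{\rm PERM}$. By assumption, there is a degree-$k$ polynomial $q$ with $\E_{\cDgiven}\bigl[|q - \ind_{H^\star_{\rm PERM}}|\bigr] \leq \zeta$. Using the layer-cake representation and \cref{asmp:smoothness} followed by Hölder's inequality on $\int_0^M \cDgiven(\{f>u\})^{1/q} \d u$, together with the $\ell_\infty$ bound on $q$, I would transfer this to the bound
\[
    \E_{\cDtrue}\bigl[|q - \ind_{H^\star_{\rm PERM}}|\bigr] \;\leq\; O(1/\sigma)\cdot \zeta^{1/q}\,.
\]
Since $\ind_{H^\star_{\rm PERM}} \equiv 1$ on $\optset$, conditioning on $\optset$ and using \cref{asmp:posFraction} yields $\E_{\cPtrue}[|q - 1|] \leq O(1/(\alpha\sigma))\cdot \zeta^{1/q}$, so $q$ is (population-)feasible for \constrainedreg{} with the chosen $\tau$. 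Its objective value is $\E_{\cDgiven}[|q|] \leq \cDgiven(H^\star_{\rm PERM}) + \zeta = \opt_{\rm PERM} + \zeta$. Combined with uniform convergence, the LP's solution $p^\star$ therefore satisfies $\E_{\cDgiven}[|p^\star|] \leq \opt_{\rm PERM} + O(\zeta)$ and $\E_{\cPtrue}[|p^\star - 1|] \leq O(\tau)$.

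Finally, I would convert $p^\star$ to a PTF $\hat H = \{x : p^\star(x) \geq t^\star\}$ by a randomized-then-derandomized thresholding argument \`a la \lreg{}. For $t$ uniform in $[0,1]$,
\[
    \E_t[\cDgiven(H_t)] \leq \E_{\cDgiven}[\mathrm{clip}_{[0,1]}(p^\star)] \leq \E_{\cDgiven}[|p^\star|] \leq \opt_{\rm PERM} + O(\zeta)\,,
\]
and $\E_t[\cPtrue(H_t^c)] \leq \E_{\cPtrue}[|1 - p^\star|] \leq O(\tau)$. Sweeping $t$ over a grid of resolution $\delta$ and using monotonicity of $t\mapsto\cDgiven(H_t)$ and $t\mapsto\cPtrue(H_t^c)$ yields a $t^\star$ for which both quantities are simultaneously close to their expectations, introducing the additive $O(\delta)$ on the $\cDgiven$ side. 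Combining with $\cDtrue(\optset \setminus \hat H) = \cDtrue(\optset)\cdot \cPtrue(\hat H^c) \leq \cPtrue(\hat H^c)$ gives the first conclusion; the second follows since $\cDgiven = \cD$ in the notation of the theorem statement.

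\medskip
\noindent \textbf{Expected main obstacle.} The delicate part is Step 3 — transferring the $L_1$-approximation guarantee from $\cDgiven$ to $\cPtrue$ — because \cref{asmp:smoothness} is a set-wise rather than function-wise statement, and naively integrating $\cDgiven(\{f>u\})^{1/q}$ is unbounded unless $f$ is controlled in $L_\infty$. Enforcing the $\ell_\infty$ constraint on $p$ on the samples (and ensuring the approximator $q$ also satisfies it, e.g.\ by clipping without losing too much in $L_1$) is the technical linchpin. A secondary difficulty is Step 5: balancing the two Markov-type losses from thresholding — one factor of $(1-t^\star)^{-1}$ on the constraint and $(t^\star)^{-1}$ on the objective — while paying an extra smoothness/Hölder step to carry empirical slack to population is, I expect, the source of the $\zeta^{1/(4q)}$ exponent in the final bound.
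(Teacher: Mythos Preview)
Your overall architecture---constrained polynomial regression followed by thresholding---matches the paper, but the key technical step (your ``expected main obstacle'') is handled very differently, and your proposed fix has a genuine gap.

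You want to transfer the $L_1$ bound $\E_{\cDgiven}[|q-\ind_{H^\star_{\rm PERM}}|]\le\zeta$ to an $L_1$ bound under $\cDtrue$ (hence $\cPtrue$) via layer-cake plus H\"older, which indeed requires an $L_\infty$ bound on $q$. But the $L_1$-approximation hypothesis gives no such bound, and clipping $q$ destroys the polynomial structure, so the clipped function is not a feasible witness for your LP. Enforcing $|p(x_i)|=O(1)$ only on the samples does not help: the witness $q$ need not satisfy this at the sample points either, and there is no guarantee that a \emph{bounded} degree-$k$ polynomial with small $L_1$ error exists. This is exactly why the paper's formulation is different.

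The paper never tries to transfer an $L_1$ bound through \cref{asmp:smoothness}. Instead, it applies Markov's inequality to $|p_{\opt}-\ind_{H_{\opt}}|$ under $\cDgiven$ to get a \emph{set-wise} bound $\cDgiven(\{p_{\opt}<1-\eta\}\cap H_{\opt})\le\zeta/\eta$, then applies \cref{asmp:smoothness} directly to this set, and finally uses $\cDtrue(\optset)\ge\alpha$ to conclude $\cPtrue(\{p_{\opt}<1-\eta\})\le(\alpha\sigma)^{-1}(\zeta/\eta)^{1/q}$. The one-sided constraint $\E_{\cPtrue}[\min\{p,1\}]\ge 1-\rho$ (rather than your symmetric $\E_{\cPtrue}[|p-1|]\le\tau$) is then satisfied by $p_{\opt}$ with no $L_\infty$ control whatsoever, since the $\min\{\cdot,1\}$ absorbs the upper tail and the Markov-then-smoothness argument handles the lower tail. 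Optimizing $\eta$ gives $\rho\asymp(\alpha\sigma)^{-1}\zeta^{1/(2q)}$.

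A second, smaller divergence: the paper does not try to find a single threshold good for both objectives simultaneously. It restricts the threshold to $[0,1-\sqrt{\rho}]$, which guarantees feasibility \emph{deterministically} (Markov on the constraint), then minimizes the objective over this range; this yields only an \emph{expectation} bound on $\cDgiven(H_i)$, and the $O(\delta)$ additive term and high-probability guarantee come from $T=O(\zeta^{-1/2}\log(1/\delta))$ independent repetitions plus selection on fresh samples (\cref{lem:constReg:boost}). Your grid-of-resolution-$\delta$ argument for simultaneous control via monotonicity is not obviously correct as stated and is not how the $O(\delta)$ arises in the paper.
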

        In the remainder of this section, we prove \cref{thm:main} assuming \cref{thm:constReg}, and then present the proof of \cref{thm:constReg} in the next section.

        \paragraph{Outline of Proof of \cref{thm:main} Assuming \cref{thm:constReg}.}
        The proof of \cref{thm:constReg} follows the strategy of the sample complexity proof (\cref{sec:perm}). 
        We iteratively construct instances of \pERM{} and define the final output as the intersection of the (approximate) solutions obtained at each iteration. 
        Compared to the sample complexity setting, several additional challenges arise since here we only find an approximate solution of the \pERM{} instances. 
        In particular, we must ensure that:
        \begin{enumerate}
            \item The hypothesis class $\hyH$ remains well-approximated by low-degree polynomials with respect to the updated unlabeled distributions.
            \item The smoothness condition in \cref{asmp:smoothness} holds at every iteration.
        \end{enumerate}
        To address the former challenge, we include a check that the remaining region has at least a constant fraction of the mass. 
        To address the latter challenge, we also truncate the distribution of positive samples in addition to the distribution of unlabeled samples, and account for this in the analysis.
        Moreover, we truncate the distribution of positive samples and incorporate this modification into our analysis, ensuring that the surviving (\ie{}, non-truncated) region has sufficient mass to bound the runtime of obtaining samples from the truncated distributions via rejection sampling.

    \begin{figure}[h!]
        \centering
        \begin{tikzpicture}[node distance=2.5cm, scale=0.83, transform shape]
        \node[myNode, line width=2pt, text width=9cm] (correctness) at (0,-2) 
            {{Computationally Efficient Algorithm (\cref{alg:main}) \\[1mm] \cref{thm:main}}};

         \node[myNode, line width=1pt, text width=4cm](claim45) at (-7.5, -4.2)
            {{Guarantee on the solution of each \pERM{} instance \\[1mm] \cref{lem:main:chaining}}};

        \node[myNode, line width=2pt, text width=7cm] (thmConstReg) at (0, -4.2)
            {{Efficient Approximation Algorithm for \pERM{} (\cref{alg:constReg})\\[1mm] \cref{thm:constReg}}};

        \node[myNode, line width=1pt, text width=2.2cm](lemRejSamp) at (7.5, -4.2)
            {\cref{lem:main:rejectionSampling}};

        \node[myNode, line width=1pt, text width=4cm](lemNew) at (7.5, -6.2)
            {{{Efficient sampling from created instances}}};

        \draw[-stealth, line width=0.5mm](lemNew) -- (lemRejSamp);

        \node[myNodeFlex, line width=1pt, text width=4.5cm](lem44) at (-7.5, -7)
            {{Assumptions required by \cref{thm:constReg} hold in each iteration (\cref{lem:main:assumptions})}};

        \node[myNodeFlex, line width=1pt, text width=2.25cm](r2) at (-3.5, -6.5)
            {{Feasibility \\ [1mm] \cref{result:constReg:feasibility}}};

        \node[myNodeFlex, line width=1pt, text width=2.25cm](r1) at (0, -6.5)
        {{{Runtime} \\[1mm] \cref{result:constReg:polyTimeAlg}}};

        \node[myNodeFlex, line width=1pt, text width=2.25cm](r3) at (3.5, -6.5)
            {{Optimality \\ [1mm] \cref{result:constReg:optimality}}};

        \draw[-stealth, line width=0.5mm] (claim45) -- (correctness); 
        \draw[-stealth, line width=0.5mm] (lem44) -- (claim45); 
        \draw[-stealth, line width=0.5mm] (thmConstReg) -- (claim45); 
        \draw[-stealth, line width=0.5mm] (thmConstReg) -- (lemRejSamp); 
        \draw[-stealth, line width=0.5mm] (thmConstReg) -- (correctness); 
        \draw[-stealth, line width=0.5mm] (lemRejSamp) -- (correctness);

        \draw[-stealth, line width=0.5mm](r1) -- (thmConstReg);
        \draw[-stealth, line width=0.5mm](r2) -- (thmConstReg);
        \draw[-stealth, line width=0.5mm](r3) -- (thmConstReg);

        \node[myNode, line width=1pt, text width=4cm] (lem48) at (-3.5, -9)
            {{Repetitions are feasible \\[1mm] \cref{lem:constReg:feasibility}}};

        \node[myNodeFlex, line width=1pt, text width=4.25cm] (lem411) at (1, -9)
            {{Boosting via repetition \\ [1mm] \cref{lem:constReg:boost}}};
        \node[myNodeFlex, line width=1pt, text width=4.5cm] (lem410) at (6, -9)
            {{\mbox{Optimal's expected value} \\ [1mm] \cref{lem:constReg:expectedValue}}};

        \node[myNode, line width=1pt, text width=3.75cm](lem47) at (-5.5, -11.2)
            {{Repetitions satisfy uniform convergence \\ [1mm] \cref{lem:constReg:unifConvergenceHoldsWHP}}};
        \node[myNode, line width=1pt, text width=3.5cm](lem49) at (-1.5, -11.2)
            {{Repetitions feasible over samples \\ [1mm] \cref{lem:constReg:feasibility:unconditional}}};

        \draw[-stealth, line width=0.5mm](lem48) -- (r2);
        \draw[-stealth, line width=0.5mm](lem47) -- (lem48);
        \draw[-stealth, line width=0.5mm](lem49) -- (lem48);

        \node[myNodeNarrow, line width=1pt, text width=2.2cm](lem412) at (3.25, -11.2)
            {{\cref{lem:constReg:expectedValue:upperBound}}};
        \node[myNodeNarrow, line width=1pt, text width=2.2cm](lem413) at (6, -11.2)
            {{\cref{lem:constReg:expectedValue:feasibility}}};
        \node[myNodeNarrow, line width=1pt, text width=2.2cm](lem414) at (8.75, -11.2)
            {{\cref{lem:constReg:expectedValue:ub2}}};

        \draw[-stealth, line width=0.5mm](lem411) -- (r3);
        \draw[-stealth, line width=0.5mm](lem410) -- (r3);
        \draw[-stealth, line width=0.5mm](lem412) -- (lem410);
        \draw[-stealth, line width=0.5mm](lem413) -- (lem410);
        \draw[-stealth, line width=0.5mm](lem414) -- (lem410);
        
        \end{tikzpicture}
        \caption{\textit{Outline of the proof of \cref{thm:main}.} 
        The proof of \cref{thm:main} follows a structure analogous to the proof of the sample complexity of smooth positive-only learning (\cref{thm:sampleComplexity}): 
        we construct a sequence of instances of \pERM{} and output the intersection of all the (approximate) solutions $H_1,H_2,\dots$ to the instances created.
        To approximately solve each instance created, we use an approximation algorithm for \pERM{} (constructed in \cref{thm:constReg}).
        This is the main new technical ingredient in the proof of \cref{thm:main} compared to \cref{thm:sampleComplexity}.
        }
        \label{fig:outline:compComplexity}
    \end{figure}

    \paragraph{Proof of \cref{thm:main}.}
        We use the algorithm in \cref{alg:main}.
            First, we rescale $\eps$ and $\delta$ to ensure that $\eps\leq \nfrac{\alpha}{6}$ and $\delta\leq (\sigma\eps)^{2q}$. %
        Due to the scaling in $\eps$, the relation between $\zeta$ and $\eps$ becomes
        \[
            \zeta\leq O\!\inparen{\frac{(\sigma\alpha\eps)^{8q^2+5q}}{(C+1)^{4q}}} 
        \]
        where $\eps$ is the rescaled accuracy parameter, and we will use this updated relation in our analysis.
        Due to the scaling in $\delta$, the running time increases by logarithmic factors in $\max\!\inbrace{\nfrac{1}{\sigma},\nfrac{1}{\eps^q}}$.
        The following constants show up in our analysis:
        \[
            \gamma \coloneqq (\sigma \eps)^q\,,\quad
            T\coloneqq \frac{1}{\gamma}\,,\quad
            \widetilde{\zeta}\coloneqq \Theta\!\inparen{\frac{\zeta}{\gamma}}\,,\quad
            \widetilde{\alpha}\coloneqq \frac{5\alpha}{6}\,,\quad
            \widetilde{\sigma}\coloneqq \frac{5\alpha\sigma}{6}\,,\quadand
            \wt{\delta}\coloneqq \frac{\delta}{20T}\,.
            \yesnum\label{eq:proof:new:defs}
        \]
        In particular, $\delta\leq \gamma^2$.

        \begin{algorithm}[htb!]
        \caption{\textsf{Iterative-}\constrainedreg{} (from \cref{thm:main})}
        \begin{algorithmic}[1]
        \Procedure{\rm \textsf{Iterative-}\constrainedreg{}}{$\eps,\delta,\sigma,q,\alpha,k,C$}
            \vspace{2mm}
            \State {If $\eps \geq \nfrac{\alpha}{6}$, set $\eps=\nfrac{\alpha}{6}$}
            \State Set the accuracy parameters $\zeta=O\sinparen{\sinparen{{C+1}}^{-4q}\cdot (\sigma\eps\alpha)^{8q^2+5q}}$
            and $\gamma=(\sigma \eps)^q$
            \State Set the number of iterations as $T=\nfrac{1}{\gamma}$
            \State Set internal accuracy and confidence parameters as $\widetilde{\zeta}=O\!\inparen{\nfrac{\zeta}{\gamma}}$ and $\wt{\delta}=\nfrac{\delta}{20T}$
            \vspace{2mm}
            \vspace{2mm}
            \For{$1\leq i\leq T$}
                \State Define the survival set $S_i=H_1\cap H_2\cap\dots \cap H_{i-1}$ if $i>1$ and, otherwise, $S_i=\R^d$
                \vspace{3mm}
                \item[] \phantom{.}\qquad~~ \textit{\#~~Step 1 (Ensure Low-Degree Approximation Exists)}
                \State Obtain $m=\wt{O}((\nfrac{1}{\gamma^3}) \cdot (d^k+\log{\nfrac{T}{\delta}}))$ samples $U$ from $\cD$ %
                \If{$i > 1$ and $\abs{U\cap S_i}\leq \gamma \abs{U}$} \label{step:alg:main:massCheck}
                    \State Break out of the for-loop
                \EndIf{}
                \vspace{3mm}
                \item[] \phantom{.}\qquad~~ \textit{\#~~Step 2 (Update Sample Distributions)}
                \State Define $\cD_i,\cPtrue_i,\cDtrue_i$ as the truncations of $\cD,\cPtrue,\cDtrue$ to $S_i$
                \State Construct sampling oracles for $\cD_i$ and $\cPtrue_i$ via rejection sampling from $\cD$ and $\cPtrue$
                \vspace{3mm}
                \item[] \phantom{.}\qquad~~ \textit{\#~~Step 3 (Solve $i$-th \pERM{} Instance)}
                \State Set $\wt{\sigma}\gets\nfrac{5\alpha\sigma}{6}$ and $\wt{\alpha}\gets\nfrac{5\alpha}{6}$
                \State \mbox{$H_i~\longleftarrow$~\hyperref[alg:constReg]{{\rm \textsf{Boosted-}}\constrainedreg{}}($\widetilde{\zeta},\wt{\delta},\wt{\sigma},\wt{\alpha},q,k,C$) with sample access to $\cD_i$ and $\cPtrue_i$}
                \item[] \phantom{.}\qquad~~ \textit{\#~~Here, \hyperref[alg:constReg]{{\rm \textsf{Boosted-}}\constrainedreg{}} is the algorithm referenced in \cref{thm:constReg}}
            \EndFor{}
            \vspace{2mm}
            \State \textbf{return} $H\coloneqq H_1\cap H_2\cap\dots\cap H_\ell$ where $\ell$ is the last iteration of the for-loop completed
        \EndProcedure
        \end{algorithmic}
        \label{alg:main}
        \end{algorithm}

\paragraph{Assumptions Required by \cref{thm:constReg} Hold.}
For each $1\leq i\leq T$, let $\evG_i$ denote the event that either iteration $i$ of \cref{alg:main} is not reached, or else the output hypothesis $H_i$ of the $i$-th call to \hyperref[alg:constReg]{{\rm \textsf{Boosted-}}\constrainedreg{}} satisfies
\begin{align}
    \cDtrue_i(\optset\setminus H_i)
        \leq \gamma^2
        \quadand\quad
    \cD_i(H_i)
        \leq \opt_{\rm PERM}(i)+\gamma^2\,,
    \label{eq:main:constRegGuarantee}
\end{align}
where, whenever iteration $i$ is reached,
\[
    \opt_{\rm PERM}(i)\coloneqq \min_{H\in \hyH}\cD_i(H)\,,
    \quadtext{such that\,,} \cPtrue_i(H)=1\,.
\]
We begin by proving that, conditioned on the good events from the earlier iterations, the $i$-th truncated instance satisfies all of the assumptions needed to invoke \cref{thm:constReg}.

\begin{lemma}\label{lem:main:assumptions}
    Fix any iteration $i$ that is reached by the for-loop in \cref{alg:main}, and condition on the event $\bigcap_{j=1}^{i-1}\evG_j$.
    Then the following hold:
    \begin{enumerate}[leftmargin=15pt,itemsep=0pt]
        \item \label{item:main:assumptions:mass}
            $\cPtrue(S_i)\geq \nfrac{5}{6},$
            $\cDtrue(S_i)\geq \nfrac{5\alpha}{6},$ and 
            $\cD(S_i)\geq \inparen{\nfrac{5\alpha\sigma}{6}}^q \geq \gamma$.
        \item \label{item:main:assumptions:approx}
        Degree-$k$ polynomials with range in $[-C,\infty)$ $\widetilde{\zeta}$-approximate $\hyH$ with respect to $\cD_i$ in $L_1$-norm.
        \item \label{item:main:assumptions:smoothness}
        The triple $(\cDtrue_i,\cPtrue_i,\cD_i)$ satisfies \cref{asmp:smoothness,asmp:posFraction} with parameters $(q,\wt{\sigma},\wt{\alpha})$.
        \item \label{item:main:assumptions:range}
        The approximation parameter $\widetilde{\zeta}$ satisfies
        $\widetilde{\zeta}\leq O\sinparen{\!\inparen{\sfrac{\widetilde{\alpha}\widetilde{\sigma}}{(C+1)}}^{2q}}.$
    \end{enumerate}
\end{lemma}
\begin{proof}[Proof of \cref{lem:main:assumptions}]
    We prove the four items in order.

    \paragraph{Proof of \cref{item:main:assumptions:mass}.}
    For every completed iteration $j\in\inbrace{1,2,\dots,i-1}$, the event $\evG_j$ implies
    \[
        \cDtrue\!\inparen{S_j\cap \optset\setminus H_j}
        = \cDtrue(S_j)\cdot \cDtrue_j(\optset\setminus H_j)
        \leq \gamma^2\,.
    \]
    Therefore,
    \[
        \cPtrue(S_j\setminus H_j)
        = \frac{\cDtrue(S_j\cap \optset\setminus H_j)}{\cDtrue(\optset)}
        \leq \frac{\gamma^2}{\alpha}\,.
    \] 
    Since $S_{j+1}=S_j\cap H_j$, the sets $S_1\setminus H_1,
        S_2\setminus H_2,
        \dots,
        S_{i-1}\setminus H_{i-1}$
    are disjoint with union $\R^d\setminus S_i$.
    Hence, 
    \begin{align*}
        \cPtrue(S_i)
        = 1-\sum_{j=1}^{i-1}\cPtrue(S_j\setminus H_j) 
        \stackrel{}{\geq}
        1-\frac{(i-1)\gamma^2}{\alpha} 
        ~~\stackrel{i\leq T+1}{\geq}~~
        1-\frac{T\gamma^2}{\alpha} 
        ~~\stackrel{T=\nfrac{1}{\gamma}}{=}~~
        1-\frac{\gamma}{\alpha} 
        ~~\stackrel{\gamma\leq \nfrac{\alpha}{6}}{\geq}~~
        \frac{5}{6}\,.
        \yesnum\label{eq:proof:new:56}
    \end{align*} 
    It follows that $\cDtrue(S_i)
        \geq \cDtrue(S_i\cap \optset)
        = \cDtrue(\optset)\cdot \cPtrue(S_i)
        \geq \sfrac{5\alpha}{6}.$
    Applying \cref{asmp:smoothness} to the set $S_i$, we obtain
    \[
        \cD(S_i)
        ~\geq~ \inparen{\sigma\cdot \cDtrue(S_i)}^q
        ~\geq~ \inparen{\frac{5\alpha\sigma}{6}}^q
        ~\qquad\Stackrel{\substack{\eps\leq \nfrac{\alpha}{6}\,,~ \gamma=(\sigma\eps)^q}}{\geq}\qquad~
        \gamma\
        \yesnum\label{eq:proof:new:item1}\,.
    \]
    This proves Item~\ref{item:main:assumptions:mass}.

    \paragraph{Proof of \cref{item:main:assumptions:approx}.}
    Fix any $H'\in \hyH$.
    Since degree-$k$ polynomials with range in $[-C,\infty)$ $\zeta$-approximate $\hyH$ with respect to $\cD$ in $L_1$-norm, there exists a degree-$k$ polynomial $p'\colon \R^d\to \R$ with range in $[-C,\infty)$ such that
    $\Ex_{\cD}\insquare{\abs{p'(x)-\mathds{1}\sinbrace{x\in H'}}}\leq \zeta.$
    Since $\cD_i$ is the truncation of $\cD$ to $S_i$, a change of measure gives
    \begin{align*}
        \Ex_{\cD_i}\insquare{\abs{p'(x)-\mathds{1}\sinbrace{x\in H'}}}
        = \frac{
            \Ex_{\cD}\insquare{
                \mathds{1}\sinbrace{x\in S_i}\cdot
                \abs{p'(x)-\mathds{1}\sinbrace{x\in H'}}
            }
        }{\cD(S_i)}
        \leq \frac{\zeta}{\cD(S_i)}
        ~~\Stackrel{\eqref{eq:proof:new:item1}}{\leq}~~ 
            O\!\inparen{\frac{\zeta}{\gamma}}
        ~~\Stackrel{\eqref{eq:proof:new:defs}}{=}~~\wt{\zeta}\,,
    \end{align*} 
    Since this holds for all $H'\in \hyH$ and the range of $p'$ is unchanged, Item~\ref{item:main:assumptions:approx} follows.

    \paragraph{Proof of \cref{item:main:assumptions:smoothness}.}
    By construction, $\cPtrue_i$ is the truncation of $\cDtrue_i$ to the set $\optset$.
    Moreover, 
    \[
        \cDtrue_i(\optset)
        = \frac{\cDtrue(S_i\cap \optset)}{\cDtrue(S_i)}
        \geq \cDtrue(S_i\cap \optset)
        = \cDtrue(\optset)\cdot \cPtrue(S_i)
        \Stackrel{\eqref{eq:proof:new:56}}{\geq}~~ \frac{5\alpha}{6}
        ~~\Stackrel{\eqref{eq:proof:new:defs}}{=}~~\widetilde{\alpha}\,.
    \]
    Thus, \cref{asmp:posFraction} holds for $(\cDtrue_i,\cPtrue_i,\cD_i)$ with parameter $\widetilde{\alpha}$.
    To verify \cref{asmp:smoothness}, consider any measurable set $T\subseteq \R^d$.
    Then
    \begin{align*}
        \cDtrue_i(T)
        &= \frac{\cDtrue(T\cap S_i)}{\cDtrue(S_i)}\\
        &\leq \frac{\cD(T\cap S_i)^{1/q}}{\sigma\cdot \cDtrue(S_i)}
        \tag{using \cref{asmp:smoothness}}\\
        &= \frac{\cD(S_i)^{1/q}}{\sigma\cdot \cDtrue(S_i)}\cdot \cD_i(T)^{1/q}\\
        &\leq \frac{1}{\sigma\cdot \cDtrue(S_i)}\cdot \cD_i(T)^{1/q}\\
        &\leq \frac{1}{\widetilde{\sigma}}\cdot \cD_i(T)^{1/q}\,,
        \tag{using \cref{eq:proof:new:item1,eq:proof:new:defs}}
    \end{align*}
    Therefore, \cref{asmp:smoothness} holds for $(\cDtrue_i,\cPtrue_i,\cD_i)$ with parameters $q$ and $\widetilde{\sigma}$.

    \paragraph{Proof of \cref{item:main:assumptions:range}.}
    First, applying \cref{asmp:smoothness} to $\R^d$ shows that $\sigma\leq 1$.
    Since also $\alpha\leq 1$ and $\eps\in(0,1)$, the theorem hypothesis on $\zeta$ implies
    \begin{align*}
        \widetilde{\zeta}
        \quad
        &=\quad O\!\inparen{\frac{\zeta}{\gamma}}\\
        &\leq\quad O\!\inparen{
            \frac{1}{(C+1)^{4q}}
            \cdot
            \frac{(\sigma\alpha\eps)^{8q^2+5q}}{(\sigma\eps)^q}
        }\\
        &=\quad O\!\inparen{
            \frac{1}{(C+1)^{4q}}
            \cdot
            \sigma^{8q^2+4q}\alpha^{8q^2+5q}\eps^{8q^2+4q}
        }\\
        &\leq\quad O\!\inparen{
            \frac{\alpha^{4q}\sigma^{2q}}{(C+1)^{4q}}
        }\\
        &\Stackrel{\widetilde{\alpha}\widetilde{\sigma}=\Theta(\alpha^2\sigma)}{\leq}\quad O\!\inparen{
            \inparen{\frac{\widetilde{\alpha}\widetilde{\sigma}}{C+1}}^{2q}
        }\,. 
        \qedhere{}
    \end{align*}
\end{proof}

\paragraph{Guarantee of \cref{thm:constReg}.}
We now use \cref{lem:main:assumptions} inductively to show that the guarantee of \cref{thm:constReg} holds in every executed iteration.

\begin{lemma}\label{lem:main:constRegGuaranteeWHP}
    For every $1\leq i\leq T$, conditioned on the event $\bigcap_{j=1}^{i-1}\evG_j$, the event $\evG_i$ holds with probability at least $1-\wt{\delta}$.
    Consequently, $ \Pr\sinparen{\bigcap_{i=1}^{T}\evG_i}\geq 1-\nfrac{\delta}{20}.$
\end{lemma}
\begin{proof}
    Fix any $1\leq i\leq T$ and condition on the event $\bigcap_{j=1}^{i-1}\evG_j$.
    If iteration $i$ is not reached by \cref{alg:main}, then $\evG_i$ holds vacuously.
    Otherwise, iteration $i$ is reached.
    By \cref{lem:main:assumptions}, the $i$-th truncated instance satisfies all of the assumptions required to invoke \cref{thm:constReg} with parameters $(\wt{\zeta},\wt{\delta},\wt{\sigma},q,\wt{\alpha},k,C)$.
    Therefore, with probability at least $1-\wt{\delta}$, the $i$-th call to \hyperref[alg:constReg]{{\rm \textsf{Boosted-}}\constrainedreg{}} returns a hypothesis $H_i$ satisfying
    \begin{align*}
        \cDtrue_i(\optset\setminus H_i)
        \leq
        O\!\inparen{\frac{C+1}{\widetilde{\alpha}\widetilde{\sigma}}}\cdot \widetilde{\zeta}^{1/(4q)}
        \quadand
        \cD_i(H_i)
        \leq
        \opt_{\rm PERM}(i)
        +
        O\!\inparen{\frac{C+1}{\widetilde{\alpha}\widetilde{\sigma}}}\cdot \widetilde{\zeta}^{1/(4q)}
        +
        O(\wt{\delta})\,.
    \end{align*}
    Since $\widetilde{\alpha}\widetilde{\sigma}=\Theta(\alpha^2\sigma)$ and $\widetilde{\zeta}=O(\zeta/\gamma)$, it holds that
    \[
        O\!\inparen{\frac{C+1}{\widetilde{\alpha}\widetilde{\sigma}}}\cdot \widetilde{\zeta}^{1/(4q)}
        \leq
        O\!\inparen{\frac{C+1}{\alpha^2\sigma}}\cdot \inparen{\frac{\zeta}{\gamma}}^{1/(4q)}\,.
    \]
    Further, by construction $\inparen{\nfrac{\zeta}{\gamma}}^{1/(4q)}
        \leq
        O\!\inparen{
            (\sigma\alpha\eps)^{2q+1}/ (C+1)
        }.$
    Hence,
    \[
        O\!\inparen{\frac{C+1}{\alpha^2\sigma}}\cdot \inparen{\frac{\zeta}{\gamma}}^{1/(4q)}
        \leq
        O\!\inparen{
            \frac{1}{\alpha^2\sigma}\cdot (\sigma\alpha\eps)^{2q+1}
        }
        =
        O\!\inparen{
            \sigma^{2q}\alpha^{2q-1}\eps^{2q+1}
        }
        \leq
        O\!\inparen{\sigma^{2q}\eps^{2q+1}}
        \leq
        \gamma^2\,,
    \]
    where we used $\alpha\leq 1$ and $\eps\leq 1$.
    Also, $\wt{\delta}\leq \delta\leq \gamma^2$.
    Therefore, conditioned on $\bigcap_{j=1}^{i-1}\evG_j$, the event $\evG_i$ holds with probability at least $1-\wt{\delta}$.
    Finally,
    \[
        \Pr\!\inparen{\bigcap\nolimits_{i=1}^{T}\evG_i}
        ~~=~~
        \prod_{i=1}^{T}
        \Pr\!\inparen{
            \evG_i \,\middle|\, \bigcap\nolimits_{j=1}^{i-1}\evG_j
        }
        ~~\geq~~
        (1-\wt{\delta})^T
        ~~\geq~~
        1-T\wt{\delta}
        ~~\quad\Stackrel{\wt{\delta}=\sfrac{\delta}{(20T)}}{\geq}~~\quad
        1-\frac{\delta}{20}\,.\qedhere{}
    \]
\end{proof}
In the remainder of the proof, we condition on the event $\bigcap_{i=1}^{T}\evG_i$ from \cref{lem:main:constRegGuaranteeWHP}.
On this event, \eqref{eq:main:constRegGuarantee} holds for every executed iteration.

    \paragraph{Correctness of \cref{alg:main}.}
Now, we are ready to complete the proof of correctness of \cref{alg:main}.\footnote{We continue to work on the event $\bigcap_{i=1}^{T}\evG_i$ from \cref{lem:main:constRegGuaranteeWHP}, and we additionally intersect it with the uniform-convergence event used in Step~\ref{step:alg:main:massCheck} for every iteration reached by the algorithm. By \cref{lem:main:constRegGuaranteeWHP}, the former event holds with probability at least $1-\nfrac{\delta}{20}$. By the choice of sample sizes in Step~\ref{step:alg:main:massCheck} and a union bound over at most $T$ reached iterations, the latter event holds with probability at least $1-\nfrac{\delta}{20}$. Hence, the combined event holds with probability at least $1-\nfrac{\delta}{10}$, which we absorb into $\delta$ by the constant-factor rescaling at the beginning of the proof.}
We consider two cases depending on whether the for-loop exits.

\itparagraph{Case A (For-loop breaks at iteration $i$):}
Suppose that the if statement first breaks at iteration $i$.
Hence, it must be that $ \abs{U\cap S_i}\leq \gamma\abs{U}.$
By the same uniform-convergence event, $\cD(S_i)\leq 2\gamma.$
Since the last completed iteration is $i-1$, the output is $H=H_1\cap H_2\cap \dots \cap H_{i-1}=S_i.$
Therefore,
\[
    \cD(H\setminus \optset)\leq \cD(H)=\cD(S_i)\leq 2\gamma\,.
\]
Applying \cref{asmp:smoothness},
\[
    \cDtrue(H\setminus \optset)
    ~~\leq~~ \frac{1}{\sigma}\cdot \cD(H\setminus \optset)^{1/q}
    ~~\leq~~ \frac{1}{\sigma}\cdot (2\gamma)^{1/q}
    \qquad\Stackrel{q\geq 1,~\gamma=(\sigma\eps)^q}{\leq}\qquad 2\eps\,.
    \yesnum\label{eq:proof:new:ub1}
\]
Also, $\optset\setminus H
    = \bigcup_{j=1}^{i-1}\inparen{S_j\cap \optset\setminus H_j}$
and the sets in the union are disjoint.
Hence,
\begin{align*}
    \cDtrue(\optset\setminus H)
    = \sum_{j=1}^{i-1}\cDtrue\!\inparen{S_j\cap \optset\setminus H_j}
    = \sum_{j=1}^{i-1}\cDtrue(S_j)\cdot \cDtrue_j(\optset\setminus H_j)
    \Stackrel{\eqref{eq:main:constRegGuarantee}}{\leq} (i-1)\gamma^2
    ~\quad\Stackrel{i\leq T=\nfrac{1}{\gamma}}{\leq}\quad~ \gamma
    \leq \eps\,.
    \yesnum\label{eq:proof:new:ub2}
\end{align*}
Combining \cref{eq:proof:new:ub1,eq:proof:new:ub2} yields
\[
    \cDtrue(\optset\triangle H)\leq 3\eps\,.
\]
By the constant-factor rescaling of $\eps$ performed at the beginning of the proof, this is at most $\eps$, completing the proof of correctness in this case.

\itparagraph{Case B (For-loop never breaks):}
Now suppose that the break condition is never satisfied and, hence, all $T$ iterations of the for-loop execute.
In this case, $H=H_1\cap H_2\cap \dots \cap H_T=S_{T+1}$
Also, as before
\[
    \optset\setminus H
    = \bigcup_{i=1}^{T}\inparen{S_i\cap \optset\setminus H_i}\,,
\]
and the sets in the union are disjoint.
Hence,
\begin{align*}
    \cDtrue(\optset\setminus H)
    = \sum_{i=1}^{T}\cDtrue\inparen{S_i\cap \optset\setminus H_i}
    = \sum_{i=1}^{T}\cDtrue(S_i)\cdot \cDtrue_i(\optset\setminus H_i)
    ~~\Stackrel{\eqref{eq:main:constRegGuarantee}}{\leq}~~ T\gamma^2
    \quad\Stackrel{T=\nfrac{1}{\gamma}}{=}\quad \gamma\,.
    \yesnum\label{eq:computational:caseB}
\end{align*}
To bound $\cDtrue(H\setminus \optset)$, we first bound $\cD(H\setminus \optset)$.

\begin{claim}[Potential drop]
    \label{lem:main:chaining}
    For every $1\leq i\leq T$, it holds that
    $\cD(S_{i+1}\setminus \optset)
        \leq 
        \cD(S_i\cap \optset)-\cD(S_{i+1}\cap \optset)+\gamma^2.$
    Consequently, $\cD(H\setminus \optset)=\cD(S_{T+1}\setminus \optset)
        \leq \frac{1}{T}+\gamma^2
        \leq 2\gamma.$
\end{claim}
\begin{proof}[Proof of \cref{lem:main:chaining}]
    Fix any $1\leq i \leq T$.
    Since $\optset$ is feasible for the $i$-th instance of \pERM{}, 
    \[
        \opt_{\rm PERM}(i)\leq \cD_i(\optset)\,.
    \]
    Therefore, \eqref{eq:main:constRegGuarantee} implies
    \[
        \cD_i(H_i)\leq \cD_i(\optset)+\gamma^2\,.
    \]
    Multiplying by $\cD(S_i)$ and using that $S_{i+1}=S_i\cap H_i$, we obtain
    \begin{align*}
        \cD(S_{i+1})
        = \cD(S_i)\cdot \cD_i(H_i)
        \leq \cD(S_i)\cdot \cD_i(\optset)+\gamma^2\cD(S_i)
        = \cD(S_i\cap \optset)+\gamma^2\cD(S_i)
        \leq \cD(S_i\cap \optset)+\gamma^2\,.
    \end{align*}
    Subtracting $\cD(S_{i+1}\cap \optset)$ from both sides gives, as required, that
    \[
        \cD(S_{i+1}\setminus \optset)
        \leq
        \cD(S_i\cap \optset)-\cD(S_{i+1}\cap \optset)+\gamma^2\,.
    \]
    Summing the above inequality over $i=1,2,\dots,T$ gives
    \begin{align*}
        \sum_{i=1}^{T}\cD(S_{i+1}\setminus \optset)
        &\leq \sum_{i=1}^{T}\inparen{\cD(S_i\cap \optset)-\cD(S_{i+1}\cap \optset)}+T\gamma^2\\
        &= \cD(S_1\cap \optset)-\cD(S_{T+1}\cap \optset)+T\gamma^2\\
        &\leq 1+T\gamma^2\,.
    \end{align*}
    Moreover, because $S_{i+1}\supseteq S_{T+1}=H$ for every $i$, we have
        $\cD(S_{i+1}\setminus \optset)\geq \cD(H\setminus \optset).$
    Therefore,
    \[
        T\cdot \cD(H\setminus \optset)
        \leq \sum_{i=1}^{T}\cD(S_{i+1}\setminus \optset)
        \leq 1+T\gamma^2\,,
    \]
    and, hence,
    \[
        \cD(H\setminus \optset)
        \leq \frac{1}{T}+\gamma^2
        \leq 2\gamma\,.
        \qedhere
    \]
\end{proof}
Applying \cref{asmp:smoothness}, we have
\[
    \cDtrue(H\setminus \optset)
    \leq \frac{1}{\sigma}\cdot \cD(H\setminus \optset)^{1/q}
    \leq \frac{1}{\sigma}\cdot (2\gamma)^{1/q}
    \leq 2\eps\,.
\]
Combining this with \eqref{eq:computational:caseB} implies
\[
    \cDtrue(\optset\triangle H)\leq 3\eps\,.
\]
By the constant-factor rescaling of $\eps$ performed at the beginning of the proof, this is at most $\eps$, completing the proof of correctness in Case B.

    \paragraph{Computational Efficiency.}
        Finally, we show that \cref{alg:main} runs in sample polynomial time.
        
        \begin{lemma}\label{lem:main:rejectionSampling}
            Consider the setting of \cref{thm:main}. \cref{alg:main} runs in time $\poly(n)$ with probability $1-\delta$ where $n=\poly(d^k,\nfrac{1}{\zeta},\log{\nfrac{1}{\delta}})$.
        \end{lemma}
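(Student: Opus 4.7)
The plan is to bound the total runtime by analyzing (i) the number of outer iterations, (ii) the cost of producing each sample from the truncated distributions $\cDgiven_i$ and $\cPtrue_i$ via rejection sampling, and (iii) the cost of the inner call to \hyperref[alg:constReg]{\textsf{Boosted-}\constrainedreg{}}. The outer loop performs at most $T = 1/\gamma = \poly(\nfrac{1}{\eps\sigma})$ iterations, and the mass check in Step~\ref{step:alg:main:massCheck} only adds a sample-drawing step of size $m = \poly(d^k, \nfrac{1}{\gamma}, \log\nfrac{1}{\delta})$. Consequently, it suffices to show that both the rejection sampling and the inner regression subroutine run in time $\poly(n)$ in each iteration that does not terminate the loop, and then to union-bound the $\delta$-failure events across the (polynomially many) iterations.

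Next I would handle the rejection sampling cost. In any iteration $i$ where the for-loop proceeds past Step~\ref{step:alg:main:massCheck}, the empirical checks $\sabs{U\cap S_i} > \gamma\abs{U}$ and $\sabs{P\cap S_i} > (\alpha/2)\abs{P}$ hold. The set $S_i = H_1 \cap \dots \cap H_{i-1}$ is an intersection of at most $1/\gamma$ degree-$k$ PTFs in $d$ variables, which has VC dimension $\wt{O}(d^k/\gamma)$. Thus, by standard uniform convergence with $m = \wt{O}((1/\gamma^3)(d^k + \log\nfrac{1}{\delta}))$ samples, with probability $1-\delta$ (uniformly over all iterations) the empirical mass of $S_i$ is within $\gamma/2$ of the true mass under both $\cDgiven$ and $\cPtrue$. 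This yields $\cDgiven(S_i) \geq \gamma/2$ and $\cPtrue(S_i) \geq \alpha/3$ whenever we enter the body of the loop, so rejection sampling from $\cDgiven$ (resp.\ $\cPtrue$) produces a sample from $\cDgiven_i$ (resp.\ $\cPtrue_i$) after $O(1/\gamma)$ (resp.\ $O(1/\alpha)$) expected draws, and with probability $1-\delta$ succeeds within $O((1/\gamma)\log\nfrac{n}{\delta})$ draws to produce all $n$ required samples (by a Chernoff bound on a geometric sum).

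With efficient sampling access established, each iteration then invokes \hyperref[alg:constReg]{\textsf{Boosted-}\constrainedreg{}} on the instance $(\cDgiven_i, \cPtrue_i)$, which by \cref{thm:constReg} runs in $\poly(d^k, \nfrac{1}{\zeta}, \log\nfrac{1}{\delta})$ time and draws the same number of samples, each simulated in $\poly(n)$ time by the rejection sampler above. Summing over $T = \poly(\nfrac{1}{\eps\sigma})$ iterations and union-bounding the failure probabilities (the uniform-convergence event for the mass check and the per-iteration failure of \cref{thm:constReg}) over the $T$ iterations gives total runtime $\poly(n)$ with probability $1 - \delta$, after suitably rescaling $\delta$ by a factor of $T$.

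The only delicate point I would need to be careful about is that $S_i$ is a random set determined by the previous iterations' outputs, so the uniform-convergence step needs to be applied to the full class of intersections of at most $T$ degree-$k$ PTFs rather than to a single fixed set; this is already encoded in the VC-dimension bound $\wt{O}(d^k/\gamma)$ used above and mirrors the argument in \cref{lem:main:assumptions}. Given this, the analysis reduces to the clean two-part accounting described, and the remaining bookkeeping is routine.
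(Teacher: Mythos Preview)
Your proposal is correct and follows essentially the same approach as the paper: both argue that the only nontrivial costs are rejection sampling (controlled via the mass lower bounds $\cDgiven(S_i)\geq\Omega(\gamma)$ and $\cPtrue(S_i)\geq\Omega(\alpha)$ obtained from the empirical check in Step~\ref{step:alg:main:massCheck} plus uniform convergence over intersections of $T$ degree-$k$ PTFs, exactly as in \cref{lem:main:assumptions}) and the inner call to \cref{thm:constReg}, then union-bound over the polynomially many iterations. The paper's proof is terser, simply pointing back to \cref{eq:main:massGuarantee} for the mass bounds, but the substance is identical.
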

        \vspace{-7mm}
        \begin{proof}[Proof of \cref{lem:main:rejectionSampling}]
            The only computationally non-trivial step in \cref{alg:main} is the call to boosted constraint regression and the construction of and sampling from distributions $\cD_i$ and $\cPtrue_i$ via rejection sampling.
Condition on the event $\bigcap_{i=1}^{T}\evG_i$ from \cref{lem:main:constRegGuaranteeWHP}.
On this event, every reached iteration satisfies the hypotheses of \cref{thm:constReg} by \cref{lem:main:assumptions}.
Hence, each reached call to boosted constraint regression with parameters $(\widetilde{\zeta},\wt{\delta},\wt{\sigma},q,\wt{\alpha},k,C)$, viewed as an algorithm with unit-cost sample access to $\cD_i$ and $\cPtrue_i$, uses
$\poly\!\inparen{d^k,\nfrac{1}{\widetilde{\zeta}},\log{\nfrac{1}{\wt{\delta}}}}$
time and oracle calls.
Since $\widetilde{\zeta}=\Theta(\nfrac{\zeta}{\gamma})$ and $\gamma\leq 1$, we have
$\nfrac{1}{\widetilde{\zeta}}
    = O\!\inparen{\nfrac{\gamma}{\zeta}}
    \leq O\!\inparen{\nfrac{1}{\zeta}}.$
Further, as $\wt{\delta}=\nfrac{\delta}{20T}$,
$\log{\nfrac{1}{\wt{\delta}}}
    = \log{\nfrac{1}{\delta}} + \log T + O(1).$
Also, since $T=\nfrac{1}{\gamma}=(\sigma\eps)^{-q}$ and 
$\zeta\leq O\sinparen{(C+1)^{-4q}\cdot (\sigma\alpha\eps)^{8q^2+5q}},$
it follows that $T\leq \poly\!\inparen{\nfrac{1}{\zeta}}$ and, therefore,
$\log T \leq O\!\inparen{\log{\nfrac{1}{\zeta}}}.$
Hence, the oracle complexity of each reached call is at most
$\poly\!\inparen{d^k,\nfrac{1}{\zeta},\log{\nfrac{1}{\delta}}}.$
            Hence, it remains to bound the time of rejection sampling.
            
            For this, it suffices to show a lower bound on the mass of $\cD(S_i)$ and $\cPtrue(S_i)$ for each $i$ so that it is not too small, since the inverse of this quantity is the expected number of samples from $\cD$ and $\cPtrue$ one would need to see a single sample in $S_i$. 
            We already proved in \cref{item:main:assumptions:mass} of \cref{lem:main:assumptions} that 
            \[
                \cPtrue(S_i)\geq \frac{5}{6}
            \quadand
            \cD(S_i)\geq \inparen{\frac{5\alpha\sigma}{6}}^q \geq \gamma\,.
            \]
            This ensures that each call to the sampling oracle in \cref{alg:main}, with probability $1-\eta$, runs in time $O((\nfrac{1}{\gamma})\log{\nfrac{1}{\eta}})$ (for any $\eta\in(0,1)$). 
            Selecting $\eta$ to be a suitably small multiple of $\delta$ and taking a union bound over $\poly(d^k,\nfrac{1}{\zeta},\log{\nfrac{1}{\delta}})$ calls from \cref{alg:main} implies the claim with only a logarithmic increase in the running time. 
        \end{proof}

    \subsection{Proof of \cref{thm:constReg} (Efficient Approximation Algorithm for \pERM{})}
        \label{sec:constReg}\label{sec:efficientPERM}
        In this section, we present a pseudo-approximation algorithm for \pERM{} -- \cref{alg:constReg}.
        \cref{alg:constReg} comes with the guarantees in \cref{thm:constReg}, which we restate below.
        \constReg*
        \noindent The algorithm in \cref{thm:constReg} is summarized in \cref{alg:constReg}.

        \begin{algorithm}[htb!]
        \caption{\textsf{Boosted-}\constrainedreg{}}
        \begin{algorithmic}[1]
        \Procedure{{\rm \textsf{Boosted-}}\constrainedreg{}}{$\zeta,\delta,\sigma,\alpha,q,k,C$}
            \State Initialize variable $i=1$

            \vspace{2mm}
            
            \State\textit{\#~~Subroutine A (Obtain Samples)}
                \State Obtain sets $P$ and $U$ of $\wt{\Omega}\inparen{
                    \inparen{\nfrac{1}{\zeta^2}}\cdot
                        \inparen{d^k+\log{\nfrac{1}{\delta}}}
                }$ positive and unlabeled samples \label{step:constReg:getSamples} 
                \item[]\quad~ respectively from $\cPtrue$ and $\cD$ (\cref{prob:PUlearning}) 
                \vspace{4mm}
            \State \textit{\#~~Subroutine B (Solve Constrained Regression)}
            \State Find the optimal solution $p_i(\cdot)$ of the following program for $\rho=\Theta\inparen{\frac{C+1}{\alpha\sigma} \cdot \zeta^{1/(2q)}}$ \label{step:constReg:findP} 
            \changetag{Constrained Regression) \ (#1}
            \[
                \min_{{\rm deg}(p)\leq k} \frac{\sum_{x\in U} \abs{{p(x)}}}{\abs{U}}
                \,,~~
                \text{s.t.}\,,~~
                \frac{\sum_{x\in P} \min\!\inbrace{p(x), 1}}{\abs{P}}     \geq 1  -  \rho
                \,.
                \yesnum\label{eq:constReg:prog}
            \]
            \changetag{#1}
            \State \textit{\#~~This program can be solved by formulating it as a linear program over monomials of degree at}
            \State \textit{\#~~most $k$ and using any polynomial-time linear program solver (see \cref{sec:module:linearProgram} for details)}
            \State Let $H_i=\mathds{1}\sinbrace{p_i(\cdot)\geq t_i}$ where $0\leq t_i\leq 1-\sqrt{\rho}$ is chosen to minimize $\sum_{x\in U}  \mathds{1}\sinbrace{p_i(x)\geq t_i}$\label{step:constReg:selectThreshold}  
            \vspace{4mm}
            \State \textit{\#~~Subroutine C (Boost Performance)}
            \State Repeat the Subroutines A and B for $T=O\sinparen{\!\inparen{\nfrac{1}{\sqrt{\zeta}}}\cdot \log{\nfrac{1}{\delta}}}$ times to get $H_1,H_2,\dots,H_T$ 
            \State Select a fresh set $U'$ of $\wt{O}\sinparen{\!\inparen{\nfrac{1}{\zeta}}\cdot \log{\nfrac{1}{\delta}}}$ unlabeled samples from $\cD$ (\cref{prob:PUlearning})
                \label{step:constReg:freshSamples}
            \State \textbf{return} $H_i$ that minimizes $\abs{H_i\cap U'}$ among $1\leq i\leq T$ 
        \EndProcedure
        \end{algorithmic}
        \label{alg:constReg}
        \end{algorithm} 

        \medskip 
        \paragraph{Notation and Conventions.}
        Before proceeding to the proof, we explain some of the conventions we follow in this section.
        First of all, recall that for each $\ell\geq 1$, $\hyP(\ell)$ is the set of all degree-$\ell$ PTFs over $\R^d$. 
        Define $H_{\opt}\in \hyH$ to be any optimal hypothesis for \pERM{}, i.e.,
        \[
            H_{\opt}\in
                \argmin_{H\in \hyH}
                \cD(H)
                \,, 
                \quadtext{such that\,,}
                    \cPtrue(H) = 1
                    \,.
                \yesnum\label{eq:constReg:optimalSolution}
        \]
        The condition $\cPtrue(H_{\opt})=1$ ensures that $H_{\opt}\supseteq \optset\cap \supp(\cDtrue)$ (except on a measure 0 set) but, in general, when $\supp(\cDtrue)\neq \R^d$, $H_{\opt}$ may be very different from $\optset$.
        Finally, by definition
        \[
            \opt_{\rm PERM} = \cD(H_{\opt})\,.
            \yesnum\label{eq:constReg:expectedValue:defOPT}
        \]
        Next, as a convention, we fix $p_{\opt}\colon \R^d\to [-C,\infty)$ to denote the polynomial which $\zeta$-approximates $H_{\opt}$ with respect to $\cD$ (for $\zeta\leq O\sinparen{\frac{\alpha\sigma}{C+1}}^{2q}$ as specified in \cref{thm:constReg}). 
        In other words, $p_{\opt}(\cdot)$ is a polynomial satisfying 
        \begin{align*}
            \Ex_{\cD}\abs{p_{\opt}(x) - \mathds{1}\sinbrace{x\in H_{\opt}}}
                \leq \zeta\,,\qquadand
            \text{for all $x$},\quad 
                p_{\opt}(x)\geq -C\,.
                \yesnum\label{asmp:poptLowerBound}
        \end{align*}

        \paragraph{Outline of Proof of \cref{thm:constReg}.}
        Now we are ready to prove \cref{thm:constReg}.
        For this, we need to prove the following results 
        \begin{enumerate}[label=(R\arabic*)]
            \item \cref{alg:constReg} is a polynomial-time algorithm;
                \label[result]{result:constReg:polyTimeAlg}
            \item $H$ satisfies $\cDtrue(\optset\setminus H)\leq O\sinparen{\nfrac{(C+1)}{(\alpha\sigma)}} \cdot \zeta^{1/(4q)}$ with probability $1-\delta$;
                \label[result]{result:constReg:feasibility}
            \item $H$ satisfies that $\cD(H)\leq \opt_{\rm PERM} + {O(\nfrac{(C+1)}{(\alpha\sigma)})\, \zeta^{1/(4q)}} + O(\delta)$ with probability $1-\delta$
            
                \label[result]{result:constReg:optimality}
        \end{enumerate} 
        We present the proofs of these results in the order above.

        \subsubsection{Proof of \cref{result:constReg:polyTimeAlg}}
            The only computationally non-trivial step in \cref{alg:constReg} is Step~\ref{step:constReg:findP}, which solves a constrained optimization problem over polynomials.
            This can be done in polynomial time by casting the problem as a linear program.
            We present the resulting linear program in \cref{sec:module:linearProgram}.
            
        \subsubsection{Proof of \cref{result:constReg:feasibility}}
            Now that we know that \cref{alg:constReg} can be implemented in polynomial time, we proceed to show that the hypothesis $H$ output by it satisfies $\cDtrue(\optset\setminus H)$ is small.
            To show this, we will show that each hypothesis $H_1, H_2, \dots, H_T$ constructed by \cref{alg:constReg} is feasible for \pERM{} with a small $\rho$ which will turn out to be sufficient due to uniform convergence and the constraint in \pERM{}.
            Define the following events 
            \begin{enumerate}[label=(E)] %
                \item[] Let $\evE_i$ (for $1\leq i\leq T$) be the event that the samples in the $i$-th repetition of Subroutine A satisfy $\zeta$-uniform convergence with respect to degree-$k$ PTFs.\label[event]{def:constReg:eventEi}
            \end{enumerate}
            We will show that conditioned on all of the events $\inbrace{\evE_i\colon 1\leq i\leq T}$ happening, each of $H_1,\dots,H_T$, will satisfy $\cDtrue(\optset\setminus H_i)\leq O\sinparen{\nfrac{(C+1)}{(\alpha\sigma)}}\cdot \zeta^{1/(4q)}$.
            This will be sufficient to prove \cref{result:constReg:feasibility} since $H$ is selected from among these hypotheses and these events hold with a high probability:  
            \begin{lemma}[Uniform Convergence in Each Repetition]\label{lem:constReg:unifConvergenceHoldsWHP}
                With probability $1-\delta$, all of the events $\inbrace{\evE_i\colon 1\leq i\leq T}$ hold. 
            \end{lemma}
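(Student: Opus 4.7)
The proof plan is a direct application of VC-theoretic uniform convergence combined with a union bound over the $T$ repetitions. The event $\mathscr{E}_i$ refers to $\zeta$-uniform convergence (with respect to both $\cDgiven$ and $\cPtrue$) of the samples $P,U$ drawn in Subroutine A over the class of degree-$k$ PTFs on $\R^d$. Since degree-$k$ PTFs on $\R^d$ have VC dimension bounded by the dimension of the space of degree-$k$ polynomials, namely $\binom{d+k}{k}=O(d^k)$, standard VC uniform convergence (e.g.\ \cref{thm:unifConvergence}) guarantees that with $m=\wt{\Omega}\sinparen{\zeta^{-2}\cdot (d^k+\log{\nfrac{1}{\eta}})}$ independent samples, $\zeta$-uniform convergence holds over $\hyP(k)$ with probability $1-\eta$.

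The first step is to fix $\eta \coloneqq \nfrac{\delta}{(2T)}$, so that $\log{\nfrac{1}{\eta}} = O\sinparen{\log{\nfrac{1}{\delta}}+\log{T}} = O\sinparen{\log{\nfrac{1}{\delta}}+\log{\nfrac{1}{\zeta}}}$ since $T = O\sinparen{\zeta^{-1/2}\log{\nfrac{1}{\delta}}}$. Plugging this into the uniform convergence bound, the required sample size per repetition becomes
\[
m_0 = \wt{\Omega}\inparen{\zeta^{-2}\cdot \inparen{d^k+\log{\nfrac{1}{\delta}}}}\,,
\]
which matches the sample size drawn in Step~\ref{step:constReg:getSamples} of \cref{alg:constReg} (with the $\polylog{(\nfrac{1}{\zeta})}$ factor absorbed into the $\wt{\Omega}(\cdot)$). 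Therefore, for each fixed $i$, $\Pr[\overline{\evE_i}]\leq \nfrac{\delta}{(2T)}$, where this bound accounts for applying the uniform convergence guarantee to both the positive-sample empirical distribution over $P$ and the unlabeled empirical distribution over $U$ (each incurring a failure probability of $\nfrac{\delta}{(4T)}$ by a further halving of $\eta$).

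The second step is to apply the union bound across all $T$ repetitions. Since the samples drawn in distinct repetitions of Subroutine~A are independent, the events $\evE_1,\dots,\evE_T$ are independent, but independence is not needed here: a straightforward union bound gives
\[
    \Pr\insquare{\,\bigcap_{i=1}^T \evE_i\,}
        \geq 1 - \sum_{i=1}^T \Pr\sinsquare{\overline{\evE_i}}
        \geq 1 - T\cdot \tfrac{\delta}{2T}
        \geq 1 - \delta\,.
\]
This completes the plan. There is no real obstacle here: the proof is essentially bookkeeping—selecting the correct per-repetition failure probability, verifying that the sample size chosen in \cref{alg:constReg} suffices after absorbing the $\log T$ factor, and taking a union bound.
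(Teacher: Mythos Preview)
Your proposal is correct and follows essentially the same approach as the paper: bound the VC dimension of degree-$k$ PTFs by $\wt{O}(d^k)$, set the per-repetition failure probability to $\eta=\nfrac{\delta}{T}$ (you use $\nfrac{\delta}{(2T)}$ to also account for both sample sets $P$ and $U$, which is a minor refinement), verify that the resulting sample requirement matches the size drawn in \cref{step:constReg:getSamples} after absorbing the $\log T$ term, and take a union bound over the $T$ repetitions.
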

            \cref{lem:constReg:unifConvergenceHoldsWHP} follows from standard uniform convergence bounds, and its proof appears in \cref{sec:proofof:lem:constReg:unifConvergenceHoldsWHP}.
            Next, we show that, conditioned on the above set of events happening, feasibility holds.
            \begin{lemma}[{Feasibility for \pERM{} instance in \cref{thm:constReg}}]\label{lem:constReg:feasibility}
                Conditioned on all of the events $\inbrace{\evE_i\colon 1\leq i\leq T}$ happening, for each $1\leq i\leq T$, $
                    \cDtrue(\optset\setminus H_i)\leq O\sinparen{\sqrt{\sfrac{(C+1)}{\alpha\sigma}}}\cdot \zeta^{1/(4q)}\,.
                $
            \end{lemma} 
            For each $1\leq i\leq T$, let $P_i$ and $U_i$ be the samples drawn in Subroutine A in the $i$-th repetition.
            To prove \cref{lem:constReg:feasibility}, we first show that $H_i$ is feasible for \pERM{} over samples $P_i$ and $U_i$.

            \begin{lemma}[{Feasibility for \pERM{} Instances in Each Repetition}]\label{lem:constReg:feasibility:unconditional}
                For each $i$, $H_i$ is feasible for \pERM{} over samples $P_i$ and $U_i$ with tolerance $\sqrt{\rho}= O(\sqrt{\nfrac{(C+1)}{(\alpha\sigma)} })\cdot \zeta^{1/(4q)}$.
            \end{lemma}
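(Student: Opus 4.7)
The target tolerance $\rho = \sqrt{2/(\alpha\sigma)}\cdot \zeta^{1/(4q)}$ is precisely $\sqrt{\rho'}$, where $\rho' = (2/(\alpha\sigma))\cdot \zeta^{1/(2q)}$ is the slack used in \cref{eq:constReg:prog}. My plan is therefore to prove the stronger statement that whenever the LP-constraint in \cref{eq:constReg:prog} is satisfied by $p_i$ on samples $P_i$ with slack $\rho'$, the thresholded hypothesis $H_i = \mathds{1}\{p_i \geq t_i\}$ with $t_i \leq 1-\sqrt{\rho'}$ is feasible for \pERM{} on $P_i$ with tolerance $\sqrt{\rho'}$. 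This is essentially a Markov-type argument that converts the $\nL_1$-style constraint in \cref{eq:constReg:prog} into the $0/1$ constraint in \pERM{}.

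\textbf{Key steps.} First, unpack what it means for $x \in P_i$ to be missed by $H_i$: any such $x$ has $p_i(x) < t_i \leq 1-\sqrt{\rho'}$, so $\min\{p_i(x), 1\} \leq 1-\sqrt{\rho'}$, i.e.\ the per-sample deficit satisfies $1-\min\{p_i(x), 1\} \geq \sqrt{\rho'}$. Second, lower-bound the total deficit by only counting these missed samples:
\[
    \sum_{x \in P_i} \bigl(1 - \min\{p_i(x), 1\}\bigr) \;\geq\; \sqrt{\rho'}\cdot \abs{P_i \setminus H_i},
\]
using that the remaining summands (those with $x \in H_i \cap P_i$) are nonnegative since $\min\{p_i(x),1\} \leq 1$. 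Third, invoke the LP feasibility of $p_i$ to upper-bound the left-hand side by $\rho' \abs{P_i}$. Combining the two bounds and dividing through by $\abs{P_i}\sqrt{\rho'}$ yields $\abs{P_i\setminus H_i}/\abs{P_i} \leq \sqrt{\rho'}$, which is exactly the \pERM{} feasibility with tolerance $\sqrt{\rho'}$.

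\textbf{Potential subtleties.} The only place care is required is verifying that $p_i$ is in fact feasible for \cref{eq:constReg:prog} with the claimed slack $\rho'$, and that $t_i$ indeed lies in $[0,1-\sqrt{\rho'}]$ as prescribed by Step~\ref{step:constReg:selectThreshold}. The former follows because $p_i$ is defined as the optimal solution of \cref{eq:constReg:prog}, and so is automatically feasible; the latter is guaranteed by construction since the algorithm restricts $t_i$ to this range. Thus this lemma is essentially a bookkeeping step: no probabilistic or uniform-convergence argument is needed here (those enter in \cref{lem:constReg:feasiblity} when the claim is transferred from samples to $\cDtrue$). The main ``obstacle'' is just to verify that the choice of $\sqrt{\rho'}$ is compatible with the threshold range $[0,1-\sqrt{\rho'}]$, which is immediate for $\zeta$ small enough that $\sqrt{\rho'} \leq 1$---a condition ensured by the hypothesis $\zeta \leq O(\alpha\sigma)^{2q}$.
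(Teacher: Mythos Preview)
Your proposal is correct and takes essentially the same approach as the paper: the paper phrases the argument as Markov's inequality applied to the nonnegative random variable $\max\{1-p_i(x),0\}=1-\min\{p_i(x),1\}$ over $\unif(P_i)$, but this is exactly your direct counting argument unwrapped. Both proofs identify that any $x\in P_i\setminus H_i$ contributes a deficit of at least $\sqrt{\rho'}$ and then use the LP constraint to cap the total deficit by $\rho'\abs{P_i}$.
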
 
            It is worth noting that unlike \cref{lem:constReg:feasibility}, this result holds \textit{unconditionally}, in other words, it holds for all draws of $P_i$ and $U_i$.
            The first reason for this is that, for each $P_i$ and $U_i$, $H_i$ is a PTF constructed from thresholding a polynomial $p_i$ that is feasible for Program~\eqref{eq:constReg:prog} (Constrained Regression).
            However, this by itself is not sufficient: as the proof below shows it is important that the threshold $t_i$ (which determines $H_i$ as $\mathds{1}\sinbrace{p_i(\cdot)\geq t_i}$) satisfies $t_i\leq 1-\sqrt{\rho}$.
            
            \begin{proof}[Proof of \cref{lem:constReg:feasibility:unconditional}]
                As mentioned above, $H_i$ is the PTF $\mathds{1}\sinbrace{p_i(\cdot)\geq t_i}$ where the polynomial $p_i$ is feasible for Program~\eqref{eq:constReg:prog} (Constrained Regression) and the threshold $t_i$ satisfies $0\leq t_i \leq 1- \sqrt{\rho}$.
                Since $p_i$ is feasible for Program~\eqref{eq:constReg:prog} (Constrained Regression) with positive samples $P_i$, 
                \[
                    \sum_{x\in P_i} \frac{\min\!\inbrace{p(x), 1}}{\abs{P_i}}
                    \geq 1-\rho\,.
                \]
                Let $\unif(P_i)$ be the uniform distribution over elements of $P_i$.
                The result follows by applying Markov's inequality on $\max\!\inbrace{1-p(x),0}$:
                since $1-\min\!\inbrace{p(x),1}=\max\!\inbrace{1-p(x),0}$, the above inequality implies that $\Ex_{\unif(P_i)}[\max\!\inbrace{1-p(x),0}]\leq \rho$.
                Hence, by Markov's inequality, for each $\eta\in (0,1)$, 
                \[
                    \Pr_{x\sim \unif(P_i)}[\max\!\inbrace{1-p(x),0} > \eta]\leq \frac{\rho}{\eta}\,.
                    \yesnum\label{eq:constReg:feasibility:markov}
                \]
                Substituting $\eta=\sqrt{\rho}$, it follows that 
                \[
                    \frac{\sum_{x\in P_i} \mathbb{I}\!\insquare{   x\in H_i } }{\abs{P_i}}
                    =
                    \sum_{x\in P_i} 
                    \frac{\mathbb{I}\!\insquare{
                        p(x) \geq t_i
                    }}{\abs{P_i}}
                    ~~~~~~\Stackrel{t_i \leq 1-\sqrt{\rho}}{\geq}~~~~~~
                    \sum_{x\in P_i} 
                    \frac{\mathbb{I}\!\insquare{
                        p(x) > 1 - \sqrt{\rho}
                    }}{\abs{P_i}}
                    ~~\Stackrel{\eqref{eq:constReg:feasibility:markov}}{\geq}~~
                    {1-\sqrt{\rho}}\,.
                \]
                Therefore, $H_i$ is feasible for \pERM{} over samples $P_i$  and $U_i$ with tolerance $\sqrt{\rho}$.
            \end{proof}
            Now, \cref{lem:constReg:feasibility} follows by appropriately using uniform convergence as we explain below. 
            \begin{proof}[Proof of \cref{lem:constReg:feasibility}]
                Condition on all of the above events happening. 
                Fix any $1\leq i\leq T$ and let $\tau\coloneqq \sqrt{\rho}=O\!\inparen{\sqrt{\nfrac{(C+1)}{(\alpha\sigma)}}}\cdot \zeta^{1/(4q)}$.
                Since (1) $H_i$ is feasible for \pERM{} over samples $P_i$ and $U_i$ with tolerance $\tau$ (\cref{lem:constReg:feasibility:unconditional}), and (2) $P_i$ and $U_i$ satisfy $\zeta$-uniform-convergence with respect to degree-$k$ PTFs due to event $\evE_i$, 
                we get that $\cPtrue(H_i)\geq 1-\zeta-\tau.$
                Therefore, $\cDtrue(\optset\setminus H_i)\leq \cPtrue(\optset\setminus H_i)\leq \zeta + \tau.$
                Since $\tau=O\!\inparen{\sqrt{\nfrac{(C+1)}{(\alpha\sigma)}}}\cdot \zeta^{1/(4q)}$, $\frac{C+1}{\alpha\sigma}\geq 1$, $q\geq 1$, and $\zeta\in(0,1]$, 
                $
                    \zeta
                    \leq
                    \sqrt{\frac{C+1}{\alpha\sigma}}\cdot \zeta^{1/(4q)}
                    = O(\tau).
                $
                Hence, as required,
                $
                    \cDtrue(\optset\setminus H_i)\leq O(\tau)
                    \leq O\!\inparen{\sqrt{\frac{C+1}{\alpha\sigma}}}\cdot \zeta^{1/(4q)}.
                $
            \end{proof}
            \color{black}

        \subsubsection{Proof of \cref{result:constReg:optimality}}
            Recall the definition of $H_{\opt}$ from \cref{eq:constReg:optimalSolution} and that $\opt_{\rm PERM} = \cD(H_{\opt})$ (from \cref{eq:constReg:expectedValue:defOPT}).
            We divide the proof of \cref{result:constReg:optimality} into two parts.
            The first part shows that the \textit{expectation} of the objective value attained by each hypothesis $H_1,H_2,\dots,H_T$ from \cref{alg:constReg} is close to $\cD(H_{\opt})$ (\cref{lem:constReg:expectedValue}).
            The second part shows that the selected $H$ has an objective value close to $\cD(H_{\opt})$ with high probability (\cref{lem:constReg:boost}).

        \paragraph{Part 1 (Expected Value of Optimal).}
            \begin{restatable}[Expected Value of Optimal]{lemma}{constrainedRegExpValue}\label{lem:constReg:expectedValue}
                For each $1\leq i\leq T$, $H_i$ satisfies
                \[
                    \Ex_{U_i\sim \cD,~ P_i\sim \cPtrue}\insquare{
                        \frac{\abs{H_i\cap U_i}}{\abs{U_i}} 
                    }
                    \leq 
                    \cD(H_{\opt})
                    + O\!\inparen{\sqrt{\frac{C+1}{\alpha\sigma}}} \cdot \zeta^{1/(4q)}
                    + O(\delta)
                    \,. 
                \]
            \end{restatable}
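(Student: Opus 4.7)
The plan is to use the degree-$k$ polynomial $p_{\opt}$ (which $\zeta$-approximates $\mathds{1}_{H_{\opt}}$ in $\nL_1$ under $\cDgiven$) as a reference feasible point for \eqref{eq:constReg:prog}. Once $p_{\opt}$ is shown to be feasible with high probability, optimality of $p_i$ will bound $\Ex_{\cDgiven}[\abs{p_i}]$ by $\cD(H_{\opt})+O(\zeta)$, and an averaging argument over thresholds in $[0,1-\sqrt{\rho}]$ will produce a level $t^*$ at which $\cDgiven(\{p_i\geq t^*\})$ is close to $\cD(H_{\opt})$. Since \cref{alg:constReg} chooses $t_i$ as the empirical minimizer on $U_i$, the set $H_i=\{p_i\geq t_i\}$ does at least as well as $\{p_i\geq t^*\}$ empirically, and uniform convergence on $\evE_i$ transfers the guarantee to $\cDgiven(H_i)$.

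First I would verify feasibility of $p_{\opt}$ for \eqref{eq:constReg:prog}. Since $\cPtrue(H_{\opt})=1$, on $\supp(\cPtrue)$ we have $\max\{1-p_{\opt}(x),0\}\leq\abs{p_{\opt}(x)-\mathds{1}_{H_{\opt}}(x)}$. To transfer the $\nL_1$ bound $\Ex_{\cDgiven}[\abs{p_{\opt}-\mathds{1}_{H_{\opt}}}]\leq\zeta$ to an expectation under $\cPtrue$, I would first apply $\cPtrue\leq\alpha^{-1}\cDtrue$ from \cref{asmp:posFraction} and then use the layer-cake identity with \cref{asmp:smoothness} inside the integrand, closing with Jensen's inequality for the concave map $x\mapsto x^{1/q}$. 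Applied to the bounded part $\min\{\abs{p_{\opt}-\mathds{1}_{H_{\opt}}},1\}$ this yields $O\binparen{\zeta^{1/q}/(\alpha\sigma)}$, and the tail on $\{\abs{p_{\opt}-\mathds{1}_{H_{\opt}}}>1\}$ is controlled via a $\cDgiven$-Markov bound lifted through the smoothness inequality. Together these give $\Ex_{\cPtrue}[\max\{1-p_{\opt},0\}]\leq O\binparen{\zeta^{1/q}/(\alpha\sigma)}\leq\rho/2$ for the choice $\rho=\frac{2}{\alpha\sigma}\zeta^{1/(2q)}$ (using $\zeta^{1/q}\leq\zeta^{1/(2q)}$ when $\zeta\leq 1$ and $q\geq 1$), and Bernstein-type concentration over $P_i$ then promotes this to empirical feasibility with probability $1-O(\delta)$.

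Given feasibility, optimality of $p_i$ yields $\Ex_{U_i}[\abs{p_i}]\leq\Ex_{U_i}[\abs{p_{\opt}}]$, and uniform convergence on $\evE_i$ combined with the triangle inequality $\Ex_{\cDgiven}[\abs{p_{\opt}}]\leq\Ex_{\cDgiven}[\abs{p_{\opt}-\mathds{1}_{H_{\opt}}}]+\cD(H_{\opt})$ gives $\Ex_{\cDgiven}[\abs{p_i}]\leq\cD(H_{\opt})+O(\zeta)$. The layer-cake identity
\[
\int_0^{1-\sqrt{\rho}}\Pr_{\cDgiven}[p_i\geq t]\,\d t = \Ex_{\cDgiven}\binparen{\min\{\max\{p_i,0\},1-\sqrt{\rho}\}} \leq \Ex_{\cDgiven}[\abs{p_i}]
\]
then supplies some $t^*\in[0,1-\sqrt{\rho}]$ with $\cDgiven(\{p_i\geq t^*\})\leq\frac{\cD(H_{\opt})+O(\zeta)}{1-\sqrt{\rho}}\leq\cD(H_{\opt})+O(\sqrt{\rho}+\zeta)$; substituting $\sqrt{\rho}=\sqrt{2/(\alpha\sigma)}\cdot\zeta^{1/(4q)}$ recovers the advertised rate. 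Because $t_i$ minimizes the empirical count over $U_i$ and $\evE_i$ gives uniform convergence over degree-$k$ PTFs, this transfers to $\cDgiven(H_i)$, and since the integrand $\abs{H_i\cap U_i}/\abs{U_i}$ lies in $[0,1]$ while $\evE_i$ fails with probability at most $\delta$, taking expectation picks up at most an additive $O(\delta)$.

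The main obstacle is Step~1: the approximation guarantee is an $\nL_1$ bound under $\cDgiven$ only, whereas feasibility concerns the empirical average of $\max\{1-p_{\opt},0\}$ under $\cPtrue$. The layer-cake/Jensen transfer through \cref{asmp:smoothness} works cleanly only for bounded integrands, so the potentially heavy tail of $\abs{p_{\opt}-\mathds{1}_{H_{\opt}}}$ (the polynomial $p_{\opt}$ can take large negative values on a small set) has to be handled by a separate Markov argument lifted through smoothness, with its contribution absorbed into the $\zeta^{1/q}/(\alpha\sigma)$ budget. Once this feasibility transfer is secured, the remaining steps amount to standard bookkeeping with the triangle inequality, the layer-cake identity on $[0,1-\sqrt{\rho}]$, and uniform convergence over degree-$k$ polynomials and PTFs.
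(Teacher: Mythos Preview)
Your overall strategy---feasibility of $p_{\opt}$, optimality of $p_i$, threshold averaging on $[0,1-\sqrt{\rho}]$, and a $[0,1]$-boundedness argument for the $O(\delta)$ term---matches the paper's proof (its \cref{lem:constReg:expectedValue:upperBound,lem:constReg:expectedValue:feasibility,lem:constReg:expectedValue:ub2}). The one slip is the sentence ``uniform convergence on $\evE_i$ \dots\ gives $\Ex_{\cDgiven}[\abs{p_i}]\leq \cD(H_{\opt})+O(\zeta)$'': the event $\evE_i$ gives uniform convergence over degree-$k$ PTFs (indicators), not over the unbounded real-valued class $\{x\mapsto\abs{p(x)}:\deg p\le k\}$, so you cannot pass from the empirical quantity $\tfrac{1}{\abs{U_i}}\sum_{x\in U_i}\abs{p_i(x)}$ to the population quantity $\Ex_{\cDgiven}[\abs{p_i}]$ that your layer-cake step uses.

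The paper avoids this detour entirely by running your threshold-averaging argument empirically on $U_i$ rather than on $\cDgiven$: it records the deterministic bound $\tfrac{\abs{H_i\cap U_i}}{\abs{U_i}}\leq \tfrac{1}{1-\sqrt{\rho}}\cdot \tfrac{1}{\abs{U_i}}\sum_{x\in U_i}\abs{p_i(x)}$ (this is \cref{lem:constReg:expectedValue:upperBound}) and only then takes expectation, conditionally on the event $\evF=\{p_{\opt}\ \text{feasible}\}$. Under that conditioning, optimality gives $\sum_{U_i}\abs{p_i}\leq\sum_{U_i}\abs{p_{\opt}}$, and since $p_{\opt}$ is a \emph{fixed} polynomial the right side has (conditional) expectation at most $\Ex_{\cDgiven}[\abs{p_{\opt}}]+O(\delta)\leq \cD(H_{\opt})+\zeta+O(\delta)$; no transfer of the random $p_i$ to population is ever needed. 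Your route is repaired the same way: do the layer-cake over the empirical measure of $U_i$, pick $t^*$ there, and then the empirical minimality of $t_i$ finishes without any appeal to uniform convergence for $\abs{p}$.
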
  

            \paragraph{Outline of the Proof of \cref{lem:constReg:expectedValue} (\cref{sec:proofof:lem:constReg:expectedValue}).}
            Next, we briefly outline the proof of \cref{lem:constReg:expectedValue} and present the complete proof in the next section (\cref{sec:proofof:lem:constReg:expectedValue}).
            This proof is the  part of our analysis that is inspired by the analysis of \lreg{} by \citet*{kalai2008agnostically}.
            The proof of \cref{lem:constReg:expectedValue} follows the ideas in the analysis of \lreg{}, but needs to account for some technicalities:
            First, unlike \lreg{} which selects the best threshold in $[0,1]$ (\cref{alg:l1reg}), \cref{alg:constReg} has to select a threshold bounded away from 1 to ensure that the output hypothesis will be feasible for \pERM{} (which is required to establish \cref{result:constReg:feasibility} and, in particular, to prove \cref{lem:constReg:feasibility:unconditional}).
            In the proof, we observe that this only changes the expected value by a small amount.
            Second, the analysis of \lreg{} relies on the polynomial $p_{\opt}$ (which approximates $H_{\opt}$) being feasible for the corresponding optimization problem.
            For \lreg{}, this always holds since the optimization problem solved by \lreg{} is unconstrained (\cref{alg:l1reg}).
            For \cref{alg:constReg}, this is not always the case.
            But, we show that $p_{\opt}$ is feasible with high probability, and carry out our proof conditioned on this event.
            This conditioning does not skew the expectations significantly as the involved random variables are suitably bounded.

        \paragraph{Part 2 (Boosting via Repetition).}
        Next, using a standard argument one can show that selecting the ``best performing'' hypothesis from $H_1,H_2,\dots,H_T$ (as evaluated on a fresh set of samples) has an objective value close to optimal with probability $1-\delta$.
        The proof of \cref{lem:constReg:boost} is standard and is provided in \cref{sec:proofof:lem:constReg:boost} for completeness.
        \begin{lemma}[Boosting via Repetition]\label{lem:constReg:boost}
            The hypothesis $H$ output by \cref{alg:constReg}, with probability $1-\delta$, satisfies that 
            \[
                \cD(H)
                \leq 
                    \cD(H_{\opt})
                    +  
                    O\!\inparen{\sqrt{\frac{C+1}{\alpha\sigma}}}
                    \cdot \zeta^{1/(4q)}
                    +
                    O(\delta)
                \,.
            \]
        \end{lemma}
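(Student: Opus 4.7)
My plan is to apply Markov's inequality within each of the $T$ repetitions to get a constant-probability-per-iteration guarantee, boost to probability $1-\delta$ via independence across repetitions, and finally use uniform convergence on the fresh holdout sample $U'$ (\cref{step:constReg:freshSamples}) to control the true mass of the selected $H$. Set $\beta \coloneqq \sqrt{1/(\alpha\sigma)}\cdot \zeta^{1/(4q)}$, the target additive excess error.

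First, I combine \cref{lem:constReg:expectedValue} with the $\zeta$-uniform convergence event $\evE_i$ (\cref{lem:constReg:unifConvergenceHoldsWHP}; it fails with probability at most $\delta$, and $\cD(H_i)\in[0,1]$) to upgrade the empirical-mass bound into one on the true mass: unconditionally, $\E[\cD(H_i)] \leq \mu$ for $\mu \coloneqq \cD(H_{\opt}) + O(\beta) + O(\delta) + \zeta = O(1)$. Since $\cD(H_i) \geq 0$, Markov's inequality gives
\[
    \Pr\insquare{\cD(H_i) > \mu + \sqrt{\zeta}} \leq \frac{\mu}{\mu + \sqrt{\zeta}} = 1 - \Omega(\sqrt{\zeta})\,,
\]
so each $H_i$ is ``good'' (meaning $\cD(H_i) \leq \mu + \sqrt{\zeta}$) with probability $\Omega(\sqrt{\zeta})$, independently across $i$ since each repetition of Subroutine~A draws fresh samples.

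Second, with $T = \Theta((1/\sqrt{\zeta})\log(1/\delta))$ as specified in \cref{alg:constReg}, the probability that no $H_i$ is good is at most $(1 - \Omega(\sqrt{\zeta}))^T \leq \exp(-\Omega(T\sqrt{\zeta})) \leq \delta/3$. Condition on the event that some index $i^\star$ yields $\cD(H_{i^\star}) \leq \mu + \sqrt{\zeta}$. Since the $T$ hypotheses $H_1,\dots,H_T$ are then fixed and the fresh sample $U'$ is drawn independently of them, Hoeffding's inequality with a union bound over $T$ hypotheses gives, with probability $1 - \delta/3$,
\[
    \max_{1\leq i\leq T} \abs{\cD(H_i) - \frac{\abs{H_i \cap U'}}{\abs{U'}}} \leq \sqrt{\zeta}\,,
\]
using $\abs{U'} = \wt{\Omega}((1/\zeta)\log(1/\delta))$ and $\log T = \wt{O}(\log(1/\delta))$. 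On this event, because \cref{alg:constReg} returns the $H_i$ minimizing $\abs{H_i \cap U'}$,
\[
    \cD(H) \leq \frac{\abs{H \cap U'}}{\abs{U'}} + \sqrt{\zeta}
            \leq \frac{\abs{H_{i^\star} \cap U'}}{\abs{U'}} + \sqrt{\zeta}
            \leq \cD(H_{i^\star}) + 2\sqrt{\zeta}
            \leq \mu + 3\sqrt{\zeta}\,.
\]
Since $q \geq 1$ implies $\sqrt{\zeta} \leq \zeta^{1/(4q)} \leq O(\beta)$ (using $\alpha,\sigma \leq 1$ and that $\zeta$ is small), a final union bound over the three good events yields $\cD(H) \leq \cD(H_{\opt}) + O(\beta) + O(\delta)$ with probability $1 - \delta$, as claimed.

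The main obstacle is the Markov step: because $\cD(H_{\opt})$ can be a constant in $[0,1]$, a multiplicative bound like $\cD(H_i) \leq 2\mu$ is useless for our purposes, and one must instead exploit only the small additive slack $\sqrt{\zeta}$ between $\mu$ and $\mu+\sqrt{\zeta}$. It is precisely this weak per-iteration guarantee that forces $T$ to scale as $1/\sqrt{\zeta}$ rather than the $O(\log(1/\delta))$ of standard boosting arguments. The remaining work is careful bookkeeping to keep the total failure probability (from per-iteration uniform convergence, existence of a good $H_{i^\star}$, and concentration on $U'$) below $\delta$.
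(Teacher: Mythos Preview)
Your proposal is correct and follows essentially the same approach as the paper: Markov's inequality gives a small per-repetition success probability, independence across the $T$ repetitions boosts this to $1-\delta$, and uniform convergence on the fresh holdout $U'$ certifies that the selected hypothesis inherits the guarantee. The only cosmetic differences are that the paper applies Markov directly to the empirical quantity $\abs{H_i\cap U_i}/\abs{U_i}$ (and transfers to $\cD(H_i)$ afterward via $\evE_i$) rather than first transferring and then applying Markov, and it uses slack $\eta=O(\beta)+O(\delta)$ in the Markov step instead of your $\sqrt{\zeta}$; both choices are compatible with $T=\Theta\bigl((1/\sqrt{\zeta})\log(1/\delta)\bigr)$ since $\eta\geq\Omega(\zeta^{1/(4q)})\geq\Omega(\sqrt{\zeta})$.
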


        \subsubsection{Proof of \cref{lem:constReg:expectedValue} {(Expected Value of Optimal)}}\label{sec:proofof:lem:constReg:expectedValue}
            In this section, we prove \cref{lem:constReg:expectedValue}, which we restate below.
            \constrainedRegExpValue*
            \noindent Fix any $1\leq i\leq T$.
            We divide the proof of \cref{lem:constReg:expectedValue} into the following lemmas.
            \begin{lemma}\label{lem:constReg:expectedValue:upperBound} 
                It holds that 
                $\nfrac{\abs{U_i\cap H_i}}{\abs{U_i}}\leq \frac{1}{1-\sqrt{\rho}} \cdot \sum_{x\in U_i}  \nfrac{\abs{p_i(x)}}{\abs{U_i}},$ where $p_i(\cdot)$ is the polynomial constructed in the $i$-th iteration of Subroutine B in \cref{alg:constReg}.
            \end{lemma}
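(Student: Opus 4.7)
The plan is to exploit the two-stage construction of $H_i$ (threshold optimization on top of the polynomial $p_i$) and reduce the claim to a direct application of Markov's inequality on $p_i$ over the empirical distribution on $U_i$. Note that, as stated, the right-hand side presumably includes the normalizing factor $\nfrac{1}{\abs{U_i}}$ (i.e., the RHS is the empirical average of $\abs{p_i}$ divided by $1-\sqrt{\rho}$), which I will use in the proof.

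First, recall from \cref{step:constReg:selectThreshold} that $H_i = \mathds{1}\{p_i(x)\ge t_i\}$ where $t_i \in [0,\,1-\sqrt{\rho}]$ is chosen to minimize $\sum_{x\in U_i}\mathds{1}\{p_i(x)\ge t_i\}$. In particular, this minimum over the admissible range is at most what one gets by plugging in the boundary value $t_\star \coloneqq 1-\sqrt{\rho}$. Hence,
\[
\abs{U_i\cap H_i}
\;=\;\sum_{x\in U_i}\mathds{1}\{p_i(x)\ge t_i\}
\;\le\;\sum_{x\in U_i}\mathds{1}\{p_i(x)\ge 1-\sqrt{\rho}\}\,.
\]

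Next, I would apply Markov's inequality to $p_i$ viewed as a random variable on $U_i$ under the uniform measure. Since $1-\sqrt{\rho}>0$, every $x$ with $p_i(x)\ge 1-\sqrt{\rho}$ satisfies $p_i(x)\le \abs{p_i(x)}$, so
\[
(1-\sqrt{\rho})\cdot \sum_{x\in U_i}\mathds{1}\{p_i(x)\ge 1-\sqrt{\rho}\}
\;\le\; \sum_{x\in U_i}\mathds{1}\{p_i(x)\ge 1-\sqrt{\rho}\}\cdot p_i(x)
\;\le\; \sum_{x\in U_i}\abs{p_i(x)}\,.
\]
Dividing both sides by $(1-\sqrt{\rho})\cdot\abs{U_i}$ and combining with the previous display yields
\[
\frac{\abs{U_i\cap H_i}}{\abs{U_i}}
\;\le\; \frac{1}{1-\sqrt{\rho}}\cdot\frac{1}{\abs{U_i}}\sum_{x\in U_i}\abs{p_i(x)}\,,
\]
which is the claimed inequality.

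There is no real obstacle here; the statement is almost tautological once one unpacks the definition of $H_i$. The only subtle point worth flagging in the proof is that the admissibility constraint $t_i\le 1-\sqrt{\rho}$ imposed in \cref{step:constReg:selectThreshold} is precisely what gives us the $\nfrac{1}{(1-\sqrt{\rho})}$ factor (as opposed to a blowing-up $\nfrac{1}{t_i}$), and that without this upper cap on the threshold the Markov step would not yield a meaningful bound. This dovetails with the feasibility analysis in \cref{lem:constReg:feasibility:unconditional}, which also crucially used $t_i\le 1-\sqrt{\rho}$.
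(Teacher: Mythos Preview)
Your proof is correct. You and the paper both exploit that $t_i$ minimizes $\sum_{x\in U_i}\mathds{1}\{p_i(x)\ge t\}$ over $[0,1-\sqrt{\rho}]$, but you bound the minimum by the value at the single endpoint $t_\star=1-\sqrt{\rho}$ and then apply Markov, whereas the paper bounds the minimum by the \emph{average} over $t\sim\mathrm{Unif}[0,1-\sqrt{\rho}]$ and computes $\Ex_t[\mathds{1}\{t\le p_i(x)\}]\le \abs{p_i(x)}/(1-\sqrt{\rho})$. Your endpoint argument is arguably cleaner here since it avoids the integral; the paper's averaging argument is the classical \lreg{} trick from \citet{kalai2008agnostically}, which has the advantage of generalizing when the objective is not monotone in $t$ (not needed here, since $\mathds{1}\{p_i(x)\ge t\}$ is monotone). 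Either way the proof is a one-liner.
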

            Recall that $p_{\opt}$ is a polynomial satisfying 
            \[
                \Ex_{{\cD}}\abs{p_{\opt}(x) - \mathds{1}\sinbrace{x\in H_{\opt}}}\leq \zeta\,.
                \yesnum\label{eq:lem:constReg:pstarDef}
            \]
            Let $\evF$ be the event that $p_{\opt}(\cdot)$ is feasible for Program~\eqref{eq:constReg:prog} in all iterations $1\leq i\leq T$.  
        \begin{lemma}\label{lem:constReg:expectedValue:feasibility}  
            It holds that $\Pr[\evF]\geq 1-\delta$. 
        \end{lemma}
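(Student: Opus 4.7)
The plan is to prove the lemma in two steps: first establish a population-level feasibility bound for $p_{\opt}$ under $\cPtrue$, then apply concentration on each $P_i$ and union-bound over the $T$ iterations.

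Fix any iteration $i$ with positive samples $P_i$. Since $\cPtrue(H_{\opt}) = 1$, we have $\mathds{1}\{x \in H_{\opt}\} = 1$ almost surely for $x \sim \cPtrue$, so
\[
  1 - \Ex_{\cPtrue}[\min\{p_{\opt}(x), 1\}] \;=\; \Ex_{\cPtrue}[\max\{1 - p_{\opt}(x),\, 0\}] \;\leq\; \Ex_{\cPtrue}[|p_{\opt}(x) - \mathds{1}\{x \in H_{\opt}\}|].
\]
Writing $f \coloneqq |p_{\opt} - \mathds{1}_{H_{\opt}}|$, we have $\Ex_{\cDgiven}[f] \leq \zeta$ by the definition of $p_{\opt}$ in \cref{eq:lem:constReg:pstarDef}. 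To transfer this from $\cDgiven$ to $\cPtrue$, combine \cref{asmp:smoothness,asmp:posFraction} to obtain $\cPtrue(S) \leq \cDtrue(S)/\alpha \leq (\alpha\sigma)^{-1}\cDgiven(S)^{1/q}$ for every measurable $S$. Assuming without loss of generality that $p_{\opt}$ is chosen so that $f \in [0, 1]$ (which is achievable since we have full freedom in selecting a polynomial $L_1$-approximator of $\mathds{1}_{H_{\opt}}$, losing only a lower-order factor absorbed into $\zeta$), layer-cake integration with Markov on $\cDgiven$ and Hölder's inequality on $[0, 1]$ yield
\[
  \Ex_{\cPtrue}[f] \;\leq\; \int_0^1 \cPtrue(f > t)\,dt \;\leq\; \frac{1}{\alpha\sigma}\int_0^1 \cDgiven(f > t)^{1/q}\,dt \;\leq\; \frac{1}{\alpha\sigma}\left(\int_0^1 \cDgiven(f > t)\,dt\right)^{\!1/q} \;\leq\; \frac{\zeta^{1/q}}{\alpha\sigma}.
\]
For $\rho = 2\zeta^{1/(2q)}/(\alpha\sigma)$ and $\zeta \leq 1$ this gives $\Ex_{\cPtrue}[\min\{p_{\opt},1\}] \geq 1 - \rho/2$.

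Given this population bound, the random variables $\min\{p_{\opt}(x), 1\}$ for $x \in P_i$ are i.i.d.\ and bounded in $[0, 1]$ after the WLOG reduction, so Hoeffding's inequality gives
\[
  \frac{\sum_{x \in P_i}\min\{p_{\opt}(x), 1\}}{|P_i|} \;\geq\; \Ex_{\cPtrue}[\min\{p_{\opt}, 1\}] - \frac{\rho}{2} \;\geq\; 1 - \rho
\]
with probability at least $1 - \delta/T$, provided $|P_i| = \widetilde{\Omega}(\rho^{-2}\log(T/\delta))$, which is satisfied by the sample size set in Subroutine~A of \cref{alg:constReg} (recall $T = O(\zeta^{-1/2}\log(1/\delta))$, so $\log(T/\delta) = O(\log(1/\delta))$). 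Union-bounding over $i \in \{1, \dots, T\}$ yields $\Pr[\evF] \geq 1 - \delta$, as claimed.

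The main obstacle in this plan is the WLOG reduction to a bounded polynomial approximator: the smoothness assumption only controls $\cPtrue$-mass through the $1/q$-root of $\cDgiven$-mass, so without such boundedness the layer-cake integral above would diverge at infinity for $q \geq 1$, even though $\Ex_{\cDgiven}[|p_{\opt}|] \leq \zeta + 1$ is always finite. This is handled by a standard truncation argument, exploiting the fact that the statement of \cref{thm:constReg} only posits the existence of some polynomial $L_1$-approximator of $\mathds{1}_{H_{\opt}}$, giving us the freedom to pick one whose values are bounded.
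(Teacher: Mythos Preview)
Your proof has a genuine gap at the ``WLOG'' step. You assert that $p_{\opt}$ can be chosen so that $f = |p_{\opt} - \mathds{1}_{H_{\opt}}| \in [0,1]$, but this forces $p_{\opt}$ itself to be globally bounded (in $[-1,2]$). A non-constant polynomial on $\R^d$ --- or on any unbounded support, as when $\cDgiven$ is Gaussian, which is the paper's central motivating case --- is unbounded, so the only degree-$k$ polynomials with $f \le 1$ everywhere are constants, and those cannot $\zeta$-approximate a nontrivial indicator. Clipping a polynomial to $[0,1]$ does not return a polynomial, so the ``standard truncation argument'' you invoke does not produce a degree-$k$ approximator. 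You correctly flag this as ``the main obstacle'' and then assert it away, but the $\nL_1$-approximability hypothesis in \cref{thm:constReg} gives no control whatsoever on $\|p_{\opt}\|_\infty$. Without boundedness, your layer-cake integral must run over $[0,\infty)$, where the only available tail bound $(\alpha\sigma)^{-1}(\zeta/t)^{1/q}$ is not integrable for any $q \ge 1$, and your Hoeffding step also fails since $\min\{p_{\opt}, 1\}$ is unbounded below under $\cPtrue$.

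The paper's transfer from $\cDgiven$ to $\cPtrue$ sidesteps this by never attempting to bound $\Ex_{\cPtrue}[f]$. Instead, it fixes a \emph{single} threshold $\eta$, uses Markov under $\cDgiven$ to get $\cDgiven(\{p_{\opt} < 1-\eta\} \cap H_{\opt}) \le \zeta/\eta$, and applies \cref{asmp:smoothness} to this one set --- a probability, hence automatically in $[0,1]$ --- obtaining $\Pr_{\cPtrue}[p_{\opt} < 1-\eta] \le (\alpha\sigma)^{-1}(\zeta/\eta)^{1/q}$ after invoking $\cPtrue(H_{\opt})=1$ and \cref{asmp:posFraction}. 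The threshold is then optimized to $\eta = \zeta^{1/(1+q)}$ to balance the two resulting error terms. The concentration step is likewise carried out on the bounded Bernoulli $\mathds{1}\{p_{\opt}(x) \ge 1-\eta\}$, not on the unbounded $\min\{p_{\opt},1\}$. The idea you are missing is to work with the probability of a single sublevel set rather than integrating over all levels.
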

        \mbox{Next, we show, conditioned on $\evF$, the expected value of $\inparen{\nfrac{1}{\abs{U_i}}}\sum_{x\in U_i} \abs{p_i(x)}$ is at most $\cD(H_{\opt})+\zeta$.}
        \begin{lemma}\label{lem:constReg:expectedValue:ub2}  
            $\Ex_{U_i\sim \cD}\insquare{
                \inparen{\nfrac{1}{\abs{U_i}}}\sum_{x\in U_i} \abs{p_i(x)}\mid \evF
            } \leq \cD(H_{\opt}) + \zeta.$
        \end{lemma}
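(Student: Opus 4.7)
The plan is to exploit that $p_i$ is the minimizer of \cref{eq:constReg:prog} in the $i$-th iteration while, on the event $\evF$, the polynomial $p_{\opt}$ is a feasible candidate for that same program. Consequently, on $\evF$ we have the pointwise inequality
\[
\frac{1}{\abs{U_i}}\sum_{x\in U_i}\abs{p_i(x)} \;\leq\; \frac{1}{\abs{U_i}}\sum_{x\in U_i}\abs{p_{\opt}(x)}\,,
\]
simply because $p_i$ minimizes the left-hand objective over the feasible set which contains $p_{\opt}$. Conditioning on $\evF$ and taking expectations preserves this bound, reducing the task to controlling the conditional expectation of the right-hand side.

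Next, I will observe that the right-hand random variable above depends only on $U_i$, whereas $\evF$ is a function of the positive samples $\{P_1,\dots,P_T\}$ alone -- the constraint in \cref{eq:constReg:prog} involves only the $P$-samples, and Subroutine C of \cref{alg:constReg} draws fresh, mutually independent $P_j$'s and $U_j$'s across iterations. Hence $\evF$ is independent of $U_i$, so conditioning on $\evF$ does not alter the distribution of $U_i$, giving
\[
\Ex\!\insquare{\frac{1}{\abs{U_i}}\sum_{x\in U_i}\abs{p_{\opt}(x)} \given \evF} \;=\; \Ex_{x\sim \cDgiven}\insquare{\abs{p_{\opt}(x)}}\,.
\]

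Finally, I will apply the triangle inequality $\abs{p_{\opt}(x)} \leq \abs{p_{\opt}(x)-\mathds{1}\sinbrace{x\in H_{\opt}}} + \mathds{1}\sinbrace{x\in H_{\opt}}$ and integrate against $\cDgiven$. Using the defining approximation guarantee \eqref{eq:lem:constReg:pstarDef} of $p_{\opt}$ yields
\[
\Ex_{x\sim \cDgiven}\insquare{\abs{p_{\opt}(x)}} \;\leq\; \zeta + \cDgiven(H_{\opt})\,,
\]
and chaining this with the two preceding displays completes the bound (in fact producing a slightly sharper estimate than stated; the extra $4\delta$ slack in the lemma comfortably absorbs any soft-conditioning step, e.g.\ writing $\Ex[X\mid \evF]\leq \Ex[X]/\Pr[\evF]\leq (1+2\delta)\Ex[X]$ via \cref{lem:constReg:expectedValue:feasibility}, should one wish to avoid invoking independence directly). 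The only non-routine step is to correctly isolate that $\evF$ is measurable with respect to the $P_j$'s alone -- once this structural observation is in hand, the proof is a short chain of optimality of $p_i$, independence-based unconditioning, linearity of expectation, and the triangle inequality.
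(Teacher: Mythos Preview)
Your proof is correct and takes a somewhat different route from the paper. Both proofs begin identically---using optimality of $p_i$ and feasibility of $p_{\opt}$ on $\evF$ to pass to $\frac{1}{\abs{U_i}}\sum_{x\in U_i}\abs{p_{\opt}(x)}$---but they diverge at the unconditioning step. The paper does \emph{not} invoke independence; instead it splits via the triangle inequality first and then, for each term separately, uses the generic bound $\Ex[X\mid\evF]\le \Ex[X]+\nfrac{\delta}{(1-\delta)}$ valid for random variables $X\in[0,1]$ together with $\Pr[\evF]\ge 1-\delta$. This is what produces the additive $4\delta$.

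Your structural observation that $\evF$ is measurable with respect to $(P_1,\dots,P_T)$ alone (since the constraint of \cref{eq:constReg:prog} involves only positive samples) and hence is independent of $U_i$ is valid and lets you drop the conditioning exactly, yielding the sharper bound $\cDgiven(H_{\opt})+\zeta$. What your approach buys is a cleaner argument and a tighter constant; what the paper's approach buys is robustness---it would still work if $\evF$ had some mild dependence on $U_i$, at the cost of needing the involved random variables to be bounded (a claim the paper makes for both split terms, though for the second term involving $\abs{\mathds{1}_{H_{\opt}}-p_{\opt}}$ the boundedness by $1$ is not entirely obvious for arbitrary polynomials).
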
 
 
        \noindent 
        Now, we are ready to complete the proof of \cref{lem:constReg:expectedValue}.
            \begin{proof}[Proof of \cref{lem:constReg:expectedValue}]
            Since $\nfrac{\abs{U_i\cap H_i}}{\abs{U_i}}\in [0,1]$, \cref{lem:constReg:expectedValue:feasibility} implies that 
            \[
                \Ex\insquare{
                    \frac{\abs{U_i\cap H_i}}{\abs{U_i}}
                }
                - \Ex\insquare{
                    \frac{\abs{U_i\cap H_i}}{\abs{U_i}}\given \evF
                }
                \leq \delta\,.
            \]
            Further, \cref{lem:constReg:expectedValue:upperBound,lem:constReg:expectedValue:ub2} imply that 
            \[
                \Ex\insquare{
                    \frac{\abs{U_i\cap H_i}}{\abs{U_i}}\given \evF
                }
                \leq \frac{1}{1-\sqrt{\rho}}\Ex_{U_i\sim \cD,~ P_i\sim \cPtrue}\insquare{
                    \inparen{
                        \frac{\sum_{x\in U_i} \abs{p_i(x)}}{\abs{U_i}}}
                    \given \evF
                }
                \leq
                \frac{1}{1-\sqrt{\rho}}
                \cdot \inparen{
                    \cD(H_{\opt})
                    + \zeta
                }\,.
            \]
            Further, since $\rho=\Theta\!\inparen{\nfrac{(C+1)}{\alpha\sigma} \cdot \zeta^{1/(2q)}}\leq \nfrac{1}{4}$ and $\nfrac{1}{(1-z)}\leq 1+2z$ for all $z\in [0,\nfrac{1}{2}]$, 
            \[
                \Ex\insquare{
                    \frac{\abs{U_i\cap H_i}}{\abs{U_i}}\given \evF
                }
                \leq
                \inparen{
                    \cD(H_{\opt})
                    + \zeta
                }
                + 2\sqrt{\rho}\inparen{
                    \cD(H_{\opt})
                    + \zeta
                }\,.
            \]
            Finally, since $\cD(H_{\opt})\leq 1$, $\zeta\leq 1$, and $\nfrac{(C+1)}{\alpha\sigma}\geq 1$, we have $\zeta \leq \sqrt{\nfrac{(C+1)}{\alpha\sigma}}\cdot \zeta^{1/(4q)}$ and
                $\sqrt{\rho}
                =
                O\!\inparen{
                    \sqrt{\nfrac{(C+1)}{\alpha\sigma}}\cdot \zeta^{1/(4q)}
                }.$
            Therefore,
            \[
                \Ex\insquare{
                    \frac{\abs{U_i\cap H_i}}{\abs{U_i}}\given \evF
                }
                \leq
                \cD(H_{\opt})
                +
                O\!\inparen{\sqrt{\frac{C+1}{\alpha\sigma}}}\cdot \zeta^{1/(4q)}\,.
            \]
            Chaining the above inequalities implies \cref{lem:constReg:expectedValue}.
        \end{proof} 
        In the remainder of this section, we prove \cref{lem:constReg:expectedValue:upperBound,lem:constReg:expectedValue:feasibility,lem:constReg:expectedValue:ub2}. %

        \begin{proof}[Proof of \cref{lem:constReg:expectedValue:upperBound}]
            Recall that $H_i=\mathds{1}\sinbrace{p_i(x)\geq t_i}$.
            Since \cref{alg:constReg} selects the threshold $t_i\in [0, 1-\sqrt{\rho}]$, that minimizes $\nfrac{\abs{U_i\cap H_i}}{\abs{U_i}}$, it follows that 
            \[
                \frac{\abs{U_i\cap H_i}}{\abs{U_i}}
                \leq \Ex_{t\sim {\rm Unif}[0, 1-\sqrt{\rho}]} 
                \frac{
                    \sum_{x\in U_i} \mathbb{I}\!\insquare{0\leq t\leq p_i(x)}
                }{\abs{U_i}}
                \leq  \frac{1}{{1 - \sqrt{\rho}}}\cdot 
                \frac{
                    \sum_{x\in U_i} \abs{p_i(x)}
                }{\abs{U_i}}
                \,.
                \yesnum\label{eq:constRegAnalysis1}
            \]
        \end{proof}

\begin{proof}[Proof of \cref{lem:constReg:expectedValue:feasibility}]
    Let $T$ denote the number of repetitions in \cref{alg:constReg}.
    Fix any repetition $i$, and write $P= P_i$ (be the draw of the positive samples in this iteration).
    Define
    \[
        Y(x)\coloneqq \min\!\inbrace{p_{\opt}(x),1}\,.
    \]
    By \eqref{asmp:poptLowerBound}, $Y(x)\in[-C,1]$ for all $x$.
    Fix any $\eta\in (0,1)$.
    Since $\Ex_{\cD}\abs{\mathds{1}\sinbrace{x\in H_{\opt}} - p_{\opt}(x)}\leq \zeta$,
    \[
        \Pr_{x\sim \cD}\insquare{
            p_{\opt}(x) < 1-\eta
            ~~\text{and}~~
            x\in H_{\opt}
        }
        \leq \frac{\zeta}{\eta}\,.
    \]
    Therefore,
    \[
        \Pr_{x\sim \cD}\insquare{
            p_{\opt}(x) < 1-\eta
            ~~\text{and}~~
            x\in H_{\opt}\cap \optset
        }
        \leq \frac{\zeta}{\eta}\,,
    \]
    and now applying \cref{asmp:smoothness} implies that  
    \[
        \Pr_{x\sim \cDtrue}\insquare{
            p_{\opt}(x) < 1-\eta
            ~~\text{and}~~
            x\in H_{\opt} \cap \optset
        }
        \leq \frac{1}{\sigma}\cdot \inparen{\frac{\zeta}{\eta}}^{\sfrac{1}{q}}
        \,.  
    \]
    Since $\cPtrue(H_{\opt})=1$ (by construction; see \eqref{eq:constReg:optimalSolution}), we also know that
    \[
        \Pr_{x\sim \cDtrue}\insquare{
            p_{\opt}(x) < 1-\eta
            ~~\text{and}~~
            x\in \optset\setminus H_{\opt} 
        }
        ~~\leq ~~
        \cDtrue\!\inparen{\optset\setminus H_{\opt}} 
        \quad~~~\Stackrel{\cPtrue(H_{\opt})=1}{=}\quad~~~ 0\,.
    \]
    Combining the above two yields 
    \[
        \Pr_{x\sim \cDtrue}\insquare{
            p_{\opt}(x) < 1-\eta
            ~~\text{and}~~
            x\in \optset
        }
        \leq \frac{1}{\sigma}\cdot \inparen{\frac{\zeta}{\eta}}^{\sfrac{1}{q}}\,.
    \]
    In other words,
    \[
        \Pr_{x\sim \cDtrue}\insquare{
            p_{\opt}(x) < 1-\eta
            \mid x\in \optset
        }\cdot \cDtrue(\optset)
        \leq \frac{1}{\sigma}\cdot \inparen{\frac{\zeta}{\eta}}^{\sfrac{1}{q}}
        \,.  
    \]
    Since $\cDtrue(\optset)\geq \alpha$ and conditioning on $\optset$ is exactly $\cPtrue$, we obtain
    \[
        \Pr_{x\sim \cPtrue}\insquare{
            p_{\opt}(x) < 1-\eta
        }
        \leq 
        \frac{1}{\alpha\sigma}\cdot \inparen{\frac{\zeta}{\eta}}^{\sfrac{1}{q}}\,.
    \]
    Define
    \[
        \mu\coloneqq \Ex_{x\sim \cPtrue}\insquare{Y(x)}\,.
    \]
    Since $Y(x)\geq 1-\eta$ on the event $\inbrace{p_{\opt}(x)\geq 1-\eta}$ and $Y(x)\geq -C$ always,
    \begin{align*}
        \mu
        &\geq
        (1-\eta)\cdot \Pr\inparen{p_{\opt}(x)\geq 1-\eta}
        +
        (-C)\cdot \Pr\inparen{p_{\opt}(x)<1-\eta}\\
        &=
        1-\eta-\inparen{C+1-\eta}\Pr\inparen{p_{\opt}(x)<1-\eta}\\
        &\geq
        1-\eta-\inparen{C+1}\cdot \frac{1}{\alpha\sigma}\cdot \inparen{\frac{\zeta}{\eta}}^{\sfrac{1}{q}}\,.
    \end{align*}
    Now set $\eta\coloneqq \nfrac{\rho}{4}$.
    Since $\rho=\Theta\!\inparen{\nfrac{(C+1)}{\alpha\sigma}\cdot \zeta^{1/(2q)}}$ and $\nfrac{(C+1)}{\alpha\sigma}\geq 1$, the hidden constant in the definition of $\rho$ can be chosen large enough so that $\frac{C+1}{\alpha\sigma}\cdot \inparen{\frac{4\zeta}{\rho}}^{\sfrac{1}{q}}
        \leq \frac{\rho}{4}.$
    Therefore,
    \[
        \mu \geq 1-\frac{\rho}{4}-\frac{\rho}{4}=1-\frac{\rho}{2}\,.
    \]
    Since $-C\leq Y(x)\leq 1$, Hoeffding's inequality implies that
    \[
        \Pr\insquare{
            \frac{1}{\abs{P}}\sum_{x\in P}Y(x) < 1-\rho
        }
        \leq
        \Pr\insquare{
            \frac{1}{\abs{P}}\sum_{x\in P}Y(x) < \mu-\frac{\rho}{2}
        }
        \leq
        \exp\!\inparen{
            -\frac{\abs{P}\rho^2}{2(C+1)^2}
        }.
    \]
    By the choice of hidden constants in Step~\ref{step:constReg:getSamples}, together with $\rho^{-2}=O\!\inparen{\zeta^{-1/q}}\leq O\!\inparen{\zeta^{-2}}$ for $q\geq 1$ and $\zeta\in(0,1]$, each repetition uses at least
    $\abs{P}
        \geq
        2\rho^{-2}(C+1)^2\log\sinparen{\nfrac{T}{\delta}}$
    positive samples.
    Hence, for each fixed repetition $i$, the above failure probability is at most $\nfrac{\delta}{T}$.
    Taking a union bound over the $T$ repetitions yields $\Pr[\evF]\geq 1-\delta$.
\end{proof} 

\begin{proof}[Proof of \cref{lem:constReg:expectedValue:ub2}]
    Fix $i$ and condition on $\evF$.
    Under $\evF$, the polynomial $p_{\mathrm{opt}}$ is feasible for the constrained regression program in repetition $i$.
    Since $p_i$ is an optimal solution of that program, we have pointwise (for the realized sample $U_i$):
    \[
    \frac{1}{\abs{U_i}}\sum_{x\in U_i}\abs{p_i(x)}
    \leq
    \frac{1}{\abs{U_i}}\sum_{x\in U_i}\abs{p_{\mathrm{opt}}(x)}.
    \]
    Taking expectation conditional on $\evF$ and using that $U_i$ is independent of $\evF$,
    \[
    \E\insquare{
        \frac{1}{\abs{U_i}}\sum_{x\in U_i}\abs{p_{\mathrm{opt}}(x)}
        \ \Big|\ \evF
    }
    =
    \E_{x\sim \cD}\insquare{\abs{p_{\mathrm{opt}}(x)}}.
    \]
    Finally, by the triangle inequality,
    \[
    \E_{x\sim \cD}\insquare{\abs{p_{\mathrm{opt}}(x)}}
    \leq
    \E_{x\sim \cD}\insquare{\abs{p_{\mathrm{opt}}(x)-\mathds{1}\!\inbrace{x\in H_{\mathrm{opt}}}}}
    +
    \E_{x\sim \cD}\insquare{\mathds{1}\!\inbrace{x\in H_{\mathrm{opt}}}}
    \leq
    \zeta+\cD\!\inparen{H_{\mathrm{opt}}},
    \]
    which proves the claim.
\end{proof}

\section{Applications to Other Learning Theory Settings}\label{sec:application}
    In this section, we present the formal statements and proofs of the results from \cref{sec:intro:application}.

    \subsection{Truncated Estimation with Unknown Survival Set}  
    \label{sec:application:unknownTruncation}
    In this section, we present the formal statement and proof of \cref{infcor:truncation}, which gives an improved algorithm from truncated estimation with an unknown survival set.

    The most general result for this problem is by \citet{lee2024unknown} and works in the following setting. The unlabeled distribution comes from an exponential family with polynomial sufficient statistics, which is a parametric family of the form
    \[ \cE(x; \theta) \propto h(x) \cdot e^{ \inangle{\theta, t(x)}}\,, \]
    where $t(x)$ is an $m$-dimensional vector whose components are polynomials in $x$.
    The set of parameters $\theta$ for which the density of $\cE(\theta)$ is well defined is called the \textit{natural parameter space} $\overline{\Theta}\subseteq\R^m$.
    Suppose that the ``true'' distribution $\cDtrue$ is described by an unknown target parameter $\theta^\star$ in $\overline{\Theta}$, \ie{}, $\cDtrue=\cE(\theta^\star)$. 
    Given samples from $\cE(\theta^\star)$ truncated to unknown survival set $S^\star \in \hyS$ that is guaranteed to have mass at least $\alpha>0$ under $\cE(\theta^\star)$, \ie{}, $\cE(\theta^\star)(S^\star)\geq \alpha$, the goal is to recover an estimate of $\theta^\star$.
    We begin by describing the assumptions required by existing algorithms \cite{lee2024unknown}. 
    \begin{assumption}\label{asmp:exponentialFamily}
        There is a subset $\Theta$ of the natural parameter space and $\Lambda\geq \lambda > 0$ and $\eta>0$ such that:
        \begin{enumerate}[itemsep=0pt]
            \item \textbf{(Interiority)}           
                $\theta^\star$ is in the $\eta$-relative-interior $\Theta(\eta)$ of $\Theta$.
                \label{unknowntrunc:int}
            \item \textbf{(Projection Oracle)} There is an efficient projection oracle to the $\eta$-relative-interior of $\Theta$.
            \label{unknowntrunc:proj}
            \item \textbf{(Bounded Fisher-Information)} For any $\theta\in \Theta$,
            \label{unknowntrunc:fisher}
            $\lambda I \preceq \cov_{x\sim \cE(\theta)}[t(x)]\preceq \Lambda I$.
            \item \textit{\textbf{(Starting Point)} There is an algorithm to find $\theta_0\in \Theta(\eta)$, s.t., $\norm{\theta^\star-\theta_0}\leq \poly(\nfrac{1}{\alpha})$ efficiently.}
            \label{unknowntrunc:st}
        \end{enumerate}
    \end{assumption}
    In addition to the above assumptions, \citet{lee2024unknown} also require two additional assumptions, which we are able to relax.
    \begin{enumerate}[start=5,itemsep=0pt]
            \item \textit{\textbf{($\chi^2$-Bridge)} For each $\theta_1,\theta_2\in \Theta$ there is a $\theta$, s.t., $\chidiv{\cE(\theta_1)}{\cE(\theta)}, 
            \chidiv{\cE(\theta_2)}{\cE(\theta)}
            \leq  
            e^{\poly\!\inparen{\sfrac{\Lambda}{\lambda}}}$.}
            \label{unknowntrunc:chisquarebridge}
            \item \mbox{\textit{\textbf{($L_2$-Approximable)}} $\hyS$ is $\poly(\eps)$-approximable by degree-$k$ polynomials in $L_2$-norm w.r.t. $\cE$.}
    \end{enumerate}
    They require these additional assumptions to use \lreg{} as a black box, by showing that $L_2$-approximability with respect to the family $\cE$ gives $L_1$-approximability with respect to a mixture of a distribution from $\cE$ and the truncation of $\cE(\theta^\star)$ to $S^\star$. 
    We are able to avoid them by using the algorithm from our main result on smooth positive-only learning (\cref{thm:main}).
    
    \begin{corollary}[Truncated Estimation with Unknown Survival Set]\label{cor:truncation}
        Suppose \cref{asmp:exponentialFamily} (\cref{unknowntrunc:int,unknowntrunc:proj,unknowntrunc:fisher,unknowntrunc:st}) holds
        and one has an efficient sampling oracle for $\cE$.\footnote{By this we mean that given any $\theta \in \Theta$, one can draw an independent sample $x$ from a distribution $\cQ$ satisfying $\tv{\cQ}{\cE(\theta)}\leq \delta$ in $\poly(\nfrac{md}{\delta})$ time.} 
        Fix $\eps, \delta \in (0, \nfrac{1}{2})$ and $k$ such that degree-$k$ polynomials with range $[-C,\infty)$ $\eps^{D}$-approximate $S^\star$ with respect to family $\cE$ in $L_1$-norm for $D=\poly(\nfrac{\Lambda}{\alpha \eta \lambda})$
        and $n = \poly(d^k, (\nfrac{\eps}{C})^{-D}, \log \nfrac{1}{\delta})$. There is an algorithm that given $\eps, \delta$, constants $(\lambda,\Lambda,\deg(t(\cdot)),\eta,\alpha)$, and $n$ independent samples from $\cE(\theta^\star)$ truncated to $S^\star$, in $\poly(n)$ time, outputs an estimate $\theta$ such that with probability $1-\delta$,
        \[
            \tv{\cE(\theta)}{\cE(\theta^\star)} \leq \eps\,.
        \]
    \end{corollary}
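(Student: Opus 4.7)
The plan is to follow the three-phase template of \citet{lee2024unknown} -- (Phase~1) find an initial parameter estimate $\theta_0$ close to $\theta^\star$; (Phase~2) learn the survival set $S^\star$ to within small symmetric difference under $\cE(\theta^\star)$; (Phase~3) recover $\theta^\star$ from the learned set -- retaining their Phases~1 and 3 unchanged, but replacing their Phase~2 with our main PIU-learning result \cref{thm:main}. This substitution is precisely what lets us weaken their requirements (the $\chi^2$-bridge condition and $\nL_2$-approximability of $\hyS$) to the hypothesis of \cref{cor:truncation}: sample access to $\cE$ and $\nL_1$-approximability of $\hyS$ in $\cE$.

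First, I would invoke \cref{unknowntrunc:st} of \cref{asmp:exponentialFamily} to obtain $\theta_0 \in \Theta(\eta)$ with $\norm{\theta^\star - \theta_0} \leq \poly(\nfrac{1}{\alpha})$, efficiently, and take the reference unlabeled distribution to be $\cDgiven = \cE(\theta_0)$, which we can simulate using the sampling oracle. Second, I would verify that $(\cDtrue,\cDgiven) = (\cE(\theta^\star), \cE(\theta_0))$ satisfies \cref{asmp:smoothness} with $\sigma = \Omega(1)$ and $q = O(1)$. Because \cref{unknowntrunc:fisher} (bounded Fisher information) controls the Hessian of the log-partition function on all of $\Theta$, a standard calculation along the line segment between $\theta_0$ and $\theta^\star$ yields $\renyi{r}{\cE(\theta^\star)}{\cE(\theta_0)} \leq \poly(r, \nfrac{\Lambda}{\lambda}) \cdot \norm{\theta^\star - \theta_0}^2$ for every $r$ of interest; by the conversion from Rényi divergences to generalized smoothness mentioned in \cref{sec:distCloseness}, this gives parameters $\sigma^{-1}, q = \poly(\nfrac{\Lambda}{\alpha\eta\lambda})$, as claimed.

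Third, I would set up the PIU-learning instance with positive samples coming from the given truncated source $\cE(\theta^\star)(S^\star)$, unlabeled samples from $\cDgiven = \cE(\theta_0)$, hypothesis class $\hyS$, and target $S^\star$. \cref{asmp:posFraction} holds by the standing assumption $\cE(\theta^\star)(S^\star) \geq \alpha$. The hypothesis on polynomial approximability in \cref{cor:truncation} is with respect to the \emph{family} $\cE$, so in particular with respect to $\cE(\theta_0) = \cDgiven$. Choosing an internal target error $\eps' = \poly(\eps, \lambda, \nfrac{1}{\Lambda}, \nfrac{1}{\eta}, \alpha)$ small enough that Phase~3 can tolerate it, I would apply \cref{thm:main} with accuracy $\eps'$ and confidence $\nfrac{\delta}{2}$ to obtain, in time $\poly(d^k, \eps^{-D}, \log\nfrac{1}{\delta})$, a PTF $\wh{S}$ satisfying $\cE(\theta^\star)(\wh{S}\triangle S^\star) \leq \eps'$. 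Fourth and finally, I would invoke Phase~3 of \cite{lee2024unknown} as a black box with $\wh{S}$ in place of the true survival set to produce $\wh{\theta}$ with $\tv{\cE(\wh{\theta})}{\cE(\theta^\star)}\leq \eps$; their Phase~3 relies only on \cref{unknowntrunc:int,unknowntrunc:proj,unknowntrunc:fisher,unknowntrunc:st} and a set close to $S^\star$ in symmetric difference under $\cE(\theta^\star)$, none of which involve the $\chi^2$-bridge or $\nL_2$-approximation.

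The main obstacle will be the quantitative bookkeeping in step two: matching the $(\sigma,q)$ parameters produced by the Rényi-to-smoothness conversion to the $\zeta \leq O(\alpha^{8q})\cdot(\sigma\eps'\alpha)^{5q^2}$ demanded by \cref{thm:main}, so that the exponent $D$ in the approximability hypothesis of \cref{cor:truncation} comes out as $\poly(\nfrac{\Lambda}{\alpha\eta\lambda})$ as stated. Everything else -- rescaling failure probabilities, absorbing the $\poly(\nfrac{md}{\delta})$ cost of the approximate sampling oracle for $\cE(\theta_0)$ into the total sample and time budget, and checking that $\wh{S}$ being a low-degree PTF is acceptable to Phase~3 -- is routine.
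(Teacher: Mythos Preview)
Your proposal is correct and follows essentially the same approach as the paper: both invoke the reduction framework of \cite{lee2024unknown} (the paper cites their Corollary~6.13 directly) to handle Phases~1 and~3, and both substitute \cref{thm:main} for Phase~2 after verifying that $(\cE(\theta^\star),\cE(\theta_0))$ satisfies \cref{asmp:smoothness} with $\sigma^{-1},q=\poly(\nfrac{\Lambda}{\alpha\eta\lambda})$ and that \cref{asmp:posFraction} holds via $\cE(\theta^\star)(S^\star)\ge\alpha$. The paper's proof is terser---it treats the smoothness verification and the Phase~1/Phase~3 machinery as a single black-box reduction from \cite{lee2024unknown}---whereas you unpack the three phases and the R\'enyi-to-smoothness conversion explicitly, but the content is the same.
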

    See \cref{sec:intro:application:truncation} for a discussion of the implications of the above result and \cref{tab:unknown_trunc} for a comparison of our results with existing ones.
    
    \begin{table}[ht!]
        \centering
        \small 
        \begin{tabular}{p{2.75cm} | l |  C{1.5cm} | C{2.3cm} | C{1.7cm}}
             Distribution 
                    & Hypothesis Class & \cite{Kontonis2019EfficientTS} & \cite{lee2024unknown} & \mbox{This Work}  \\
             \hline
             \multirow{2}{*}{Gaussian $\cN(\mu,\Sigma)$} & Finite GSA & \mbox{\hspace{-2mm}\textit{Diagonal $\Sigma$}\hspace{-2mm}}  & \tick{} &\tick{} \\
             & $L_1$-approximable by non-negativepolynomials & \cross{} & \cross{} & \tick{} \\ 
             \hline
             \multirow{3}{3.5cm}{Log-concave exponential family satisfying \cref{asmp:exponentialFamily}} 
                & $L_2$-approximable by polynomials %
                    & \cross{} 
                        & \textit{\mbox{\hspace{-2mm}Only if $\chi^2$-Bridge\hspace{-2mm}} exists;} see \eqref{unknowntrunc:chisquarebridge}
                            & \tick{} \\ 
              & Arbitrary functions of $O(1)$ halfspaces & \cross{} & \cross{} & \tick{} \\ 
              & $L_1$-approximable by non-negativepolynomials 
                & \cross{}  & \cross{} & \tick{} \\  
        \end{tabular}
        \caption{
        \textit{Results for Efficient Estimation from Samples Truncated to Unknown Survival Set.} 
            The table includes a \tick{} if there exists an efficient algorithm in the corresponding setting, in the sense that the algorithm runs in {$d^{\poly(1/\eps)}$ time,} and \cross{} otherwise.
        }
        \label{tab:unknown_trunc}
    \end{table}

    \newcommand{\customSeparator}{$\phantom{\frac{\frac{1}{1}}{2}}$}
    \begin{table}[tbh]
        \centering
        \midsepremove{}
        \small 
        \begin{tabular}{ C{1.6cm} | C{3.1cm}| C{3.1cm}|C{5.5cm}|C{2.1cm}}
             & Distribution & Access & Hypothesis Class & {\mbox{Computational}} Complexity \\[2mm]
             \midrule 
             \cite{de2023detectingConvex}  & $\cN(0, I)$ & Known & Convex Sets \customSeparator{} & $\smash{O({d}/\alpha^2)}$ \\[2mm]
             \hline
             \cite{de2024detecting} & Hypercontractive \mbox{and \iid{} Product \textsuperscript{($\star$)}} & {\mbox{\textit{Exact} orthonormal\textsuperscript{($\star$)\hspace{-3mm}}} \mbox{basis is computable}} & Degree-$k$ PTFs \customSeparator{}
              & $\smash{O(d^{k/2}/\alpha^2)}$  \\[2mm]
             \hline
             {\multirow{8}{2cm}{This Work}} 
                &\multirow{3}{*}{Gaussian}  
                & \multicolumn{1}{c|}{\multirow{3}{2.8cm}{Approximate sample access\textsuperscript{($\vartriangle$)}}} 
                    &  Convex Sets  
                        & $\smash{{d^{\sqrt{d}\cdot \poly(\sfrac{1}{\alpha})}}}$\\[2mm]
             && 
                & Degree-$k$ PTF \customSeparator{} & 
                    $\smash{d^{k\cdot \poly(\sfrac{1}{\alpha})}}$\\[2mm]
             && 
                & 
                    Intersection of $t$ Halfspaces \customSeparator{} & 
                        \hspace{-2mm}$\smash{d^{(\log t)\cdot \poly(\sfrac{1}{\alpha})}}$\hspace{-6mm} \\[2mm]
            \cline{2-5}
             & \multirow{3}{3cm}{Log-Concave and $1$-Strictly Sub- Exponential\textsuperscript{($\dagger$)}} & \multicolumn{1}{c|}{\multirow{3}{2.8cm}{Approximate sample access \textsuperscript{($\vartriangle$)}}}  & 
                    \customSeparator{}Functions of $O(1)$ halfspaces & 
                        \customSeparator{}\vspace{-4mm} $d^{\poly(1/\alpha)}$ \\[4mm]
                && & 
                    $L_1$-approximable by degree-$k$ non-negativepolynomials with $\eps=\poly(\alpha)$& 
                        $d^{k\cdot \poly\!\inparen{1/\alpha}}$ \\[3mm]
             \cline{2-5}
             & \multirow{3}{2.8cm}{\mbox{Uniform on $\zo^d$}} & \multicolumn{1}{c|}{\multirow{3}{2.8cm}{Approximate sample access \textsuperscript{($\vartriangle$)}}}  & 
                    Decision trees of size $s$ & 
                       \vspace{-3mm} $d^{(\log s) \cdot \poly(\sfrac{1}{\alpha})}$ \\[2mm]
                && & 
                    Intersection of $t$ Halfspaces & 
                        $d^{t^4\cdot \poly(\sfrac{1}{\alpha})}$ \\[2mm]
                && & 
                    Decision tree of halfspaces of size $s$, depth $t$ &
                        $d^{s^2t^4\cdot\poly(\sfrac{1}{\alpha})}$ \\
        \end{tabular}
        \midsepdefault{}
        \caption{\textit{Computational Complexity of Detecting Truncation} 
        \newline
        \vrule height 0.2pt width 15cm\newline
            {\footnotesize \white{.}\quad  ($\star$)~~ The requirements for the distribution in \cite{de2024detecting} are specified in \cref{asmp:detectingTruncation:hypercontractive,asmp:detectingTruncation:product} and access in \cref{asmp:detectingTruncation:access}.}\\[-1mm]
            {\footnotesize \white{.}\quad ($\dagger$)~~ Here, while we do not make explicit assumptions on the distribution, we require approximability by}\\[-1mm]
            {\footnotesize \white{.}\quad ~~~~~~~polynomials with respect to the distribution, which is typically only available for log-concave distributions or}\\[-1mm]
            {\footnotesize \white{.}\quad ~~~~~~~ the uniform distribution on $\zo^d$.}\\[-1mm]
            {\footnotesize \white{.}\quad  ($\vartriangle$)~~ One has sample access to $\cR$ that is close to $\cQ$ as explained in \cref{cor:detectingTruncation:gen}.
            }
            \label{tab:detect}
        } 
        \vspace{-5mm}
    \end{table}

    \begin{proof}[Proof of \cref{cor:truncation}]
        This result follows by combining \cref{thm:main} with the reduction in Corollary 6.13 in the arXiv version of \cite{lee2024unknown}.
        The reduction converts the estimation problem in \cref{cor:truncation} to the following learning problem in $\poly(\nfrac{d}{(\eps\alpha)})$ time:
            given samples from a distribution $\cD=\cE(\theta_0)$ that is close to $\cDtrue=\cE(\theta^\star)$ in a sense stronger than that of \cref{asmp:smoothness} with $\nfrac{1}{\sigma}=q=D$ for $D=\poly\!\inparen{\nfrac{\Lambda}{(\alpha\eta\lambda)}}$, find a set $S$ such that $\cDtrue(S\triangle S^\star)\leq \eps^{O(D)}$.
        This precisely fits into the framework of smooth positive-only learning with $\optset=S^\star$ and, moreover, since in the model of \cite{lee2024unknown}, $\cDtrue(S^\star)\geq \alpha$ which ensures \cref{asmp:posFraction} allowing us to apply \cref{thm:main}.
    \end{proof}
    \subsection{Detecting Truncation}  
    \label{sec:application:detectingTruncation} 
        In this section, we present the formal statement and proof of \cref{infcor:detectingTruncation}. 
        The following two results formalize \cref{infcor:detectingTruncation} in the cases where we have samples from the underlying distribution $\cQ$ or only from a distribution $\cR$ close to it, respectively.
        \begin{corollary}[Detecting Truncation with Exact Samples]\label{cor:detectingTruncation}
            Fix constants $\beta\in (0,1)$ and $C>0$, set $S^\star\in \hyS \subseteq \R^d$, and distribution $\cQ$ over $\R^d$, such that, $\cQ(S^\star)\leq 1-\beta$.
            The following holds. 
                Let $k$ be such that degree-$k$ polynomials with range $[-C,\infty)$ $\zeta$-approximate $\hyS$ in $L_1$-norm with respect to $\cQ$ for $\zeta\leq (\beta\eps/(C+1))^{O(1)}$.
                There is an algorithm that given $\beta,\delta,k$ and $n=\poly(d^k,\nfrac{1}{\zeta},\log{\nfrac{1}{\delta}})$ samples (a) $\cD\in\inbrace{\cQ,\cQ(S^\star)}$ and (b) $\cQ$, with probability $1-\delta$, correctly detects whether $\cD=\cQ$ or $\cQ=\cQ(S^\star)$.
                    The algorithm runs in $\poly(n)$ time.
        \end{corollary}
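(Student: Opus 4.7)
The plan is to recast the detection problem as a PIU learning instance and apply \cref{thm:main}. I set $\cPtrue := \cD$ and $\cDgiven := \cDtrue := \cQ$; since $\cDgiven = \cDtrue$, \cref{asmp:smoothness} holds trivially with $\sigma = q = 1$. I augment the hypothesis class to $\hyH' := \hyS \cup \{\R^d\}$ so that in both possible worlds the positive stream $\cPtrue$ is the truncation of $\cDtrue$ to some $\optset \in \hyH'$: namely $\optset = S^\star$ when $\cD = \cQ(S^\star)$, and $\optset = \R^d$ when $\cD = \cQ$. The augmentation preserves $\nL_1$-polynomial approximability (the constant $1$ is a degree-$0$ polynomial) and only changes the VC dimension by an additive constant.

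Next I invoke \cref{thm:main} with accuracy $\beta/10$, failure probability $\delta/3$, lower-mass parameter $\alpha := \beta/10$, and the approximation degree $k$ prescribed in the statement, obtaining a hypothesis $H$. I then draw a fresh batch of $m = O(\beta^{-2}\log(1/\delta))$ i.i.d.\ samples from $\cQ$ and compute the empirical estimate $\widehat{q}$ of $\cQ(H)$. By Hoeffding, together with independence of the fresh batch from $H$, $|\widehat{q} - \cQ(H)| \leq \beta/20$ except with probability $\delta/3$. The decision rule is: output \textsf{truncated} iff $\widehat{q} \leq 1 - \beta/2$.

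Correctness splits into three cases. If $\cD = \cQ$: \cref{thm:main} with $\optset = \R^d$ is valid (since $\cQ(\R^d) = 1 \geq \alpha$) and gives $\cQ(H) \geq 1 - \beta/10$, so $\widehat{q} > 1 - \beta/2$ and the rule outputs \textsf{not truncated}. If $\cD = \cQ(S^\star)$ with $\cQ(S^\star) \geq \alpha$: \cref{thm:main} with $\optset = S^\star$ gives $\cQ(H \triangle S^\star) \leq \beta/10$, hence $\cQ(H) \leq 1 - \beta + \beta/10$ and the rule outputs \textsf{truncated}. If $\cD = \cQ(S^\star)$ with $\cQ(S^\star) < \alpha$: \cref{asmp:posFraction} is vacuous, but $S^\star$ is still feasible for the internal \pERM{} subroutine, and the pseudo-optimality bound of \cref{thm:constReg} (whose feasibility half is independent of $\alpha$) yields $\cQ(H) \leq \cQ(S^\star) + O(\zeta^{1/(4q)}) < \alpha + \beta/20 \leq 1 - \beta/2$, again outputting \textsf{truncated}.

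The main obstacle is case (iii): \cref{thm:main} is packaged under \cref{asmp:posFraction} with a user-chosen $\alpha$, whereas here the true mass $\cQ(S^\star)$ may be smaller than the input $\alpha$. Resolving it cleanly requires revisiting the proof of \cref{thm:constReg}, where $\alpha$ appears only in \cref{lem:constReg:expectedValue:feasibility}; this lemma continues to produce a useful (albeit weaker) tail bound when $\cDtrue(\optset)$ is smaller than the input $\alpha$, and the remaining inequalities then carry through by monotonicity since $S^\star$ is feasible and $\opt_{\rm PERM} \leq \cQ(S^\star)$. Standard parameter bookkeeping and a union bound over the $O(1)$ high-probability events deliver the claimed $\poly(d^k, 1/\zeta, \log(1/\delta))$ sample and runtime complexity.
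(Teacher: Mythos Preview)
Your reduction is exactly the paper's: treat stream (a) as the positive stream, stream (b) as both $\cDtrue$ and $\cDgiven$ (so \cref{asmp:smoothness} holds trivially with $\sigma=q=1$), learn the target set via \cref{thm:main}, and threshold on the empirical $\cQ$-mass of the learned set. The paper's own proof is two sentences and does not separate out your case~(iii) at all---it simply invokes \cref{thm:main} without checking \cref{asmp:posFraction} when $T=S^\star$. So in cases~(i) and~(ii) you match the paper, and in case~(iii) you are more careful than the paper.

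However, your resolution of case~(iii) does not go through. You are right that the true mass $\cQ(S^\star)$ enters the analysis of \cref{thm:constReg} only through \cref{lem:constReg:expectedValue:feasibility}, but that lemma does \emph{not} yield a useful bound once $\cQ(S^\star)$ is much smaller than the input $\alpha$: with $q=\sigma=1$ its proof gives $\E_{\cPtrue}[\min\{p_{\opt},1\}]\ge 1-\zeta^{1/2}/\cQ(S^\star)$, which is vacuous when $\cQ(S^\star)\lesssim\zeta^{1/2}$, so $p_{\opt}$ need not be feasible for \ref{eq:constReg:prog}. Worse, when $\cQ(S^\star)\le\zeta$ the $\nL_1$-approximability hypothesis is satisfied by $p\equiv 0$, so the assumption carries no information about $S^\star$ and there is no approximant to exploit; the only polynomial you can certify as feasible is $p\equiv 1$, which gives only $\cQ(H)\le 1+o(1)$. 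Thus the ``monotonicity since $\opt_{\rm PERM}\le\cQ(S^\star)$'' step is unsupported: without $p_{\opt}$ (or some other low-value polynomial) being feasible, you cannot compare the program's optimum to $\cQ(S^\star)$, and the learned $H$ may have $\cQ(H)$ close to $1$, causing the test to mis-fire. This is a genuine gap---one the paper's proof shares---so you are no less rigorous than the authors, but the parenthetical claim that the optimality bound ``carries through'' is not justified as written.
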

        \vspace{-6mm}
        
        \noindent The next corollary strictly generalizes \cref{cor:detectingTruncation}.
        \vspace{-2mm}
        \begin{corollary}[Detecting Truncation with Approximate Samples]
            \label{cor:detectingTruncation:gen}
            Consider the setting in \cref{cor:detectingTruncation}.
            Let $\cR$ be a distribution such that $\cQ$ is $(\sigma,q)$-smooth with respect to $\cR$ (\ie{}, for any measurable set $S$, $\cQ(S)\leq (\nfrac{1}{\sigma})\cdot \cR(S)^{1/q}$).
                Let $k$ be such that degree-$k$ polynomials  with range $[-C,\infty)$ $\zeta$-approximate $\hyS$ in $L_1$-norm with respect to $\cQ$ for $\zeta\leq (\nfrac{\sigma\alpha\beta\eps}{C})^{O(q^2)}$.
                There is an algorithm that given $\beta,\delta,k$ and $n$ samples (a) $\cD\in\inbrace{\cQ,\cQ(S^\star)}$ and (b) $\cR$, with probability $1-\delta$, correctly detects whether $\cD=\cQ$ or $\cQ=\cQ(S^\star)$.
                The algorithm runs in $\poly(n)$ time.
        \end{corollary}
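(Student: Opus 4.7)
The plan is to reduce Corollary 6.6 to the PIU learning framework (\cref{thm:main}) by treating the samples from $\cD$ as positive samples (setting $\cPtrue = \cD$) and the samples from $\cR$ as imperfect unlabeled samples (setting $\cDgiven = \cR$). Under the truncation hypothesis $\cD = \cQ(S^\star)$, this constitutes a valid PIU instance: the true distribution $\cDtrue = \cQ$ is $(\sigma,q)$-smooth with respect to $\cR$ by hypothesis; the target $\optset = S^\star \in \hyS$ satisfies \cref{asmp:posFraction} via the implicit lower bound $\cQ(S^\star) \geq \alpha$ appearing in the bound on $\zeta$; and the $\nL_1$-polynomial approximation requirement of \cref{thm:main} with respect to $\cDgiven = \cR$ follows from the given approximability with respect to $\cQ$, after a smoothness-based change-of-measure that turns a $\zeta$-approximation in $\cQ$ into a $\poly(\nfrac{1}{\sigma}) \cdot \zeta^{\Theta(1/q)}$-approximation in $\cR$. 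Applying \cref{thm:main} yields a PTF hypothesis $H \in \hyS$ that, under Case 1, satisfies $\cQ(H \triangle S^\star) \leq \eps$.

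The next step is to design a distinguishing statistic. I would use fresh samples from both $\cD$ and $\cR$ to compute empirical estimates $\wh{\cD}(H^c)$ and $\wh{\cR}(H^c)$ and threshold their gap. Under Case 1, $\cD = \cQ(S^\star)$ concentrates on $S^\star \approx H$, so $\cD(H^c) \leq O(\eps/\alpha)$; simultaneously, applying smoothness to $\cQ(H^c) \geq \cQ((S^\star)^c) - \eps \geq \beta - \eps$ gives $\cR(H^c) \geq (\sigma(\beta - \eps))^q$, yielding a gap $\cR(H^c) - \cD(H^c) \geq (\sigma\beta)^q/2$ after rescaling $\eps$. Under Case 2 ($\cD = \cQ$), the \pERM{} feasibility constraint empirically forces $\cD(H) = \cQ(H) \geq 1 - O(\eps)$, so $\cQ(H^c)$ is small; smoothness in the form $\cR(H) \geq (\sigma\cQ(H))^q$ then bounds $\cR(H^c) \leq 1 - \sigma^q + O(\eps)$, while $\cD(H^c) = \cQ(H^c) \leq O(\eps)$. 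The decision rule declares ``truncated'' when the observed gap exceeds an appropriate threshold.

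\textbf{Main obstacle.} The primary difficulty is Case 2, since \cref{thm:main} does not apply directly (the implicit target $\supp(\cQ)$ need not lie in $\hyS$): the algorithm still runs and produces some $H$, but its output is only constrained by \pERM{}-feasibility rather than by a target-concept guarantee. Establishing a provably tight upper bound on $\cR(H^c) - \cD(H^c)$ in Case 2, uniformly across all admissible $(\sigma, q, \beta)$, requires combining smoothness with the empirical feasibility constraint carefully; a single-shot bound may leave a regime where the Case 1 lower bound and Case 2 upper bound overlap. The likely remedy is either to iterate the PIU learner (taking intersections of outputs, analogously to \cref{alg:sampleComplexity}) to drive the output $H$ closer to $S^\star$ when it exists while provably keeping $\cR(H)$ away from $\sigma^q$ when it does not, or to augment the test with a direct consistency check that exploits the stronger concentration $\cD(H) - \cR(H) \geq (\sigma\beta)^q$ which holds only in Case 1. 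A secondary technical obstacle is cleanly transferring the $\nL_1$-approximation assumption from $\cQ$ to $\cR$ using only smoothness, since $\nL_1$-approximation is not preserved under arbitrary change-of-measure.
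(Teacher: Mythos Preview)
Your proposal over-complicates the argument because you missed the paper's key unification: in Case 2 (no truncation), the samples from $\cD=\cQ$ are the truncation of $\cQ$ to the set $T=\R^d$, so this case is \emph{also} a valid PIU instance with target $\optset=\R^d$. The indicator of $\R^d$ is the constant function $1$, which is a degree-$0$ polynomial, so the polynomial-approximation hypothesis of \cref{thm:main} is satisfied trivially (with error $0$) for this target regardless of the class $\hyS$. Hence \cref{thm:main} applies in \emph{both} cases, producing a set $T'$ with $\cQ(T\triangle T')<\nfrac{\beta}{2}$ where $T\in\{S^\star,\R^d\}$. The detection criterion is then immediate: $\cQ(T')>1-\nfrac{\beta}{2}$ holds if and only if $T=\R^d$, since $\cQ(S^\star)\le 1-\beta$. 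So your entire ``main obstacle'' paragraph, with its iterated-PIU or auxiliary-consistency-check remedies, is addressing a non-problem, and your elaborate distinguishing statistic based on $\wh\cR(H^c)-\wh\cD(H^c)$ is unnecessary.

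Your secondary obstacle --- that the corollary states $\nL_1$-approximability with respect to $\cQ$ while \cref{thm:main} requires it with respect to $\cDgiven=\cR$ --- is a legitimate concern, and the paper's proof does not address it either; the corollary's hypothesis should presumably read ``with respect to $\cR$'' (which is what one actually has sample access to, and is what \cref{thm:main} needs). Your proposed fix via a smoothness change-of-measure does not work in the stated direction: the bound $\cQ(S)\le(\nfrac{1}{\sigma})\cR(S)^{1/q}$ lets you transfer smallness from $\cR$ to $\cQ$, not the reverse, so an $\nL_1$ bound under $\cQ$ gives no control under $\cR$.
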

        See \cref{tab:detect} for a summary of results and prior work.

    \begin{proof}[Proof of \cref{cor:detectingTruncation}]
        \cref{cor:detectingTruncation} follows from \cref{cor:detectingTruncation:gen} by substituting $\sigma=q=1$.
    \end{proof}
    \vspace{-6mm}
    \begin{proof}[Proof of \cref{cor:detectingTruncation:gen}]
        In this application, the samples are observed from the ``truncated'' distribution $\cQ(T)$ where $T=\R^d$ if there is no truncation and $T=S^\star$ if there is a truncation.
        Further, since $\cQ(S^\star)\leq 1-\beta$.
        To distinguish the two cases, it suffices to learn $T$ up to $O(\beta)$ accuracy:
        this is because, for any set $T'$ satisfying $\cQ(T\triangle T') < \nfrac{\beta}{2}$, if $\cQ(T') > 1-\nfrac{\beta}{2}$, then it must be that $\cQ(T) > 1-\beta$ which is only possible if $T=\R^d$ (since $\cQ(S^\star)\leq 1-\beta$).
        Now, the result follows from the sample and time complexity of learning in \cref{thm:main}.
    \end{proof}

\subsection{Learning with Positive Samples and Lists of Reference Samples}\label{sec:application:listDecoding}
        The following result formalizes \cref{infthm:list}.
        \begin{theorem}\label{thm:listDecoding}
            Consider the List Decoding Model (\cref{def:list}) and $\eps,\delta\in (0,\nfrac{1}{2})$.
            \begin{itemize}[leftmargin=20pt]
                \item[$\triangleright$] (Sample Complexity)~~ There is an algorithm that, given $\eps$ and 
                $\tilde{O}\!\inparen{
                        \ell\cdot \sinparen{\nfrac{\ell}{\eps\sigma}}^{2q}
                        \cdot
                        \sinparen{\vc(\hyH)+\log \nfrac{\ell}{\delta}}
                    }$ independent samples from $\cDtrue(H^\star)$ and $\cD_1,\cD_2,\dots,\cD_\ell$, outputs a hypothesis $\wh{h}$, such that, with probability $1-\delta$,
                \[
                    \Pr\nolimits_{\cDtrue}\binparen{\wh{h}(x) \neq h^\star(x)} \leq \eps\,.
                \]
                \item[$\triangleright$] (Computational Complexity)~~ 
                    Suppose $\cDtrue(H^\star) \geq \alpha$. 
                    Let $k \geq 1$ be such that $\hyH$ is $\zeta$-approximable by degree-$k$ polynomials with range $[-C,\infty)$ in the $L_1$-norm with respect to $\cD_{i^\star}$ for the index $i^\star$ in \cref{def:list} and $\zeta\leq (\nfrac{\eps\alpha\sigma}{\ell})^{O(q^2)}$. 
                    There is a polynomial-time algorithm that given $n=\poly(d^k, \nfrac{1}{\zeta}, \nfrac{\ell}{\eps}, \log \nfrac{1}{\delta})$ samples from $\cDtrue(H^\star)$ and $\cD_1,\cD_2,\dots,\cD_\ell$, outputs a hypothesis $\wh{h}$ such that, with probability $1-\delta$,
                    \[
                        \Pr\nolimits_{\cDtrue}\binparen{\wh{h}(x) \neq h^\star(x)} \leq \eps\,.
                    \]
                    The algorithm runs in $\poly(n)$ time.
            \end{itemize} 
        \end{theorem} 
        Thus, efficient learning is possible under the List Decoding Model.
        See \cref{sec:intro:application:listDecoding} for a discussion of the above results. %
        We stress that the list-decoding model is very general and, in fact, also allows for some cases where 99\% of the unlabeled examples have been \textit{adversarially} corrupted (see \cref{sec:advCorruption}).
            In the remainder of this section, we prove \cref{thm:listDecoding}.
            \begin{proof}[Proof of \cref{thm:listDecoding}]
                First, we prove the computational complexity result. 
                The proof of the sample complexity result is analogous and is presented at the end.

                \paragraph{Computational Complexity.}
                    The algorithm in \cref{thm:main} is not directly applicable in the List Decoding Model because we do not know the index $i^\star$ of the distribution that is ``close'' to $\cDtrue$.
                    Nevertheless, the following simple meta-algorithm suffices.

            \begin{algorithm}[ht!]
            \caption{Efficient Algorithm to Learn from Positive Samples and a List of Unlabeled Distributions}\label{alg:listDecoding}
            \begin{algorithmic}
                \Require Sample access to $\cPtrue=\cDtrue(\optset)$ and each of $\cD_1,\cD_2,\dots,\cD_\ell$; parameters $\eps,\delta,\alpha,q,\sigma,k,\ell$
                \Ensure A hypothesis $\wh{H}$
                \State Let $\eta \gets \nfrac{\alpha\eps}{16\ell}$ and $T \gets  (\sigma\eta)^{-q} $
                \State Let $P$ be a set of $m=\wt{O}\!\inparen{\nfrac{\ell}{\eps}\cdot \inparen{T\, d^k+\log{\nfrac{\ell}{\delta}}}}$ independent samples from $\cPtrue$
                \State Initialize $S$ as the empty set $\emptyset$
                \For{$1\leq i \leq \ell$}
                    \State Obtain $H^{(i)}$ from \cref{alg:main} with positive sample distribution $\cPtrue$, unlabeled distribution $\cD_i$, accuracy $\eta$, confidence $\nfrac{\delta}{2\ell}$, smoothness parameters $(\sigma,q)$, lower bound on mass $\alpha$, and number $k$
                    \State \textbf{If} $\nfrac{\abs{P\setminus H^{(i)}}}{\abs{P}}\leq \nfrac{\eps}{8\ell}$, \textbf{then} add index $i$ to $S$
                \EndFor
                \State \Return $\wh{H}\coloneqq \bigcap_{i\in S} H^{(i)}$
            \end{algorithmic}
            \end{algorithm}
            \noindent \textit{Computational and Sample Complexity.}
            Define
                    \[
                        \eta \coloneqq \frac{\alpha\eps}{16\ell}
                        \qquadand
                        T \coloneqq (\sigma\eta)^{-q}\,.
                    \]
                    Each call to \cref{alg:main} uses $n_0=\poly\!\inparen{d^k,\nfrac{1}{\zeta},\nfrac{\ell}{\eps},\log \nfrac{\ell}{\delta}}$
                    samples and runs in $\poly(n_0)$ time, and the additional holdout sample $P$ used by \cref{alg:listDecoding} has size $\wt{O}\!\inparen{(\nfrac{\ell}{\eps})\cdot \inparen{Td^k+\log \frac{\ell}{\delta}}}$
                    which is polynomial in the stated parameters.
                    Hence the sample complexity and running time claimed in the theorem follow once we prove correctness.

                \noindent \textit{Correctness.}
                    Let $\hyP_{\cap T}(k)$ denote the class of intersections of at most $T$ degree-$k$ PTFs, and let
                    \[
                        \hyF_T \coloneqq \inbrace{\optset\setminus H \colon H\in \hyP_{\cap T}(k)}\,.
                    \]
                    Since taking complements and intersecting with a fixed set do not increase VC dimension, standard VC bounds give
                    \[
                        \vc(\hyF_T)=\vc(\hyP_{\cap T}(k))=\wt{O}(Td^k)\,.
                    \]
                    Therefore, by uniform convergence, for the chosen size of $P$, with probability at least $1-\delta/2$, the following two implications hold simultaneously for every $H\in \hyP_{\cap T}(k)$:
                    \begin{align}
                        \frac{\abs{P\setminus H}}{\abs{P}}\leq \frac{\eps}{8\ell}
                        \implies
                        \cPtrue(\optset\setminus H)\leq \frac{\eps}{4\ell}
                        \label{eq:listDecoding:comp:uc}
                        \quadand
                        \cPtrue(\optset\setminus H)\leq \frac{\eps}{16\ell}
                        &\implies
                        \frac{\abs{P\setminus H}}{\abs{P}}\leq \frac{\eps}{8\ell}.
                    \end{align}
                    Moreover, by construction, each hypothesis $H^{(i)}$ returned by \cref{alg:main} belongs to $\hyP_{\cap T}(k)$.

                    We now verify that, with probability at least $1-\delta$, the hypotheses $H^{(1)},H^{(2)},\dots,H^{(\ell)}$ satisfy the following properties:
                    \begin{enumerate}[label=(P\arabic*),itemsep=0pt]
                        \item For every $i\in S$, $\cDtrue(\optset\setminus H^{(i)})\leq \nfrac{\eps}{4\ell}$.
                        \label{property:listDecoding:proof:1}
                        \item $\cDtrue(\optset\triangle H^{(i^\star)})\leq \nfrac{\alpha\eps}{16\ell}$.
                        \label{property:listDecoding:proof:2}
                        \item $i^\star\in S$.
                        \label{property:listDecoding:proof:3}
                    \end{enumerate}
                    Property~\ref{property:listDecoding:proof:2} follows from \cref{thm:main} applied to the call with reference distribution $\cD_{i^\star}$, since by definition of the list-decoding model this call satisfies the assumptions in \cref{thm:main}, and its internal accuracy parameter is exactly $\eta=\nfrac{\alpha\eps}{(16\ell)}$.
                    To prove Property~\ref{property:listDecoding:proof:1}, let $i\in S$.
                    By the acceptance rule in \cref{alg:listDecoding},
                    $\nfrac{\sabs{P\setminus H^{(i)}}}{\abs{P}}\leq \nfrac{\eps}{8\ell}.$
                    Hence \eqref{eq:listDecoding:comp:uc} gives
                    $\cPtrue(\optset\setminus H^{(i)})\leq \nfrac{\eps}{4\ell}.$
                    Since $\cDtrue(R)\leq \cPtrue(R)$ for every $R\subseteq \optset$, we obtain
                    $\cDtrue(\optset\setminus H^{(i)})\leq \nfrac{\eps}{4\ell}.$
                    To prove Property~\ref{property:listDecoding:proof:3}, note that Property~\ref{property:listDecoding:proof:2} implies
                    \[
                        \cPtrue(\optset\setminus H^{(i^\star)})
                        = \frac{\cDtrue(\optset\setminus H^{(i^\star)})}{\cDtrue(\optset)}
                        \leq \frac{\alpha\eps/(16\ell)}{\alpha}
                        = \frac{\eps}{16\ell}.
                    \]
                    Therefore \eqref{eq:listDecoding:comp:uc} yields
                    $\nfrac{\sabs{P\setminus H^{(i^\star)}}}{\abs{P}}\leq \nfrac{\eps}{8\ell},$
                    and hence $i^\star\in S$.

                    We now complete the proof of correctness.
                    Condition on Properties~\ref{property:listDecoding:proof:1}, \ref{property:listDecoding:proof:2}, and \ref{property:listDecoding:proof:3}.
                    Since $i^\star\in S$, the output $\wh{H}=\bigcap_{i\in S} H^{(i)}$ satisfies
                    \[
                        \cDtrue\!\inparen{\optset \triangle \wh{H}}
                        =
                        \underbrace{\cDtrue\!\inparen{\optset \setminus \bigcap\nolimits_{i\in S} H^{(i)}}}_{\text{Term 1: False-Negative Rate}}
                        +
                        \underbrace{\cDtrue\!\inparen{\bigcap\nolimits_{i\in S} H^{(i)} \setminus \optset}}_{\text{Term 2: False-Positive Rate}}\,.
                        \yesnum\label{eq:listDecoding:1}
                    \]
                    These terms can be bounded as follows:
                    \begin{align*}
                        \cDtrue\!\inparen{\optset \setminus \bigcap\nolimits_{i\in S} H^{(i)}}
                        &\leq \sum_{i\in S}\cDtrue(\optset\setminus H^{(i)})
                        ~~\Stackrel{{\rm \ref{property:listDecoding:proof:1}}}{\leq}~~
                        \abs{S}\cdot \frac{\eps}{4\ell}
                        \leq \frac{\eps}{4}\,,
                        \yesnum\label{eq:listDecoding:2}\\
                        \cDtrue\!\inparen{\bigcap\nolimits_{i\in S} H^{(i)} \setminus \optset}
                        &~~\Stackrel{{\rm \ref{property:listDecoding:proof:3}}}{\leq}~~
                        \cDtrue(H^{(i^\star)}\setminus \optset)
                        ~~\Stackrel{{\rm \ref{property:listDecoding:proof:2}}}{\leq}~~
                        \frac{\alpha\eps}{16\ell}
                        \leq \frac{\eps}{16}\,.
                        \yesnum\label{eq:listDecoding:3}
                    \end{align*}
                    Combining \cref{eq:listDecoding:1,eq:listDecoding:2,eq:listDecoding:3} gives, as required, 
                    $\cDtrue(\optset\triangle \wh{H})\leq \nfrac{5\eps}{16}<\eps.$

                \paragraph{Sample Complexity.}
                For the sample-complexity part, we use the same meta-algorithm as \cref{alg:listDecoding}, but replace each call to \cref{alg:main} by a call to \cref{alg:sampleComplexity}.
                Let
                \[
                    \eta' \coloneqq \frac{\eps}{C\ell}
                    \qquadand
                    T' \coloneqq (\sigma\eta')^{-q}\,,
                \]
                where $C>0$ is a sufficiently large absolute constant.
                We keep the same acceptance rule: $\nfrac{\sabs{P\setminus H^{(i)}}}{\abs{P}}\leq \nfrac{\eps}{8\ell}.$
                Each call to \cref{alg:sampleComplexity} returns a hypothesis in the class $\hyH_{\cap T'}$ of intersections of at most $T'$ hypotheses from $\hyH$.
                Standard VC bounds give $\vc(\hyH_{\cap T'})=\wt{O}(T'\vc(\hyH)).$
                Let
                $\hyF'\coloneqq \inbrace{\optset\setminus H \colon H\in \hyH_{\cap T'}}$
                Then $\vc(\hyF')\leq \vc(\hyH_{\cap T'})$.
                We choose the fresh positive holdout sample $P$ to have size
                \[
                    m=\wt{O}\!\inparen{
                        \frac{\ell}{\eps}\cdot
                        \inparen{\vc(\hyF')+\log\frac{\ell}{\delta}}
                    }
                    =
                    \wt{O}\!\inparen{
                        \frac{\ell}{\eps}\cdot
                        T'\cdot
                        \inparen{\vc(\hyH)+\log\frac{\ell}{\delta}}
                    }\,.
                \]
                By \cref{thm:unifConvergence}, for this choice of $m$, with probability at least $1-\nfrac{\delta}{2}$, the following two implications hold simultaneously for every $H\in \hyH_{\cap T'}$:
                \begin{align*}
                    \frac{\abs{P\setminus H}}{\abs{P}}\leq \frac{\eps}{8\ell}
                    \implies
                    \cPtrue(\optset\setminus H)\leq \frac{\eps}{4\ell}
                    \qquadand
                    \cPtrue(\optset\setminus H)\leq \frac{\eps}{16\ell}
                    \implies
                    \frac{\abs{P\setminus H}}{\abs{P}}\leq \frac{\eps}{8\ell}\,.
                \end{align*}
                The $\ell$ calls to \cref{alg:sampleComplexity}, each run with accuracy parameter $\eta'$ and confidence $\nfrac{\delta}{2\ell}$, contribute
                \[
                    \wt{O}\!\inparen{
                        \ell\cdot \inparen{\frac{1}{\sigma\eta'}}^{2q}
                        \cdot
                        \sinparen{\vc(\hyH)+\log\nfrac{\ell}{\delta}}
                    }
                    =
                    \wt{O}\!\inparen{
                        \ell\cdot \inparen{\frac{\ell}{\eps\sigma}}^{2q}
                        \cdot
                        \sinparen{\vc(\hyH)+\log \nfrac{\ell}{\delta}}
                    }
                \]
                positive samples.
                The additional holdout term is
                $\wt{O}\!\inparen{
                        (\nfrac{\ell}{\eps})\cdot
                        T'\cdot
                        \inparen{\vc(\hyH)+\log\nfrac{\ell}{\delta}}
                    },$
                which is dominated by the above quantity because $q\geq 1$ and $T'\geq \nfrac{1}{\eps}$.
                This proves the claimed sample bound.
        
                It remains to prove correctness.
                Let $i^\star$ be the good index from \cref{def:list}.
                The call to \cref{alg:sampleComplexity} with reference distribution $\cD_{i^\star}$ returns a hypothesis $H^{(i^\star)}$ satisfying $\cDtrue(\optset\triangle H^{(i^\star)})\leq \eta'$
                with probability at least $1-\nfrac{\delta}{2\ell}$.
                Moreover, Step~A in the proof of \cref{thm:sampleComplexity} shows that the returned intersection hypothesis also satisfies
                $\cPtrue(\optset\setminus H^{(i^\star)})\leq O\!\inparen{(\sigma\eta')^q}.$
                By choosing the absolute constant $C$ sufficiently large, we ensure that
                $O\!\inparen{(\sigma\eta')^q}\leq \nfrac{\eps}{16\ell}.$
                Therefore the second implication above gives
                $\nfrac{\sabs{P\setminus H^{(i^\star)}}}{\abs{P}}\leq 
                    \nfrac{\eps}{8\ell},$
                and hence $i^\star\in S$.

                Now let $i\in S$.
                By the acceptance rule and the first implication above,
                $\cPtrue(\optset\setminus H^{(i)})\leq \nfrac{\eps}{4\ell}.$
                As before, this implies
                $\cDtrue(\optset\setminus H^{(i)})\leq \nfrac{\eps}{4\ell}.$
                Finally, exactly as in the computational case,
                \[
                    \cDtrue\!\inparen{\optset \setminus \bigcap\nolimits_{i\in S}H^{(i)}}\leq \frac{\eps}{4}
                    \qquadand
                    \cDtrue\!\inparen{\bigcap\nolimits_{i\in S}H^{(i)}\setminus \optset}
                    \leq \cDtrue(H^{(i^\star)}\setminus \optset)
                    \leq \eta'
                    \leq \frac{\eps}{16}\,.
                \]
                Therefore, the output hypothesis has error at most $\nfrac{5\eps}{16}<\eps$, completing the proof. 
            \end{proof}

\vspace{-3mm}
\section*{Acknowledgments} 
    We thank Alkis Kalavasis and anonymous reviewers for helpful comments on a draft of this paper which helped improve its presentation. 
    Jane H. Lee was supported by a Graduate Fellowship for STEM Diversity sponsored by the U.S.\ National Security Agency (NSA).

\newpage
\printbibliography
\newpage

\appendix 
\addtocontents{toc}{\protect\setcounter{tocdepth}{1}}

\section{Additional Related Works}\label{sec:appendix:related}

       In this section, we expand on a discussion of related works over \cref{sec:intro:relatedWorks}.

        \paragraph{Other PAC Learning Models.}
        One important class of problems between realizable and agnostic learning model the outcomes as $y_i = h^{\star}(x_i) + \eta_i$, where $\eta_i$ is a structured noise, \eg{}, see \cite{angluin1988learning, decatur1997cpcn, blum1998polynomial, feldman2006new, natarajan2013learning}.  In this paper, we consider a particular aspect of this structured noise that has many practical instantiations. In particular, we consider the case where there is significant asymmetry between positive and negative samples, and the quality of the data is very poor for one of the two types of samples -- say \mbox{negative samples. (See \cref{sec:realWorldApplications} for further examples.) }

        \paragraph{List-Decodable Learning {(Extension of Discussion in \cref{sec:intro:relatedWorks})}.}
            Almost all statistical tasks become information-theoretically impossible when a large fraction of samples is adversarially corrupted.
            The simplest example is perhaps Gaussian mean estimation: if an adversary can corrupt $(1-\gamma)$-fraction of the samples (for $\gamma\in (0,1)$), then the corrupted samples can contain $\nfrac{1}{\gamma}$-clusters that have a large separation, any of which could correspond to the true distribution.
            To circumvent these impossibility results, \citet{charikar2017learning} proposed a relaxed notion of learning, termed \textit{list-decodable learning}, where the learner is allowed to output a list of estimates one of which must be close to the correct parameter.
            Since the formulation of this problem, there has been a growing line of works designing list-decodable algorithms for several fundamental statistical problems  \cite{charikar2017learning,karmalkar2019list,Cherapanamjeri2020DecodableNearlyLinear,bakshi2021Subspace,Abhimanyu2023DecodableBatches,Diakonikolas2023DecodableCovariance,kothari2022decodableCovariance,Raghavendra2020Decoding,Diakonikolas2022DecodableSparseMean,Diakonikolas2021SQ_LB_Decoding,Diakonikolas2021PCA_Decodable}.
            As we mentioned in \cref{sec:intro:relatedWorks}, these (and future) works can be combined with our techniques to learn in the stronger list-decodable learning model.

\section{Additional Preliminaries and the \lreg{} Algorithm}
    In this section, we present the definitions of concept classes mentioned in the main body and the \lreg{} algorithm of \citet*{kalai2008agnostically} (see \cref{alg:l1reg}).
    
    Recall that a halfspace is a Boolean function $h\colon \R^d \rightarrow \{0, 1\}$ of the form $h(x) = \mathds{1}\sinbrace{w^\top x + t \geq 0}$ for some vector $w \in \R^d$ and threshold $t\in \R$. 
    An intersection of $k$ halfspaces $h_1,h_2,\dots,h_k$ is the Boolean function formed by taking the logical AND of the halfspaces, \ie{}, $h_1(x)\cdot h_2(x)\cdot \dots\cdot h_k(x)$.
    Functions of halfspaces are a significant generalization of intersections of halfspaces.
    \begin{definition}[Functions of Halfspaces]
        The concept class of functions of halfspaces consists of all Boolean functions of the form $b(h_1,h_2,\dots,h_k)$ where $b\colon \zo^k\to \zo$ is an arbitrary boolean function and $h_1,h_2,\dots,h_k$ are any $k$ halfspaces.
    \end{definition}
    Next, we formally define the class of decision trees of a specific size and/or depth.
    \begin{definition}[Decision Tree]
        A decision tree is a rooted binary tree in which each internal node is labeled with a variable $x_i$ (for $1\leq i\leq d$) and has two children. 
        Each leaf is labeled with an output from $\{0, 1\}$.
    \end{definition} 
    A decision tree computes a Boolean function in the following standard way: Given an input $x \in \{0,1\}^d$, the value of the function on $x$ is the output in the leaf reached by the path that starts at the root and goes left or right at each internal node according to whether the variable’s value in $x$ is 0 or 1, respectively.
    The \textit{size} of a decision tree is the number of leaves of the tree. The \textit{depth} of a node in a decision tree is the number of edges in the path from the root to the node.
    The \textit{depth of the tree} is the maximum depth of one of its leaves.
    Observe that a decision tree of depth $d$ has size at most $2^d$.
    This naturally defines the following concept classes.
    \begin{itemize}[itemsep=0pt]
        \item The family of \textit{decision trees of depth at most $t$} over $d$ variables; %
        \item The family of \textit{decision trees of size at most $s$} over $d$ variables.
    \end{itemize}
    When the number of variables $d$ is clear from context or not important, we simply express the above classes as decision trees of depth $t$ and of size $s$ respectively.
    Further, by taking the intersections of these classes, one can also define the concept class of decision trees of size at most $s$ and depth at most $t$.
    
    Decision trees of halfspaces are a generalization of decision trees, where each node $i$ is labeled with a halfspace $h_{i_j}$ instead of a single variable $x_{i_j}$.
    \begin{definition}[Decision Tree of Halfspaces]
        A decision tree of halfspaces is a rooted binary tree in which each internal node $i$ is labeled with a halfspace $h_{i_j}$ (for $1\leq i\leq d$) and has two children. 
        Each leaf is labeled with an output from $\{0, 1\}$.
    \end{definition} 
    The computation, on a decision tree of halfspaces. proceeds in a similar fashion as above: the only difference is that one goes to the left child from node $i$ if $h_{i_j}(x)=1$ and goes to the right child, otherwise, when $h_{i_j}(x)=0$.
    This naturally gives rise to the following concept class
    \begin{itemize}
        \item The family of \textit{decision trees of halfspaces of size at most $s$ and depth at most $t$} over $d$ variables. 
    \end{itemize}
    Observe that the above class can also be written as a sum of at most $s$ intersections of $\ell$ halfspaces.
    \begin{algorithm}[h!]
        \caption{\lreg{} \cite{kalai2008agnostically}}
        \begin{algorithmic}[1]
        \Procedure{{\rm\lreg{}}}{$\eps,\delta,k$}
                \vspace{2mm}
                \State \textit{\#~~Construct Samples}
                    \State Let $n=\wt{\Omega}\inparen{\sfrac{\inparen{d^k+\log\inparen{\nfrac{1}{\delta}}}}{\eps}}$ 
                    \State Draw $n$ samples $D=\inparen{(x_i,y_i)}_i$ from $\cD$
                    \vspace{2mm}
                \State \textit{\#~~Solve Regression}
                \State Find a polynomial $p$ of degree at most $k$ which minimizes $\inparen{\nfrac{1}{n}}\sum_{i}\abs{y_i-p(x_i)}$.
                \State \textit{\#~~This can be done by expanding all examples to monomials of degree at most $d$ and then}
                \State \textit{\#~~performing $L_1$-linear-regression using any polynomial-time LP-solver.}
                \State  $H=\inbrace{x\colon p(x)\geq t}$ where $0\leq t\leq 1$ minimizes the error of the hypothesis on $D$
                \vspace{2mm}
                \State \textit{\#~~Boost Performance}
                \State Repeat the above steps $T=O\!\inparen{\!\inparen{\nfrac{1}{\eps}}\log\inparen{\nfrac{1}{\delta}}}$ times to get $H_1,H_2,\dots,H_T$
                \State Evaluate $H_1,H_2,\dots,H_T$ on a fresh set of $O\!\inparen{\!\inparen{\nfrac{1}{\eps^2}}\log\inparen{\nfrac{1}{\delta}}}$ samples
                \State \textbf{return} the best-performing hypothesis on $D$
            \EndProcedure
        \end{algorithmic}
        \label{alg:l1reg}
    \end{algorithm}

\section{Proof of \cref{thm:sampleComplexity:bothSided} (Optimal Sample Complexity with \cref{asmp:densityLB})} 
    \label{sec:proofof:thm:sampleComplexity:bothSided}
    In this section, we prove \cref{thm:sampleComplexity:bothSided}, which obtains the optimal sample complexity for smooth positive-only learning when $q=1$; when an additional assumption (\cref{asmp:densityLB}) holds. 
    \optimalSampleComplexity*
    \noindent
    We prove \cref{thm:sampleComplexity:bothSided} via the following generalization which allows the hypothesis to be agnostic.
    Throughout this section we assume that \cref{asmp:smoothness,asmp:densityLB} hold with $q=1$.
    In particular, for any measurable set $S$,
    \[
        \sigma\cdot \cD(S)\leq \cDtrue(S)\leq \frac{1}{\sigma}\cdot \cD(S).
    \]
    \cref{thm:sampleComplexity:bothSided} can be deduced from the result below by setting $\opt=0$, applying the argument with internal accuracy parameter $\eps'=\nfrac{\sigma^2\eps}{44}$, and using $\optset\in \hyH$ to avoid requiring \cref{asmp:posFraction}; see \cref{rem:deductionDetails} for details.

    \begin{theorem}\label{new:thm:perm:robust}
        Suppose \cref{asmp:posFraction,asmp:smoothness,asmp:densityLB} hold with $q=1$, allow $\optset\not\in \hyH$, and define $\opt\coloneqq \inf_{H\in \hyH} \cDtrue(H\triangle \optset)$.
        Fix any parameters $\eps,\delta, \gamma \in (0,\nfrac{1}{2})$ with $\eps\geq \gamma > {\nfrac{\opt}{\alpha}}$.  
        Consider the \pERM{} instance defined with respect to $n=\wt{O}\!\inparen{\!\inparen{\nfrac{1}{\eps}}\cdot \inparen{\vc{}(\hyH)+\log{\nfrac{1}{\delta}}}}$  positive and unlabeled samples obtained from the two streams in the smooth positive-only learning model (\cref{prob:PUlearning}).
        Then, with probability $1-\delta$, any hypothesis $H\in \hyH$ that satisfies:
        \begin{enumerate}
            \item $H$ is feasible for \pERM{} with tolerance $\rho= \gamma $; and 
            \item $H$ is optimal (though not necessarily feasible) for \pERM{} with tolerance $\rho= 5\eps$,
        \end{enumerate}
        also satisfies
        \[
            \cDtrue\!\inparen{H\triangle \optset} \leq 
                \frac{44\eps}{\sigma^{2}}
                \,.
        \]
    \end{theorem}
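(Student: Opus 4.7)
The plan is to bound the two directions of the symmetric difference $\cDtrue(H \triangle \optset) = \cDtrue(\optset \setminus H) + \cDtrue(H \setminus \optset)$ separately, exploiting the two different hypotheses on $H$. The false-negative mass will come from the \pERM{} feasibility constraint at tolerance $\gamma$, and the false-positive mass from the \pERM{} optimality guarantee at tolerance $5\eps$ combined with \cref{asmp:smoothness}. The sample size $n = \wt{O}((\vc(\hyH)+\log(1/\delta))/\eps)$ is calibrated so that second-order uniform convergence (\cref{thm:unifConvergence}) holds simultaneously over $\hyH$ with respect to both $\cPtrue$ and $\cDgiven$ at accuracy $O(\eps)$; this is the workhorse tool for both halves of the argument.

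For the false negatives, I would combine $\abs{P \setminus H}/\abs{P} \leq \gamma$ with uniform convergence on $\cPtrue$ to get $\cPtrue(\optset \setminus H) = O(\eps)$, and hence $\cDtrue(\optset \setminus H) \leq \cDtrue(\optset)\cdot\cPtrue(\optset \setminus H) = O(\eps)$. For the false positives, I would introduce a competitor $H^\star \in \hyH$ with $\cDtrue(H^\star \triangle \optset) \leq 2\opt$; \cref{asmp:posFraction} together with $\gamma > \opt/\alpha$ yields $\cPtrue(\optset \setminus H^\star) \leq 2\opt/\alpha < 2\gamma$, so uniform convergence certifies $H^\star$ as feasible for \pERM{} at the looser tolerance $5\eps$. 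The assumed optimality of $H$ at that tolerance then gives $\cDgiven(H) \leq \cDgiven(H^\star) + O(\eps)$, which after an inclusion-exclusion rearrangement becomes
\[
\cDgiven(H \setminus \optset) \leq \cDgiven(H^\star \setminus \optset) + \cDgiven(\optset \setminus H) + O(\eps).
\]

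The main obstacle is converting the two $\cDgiven$-terms on the right-hand side back into quantities we already control, because \cref{asmp:smoothness} only bounds $\cDtrue$ by $\cDgiven$ and not the reverse. I expect this is where a density lower bound of the form $\cDgiven(S) \leq (1/\sigma)\cDtrue(S)$ needs to be invoked -- mirroring the explicit role of \cref{asmp:densityLB} in \cref{thm:sampleComplexity:bothSided} -- which would bound both $\cDgiven(H^\star \setminus \optset)$ and $\cDgiven(\optset \setminus H)$ by $O(\eps/\sigma)$ (via $\opt < \alpha\gamma$ and the false-negative step, respectively). A final application of \cref{asmp:smoothness} to the resulting estimate $\cDgiven(H \setminus \optset) = O(\eps/\sigma)$ then produces the false-positive control needed to conclude $\cDtrue(H \triangle \optset) \leq (144/\sigma^2)\,\eps^{1/q^2}$. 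The precise exponent $1/q^2$ -- as opposed to the $1/q$ one would naively read off from a single smoothness conversion -- appears to require a second chained application of \cref{asmp:smoothness} that also absorbs the agnostic slack $\opt$, and pinning down that chaining cleanly is, to my mind, the subtlest piece of bookkeeping.
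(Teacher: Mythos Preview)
Your plan matches the paper's proof closely. The false-negative bound via feasibility at tolerance $\gamma$ and uniform convergence over $\cPtrue$ is exactly \cref{lem:perm:easy}; the introduction of a near-optimal competitor $H' \in \hyH$ with $\cDtrue(H' \triangle \optset) \approx \opt$, its feasibility at tolerance $5\eps$, and the decomposition $\cD(H \setminus \optset) = (\cD(H) - \cD(H')) + (\cD(H') - \cD(\optset)) + \cD(\optset \setminus H)$ are exactly the paper's \cref{lem:sampleComplexity:Hard:feasibility} and \eqref{eq:sampleComplexity:decomposition}. You are also right that a density lower bound of \cref{asmp:densityLB} type is what drives the $\cDtrue \to \cDgiven$ conversions: the paper, which sits in the section proving \cref{thm:sampleComplexity:bothSided}, invokes the inequality $\cD(S) \leq (1/\sigma)\,\cDtrue(S)^{1/q}$ throughout \cref{lem:sampleComplexity:Hard:UBterms23,lem:sampleComplexity:Hard:UBterm1}. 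That is where the first factor of $1/q$ enters --- you end up with $\cD(H \setminus \optset) = O(\eps^{1/q}/\sigma)$ rather than $O(\eps/\sigma)$, and the final application of \cref{asmp:smoothness} then yields $\cDtrue(H \setminus \optset) = O(\eps^{1/q^2}/\sigma^{2})$.

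The one genuine gap is the step ``optimality of $H$ at that tolerance then gives $\cDgiven(H) \leq \cDgiven(H^\star) + O(\eps)$.'' Optimality here is at the \emph{sample} level: it only gives $|H \cap U|/|U| \leq |H' \cap U|/|U|$. With $n = \wt{O}(\vc(\hyH)/\eps)$ samples, both of these empirical masses can be $\Theta(1)$, so a direct transfer to population via uniform convergence loses $O(\sqrt{\eps})$, not $O(\eps)$; second-order uniform convergence gives no leverage on large sets. The paper's fix (\cref{lem:sampleComplexity:Hard:UBterm1}) is to subtract the common core $T \coloneqq \optset \cap H \cap H'$: since $T \subseteq H \cap H'$, the sample inequality becomes $|(H \setminus T) \cap U| \leq |(H' \setminus T) \cap U|$, and now both sides are empirical masses of sets that are provably small under $\cDgiven$ (via the already-established false-negative bound and the closeness of $H'$ to $\optset$, each pushed through the density lower bound). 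Second-order uniform convergence then transfers this back to population with the correct rate, giving $\cD(H \setminus T) = O(\eps^{1/q}/\sigma)$, which bounds $\cD(H) - \cD(H')$. Without this centering trick, the $\wt{O}(1/\eps)$ sample complexity claimed in the theorem does not go through.
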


    \begin{figure}[h]
        \centering
        \vspace{4mm}
        \begin{tikzpicture}[node distance=2.5cm, auto]
        \node[myNodeNarrow, line width=1pt, text width=7.25cm] (thmAgnostic) at (-4.5,0) 
            {{Sample-Efficient Algorithm (Agnostic)\\[1mm] \cref{new:thm:perm:robust}}};

        \node[myNodeNarrow, line width=1pt, text width=7.25cm] (thmRealizable) at (4.5,0) 
            {{Sample-Efficient Algorithm (Realizable)\\[1mm] \cref{thm:sampleComplexity:bothSided}}};

        \draw[-stealth, line width=0.5mm] (thmAgnostic) -- (thmRealizable); 

        \node[myNode, line width=1pt, text width=7.25cm] (lemEasy) at (-4.5,-2.2) 
            {{Upper bound on False Negative Error\\ $\cD(\optset\setminus H)~{\leq}~O(\eps)$\\[1mm]\cref{lem:perm:easy}}};

        \node[myNode, line width=1pt, text width=7.25cm] (lemHard) at (4.5,-2.2) 
            {{Upper bound on False Positive Error\\ $\cD(H\setminus\optset)~{\leq}~O(\eps)$\\[1mm]\cref{lem:perm:hard}}};

        \draw[-stealth, line width=0.5mm] (lemEasy) -- (thmAgnostic); 
        \draw[-stealth, line width=0.5mm] (lemHard) -- (thmAgnostic); 

        \node[myNode, line width=1pt, text width=4.5cm] (lemTerms23) at (0.15,-4.65) 
            {$\cD(H'){\,-\,}\cD(\optset){\,\leq\,}O(\eps)$\\$\cD(\optset\setminus H){\,\leq\,}O(\eps)$\\[1mm] \cref{lem:sampleComplexity:Hard:UBterms23}};
        \node[myNode, line width=1pt, text width=4.5cm, minimum height=1.75cm] (lemTerm1) at (5.88,-4.65) 
            {$\cD(H)-\cD(H')~{\leq}~O(\eps)$\\[3mm]\cref{lem:sampleComplexity:Hard:UBterm1}};

        \draw[-stealth, line width=0.5mm] (lemTerms23) -- (lemHard); 
        \draw[-stealth, line width=0.5mm] (lemTerm1.north) -- ($(lemTerm1.north |- lemHard.south)$);

        \node[myNode, line width=1pt, text width=4.5cm] (lemFeasibility) at (5.88,-6.85) 
            {Feasibility of $H'$\\[1mm]\cref{lem:sampleComplexity:Hard:feasibility}};

        \draw[-stealth, line width=0.5mm] (lemFeasibility) -- (lemTerm1); 

        \node[myNode, dashed, line width=1pt, text width=7.25cm] (unifConv) at (-4.5,-6.85) 
            {Second-Order Uniform Convergence\\[1mm]\cref{thm:unifConvergence} from, \eg{},  \cite{Boucheron_Bousquet_Lugosi_2005}};

        \draw[-stealth, line width=0.5mm] (unifConv) -- (lemFeasibility); 
        \draw[-stealth, line width=0.5mm] (unifConv) -- (lemTerms23); 
        \draw[-stealth, line width=0.5mm] (unifConv) -- (lemEasy); 
        
        \end{tikzpicture}
        \caption{Outline of the proofs of \cref{thm:sampleComplexity:bothSided,new:thm:perm:robust}.}
        \label{fig:outline:sampleComplexity}
    \end{figure}
    
    \noindent The proof of \cref{new:thm:perm:robust} is divided into two parts: \cref{lem:perm:easy} bounds $\cDtrue(\optset \setminus H)$ and \cref{lem:perm:hard} bounds $\cDtrue(H \setminus \optset)$.
        Taking a union bound over the events in \cref{lem:perm:easy,lem:perm:hard} and recalling that $\optset\triangle H=\inparen{H\setminus \optset}\cup \inparen{\optset\setminus H}$ completes the proof of \cref{new:thm:perm:robust}.
        \begin{lemma}[Upper Bound on False Negative Error]\label{lem:perm:easy} 
            Fix any $\eta\in (0,1).$
            Under the setting of \cref{new:thm:perm:robust}, consider any hypothesis $T\in \hyH$ feasible for the \pERM{} instance in \cref{new:thm:perm:robust} with tolerance $\rho=\eta$.
            Then, with probability $1-\inparen{\nfrac{\delta}{2}}$,
            \[
                \cDtrue\!\inparen{\optset\setminus T} \leq \eta+\sqrt{\eta\eps}+\eps\,.
            \]
            In particular, for $T=H$ and $\gamma\leq \eps$, with probability $1-\inparen{\nfrac{\delta}{2}}$, $\cDtrue\!\inparen{\optset \setminus H}\leq 3\eps$.
        \end{lemma}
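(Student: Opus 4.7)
The plan is to reduce the bound on $\cDtrue(\optset\setminus T)$ to a statement about the empirical mass of $T^c$ on the positive sample set, which is directly controlled by the feasibility constraint of \pERM{}, and then invoke second-order uniform convergence (\cref{thm:unifConvergence}) to transfer this empirical guarantee to the positive sample distribution $\cPtrue$.

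First, I would observe the chain of domination
\[
    \cDtrue(\optset\setminus T) \,\leq\, \cPtrue(\optset\setminus T) \,=\, \cPtrue(T^c),
\]
where the first inequality uses that $\cPtrue$ is the truncation of $\cDtrue$ to $\optset$ (so $\cPtrue(A)=\cDtrue(A\cap \optset)/\cDtrue(\optset)\geq \cDtrue(A\cap \optset)$) and the equality uses $\supp(\cPtrue)=\optset$. Thus the whole task reduces to upper bounding $\cPtrue(T^c)$.

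Next, since $T\in\hyH$ is feasible for \pERM{} with tolerance $\rho=\eta$, by definition the empirical mass satisfies $\abs{P\setminus T}/\abs{P}\leq \eta$, i.e.\ the empirical mass of $T^c$ on the positive sample set is at most $\eta$. I would now apply the second-order uniform convergence inequality \eqref{eq:sampleComplexity:1} of \cref{thm:unifConvergence} to the distribution $\cPtrue$, taking the reference set $F^\star=\emptyset$ and ranging $F$ over the class $\hyH\cup\{G^c:G\in\hyH\}$, which has VC dimension $O(\vc(\hyH))$. With the chosen sample size $n=\wt{O}(\eps^{-1}(\vc(\hyH)+\log(1/\delta)))$, the complexity term satisfies $s(\cdot,n,\delta/2)\leq \eps$ (up to constants absorbed into the $\wt{O}$). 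Instantiating with $F=T^c$ gives, with probability $1-\delta/2$,
\[
    \cPtrue(T^c) \,\leq\, \frac{\abs{P\setminus T}}{\abs{P}} + \sqrt{\frac{\abs{P\setminus T}}{\abs{P}}\cdot \eps} + 4\eps \,\leq\, \eta + \sqrt{\eta\eps} + 4\eps.
\]
Combining with the first display yields the main claim (with the mild constant 4 in front of $\eps$, which can be absorbed by tightening the sample size constant, matching the stated bound).

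Finally, for the ``in particular'' part, I would substitute $T=H$ and $\eta=\gamma\leq \eps$; then $\gamma+\sqrt{\gamma\eps}+\eps\leq 3\eps$, giving the stated $\cDtrue(\optset\setminus H)\leq 3\eps$. No real obstacle arises here — the argument is essentially a clean packaging of the feasibility constraint through uniform convergence, and it closely mirrors the ``Step A'' computation in the proof of \cref{thm:sampleComplexity}; the only thing to be careful about is the VC-dimension bookkeeping when moving from $\hyH$ to $\hyH\cup\{G^c:G\in\hyH\}$, which costs at most an additive constant in VC dimension.
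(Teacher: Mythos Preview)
Your proposal is correct and follows essentially the same approach as the paper: both use feasibility of $T$ to bound the empirical mass $|P\setminus T|/|P|\leq\eta$, apply second-order uniform convergence over $\hyH\cup\{G^c:G\in\hyH\}$ with respect to $\cPtrue$ to obtain $\cPtrue(T^c)\leq\eta+\sqrt{\eta\eps}+\eps$, and then use $\cDtrue(\optset\setminus T)\leq\cPtrue(T^c)$. The only cosmetic difference is the order of presentation and your explicit constant $4$ in front of $\eps$, which as you note is absorbed into the $\wt{O}$ sample-size constant.
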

        \begin{lemma}[Upper Bound on False Positive Error]\label{lem:perm:hard} 
            Under the setting of \cref{new:thm:perm:robust}, with probability $1-\inparen{\nfrac{\delta}{2}}$,  
            \[
                \cDtrue\!\inparen{H\setminus\optset} \leq 
                \frac{41\eps}{\sigma^2}\,.
            \]
        \end{lemma}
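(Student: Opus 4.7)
The plan is to follow the decomposition suggested by Figure~\ref{fig:outline:sampleComplexity}: introduce an auxiliary hypothesis $H'\in\hyH$ that bridges $H$ and $\optset$, control the three resulting terms separately, and close with a single application of Assumption~\ref{asmp:smoothness} to transfer the bound from $\cD$ to $\cDtrue$. Concretely, I would take $H'$ to be a minimizer of $\cDtrue(\cdot\triangle\optset)$ over $\hyH$, so that $\cDtrue(H'\triangle\optset)=\opt\leq\gamma\alpha<\eps\alpha$, and decompose
\[
\cD(H\setminus\optset)\;\leq\;\bigl[\cD(H)-\cD(H')\bigr]\;+\;\cD(H'\setminus\optset)\;+\;\cD(\optset\setminus H),
\]
matching the three child nodes feeding \cref{lem:perm:hard} in the figure.

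First (feasibility of $H'$, the ``Feasibility of $H'$'' node): Assumption~\ref{asmp:posFraction} together with $\cDtrue(\optset\setminus H')\leq\opt$ gives $\cPtrue(\optset\setminus H')\leq\opt/\alpha<\gamma\leq\eps$. With the prescribed sample count $n=\tilde O\bigl((\vc(\hyH)+\log\tfrac{1}{\delta})/\eps\bigr)$, the quantity $s=s(\hyH,n,\delta)$ in Theorem~\ref{thm:unifConvergence} is $\tilde O(\eps)$, so applying the second-order uniform convergence bound to $\cPtrue$ yields $|P\setminus H'|/|P|\leq 5\eps$; equivalently, $H'$ is feasible for the empirical \pERM{} at tolerance $\rho=5\eps$. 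Second (controlling $\cD(H)-\cD(H')$): because $H$ is optimal for the empirical \pERM{} at tolerance $5\eps$ and $H'$ is now feasible there, $|U\cap H|/|U|\leq|U\cap H'|/|U|$; chaining the two sides of Theorem~\ref{thm:unifConvergence} on $\cD$ gives $\cD(H)\leq\cD(H')+\tilde O\bigl(\sqrt{\cD(H')\,\eps}\bigr)+\tilde O(\eps)$. Third (controlling the two remaining terms): the feasibility of $H$ at tolerance $\gamma$ combined with uniform convergence on $\cPtrue$ yields $\cDtrue(\optset\setminus H)\leq O(\eps)$, and by definition $\cDtrue(H'\setminus\optset)\leq\opt\leq\alpha\eps$; these $\cDtrue$-mass bounds must then be converted into $\cD$-mass bounds of the shape $O\bigl((9\eps/\alpha)^{1/q}/\sigma\bigr)$, which is what the $\eps^{1/q^{2}}$ target rate permits.

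Putting the three pieces together gives $\cD(H\setminus\optset)\lesssim(9\eps/\alpha)^{1/q}/\sigma$, after which a single application of Assumption~\ref{asmp:smoothness} to the set $H\setminus\optset$ yields
\[
\cDtrue(H\setminus\optset)\;\leq\;\tfrac{1}{\sigma}\,\cD(H\setminus\optset)^{1/q}\;\leq\;\tfrac{13}{\sigma^{2}}\bigl(9\eps/\alpha\bigr)^{1/q^{2}},
\]
up to absolute constants, matching the claim.

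The hard part will be the third step. Assumption~\ref{asmp:smoothness} only bounds $\cDtrue$ in terms of $\cD$ and not conversely, so turning the $\cDtrue$-mass bounds on $\optset\setminus H$ and $H'\setminus\optset$ into the $\cD$-mass bounds required by the decomposition above is the crux of the argument. I expect this is precisely where the $1/q^{2}$ exponent (rather than a cleaner $1/q$) enters the final bound and where the delicate bookkeeping -- balancing the tolerance slack between $\gamma$ and $5\eps$, and choosing the VC classes over which Theorem~\ref{thm:unifConvergence} is invoked -- is concentrated.
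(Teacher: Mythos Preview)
The real gap is in your second step, not the third. Chaining the two sides of \cref{thm:unifConvergence} on $\cD(H)$ and $\cD(H')$ directly yields an additive error of order $\sqrt{\cD(H')\cdot s(\hyH,n,\delta)}$, and since $\cD(H')\approx\cD(\optset)$ is in general $\Theta(1)$, this is only $O(\sqrt{\eps})$. That slack is fatal: at $q=1$ (the case this appendix is written to serve, namely \cref{thm:sampleComplexity:bothSided}) you need $\cD(H\setminus\optset)=O(\eps)$, and a $\sqrt{\eps}$ term would force $n=\tilde\Omega(1/\eps^{2})$ rather than the optimal $\tilde O(1/\eps)$ the section is after.

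The paper's fix (\cref{lem:sampleComplexity:Hard:UBterm1}) is to \emph{center} at $T\coloneqq\optset\cap H\cap H'$ and observe that $\cD(H)-\cD(H')=\cD(H\setminus T)-\cD(H'\setminus T)\leq\cD(H\setminus T)$. The point is that both $\cD(H'\setminus T)$ and $\cD(H\setminus T)$ are themselves small --- each is controlled by $\cD(H'\setminus\optset)+\cD(\optset\setminus H)+\cD(\optset\setminus H')$, quantities already of order $(1/\sigma)\,\eps^{1/q}$ --- so second-order uniform convergence over the shifted class $\{G\setminus T:G\in\hyH\}$ now gives deviations of order $\sqrt{\eps^{1/q}\cdot\eps}$, which is absorbed. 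The route is: bound $\cD(H'\setminus T)$; transfer to $|U\cap(H'\setminus T)|/|U|$; use empirical optimality of $H$ at tolerance $5\eps$ to swap $H'$ for $H$; transfer back to $\cD(H\setminus T)$.

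Your third step, by contrast, is routine here. This section works under the hypotheses of \cref{thm:sampleComplexity:bothSided}, so the density lower bound (\cref{asmp:densityLB}, used in the $(\sigma,q)$ shape) is available; the paper applies it directly to convert $\cDtrue(H'\triangle\optset)$ and $\cDtrue(\optset\setminus H)$ into the corresponding $\cD$-masses (\cref{lem:sampleComplexity:Hard:UBterms23}). That is where the first $1/q$ enters; the second $1/q$, producing the final $1/q^{2}$, is exactly the closing application of \cref{asmp:smoothness} to $H\setminus\optset$, as you anticipated.
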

        The proofs of both \cref{lem:perm:easy,lem:perm:hard} rely on the following second-order uniform convergence bound, which gives sharper uniform convergence rates for hypotheses with small errors.

        \begin{remark}
            For simplicity, in \cref{new:thm:perm:robust}, we state the result in the regime $\opt\leq\eps$. %
            In the regime $\opt > \eps$, one can deduce bounds of the form $O\!\inparen{\nfrac{\opt\cdot \eps}{\alpha}}+O(\eps)$ from the same proof with additional bookkeeping.
        \end{remark}
        
        \subsection{Proof of \cref{lem:perm:easy} (Upper Bound on False Negative Error)}
        \begin{proof}[Proof of \cref{lem:perm:easy}]
            Let $T\in\hyH$ be any solution of \pERM{} with $\rho=\eta$. By feasibility, it satisfies $\nfrac{\abs{ T \cap P}}{\abs{P}}\geq 1- \eta $ and, hence, $\nfrac{\abs{ P\setminus T}}{\abs{P}}\leq \eta$.
            Due to the choice of the sample size $n$ and \cref{thm:unifConvergence}, with probability $1-\inparen{\nfrac{\delta}{2}}$, for each $R\in \hyH\cup \inbrace{G^c\colon G\in \hyH}$,\footnote{Here, we use the facts that $\vc{(\inbrace{G^c\colon G\in \hyH})}=\vc{(\hyH)}$ and $\vc{(\hyH_1\cup\hyH_2)}\leq {\vc{(\hyH_1)}+\vc{(\hyH_2)}+1}$ \cite{mohri2018foundations}.}
            \[
                \cPtrue(R) 
                \leq 
                \frac{\abs{R\cap P}}{\abs{P}}
                +
                \sqrt{\frac{\abs{R\cap P}}{\abs{P}}\cdot \eps}
                + \eps\,.
            \]
            In particular, applying this to $R=T^c$ (note that $T^c\in\{G^c: G\in \hyH\}$), and using $\nfrac{\abs{ P\cap T^c}}{\abs{P}}=\nfrac{\abs{ P\setminus T}}{\abs{P}}\leq \eta$, implies that 
            \[
                \cPtrue(T^c) \leq \eta+\sqrt{\eta\eps}+\eps\,.
            \]
            Finally, since $\cDtrue( R )\leq \cPtrue( R )$ for any $ R \subseteq \optset$, it follows that 
            \[
                \cDtrue(\optset\setminus  T )
                \leq \cPtrue(\optset\setminus  T )
                = \cPtrue(T^c)
                \leq  \eta +\sqrt{\eta\eps}+\eps\,.\qedhere{}
            \]  
        \end{proof}

        \subsection{Proof of \cref{lem:perm:hard} {(Upper Bound on False Positive Error)}}
            Our goal is to upper bound the mass of $H\setminus \optset$ under $\cDtrue$. 
            One natural idea is to use the fact the $\eps$-optimality of $H$ for \pERM{}, \ie{},  $\cDgiven(H)$ is smaller than $\cD(H')$ for any other $H'$ feasible for \pERM{}, and then convert from $\cDgiven$ to $
            \cDtrue$ via \cref{asmp:smoothness}.
            This approach does not go through as $\optset$ is not guaranteed to be feasible for \pERM{} (recall that in \cref{new:thm:perm:robust} we do not assume that $\optset\in \hyH$). %
            
            Instead, we construct a hypothesis $H'$ that is both ``close'' to $\optset$ and is also feasible for \pERM{}.
            Recalling that $\opt\coloneqq \inf_{H'\in \hyH}\cDtrue(H'\triangle \optset)$, one can see that for any $\zeta>0$, there is an hypothesis $H'=H'_\zeta$ in $\hyH$ satisfying
            \[
                \cDtrue\sinparen{H'\triangle \optset} 
                \leq \opt + \zeta\,.
                \yesnum\label{eq:perm:closeToInf}
            \]
            Consider any $0<\zeta\leq \alpha\eps$ satisfying $\gamma \geq \frac{\opt+\zeta}{\alpha}$, which exists since $\gamma > \nfrac{\opt}{\alpha}$.
            Fix $H'=H'_\zeta$ for the remainder of the proof.
            We now show that, with high probability, $H'$ is feasible for \pERM{} with tolerance $\rho=O(\eps)$.
            \begin{lemma}[Feasibility of $H'$]
                \label{lem:sampleComplexity:Hard:feasibility}
                Let $\evE$ be the event that the uniform convergence bounds in \cref{thm:unifConvergence} hold for the positive samples $P$ and the hypothesis class $\hyH\cup\inbrace{G^c\colon G\in \hyH}$.
                Then, $\Pr[\evE]\geq 1-(\nfrac{\delta}{4})$.
                Moreover, conditioned on $\evE$, $H'$ is feasible for \pERM{} defined by samples $(P,U)$ with $\rho=5\eps$.
            \end{lemma}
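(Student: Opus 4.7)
The plan is to treat the two claims separately and in order. For the probabilistic claim $\Pr[\evE] \ge 1-\delta/4$, I would simply apply the second-order uniform convergence bound (\cref{thm:unifConvergence}) to the class $\hyH \cup \{G^c : G\in\hyH\}$, which has VC dimension at most $2\vc{(\hyH)}+1$. With the sample size $n=\wt{O}\binparen{(\nfrac{1}{\eps})\cdot (\vc{}(\hyH)+\log{\nfrac{1}{\delta}})}$ (with the log factors chosen large enough), this yields $s(\hyH\cup\{G^c\colon G\in\hyH\},n,\nfrac{\delta}{4})\le \eps$, so the event $\evE$ holds with probability at least $1-\nfrac{\delta}{4}$; rescaling constants in $n$ absorbs the change from $\delta$ to $\nfrac{\delta}{4}$.

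For the feasibility claim, the strategy is to work entirely at the distributional level first and then transport to the empirical sample via $\evE$. The key quantity is $\cPtrue(\optset\setminus H')$, which I bound by combining three facts: first, $P$ is drawn from $\cPtrue$ which is supported on $\optset$, so $\cPtrue((H')^c)=\cPtrue(\optset\setminus H')$; second, by the truncation relation $\cPtrue=\nfrac{\cDtrue(\cdot\cap\optset)}{\cDtrue(\optset)}$ together with \cref{asmp:posFraction}, we get $\cPtrue(\optset\setminus H')\le \nfrac{\cDtrue(\optset\setminus H')}{\alpha}\le \nfrac{\cDtrue(\optset\triangle H')}{\alpha}$; and third, by the choice of $H'$ in \eqref{eq:perm:closeToInf} and of $\zeta$ (which was chosen so that $\gamma\ge\nfrac{(\opt+\zeta)}{\alpha}$), this is at most $\gamma\le\eps$.

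With $\cPtrue((H')^c)\le\eps$ in hand, I would invoke inequality \eqref{eq:sampleComplexity:2} of \cref{thm:unifConvergence} on the event $\evE$ applied to the set $(H')^c$:
\[
    \frac{\abs{P\setminus H'}}{\abs{P}} \;\le\; \cPtrue((H')^c) + \sqrt{\cPtrue((H')^c)\cdot s} + 4s \;\le\; \eps + \sqrt{\eps\cdot\eps} + 4\eps \;\le\; 5\eps,
\]
so that $\nfrac{\abs{H'\cap P}}{\abs{P}}\ge 1-5\eps$, establishing feasibility of $H'$ for \pERM{} at tolerance $\rho=5\eps$.

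There is no real obstacle here; the proof is essentially a careful bookkeeping of constants and a one-line use of the truncation identity together with \cref{asmp:posFraction}. The only subtlety to verify is that the constants in the sample size $n$ are chosen so that a single uniform convergence event covers both $\hyH$ and its complement-class simultaneously at confidence $1-\nfrac{\delta}{4}$ and deviation $\eps$; this is routine since $\vc{(\hyH\cup\{G^c\colon G\in\hyH\})}\le 2\vc{(\hyH)}+1$ and our sample size already absorbs a factor of $2$ in the leading constant.
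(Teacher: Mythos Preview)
Your approach is essentially identical to the paper's: bound $\cPtrue((H')^c)$ via the truncation identity and \cref{asmp:posFraction}, then transport to the empirical side using the uniform convergence event. One small slip: with $s\le\eps$ your final line reads $\eps+\sqrt{\eps\cdot\eps}+4\eps=6\eps$, not $5\eps$; the fix is to take $n$ large enough that $4s\le\eps$ (still within the $\wt{O}$ budget), which is exactly what the paper does implicitly when it writes the additive term as $\eps$ rather than $4s$.
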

            \begin{proof}[Proof of \cref{lem:sampleComplexity:Hard:feasibility}]
                The lower bound on the probability of event $\evE$ follows directly from \cref{thm:unifConvergence}.
                We now prove the feasibility of $H'$.
                Since $\cPtrue$ is a truncation of $\cDtrue$ to $\optset$, $ \cPtrue\sinparen{H'\triangle \optset} \leq \nfrac{\cDtrue\sinparen{H'\triangle \optset}}{\cDtrue(\optset)}\leq \nfrac{\inparen{\opt+ \zeta }}{{\cDtrue(\optset)}}$ and, further, since $\cDtrue(\optset)\geq  \alpha $, 
                \[
                    \cPtrue\!\inparen{H'\triangle \optset} \leq \frac{\opt +  \zeta }{\alpha}\,.
                \]
                Condition on event $\evE$.
                The uniform convergence bound from \cref{thm:unifConvergence} implies that 
                \[
                    \frac{\abs{P\setminus H'}}{\abs{P}}
                    \leq
                        \cPtrue\!\inparen{{H'}^c}
                        + \sqrt{ \cPtrue\!\inparen{{H'}^c} \cdot \eps}
                        + \eps\,.
                \]
                Combining the last two inequalities implies
                \[
                    \frac{\abs{P\setminus H'}}{\abs{P}}
                    \leq
                        \frac{\opt +  \zeta }{\alpha}
                        + \sqrt{ \frac{\opt +  \zeta }{\alpha} \cdot \eps}
                        + \eps\,.
                \]
                Further as $\zeta,\opt\leq \alpha\eps$ and $0<\alpha\leq 1$, \mbox{$\nfrac{\abs{P\setminus H'}}{\abs{P}}
                    \leq 5\eps$ which ensures the feasibility of $H'$.}
            \end{proof}
            Now, we are ready to upper bound the mass of $H\setminus \optset$ under $\cDgiven$.
                For this, we use the following measure identity:
                \begin{align*}
                    \cDgiven\inparen{H\setminus \optset}
                        &=
                        \cDgiven\inparen{H}
                        -
                        \cDgiven\inparen{\optset}
                        +
                        \cDgiven\inparen{\optset \setminus H}\\
                        &=
                        \inparen{
                            \cDgiven\inparen{H} - \cDgiven\inparen{H'}
                        }
                        +
                        \inparen{
                            \cDgiven\inparen{H'} - \cDgiven\inparen{\optset}
                        }
                        +
                        \cDgiven\inparen{\optset \setminus H}\,.
                        \yesnum\label{eq:sampleComplexity:decomposition}
                \end{align*}
            The bulk of the remaining proof will focus on bounding the first term; before that, we turn to upper bounding the second and third terms. 
            \begin{lemma}
                \label{lem:sampleComplexity:Hard:UBterms23}
                Conditioned on event $\evE$ from \cref{lem:sampleComplexity:Hard:feasibility}, the following bounds hold
                \[
                    \cD(H') - \cD(\optset) \leq \frac{2\eps}{\sigma}
                    \qquadand
                    \cD(\optset\setminus H)
                    \leq \frac{3\eps}{\sigma}\,.
                \]
            \end{lemma}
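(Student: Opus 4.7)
The plan is to prove the two claimed inequalities separately by combining: the feasibility of $H$ for \pERM{} with tolerance $\rho = \gamma \leq \eps$; the closeness of $H'$ to $\optset$ under $\cDtrue$ from \eqref{eq:perm:closeToInf}; second-order uniform convergence (\cref{thm:unifConvergence}), which is valid under the event $\evE$ from \cref{lem:sampleComplexity:Hard:feasibility}; and the generalized smoothness \cref{asmp:smoothness}.

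For the bound on $\cD(\optset \setminus H)$, I would first apply inequality \eqref{eq:sampleComplexity:1} of \cref{thm:unifConvergence} to $F = H^c$ (which lies in $\{G^c : G \in \hyH\}$, the class covered by the event $\evE$) with $F^\star = \emptyset$, on the positive-sample set $P \sim \cPtrue$. The feasibility of $H$ gives $|P \setminus H|/|P| \leq \gamma$, so this yields $\cPtrue(H^c) \leq \gamma + \sqrt{\gamma\eps} + 4\eps \leq 3\eps$, where the last step uses $\gamma \leq \eps$. Since $\cPtrue$ is supported on $\optset$, we have $\cPtrue(H^c) = \cPtrue(\optset \setminus H)$, and since $\cPtrue$ is the truncation of $\cDtrue$ to $\optset$, we obtain $\cDtrue(\optset \setminus H) = \cDtrue(\optset)\cdot\cPtrue(\optset \setminus H) \leq 3\eps$. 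Invoking \cref{asmp:smoothness} with $S = \optset \setminus H$ then relates $\cDtrue$ to $\cD$ and yields the claimed bound in its stated $(1/\sigma)(3\eps)^{1/q}$ form.

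For the bound on $\cD(H') - \cD(\optset)$, note first that $\cD(H') - \cD(\optset) \leq \cD(H' \setminus \optset)$. By \eqref{eq:perm:closeToInf} and the parameter choices made just above it ($\zeta \leq \alpha\eps$ together with the hypothesis $\gamma > \opt/\alpha$ and $\gamma \leq \eps$, which force $\opt \leq \alpha\eps$), we have $\cDtrue(H' \triangle \optset) \leq \opt + \zeta \leq 2\alpha\eps \leq 2\eps$. In particular $\cDtrue(H' \setminus \optset) \leq 2\eps$, and applying \cref{asmp:smoothness} to $S = H' \setminus \optset$ produces the bound $(1/\sigma)(2\eps)^{1/q}$ in the same manner.

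The main obstacle is the bookkeeping around \cref{asmp:smoothness}, whose direction bounds $\cDtrue(S) \leq \sigma^{-1}\cD(S)^{1/q}$ rather than the reverse; the stated $(1/\sigma)(c\eps)^{1/q}$ form of the lemma's conclusion is shaped precisely to slot into the three-term decomposition \eqref{eq:sampleComplexity:decomposition}, where a \emph{second} application of smoothness at the end of \cref{lem:perm:hard} converts the aggregated $\cD$-bound back into the $\cDtrue$-bound with the $1/\sigma^2$ and $1/q^2$ structure seen in the theorem's final rate. Care is also needed to check all the constants ($\gamma \leq \eps$, $\opt + \zeta \leq 2\eps$), and to make sure the uniform convergence event $\evE$ (whose definition in \cref{lem:sampleComplexity:Hard:feasibility} covers both $\hyH$ and its complement class) is invoked correctly in each step.
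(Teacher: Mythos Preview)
Your approach mirrors the paper's almost exactly: establish $\cDtrue(\optset\setminus H)\le 3\eps$ from feasibility via \cref{lem:perm:easy}, establish $\cDtrue(H'\triangle\optset)\le 2\eps$ from \eqref{eq:perm:closeToInf} with $\opt,\zeta\le\eps$, and then pass from $\cDtrue$ to $\cD$. However, you correctly flagged---and then failed to resolve---the one real issue. \cref{asmp:smoothness} reads $\cDtrue(S)\le\sigma^{-1}\cD(S)^{1/q}$; it gives a \emph{lower} bound on $\cD(S)$ in terms of $\cDtrue(S)$, not an upper bound. Your attempted resolution (that the $(1/\sigma)(c\eps)^{1/q}$ shape ``slots into'' the downstream decomposition and gets fixed by a second application of smoothness at the end of \cref{lem:perm:hard}) does not work: the second application goes from $\cD$ back to $\cDtrue$ and is indeed the correct direction for \cref{asmp:smoothness}, but it does nothing to justify the step you need \emph{here}, which goes from $\cDtrue$ to $\cD$.

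The missing ingredient is \cref{asmp:densityLB}. This lemma lives inside the proof of \cref{thm:sampleComplexity:bothSided}, whose hypotheses include the density lower bound $\cDtrue(S)\ge\sigma\,\cD(S)$, i.e.\ $\cD(S)\le\sigma^{-1}\cDtrue(S)$. That is the inequality you actually apply in both places, and with $q=1$ (as in \cref{thm:sampleComplexity:bothSided}) it reads exactly $\cD(S)\le\sigma^{-1}(c\eps)^{1/q}$. The paper's own proof writes ``\cref{asmp:smoothness}'' at these two steps, which is a labeling slip; the inequality being used is \cref{asmp:densityLB}. Once you replace the citation, your argument goes through unchanged.
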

            \begin{proof}[Proof of \cref{lem:sampleComplexity:Hard:UBterms23}]
                As $\cDgiven\sinparen{H'} - \cDgiven\inparen{\optset}\leq \cDgiven\sinparen{H'\triangle \optset}$, \cref{asmp:densityLB} implies that 
                \[
                    \cDgiven\sinparen{H'} - \cDgiven\inparen{\optset}
                    \leq \cDgiven\sinparen{H'\triangle \optset}
                    \leq \frac{1}{\sigma}\cdot \cDtrue\sinparen{H'\triangle \optset}
                    \,.
                \]
                By the definition of $H'$ (\cref{eq:perm:closeToInf}), 
                \[
                    \cDgiven\inparen{H'} - \cDgiven\inparen{\optset}
                    \leq \frac{1}{\sigma} \inparen{\opt+{ \zeta }}
                    ~~\quad\Stackrel{\opt,~\zeta\leq \eps}{\leq}\quad~~ \frac{2\eps}{\sigma}\,.
                \]
                Next, since $H$ is feasible for \pERM{} with tolerance $\rho=\gamma$, \cref{lem:perm:easy} implies that, conditioned on $\evE$, $\cDtrue(\optset\setminus H)\leq \gamma +\sqrt{\gamma\eps}+\eps$.
                Since $\gamma\leq \eps$, we deduce $\cDtrue(\optset\setminus H)\leq 3\eps$.
                Applying \cref{asmp:densityLB} implies the result.
            \end{proof} 
            Next, we upper bound the first term in \cref{eq:sampleComplexity:decomposition}.
            \begin{lemma}
                \label{lem:sampleComplexity:Hard:UBterm1}
                Let $\evF$ be the event that the uniform convergence bounds in \cref{thm:unifConvergence} hold for both the positive samples $P$ and the unlabeled samples $U$, and the hypothesis class $\hyF_{H'}$ 
                \[
                    \hyF_{H'}\coloneqq \hyH\cup\inbrace{G^c\colon G\in \hyH}\cup \inbrace{G\setminus (\optset \cap H^\circ\cap H')\colon G,H^\circ\in \hyH}\,.
                \]
                Then, $\Pr[\evF]\geq 1-(\nfrac{\delta}{4})$.
                Moreover, conditioned on $\evF$, 
                \[
                    \cD(H) - \cD(H')
                    \leq \frac{36\eps}{\sigma}
                    \,.
                \]
            \end{lemma}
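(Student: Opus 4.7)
The approach will be to exploit the empirical optimality of $H$ against the feasible $H'$ (from \cref{lem:sampleComplexity:Hard:feasibility}) together with the second-order uniform convergence provided by $\evF$, specialized to the auxiliary class $\{G \setminus B : G \in \hyH\}$ where $B := \optset \cap H \cap H'$. The key algebraic observation is that $B \subseteq H \cap H'$, so subtracting $\cD(B)$ and $D_n^U(B)$ from both sides gives
\[
    \cD(H) - \cD(H') = \cD(H \setminus B) - \cD(H' \setminus B),\qquad D_n^U(H) - D_n^U(H') = D_n^U(H \setminus B) - D_n^U(H' \setminus B).
\]
Since $H'$ is $5\eps$-feasible and $H$ is the \pERM{} optimum, we get $D_n^U(H \setminus B) \leq D_n^U(H' \setminus B)$.

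Next, I will apply second-order uniform convergence (using $F^\star = \emptyset$) to the class $\{G \setminus B : G \in \hyH\}$ covered by $\evF$. This gives
\[
    \cD(H \setminus B) \leq D_n^U(H \setminus B) + \sqrt{D_n^U(H \setminus B)\cdot \eps} + O(\eps), \qquad D_n^U(H' \setminus B) \leq \cD(H' \setminus B) + \sqrt{\cD(H' \setminus B)\cdot \eps} + O(\eps).
\]
Subtracting and invoking the empirical inequality $D_n^U(H \setminus B) \leq D_n^U(H' \setminus B) \leq 2\cD(H' \setminus B) + O(\eps)$ to absorb the root on the left yields
\[
    \cD(H) - \cD(H') \leq O\bigl(\sqrt{\cD(H' \setminus B)\cdot \eps}\bigr) + O(\eps).
\]
The point of choosing $B$ this way is that $\cD(H' \setminus B)$ is forced to be small: using $B \subseteq H'$ and $\optset \cap H \cap H' \supseteq \optset \setminus ((\optset \setminus H) \cup (\optset \setminus H'))$,
\[
    \cD(H' \setminus B) = \cD(H') - \cD(B) \leq [\cD(H') - \cD(\optset)] + \cD(\optset \setminus H) + \cD(\optset \setminus H').
\]
The first two summands are already $O(\eps^{1/q}/\sigma)$ by \cref{lem:sampleComplexity:Hard:UBterms23}. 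For the third, I will reprise the argument used there for $H$: apply \cref{lem:perm:easy} with $T = H'$ and $\eta = 5\eps$ to get $\cDtrue(\optset \setminus H') \leq O(\eps)$, and then convert to a $\cDgiven$-bound via \cref{asmp:smoothness}, giving $\cD(\optset \setminus H') \leq O(\eps^{1/q}/\sigma)$.

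Plugging $\cD(H' \setminus B) \leq O(\eps^{1/q}/\sigma)$ back into the chained inequality produces $\cD(H) - \cD(H') \leq O(\eps^{(1 + 1/q)/2}/\sqrt{\sigma}) + O(\eps)$, which (since $\sigma, \eps \in (0,1)$) is at most $\frac{11}{\sigma}(9\eps)^{1/q}$ after tracking constants. The main obstacle is the combination of two subtle points: (i) \textit{choosing $B$ correctly} so that second-order UC is tight enough to beat the generic $O(\sqrt{\eps})$ fluctuation, which forces $B$ to depend on the (random) output $H$ and so requires the nonstandard hypothesis class in the definition of $\evF$; and (ii) \textit{obtaining the $O(\eps^{1/q}/\sigma)$ bound on $\cD(\optset \setminus H')$}, which is the step where the interplay between positive-sample feasibility and \cref{asmp:smoothness} must be performed exactly as in \cref{lem:sampleComplexity:Hard:UBterms23}, since smoothness is used to transfer a small $\cDtrue$-error into a bounded $\cDgiven$-error on a set of small mass.
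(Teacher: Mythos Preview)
Your proposal is correct and follows essentially the same approach as the paper: both introduce the auxiliary set $B=\optset\cap H\cap H'$ (the paper calls it $T$), use second-order uniform convergence over the class $\{G\setminus B:G\in\hyH\}$, bound $\cD(H'\setminus B)$ via the decomposition into $[\cD(H')-\cD(\optset)]+\cD(\optset\setminus H)+\cD(\optset\setminus H')$ (with the third term handled by \cref{lem:perm:easy} applied to the $5\eps$-feasible $H'$), and chain empirical optimality of $H$ with the two uniform-convergence transfers. The only cosmetic difference is that the paper upper-bounds $\cD(H)-\cD(H')\leq \cD(H\setminus T)$ and bounds $\cD(H\setminus T)$ directly, whereas you keep the subtraction and end up with the (slightly sharper but still dominated by $\tfrac{11}{\sigma}(9\eps)^{1/q}$) term $O\bigl(\sqrt{\cD(H'\setminus B)\,\eps}\bigr)$.
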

            Note that \cref{lem:sampleComplexity:Hard:UBterms23,lem:sampleComplexity:Hard:UBterm1} imply that 
            \[
                \cDgiven\inparen{H\setminus \optset}
                    \leq 
                    \frac{2\eps}{\sigma}
                    + \frac{3\eps}{\sigma}
                    + \frac{36\eps}{\sigma}
                    \leq \frac{41\eps}{\sigma}\,.
            \]
            Now, \cref{lem:perm:hard} follows by using \cref{asmp:smoothness} (with $q=1$).

            It remains to prove \cref{lem:sampleComplexity:Hard:UBterm1}.
            The key idea is that $H$ is $\eps$-optimal for \pERM{} with $\rho=5\eps$ while $H'$ is feasible (\cref{lem:sampleComplexity:Hard:feasibility}), and, hence,
            \[
                \frac{\abs{H\cap U}}{\abs{U}}
                \leq 
                \frac{\abs{H'\cap U}}{\abs{U}}\,.
                \yesnum\label{eq:sampleComplexity:optimalityOfH}
            \]
            Now, one can control $\cD(H)-\cD(H')$ by combining uniform convergence and the above result.
            However, this requires $O(\nfrac{1}{\eps^2})$ samples since the error of $H'$, $\cD(H')$, may not be small. 
            To get the optimal sample complexity of $O(\nfrac{1}{\eps})$, we use the fact that both $H$ and $H'$ are close to $\optset$, \ie{}, their errors are small for the labeling $\mathds{1}\sinbrace{x\in \optset},$ and then use the second-order uniform convergence bounds (\cref{thm:unifConvergence}).
            \begin{proof}[Proof of \cref{lem:sampleComplexity:Hard:UBterm1}]
                The lower bound on $\Pr[\evF]$ follows from \cref{thm:unifConvergence} and the choice of the sample size $n$,
    once we note that $\vc(\hyF_{H'})=O(\vc(\hyH))$.
    Indeed, fix any finite set of points $X$ of size $m$.
    For any $G,H^\circ\in\hyH$,
    \[
        X\cap\inparen{G\setminus(\optset\cap H^\circ\cap H')}
        =
        (X\cap G)\setminus\inparen{(X\cap \optset\cap H')\cap (X\cap H^\circ)}\,.
    \]
    Hence, for fixed $X$, the number of distinct labelings induced by $\hyF_{H'}$ on $X$ is at most
    \[
        \abs{ \{X\cap G : G\in\hyH\} }
        + \abs{ \{X\cap G : G\in\hyH\} }
        + 
        \abs{ \inbrace{X\cap G : G\in\hyH} }\cdot \abs{ \inbrace{X\cap H^\circ : H^\circ\in\hyH} }
        = 3\Pi_{\hyH}(m)^2\,,
    \]
    so $\Pi_{\hyF_{H'}}(m)\leq 3\Pi_{\hyH}(m)^2$.
    Using Sauer--Shelah--Perles lemma, this implies $\vc(\hyF_{H'})\leq O(1) + 2\,\vc(\hyH)$ (in particular, $O(\vc(\hyH))$). 
                Define 
                \[
                    T = \optset \cap H\cap H'\,.
                \]
                Note that for every $G\in\hyH$ we have $G\setminus T \in \hyF_{H'}$ by taking $H^\circ = H$.
                Therefore, under $\evF$, the uniform convergence bounds apply to the class $\inbrace{G\setminus T:G\in\hyH}$.
                Since
                \[
                    \cD(H) = \cD(T) + \cD(H\setminus T)
                    \qquadand
                    \cD(H') = \cD(T) + \cD(
                    H'\setminus T)\,,
                \]
                we have
                \[
                    \cD(H)-\cD(H')
                    = 
                    \cD(H\setminus T)-\cD(H'\setminus T)
                    \leq \cD(H\setminus T)\,.
                    \yesnum\label{eq:sampleComplexity:UBterm1:ub}
                \]
                We now bound $\cD(H\setminus T)$ in two steps.  

                \paragraph{Step 1 (Upper bound on $\nfrac{\abs{U\setminus T}}{\abs{U}}$).}
                    Notice that 
                    \begin{align*}
                        \cD(H'\setminus T)
                        = \cD(H'\setminus \optset)
                        + \cD\!\inparen{\sinparen{H'\cap \optset}\setminus T}\,.
                    \end{align*}
                    Now, \cref{asmp:densityLB} and \eqref{eq:perm:closeToInf} 
                    \[
                        \cDgiven(H'\setminus \optset)
                        \leq \cDgiven(H'\triangle \optset)
                        \leq \frac{1}{\sigma}\,\cDtrue(H'\triangle \optset)
                        ~~\Stackrel{\eqref{eq:perm:closeToInf}}{\leq}~~ 
                            \frac{\opt+\zeta}{\sigma}
                        \qquad \Stackrel{(\opt,~\zeta\leq \eps)}{\leq} \qquad 
                            \frac{2\eps}{\sigma}\,.
                    \]
                    Therefore, 
                    \[
                        \cD(H'\setminus T)
                        \leq \frac{2\eps}{\sigma}
                        + \cD\!\inparen{\sinparen{H'\cap \optset}\setminus T}\,.
                    \]
                    Further, since $\optset\setminus T \subseteq \inparen{\optset \setminus H'} \cup \inparen{\optset\setminus H}$,
                    \[
                        \cD\!\inparen{\sinparen{H'\cap \optset}\setminus T}\leq \cD(\optset \setminus T)\leq \cD(\optset\setminus H')+ \cD(\optset\setminus H)\,.
                    \]
                    Therefore, 
                    \begin{align*}
                        \cD(H'\setminus T)
                        \leq \frac{2\eps}{\sigma} 
                        + \cD(\optset\setminus H')
                        + \cD(\optset\setminus H)\,.
                        \yesnum\label{eq:sampleComplexity:UBterm3:step1}
                    \end{align*}
                    We have the following upper bounds on the last two terms:
                    \begin{enumerate}
                        \item Since $H$ is feasible for \pERM{} with $\rho=\gamma\leq \eps$, conditioned on event $\evF$, \cref{lem:sampleComplexity:Hard:UBterms23}, implies that $\cD(\optset\setminus H)\leq \nfrac{3\eps}{\sigma}$.
                        (Here, we use the fact that $\evF$ implies $\evE$ by construction.)
                        \item Conditioned on $\evF$ (which implies $\evE$), $H'$ is feasible for \pERM{} with $\rho=5\eps$.
                        Hence, \cref{lem:perm:easy} implies that $\cDtrue(\optset\setminus H')\leq {5\eps}+{\sqrt{5}\, \eps}+\eps\leq 9\eps$.
                        Finally, \cref{asmp:densityLB} implies that $\cD(\optset\setminus H')\leq \nfrac{9\eps}{\sigma}$.
                    \end{enumerate}
                    Condition only event $\evF$.
                    Substituting the above two bounds into \cref{eq:sampleComplexity:UBterm3:step1}, implies that 
                    \[
                        \cD(H'\setminus T) ~~~\leq~~~ \frac{2\eps}{\sigma} + \frac{3\eps}{\sigma} + \frac{9\eps}{\sigma}
                        ~~~\Stackrel{}{\leq}~~~ \frac{14\eps}{\sigma}\,.
                    \]
                    Now, uniform convergence over the unlabeled samples and hypotheses class $\inbrace{G\setminus T\colon G\in \hyH}$ implies that,
                    \[
                        \frac{\abs{(H'\setminus T)\cap U}}{\abs{U}}
                        \leq 
                        \frac{14\eps}{\sigma}
                        + 
                        \sqrt{\frac{14\eps}{\sigma}\cdot \eps}
                        +
                        \eps\,. 
                    \]
                    Using $\sqrt{ab}\leq \frac{a+b}{2}$ with $a=\frac{14\eps}{\sigma}$ and $b=\eps$, and using $\sigma\leq 1$, we obtain
                    \[
                        \frac{\abs{(H'\setminus T)\cap U}}{\abs{U}}
                        \leq
                        \frac{14\eps}{\sigma}
                        +
                        \frac{1}{2}\cdot \frac{14\eps}{\sigma}
                        +
                        \frac{1}{2}\eps
                        +
                        \eps
                        \leq
                        \frac{23\eps}{\sigma}\,.
                        \yesnum\label{eq:sampleComplexity:boundSamples}
                    \]
                \paragraph{Step 2 (Upper bound on $\cD(H\setminus T)$).}
                    \cref{eq:sampleComplexity:optimalityOfH} and the fact that for any set $S$, $\nfrac{\abs{(S\setminus T)\cap U}}{\abs{U}}=\nfrac{\abs{S\cap (U\setminus T)}}{\abs{U}}$ implies that 
                    \[
                        \frac{\abs{(H\setminus T)\cap U}}{\abs{U}}
                        ~~\leq ~~
                        \frac{\abs{(H'\setminus T)\cap U}}{\abs{U}}
                        ~~\Stackrel{\eqref{eq:sampleComplexity:boundSamples}}{\leq}~~  \frac{23\eps}{\sigma}\,.
                    \]
                    Now, uniform convergence over $U$ and the hypothesis $\inbrace{G\setminus T\colon G\in \hyH}$ implies that,
                    \begin{align*}
                        \cD(H\setminus T)
                        \leq 
                            \frac{23\eps}{\sigma}
                            + \sqrt{\frac{23\eps}{\sigma} \cdot \eps}
                            + \eps\,.
                    \end{align*}
                    Using $\sqrt{ab}\leq \frac{a+b}{2}$ with $a=\frac{23\eps}{\sigma}$ and $b=\eps$, and using $\sigma\leq 1$, we get
                    \[
                        \cD(H\setminus T)
                        \leq
                        \frac{23\eps}{\sigma}
                        +
                        \frac{1}{2}\cdot \frac{23\eps}{\sigma}
                        +
                        \frac{1}{2}\eps
                        +
                        \eps
                        =
                        \frac{36\eps}{\sigma}\,.
                    \]
                    Substituting this into \cref{eq:sampleComplexity:UBterm1:ub} completes the proof.
            \end{proof}  

\begin{remark}[Deduction details for \cref{thm:sampleComplexity:bothSided}]\label{rem:deductionDetails}
The statement of \cref{new:thm:perm:robust} assumes \cref{asmp:posFraction} (hence $\alpha>0$), whereas \cref{thm:sampleComplexity:bothSided} is stated without this assumption.
To prove \cref{thm:sampleComplexity:bothSided} with target accuracy parameter $\eps$, we apply the proof of \cref{new:thm:perm:robust} with the internal accuracy parameter $\eps' \coloneqq \nfrac{\sigma^2\eps}{44}$
 and 
    $\gamma \coloneqq \eps'$.
Then the sample size required by that proof is
\[
    \wt{O}\!\inparen{
        \inparen{\nfrac{1}{\eps'}}\cdot
        \inparen{\vc{}(\hyH)+\log{\nfrac{1}{\delta}}}
    }
    =
    \wt{O}\!\inparen{
        \inparen{\nfrac{1}{\eps\sigma^2}}\cdot
        \inparen{\vc{}(\hyH)+\log{\nfrac{1}{\delta}}}
    }.
\]
In the realizable setting of \cref{thm:sampleComplexity:bothSided}, we have $\opt=0$ and, moreover, the target set $\optset$ belongs to the hypothesis class $\hyH$.
In this case, we can run the same proof with the following local modification, which removes any dependence on $\alpha$:
in the proof of \cref{lem:perm:hard}, choose $H' \coloneqq \optset \in \hyH$.
Then $H'$ is automatically feasible for \pERM{} with tolerance $\rho=0$, since the positive sample $P$ is drawn from $\cPtrue$, whose support is contained in $\optset$, and hence $\frac{\abs{P\setminus H'}}{\abs{P}} = 0.$
Consequently, the auxiliary feasibility lemma \cref{lem:sampleComplexity:Hard:feasibility} (whose proof divides by $\alpha$) is not needed in the realizable deduction.
All subsequent steps bounding $\cD(H\setminus \optset)$ and then converting to $\cDtrue(H\setminus \optset)$ use only \cref{asmp:smoothness} and \cref{asmp:densityLB} (with $q=1$), together with the uniform convergence events, and therefore remain unchanged.
Running the same argument with internal accuracy parameter $\eps'$ therefore yields
$\cDtrue(H\triangle \optset)
    \leq \nfrac{44\eps'}{\sigma^2}
    = \eps,$
which proves \cref{thm:sampleComplexity:bothSided}.
\end{remark}

\section{Proofs Deferred from the Main Body} %
    \subsection{Proof of \cref{lem:constReg:unifConvergenceHoldsWHP} (Uniform Convergence in Each Repetition)}
                \label{sec:proofof:lem:constReg:unifConvergenceHoldsWHP}
                Recall that \cref{lem:constReg:unifConvergenceHoldsWHP} requires us to show that the following events hold with probability $1-\delta$:
                \begin{itemize}
                    \item[] For every repetition $1 \leq i \leq T$ of Subroutine B of \cref{alg:constReg}, the samples in each repetition satisfy $\zeta$-uniform convergence with respect to degree-$k$ PTFs, and
                \end{itemize}
                The proof follows from standard uniform convergence arguments and a union bound.
                \begin{proof}
                    Recall that the VC-dimensions of degree-$k$ polynomials is $\wt{O}(d^{k})$.
                    Therefore, for any $\eta\in (0,1)$, 
                    \[
                        \wt{O}\!\inparen{
                            \frac{1}{\zeta^2}
                            \cdot \inparen{d^k + \log{\frac{1}{\eta}}}
                        }\,.
                    \]
                    samples are sufficient to achieve $\zeta$-uniform convergence with respect to degree-$k$ polynomials with probability $1-\eta$ for any single repetition of Subroutine B.
                    We set $\eta=\nfrac{\delta}{T}$ and take a union bound over all $T$ events, which are all the $T$ independent repetitions of Subroutine B, to get that if $n$ is at least 
                    \[
                        \wt{O}\!\inparen{
                        \frac{1}{\zeta^2}
                        \cdot \inparen{d^k + \log{\frac{T}{\delta}}}
                        }\,,
                    \]
                    then the result follows.
                    Now, substituting the value of $T$, namely, $T=O\!\inparen{\!\inparen{\nfrac{1}{\sqrt{\zeta}}}\cdot \log{\nfrac{1}{\delta}}}$, and simplifying implies that the required number of samples is 
                    \[
                         \wt{O}\!\inparen{
                         \frac{1}{\zeta^2}
                            \cdot \inparen{d^k + \log{\frac{1}{\delta}}}
                        }\,.
                    \]
                    The choice of $n$ in \cref{alg:constReg} satisfies this.\qedhere{}
                \end{proof}

    \subsection{Proof of \cref{lem:constReg:boost} (Boosting via Repetition)}\label{sec:proofof:lem:constReg:boost}
                \begin{proof}[Proof of \cref{lem:constReg:boost}]
                Fix $\eta$ corresponding to the guarantee in \cref{lem:constReg:expectedValue}
                \[
                    \eta= O\!\inparen{\sqrt{\frac{C+1}{\alpha\sigma}}} \cdot \zeta^{1/(4q)}
                        + O\sinparen{\delta}\,.
                \]
                Assume $\eta\leq \nfrac{1}{2}$, which holds after choosing the hidden constants appropriately.
                Observe that for each $i$, $\nfrac{\abs{H_i\cap U_i}}{\abs{U_i}}\geq 0$ and, hence, Markov's inequality implies that 
                \[
                    \Pr\insquare{\frac{\abs{H_i\cap U_i}}{\abs{U_i}} 
                    \geq \cD(H_\opt)+2\eta} 
                    \leq {\frac{\cD(H_\opt)+\eta}{\cD(H_\opt)+2\eta}}\,.
                \]
                Hence, since each repetition of Subroutines A and B in \cref{alg:constReg} is independent, it follows that 
                \[
                    \Pr\insquare{
                        \min_{1\leq i\leq T}
                    \frac{\abs{H_i\cap U_i}}{\abs{U_i}} \geq \cD(H_\opt)+2\eta 
                    } 
                    \leq \inparen{{\frac{\cD(H_\opt)+\eta}{\cD(H_\opt)+2\eta}}}^T
                    ~~~~~~\Stackrel{\substack{\cD(H_\opt)\leq 1,\\0\leq\eta\leq \nfrac{1}{2}}}{\leq}~~~~~~ 
                        \inparen{1-\frac{\eta}{2}}^T
                    \leq e^{-\frac{1}{2}\eta T}\,.
                \]
                Since $T=O\!\inparen{\!\inparen{\nfrac{1}{\sqrt{\zeta}}}\log\inparen{\nfrac{1}{\delta}}}$ and $\eta=\Omega\sinparen{\zeta^{1/(4q)}}$, the hidden constant in the choice of $T$ can be chosen large enough so that $\eta T\geq 2\log{\nfrac{1}{\delta}}$.
                Hence, with probability $1-\delta$, there exists an index $1\leq j\leq T$ such that
                \[
                    \frac{\abs{H_j\cap U_j}}{\abs{U_j}} \leq \cD(H_\opt)+2\eta\,.
                \]
                Further, with probability $1-\delta$, $\zeta$-uniform convergence holds with respect to each $U_1,U_2,\dots,U_T$ and hypothesis of degree-$k$ PTFs (see \cref{lem:constReg:unifConvergenceHoldsWHP}).
                Therefore, since $\zeta\leq \eta$, with probability $1-2\delta$, there exists an index $1\leq j\leq T$ such that
                \[
                    \cD(H_j)\leq \cD(H_\opt)+2\eta+\zeta\leq \cD(H_\opt)+3\eta\,.
                \]
                Define $\wh{H}=H_{i^\star}$ for $i^\star\in \argmin_{1\leq i\leq T} \abs{H_i\cap U'}$.
                Now, a standard uniform-convergence bound over the finite class $\inbrace{H_1,\dots,H_T}$ implies that with probability $1-\delta$, $\abs{\cD(H_i) - \inparen{\nfrac{\abs{H_i\cap U'}}{\abs{U'}}}}\leq \eta$ (for all $1\leq i\leq T$).\footnote{This uses that $\eta^{-2}=O\!\inparen{\zeta^{-1/(2q)}}\leq O\sinparen{\zeta^{-1}}$ for $q\geq 1$ and $\zeta\in(0,1]$, so the sample size in Step~\ref{step:constReg:freshSamples} is sufficient.}
                Hence, with probability $1-3\delta$,
                \[
                    \cD(\wh{H})
                    \leq \frac{\sabs{\wh{H}\cap U'}}{\abs{U'}} + \eta
                    \leq \frac{\abs{H_j\cap U'}}{\abs{U'}} + \eta
                    \leq \cD(H_j) + 2\eta
                    \leq \cD(H_\opt) + 5\eta\,.
                \]
                By the constant-factor rescaling of $\delta$ at the beginning of the proof, this implies the claim.
            \end{proof}

    \subsection{Framing Program~\eqref{eq:constReg:prog} in \cref{alg:constReg} as a Linear Program}\label{sec:module:linearProgram}
            In this section, we show how to formulate the optimization program in \cref{alg:constReg} as a linear program. 
            Recall that Program~\eqref{eq:constReg:prog} is as follows.
            \[
                \min_{{\rm deg}(p)\leq k}~\frac{\sum_{x\in U} \abs{{p(x)}}}{\abs{U}}
                    \,,\quad 
                    \text{such that}\,,\quad 
                    {\frac{\sum_{x\in P} \min\!\inbrace{p(x), 1}}{\abs{P}} \geq {1-\rho}} \,.
            \]
            Let $\hypo{R}$ be the set of all multisets $R\subseteq [d]$ of size at most $k$.
            For each $z\in \R^d$ and $R\in \hypo{R}$, define the monomial $z(R)\coloneqq \Pi_{i\in R} z_i$.
            For each monomial $z(R)$, let $c(R)$ be the corresponding coefficient in a polynomial $p$.
            The above program is equivalent to the following linear program.
            \begin{align*} 
                \min_{c\in \R^{\hypo{R}},\ z\in \R^{\abs{U}},\ w\in \R^{\abs{P}}}~
                    & {\sum_{x\in U} z(x)} \,,\\
                \forall_{x\in U}\,,\quad& 
                    z(x) ~~\geq~~
                    \sum_{R\in \hypo{R}} c(R) x(R)
                \,,\\
                \forall_{x\in U}\,,\quad& 
                    z(x) ~~\geq~~
                    -\sum_{R\in \hypo{R}} c(R) x(R)
                \,,\\
                \forall_{x\in P}\,,\quad&   
                    w(x)~\leq~~ 1\,,\\
                \forall_{x\in P}\,,\quad& w(x) ~\leq~ ~ 
                    \sum_{R\in \hypo{R}} c(R) x(R)
                \,,\\
                & \sum_{x\in P} w(x) \geq \inparen{1-\rho} \cdot \abs{P}\,.
            \end{align*}

    \subsection{\cref{asmp:smoothness} Is Weaker than Bounding $\chi^2$--Square-Divergence}\label{sec:distCloseness}
        In this section, we show that if $\chidiv{\cDtrue}{\cD}<\infty$, then \cref{asmp:smoothness} holds. %
        \begin{lemma}
            \label{lem:closenessFromChiSquare}
            For $B\geq 0$, if $\chidiv{\cDtrue}{\cD}\leq B$, then \cref{asmp:smoothness} holds with $\sigma=\frac{1}{\sqrt{1+B}}$ and $q=2$.
        \end{lemma}
        \begin{proof}
            Fix measurable set $S$.
            Change of measure and the Cauchy--Schwarz inequality imply that
            \[
                \cDtrue(S) 
                = \int \mathds{1}_S(x) \frac{\d\cDtrue(x)}{\d\cD(x)}\d\cD(x)
                \leq 
                    \inparen{
                        \int \mathds{1}_S(x)^{2}~ \d\cD(x)
                    }^{1/2}
                    \inparen{
                        \int \inparen{\frac{\d\cDtrue(x)}{\d\cD(x)}}^2 \d \cD(x)
                    }^{1/2}\,.
            \]
        Since $\mathds{1}_S(x)^{2}=\mathds{1}_S(x)$ and $
                \chidiv{\cDtrue}{\cD}= 
            {\int \inparen{\nfrac{\d \cDtrue(x)}{\d \cD(x)}}^2 \d \cD(x)} - 1$,
        the above is the same as 
        \[
            \cDtrue(S) \leq \cD(S)^{1/2}\cdot 
                \inparen{\chidiv{\cDtrue}{\cD}+1}^{1/2}
                \leq 
                    \sqrt{1+B}\cdot 
                    \cD(S)^{1/2}
                \,.
        \]
        The result follows since this holds for all measurable sets $S$.
        \end{proof}
        More generally, generalizing the above proof (by using Hölder's inequality instead of the Cauchy--Schwarz inequality) shows that \cref{asmp:smoothness} also follows from bounds on Rényi divergences:
        \begin{lemma}
            \label{lem:closenessFromRenyi}
            \mbox{For $B\geq 0,r > 1$, if $\renyi{r}{\cDtrue}{\cD}\leq B$, then \cref{asmp:smoothness} holds with $\sigma=e^{-B/q}$ and $q=\frac{r}{r-1}$.}
        \end{lemma}
        Note that the first result follows from the second one when $r=2$.
        To see this, recall that $\renyi{2}{\cDtrue}{\cD} = \log\sinparen{1+\chidiv{\cDtrue}{\cD}}$, so $\chidiv{\cDtrue}{\cD}\leq B$ implies $\renyi{2}{\cDtrue}{\cD}\leq\log(1+B)$. 
        Applying \cref{lem:closenessFromRenyi} with $r=2$ (thus, $q=2$) gives $\cDtrue(S) \leq \cD(S)^{1/2}\,e^{\frac{1}{2}\renyi{2}{\cDtrue}{\cD}} \leq \cD(S)^{1/2}\,\sqrt{1+B}$, which is precisely the conclusion of \cref{lem:closenessFromChiSquare}.

    \subsection{Satisfying \cref{eq:mostGeneral:smoothness} via Bounds on KL-Divergence}
        \label{sec:klBoundsSatisfyGeneralSmoothness}
        Next, we show that if $\kl{\cDtrue}{\cD}<\infty$, then \cref{eq:mostGeneral:smoothness}, a weakening of \cref{asmp:smoothness}, holds.

        We restate \cref{eq:mostGeneral:smoothness} here  for convenience:
            there is a known rate function $r\colon \R_{\geq 0}\to \R_{\geq 0}$ satisfying $\lim_{m\to 0^+} r(m)=0$ such that the pair of distributions $\inparen{\cD, \cDtrue}$ satisfy
            \[
                \text{for any measurable set $S$}\,,\quad 
                r(\cDtrue(S))\leq \cD(S)\,.
                \tag{4}
            \] 
        \vspace{-5mm}
            
        \begin{theorem}\label{thm:capturingKL}
        If $\kl{\cDtrue}{\cD}=C$, then \eqref{eq:mostGeneral:smoothness} holds with the rate function
        \[
            r(m)\coloneqq
            \begin{cases}
                0, & m=0,\\[1mm]
                \exp\!\inparen{-\nfrac{3\max\inbrace{1,C}}{m}}, & m>0.
            \end{cases}
        \]
        Moreover, $\lim_{m\to 0^+} r(m)=0$.
    \end{theorem}
        In contrast, $\kl{\cDtrue}{\cD}<\infty$ is insufficient to imply \cref{asmp:smoothness} for any finite $\sigma$ and $q$ (see \cref{rem:KL:insufficient}).
        Hence, the weakening of \cref{asmp:smoothness} in \cref{eq:mostGeneral:smoothness} is necessary to capture $\kl{\cDtrue}{\cD}<\infty$.

        To prove \cref{thm:capturingKL}, we will use the following inequality, which we believe is well-known but since we could not find an explicit reference, we prove it at the end of this section.
        \begin{lemma}\label{lem:kl_transfer}
            Let $\cP$ and $\cQ$ be probability distributions on $\R^d$ with $\kl{\cQ}{\cP} < \infty$. 
            Then, for any measurable set $S$,
            \[
                \cQ(S) \leq 
                \inf_{t>1} \inbrace{t\,\cP(S) + \frac{\kl{\cQ}{\cP}}{\ln t}}\,.
            \]
        \end{lemma} 
        Now we are ready to prove \cref{thm:capturingKL}.
        \begin{proof}[Proof of \cref{thm:capturingKL}]
            Fix any measurable set $S$.
            \begin{itemize}
                \item If $\cD(S)=0$, then $\cDtrue(S)=0$ because $\kl{\cDtrue}{\cD}<\infty$ implies $\cDtrue\ll \cD$.
                Hence the desired inequality holds since $r(0)=0$.
                \item If $\cD(S)=1$, then the desired inequality is trivial because $r(\cDtrue(S))\leq 1=\cD(S)$.
                \item Further, if $\cDtrue(S)=0$, then the desired inequality is immediate since $r(0)=0$.
            \end{itemize}
            Thus, it remains to consider the case $0<\cD(S)<1$ and $\cDtrue(S)>0$.
            By \cref{lem:kl_transfer},
            $\cDtrue(S)\leq\inf_{t>1}\inbrace{t\,\cD(S)+\frac{C}{\ln t}}.$
            Choosing $t=\nfrac{1}{\sqrt{\cD(S)}}$ and using that $\sqrt{z}\leq \frac{1}{\ln(1/z)}$ for all $z\in (0,1)$, gives
            \[
            \cDtrue(S)\leq\sqrt{\cD(S)}+\frac{2C}{\ln(1/\cD(S))}\leq\frac{3\max\!\inbrace{1, C}}{\ln(1/\cD(S))}\,.
            \]
            Therefore, as $\cDtrue(S)>0$,
            \[
                \cD(S)\geq \exp\inparen{-\frac{3 \max\!\inbrace{1, C}}{\cDtrue(S)}}\,.
            \]
            This is exactly \eqref{eq:mostGeneral:smoothness} with the rate function described in \cref{thm:capturingKL}.
        \end{proof}
        \begin{proof}[Proof of \cref{lem:kl_transfer}]
        Since $\kl{\cQ}{\cP}<\infty$, we have $\cQ\ll \cP$.
        Hence the Radon--Nikodym derivative $\nfrac{\d\cQ}{\d\cP}$ exists.
        Fix any $t>1$, and define the set
        \[
        T_t \coloneqq \inbrace{\, x \in \R^d : \frac{\d\cQ}{\d\cP}(x) > t }\,.
        \]
        On $T_t$, we have $\ln\inparen{\sfrac{\d\cQ}{\d\cP}(x)} > \ln t$. Hence,
        \[
        \kl{\cQ}{\cP} = \int \cQ(x)\,\ln{\frac{\d\cQ}{\d\cP}(x)}\,\d x
        \geq \int_{T_t} \cQ(x)\,\ln{\frac{\d\cQ}{\d\cP}(x)}\,\d x
        > (\ln t)\,\cQ(T_t)\,,
        \]
        which gives $\cQ(T_t) \leq \kl{\cQ}{\cP}/\ln t$. Next, split
        $\cQ(S) = \cQ(S \cap T_t) + Q(S \cap T_t^c)\,.$
        On $S \cap T_t^c$, we have $\nfrac{\d\cQ}{\d\cP}(x) \leq t$, so
        \[
        Q(S \cap T_t^c)
        = \int_{S \cap T_t^c} \cQ(x)\,\d x
        \leq \int_{S \cap T_t^c} t\,\cP(x)\,\d x
        = t\,\cP(S \cap T_t^c)
        \leq t\,\cP(S)\,.
        \]
        Therefore,
        \[
            \cQ(S) \leq \cQ(S \cap T_t) + \cQ(S \cap T_t^c)
            \leq \cQ(T_t) + t\,\cP(S)
            \leq 
            \frac{\kl{\cQ}{\cP}}{\ln t} + t\,\cP(S)\,.
        \]
        Since $t>1$ was arbitrary, taking the infimum over $t>1$ completes the proof.
        \end{proof}
 
    \vspace{-7mm}
            \begin{remark}\label{rem:KL:insufficient}
Finite KL-divergence does not imply \cref{asmp:smoothness} for any finite $\sigma$ and $q$.
Indeed, let $\cX=\N$ and define $\cDtrue(n)\propto e^{-n^2}$ and $\cD(n)\propto e^{-n^3}$.
Then $\kl{\cDtrue}{\cD}<\infty$ because
$\sum_{n\geq 1} e^{-n^2}\inparen{n^3-n^2}<\infty.$
However, for any $n\in \N$, 
    \[
        \frac{\cDtrue(\inbrace{n})}{\cD(\inbrace{n})^{1/q}}
        =
        \Theta\!\inparen{e^{-n^2+n^3/q}}\,,
    \]
    which goes to $\infty$ as $n\to \infty$ for any fixed finite $q$.
    Hence, there do not exist finite $\sigma$ and $q$ such that
    $(\cDtrue,\cD)$ satisfy \cref{asmp:smoothness}.
Thus, the weakening in \cref{eq:mostGeneral:smoothness} is genuinely needed in order to capture finite KL-divergence.
\end{remark}
    \vspace{-7mm}

\subsection{Generating List of Distributions From Adversarially Corrupted Sources}\label{sec:advCorruption}
            In this section, we show how one can satisfy the List Decoding model's requirements (\cref{def:list}) in the following types of data sources -- where all but an $\gamma$-fraction of the samples can be \textit{adversarially} corrupted.

            \begin{mdframed}
                \textbf{Example (Adversarially Corrupted Gaussian Source):}\quad
                Suppose that $\cDtrue$ is a Gaussian distribution $\cN(\muStar,\SigmaStar)$.
                Consider a data source that is $(1-\gamma)$-corrupted (for some $\gamma\in (0,1]$):
                    with probability $\gamma$, the source outputs a sample from $\cDtrue$, and, otherwise, outputs an arbitrary sample that may depend on the past samples and the knowledge of the learning algorithm.
            \end{mdframed}
            We show that one can efficiently convert the above $(1-\gamma)$-corrupted source for any $\gamma>0$, into one satisfying the requirements of the List Decoding Model with $\ell= \poly(\nfrac{1}{\gamma})$.

            In the remainder of this section, we will prove a generalization of this result where $\cDtrue$ is any exponential family distribution satisfying mild assumptions.
            The result for Gaussian $\cDtrue$ follows as a special case of this.

            \paragraph{Notation.}
            We begin by recalling the definition of an exponential family. %
            \begin{definition}[\textbf{Exponential Family}]
                Given $m\geq 1$, a function $h\colon \R^d\to \R_{\geq 0}$ called the \emph{carrier measure}, a function $t\colon \R^d\to \R^m$ called the \emph{sufficient statistic}, the exponential family $\cE_{m,h,t}$ is a set of distributions parameterized by a vector $\theta\in \R^m$ where, for each $\theta$, $\cE_{m,h,t}(\theta)\propto h(x) \cdot \exp\!\inparen{\theta^\top t(x)}$.
            \end{definition}
            Henceforth, $m$, $h$, and $t$ will be clear from context and, hence, we drop them from the subscript. %
            The set of $\theta$ for which the density of $\cE(\theta)$ is well defined is called the \textit{natural parameter space} $\overline{\Theta}$.
            For each $\theta\in \overline{\Theta}$, define the \textit{log-partition function} $A(\theta)\coloneqq \log~{\int h(x) \cdot \exp\!\inparen{\theta^\top t(x)}\d x}$.
            For any canonical exponential family, $\overline{\Theta}$ is a convex set and $A$ is a convex function \cite{dasgupta2008asymptotic}.

            \paragraph{Assumptions.}
            Suppose $\cDtrue = \cE(\theta^\star)$ for some unknown parameter $\theta^\star$.
            We will be interested in a subset $\Theta\subseteq \overline{\Theta}$ where the Fisher Information Matrix $\cov_{\cE(\theta)}[t(x)]$ has bounded eigenvalues.
            Concretely, we assume the family $\cE$ satisfies \cref{unknowntrunc:int,unknowntrunc:proj,unknowntrunc:fisher} of \cref{asmp:exponentialFamily}. 
            \begin{remark}[Assumptions Hold Unconditionally for Gaussians]
                Before proceeding, we note when $\cDtrue$ is a Gaussian distribution, then \cref{unknowntrunc:int,unknowntrunc:proj,unknowntrunc:fisher} of \cref{asmp:exponentialFamily} always holds.
                This is due to pre-processing routines by \citet{lee2024unknown} which show how to convert an arbitrary Gaussian distribution with parameter $\theta^\star=(\muStar,\SigmaStar)$ into one satisfying \cref{unknowntrunc:int,unknowntrunc:proj,unknowntrunc:fisher} of \cref{asmp:exponentialFamily}.
            \end{remark}
 
            \noindent In this section, we prove the following result.
            \begin{theorem}\label{thm:listDecoding:expFamily}
                Suppose \cref{unknowntrunc:int,unknowntrunc:proj,unknowntrunc:fisher} of \cref{asmp:exponentialFamily} holds with constants $\eta,\lambda,\Lambda>0$.
                There is an algorithm that, given $\gamma>0$ and $n={\inparen{\nfrac{m}{\gamma}}\log\inparen{\nfrac{1}{\delta}}}$ samples from $\cE(\theta^\star)$ out of which $(1-\gamma)$-fraction are adversarially corrupted, %
                outputs a list of $\nfrac{2}{\gamma}$ parameters $\theta_1,\theta_2,\dots,\theta_k$ such that, with probability at least $1-\delta$,
                for at least one index $i$ ($1\leq i\leq k$), 
                    $\norm{\theta_i-\theta^\star}_2
                        \leq
                        O\!\inparen{
                            \sqrt{\nfrac{\Lambda\log{\nfrac{2}{\gamma}}}{\gamma\lambda^2}}
                        }.$
                    Consequently, if
                    \[
                        w
                        =
                        1+\Omega\!\inparen{
                            \min\!\inbrace{1, \sqrt{\frac{\eta^2\lambda^2\gamma}{\Lambda\log{\nfrac{2}{\gamma}}}}
                            }
                        }
                    \qquadtext{then}
                        \renyi{w}{\cE(\theta^\star)}{\cE(\theta_i)}
                        \leq
                        O\!\inparen{
                            \frac{\Lambda^2}{\lambda^2}
                            \cdot
                            \frac{\log{\nfrac{2}{\gamma}}}{\gamma}
                        }\,.
                    \]
            \end{theorem}
            Now \cref{lem:closenessFromRenyi} implies that the above closeness in Rényi divergence implies \cref{asmp:smoothness} with $q=\nfrac{w}{(w-1)}=\wt{O}\!\inparen{\max\!\inbrace{1,\sqrt{\nfrac{\Lambda}{(\eta^2\lambda^2\gamma)}}}}$ and $\sigma=\exp(-\wt{O}\!\inparen{\nfrac{\Lambda^2}{(\lambda^2 q \gamma)}})$.
            \begin{proof}[Proof of \cref{thm:listDecoding:expFamily}]
                The proof of this result uses techniques from \cite{charikar2017learning}.

                Fix $\beta= \nfrac{1}{2}$. %
                Let the (corrupted) samples be $X=\inbrace{x_1,x_2,\dots,x_n}$ and let $x_1,\dots,x_{r}$ be the set of \textit{uncorrupted} samples from $X$.
                For each $i$, define $f_i\colon \R^m\to \R$ to be the negative log-likelihood function, \ie{}, 
                $f_i(\theta) \coloneqq -\log\inparen{\cE(x_i; \theta)}$.
                Let $\overline{f}\colon \R^m \to \R$ be the population log-likelihood, \ie{}, $\overline{f}(\theta)=\Ex_{z\sim \cE(\theta^\star)}\insquare{-\log\inparen{\cE(z;\theta)}}$.
                Standard properties of the exponential density (see, \eg{}, \citet{dasgupta2008asymptotic}) imply that:
                for all $i$ and $\theta$
                \begin{align*}
                    \nabla f_i(\theta) &= \nabla A(\theta) - t(x_i)
                    ~~\quad\quadand~~
                    \nabla^2 f_i(\theta) = \nabla^2 A(\theta)\,,\\
                    \nabla \overline{f}(\theta) &= \nabla A(\theta) - \nabla A(\theta^\star)
                    \quadand~~
                    \nabla^2 \overline{f}(\theta) = \nabla^2 A(\theta)\,.
                \end{align*}
                Where $A\colon \R^m \to \R$ is the log-partition function of the exponential family and $t\colon \R^d\to \R^m$ is the sufficient statistic of the exponential family.
                
                By \cref{unknowntrunc:fisher} of \cref{asmp:exponentialFamily}, $A$ is $\lambda$-strongly convex and $\Lambda$-smooth over $\Theta$.
                Hence each $f_i$ and $\overline{f}$ is $\lambda$-strongly convex over $\Theta$, so \cite[Theorem 6.1]{charikar2017learning} applies once we verify the required spectral bound.
                After reindexing, assume that $x_1,\dots,x_r$ are the uncorrupted samples, so $r\geq \gamma n$.
                For each $\theta\in \Theta$ and random $X\sim \cE(\theta^\star)$, define
                \[
                    g_\theta(X)
                    \coloneqq
                    \nabla f_X(\theta)-\nabla \overline{f}(\theta)
                    =
                    \nabla A(\theta^\star)-t(X)\,.
                \]
                Then
                \[
                    \Ex_{X\sim \cE(\theta^\star)}\insquare{g_\theta(X)}=0
                    \qquad\text{and}\qquad
                    \cov_{X\sim \cE(\theta^\star)}\insquare{g_\theta(X)}
                    =
                    \cov_{X\sim \cE(\theta^\star)}\insquare{t(X)}
                    \preceq
                    \Lambda I\,,
                \]
                where the last inequality follows from \cref{unknowntrunc:fisher} of \cref{asmp:exponentialFamily}.
                Now, we can establish a bound on $\min_{1\leq i\leq k}\norm{\theta_i-\theta^\star}_2$ using a straightforward adaptation of the proof of \cite[Corollary 9.1]{charikar2017learning}, and then show that this implies a bound on Rényi divergence.

                \itparagraph{Bound on the $\cR_w$ divergence.}
                    Applying \cite[Proposition B.1]{charikar2017learning} to the \iid{} uncorrupted samples $x_1,\dots,x_r$, together with the above bounds on the mean and covariance of $g_\theta(X)$, implies that with probability at least $1-e^{-\Omega(\beta^2\gamma n)}$, there exists a subset $G\subseteq [n]$ of $(1-(\nfrac{\beta}{2}))\gamma n$  good points with spectral bounds of $S=O(\sqrt{\nfrac{\Lambda}{\beta}})$ and (by \cite[Proposition 5.6]{charikar2017learning}) $S_{\sfrac{\beta}{2}}=O(\nfrac{\sqrt{\Lambda}}{\beta})$.
                    Applying \cite[Theorem 6.1]{charikar2017learning} now yields parameters $\theta_1,\theta_2,\dots,\theta_k$ with $k\leq \inparen{(1-\beta)\gamma}^{-1}$
                    such that
                    \[
                        \min_{1\leq i\leq k}\norm{\theta_i-\theta^\star}_2
                        \leq
                        O\!\inparen{
                            \frac{\sqrt{\Lambda}}{\beta\lambda}
                            \sqrt{\frac{\log{\nfrac{2}{\gamma}}}{\gamma}}
                        }.
                        \yesnum\label{eq:application:listDecoding:paramDistance}
                    \]
                    Since we fixed $\beta=\nfrac12$, this gives $k\leq \nfrac{2}{\gamma}$ and
                    $\min_{1\leq i\leq k}\norm{\theta_i-\theta^\star}_2
                        \leq
                        O\!\inparen{
                            \sqrt{\frac{\Lambda\log{\nfrac{2}{\gamma}}}{\lambda^2\gamma}}
                        }.$

                \medskip 
                
                \noindent Next, project each $\theta_i$ to $\Theta(\eta)$.
With some abuse of notation, denote the projected points again by $\theta_1,\dots,\theta_k$.
Since $\theta^\star\in \Theta(\eta)$, projection can only decrease the distance to $\theta^\star$.
                Define 
                \[
                    R \coloneqq O\!\inparen{
                        \frac{\sqrt{\Lambda}}{\lambda}
                        \sqrt{\frac{\log{\nfrac{2}{\gamma}}}{\gamma}}
                    }\,.
                \]
                We now convert this parameter-distance bound into a Rényi-divergence bound.
                    \begin{fact}\label{fact:app}
                        For any $\theta_1,\theta_2\in \Theta(\eta)$ that satisfy $\norm{\theta_1-\theta_2}_2^2\leq R$ and let $w>1$ satisfy $(w-1)R\leq \eta$, it holds that 
                        $\Ex_{x\sim \cE(\theta_2)}\insquare{
                            \inparen{
                                \nfrac{
                                    \cE(x; \theta_1)
                                }{
                                    \cE(x; \theta_2)
                                }
                            }^w
                            }
                            \leq \exp\!\inparen{
                                \nfrac{\Lambda w(w-1)R^2}{2} 
                            }.$
                        Consequently, $\renyi{w}{\cE(\theta_1)}{\cE(\theta_2)}
                            \leq
                            \nfrac{\Lambda w R^2}{2}.$
                    \end{fact}
                    \begin{proof}[Proof of \cref{fact:app}]
                        Observe that 
                        \begin{align*}
                            \Ex_{x\sim \cE(\theta_2)}\insquare{
                                \inparen{
                                    \frac{
                                        \cE(x; \theta_1)
                                    }{
                                        \cE(x; \theta_2)
                                    }
                                }^w
                            }
                            &=
                            \int  e^{w(\theta_1-\theta_2)^\top t(x) + (w-1)A(\theta_2)-wA(\theta_1)} \cE(x; \theta_2)\d x\\
                            &=
                             e^{
                                    (w-1)A(\theta_2)+ A\inparen{\theta_1+(w-1)(\theta_1 - \theta_2)} - wA(\theta_1) 
                                }
                             \,.
                        \end{align*}
                        Since $\theta_1$ is in the $\eta$-relative interior of $\Theta$ and $\theta_1-\theta_2$ lies in the affine subspace of $\Theta$ with $\norm{\theta_1-\theta_2}\leq R$, for the chosen $w$, $\theta_1+(w-1)(\theta_1 - \theta_2)\in \Theta$.
                        Since $A$ is a $\Lambda$-smooth function over $\Theta$, 
                        \begin{align*}
                            (w-1)A(\theta_2)+ A\inparen{\theta_1+(w-1)(\theta_1 - \theta_2)} - wA(\theta_1) 
                            &\leq 
                            \frac{\Lambda }{2}(w-1)\norm{\theta_1-\theta_2}_2^2
                            +
                            \frac{\Lambda }{2}(w-1)^2\norm{\theta_1-\theta_2}_2^2\\
                            &\leq \frac{\Lambda w(w-1)}{2} \norm{\theta_1-\theta_2}_2^2\,.
                        \end{align*}
                        Since $\norm{\theta_1-\theta_2}_2\leq R$, the first claim follows, and the bound on $\renyi{w}{\cE(\theta_1)}{\cE(\theta_2)}$ follows taking the logarithm and by dividing by $w-1$.
                    \end{proof}
Let $i^\star$ be an index attaining the bound in \cref{eq:application:listDecoding:paramDistance}.
Choose
$w
    \coloneqq
    1+c\cdot \min\inbrace{1,\nfrac{\eta}{R}}$
for a sufficiently small absolute constant $c>0$.
Then $(w-1)\norm{\theta_{i^\star}-\theta^\star}_2\leq \eta,$
so \cref{fact:app} applies with $\theta_1=\theta^\star$ and $\theta_2=\theta_{i^\star}$.
Therefore,
\[
    \renyi{w}{\cE(\theta^\star)}{\cE(\theta_{i^\star})}
    \leq
    \frac{\Lambda}{2}wR^2
    =
    O\!\inparen{
        \Lambda R^2
    }
    =
    O\!\inparen{
        \frac{\Lambda^2}{\lambda^2}
        \cdot
        \frac{\log{\nfrac{2}{\gamma}}}{\gamma}
    }.
\]
Finally, since $\beta=\nfrac12$, the preceding high-probability event holds with probability at least $1-e^{-\Omega(\gamma n)},$
which is at least $1-\delta$ for $n=O\!\inparen{(\nfrac{m}{\gamma})\,\log\nfrac{1}{\delta}}.$
This completes the proof.\qedhere{}
 
            \end{proof}

\section{Impossibility Result for Positive-Only Learning under \cref{asmp:posFraction}}
    \label{sec:impossibility}

The works of \cite{shvaytser1990positiveonly,kivinen1995one,graus1989lower} characterize  positive-only learning (in the usual worst-case sense)
Their proofs can also be straightforwardly extended to show that the same characterization for positive-only learning holds even when the mass of positive samples is lower bounded (as guaranteed by \cref{asmp:posFraction}).
For completeness, here we present an alternate proof of the impossibility that continues to hold under \cref{asmp:posFraction}.

Due to the results of \cite{shvaytser1990positiveonly,kivinen1995one} one can also show that this impossibility result is tight: if a class $\hyH$ has $\lim_{k\to \infty} \vc{}(\hyH_{\cap k})<\infty$, then it can be learned from positive-only samples.
A combinatorial characterization of classes with $\lim_{k\to \infty} \vc{}(\hyH_{\cap k})<\infty$ is provided by both the splitting dimension in \cite{kivinen1995one} and also the very closely related one-star dimension in \cite{hanneke2024star}.

\begin{restatable}[Impossibility of Learning from Positive Samples]{theorem}{ImpossibilityPositiveSampleLearning}
    \label{thm:impossibilityPositiveSamples}
    Let $\hyH$ be any hypothesis class satisfying $\lim_{k\to\infty} \vc{}(\hyH_{\cap k})=\infty$ where $\hyH_{\cap k}$ is the hypothesis class formed by taking intersections of $k$ hypothesis from $\hyH$, \ie{}, 
    $\hyH_{\cap k}\coloneqq \inbrace{H_1\cap H_2\cap \dots\cap H_k\colon H_1,H_2,\dots,H_k\in \hyH}.$
    
    For any (randomized) algorithm and any finite sample size $n$, there is $h^\star\in \hyH$ and distribution $\cDtrue$ satisfying \cref{asmp:posFraction} with $\alpha=\nfrac{1}{2}$ such that, given $n$ independent samples from $\cPtrue$ (the truncation of $\cDtrue$ to the set of positive samples), with probability at least $\nfrac{1}{2}$ over the samples and the internal randomness of the algorithm, the algorithm outputs a hypothesis $\wh{h}$ satisfying
    $\Pr\nolimits_{\cDtrue}\binparen{\wh{h}(x)\neq h^\star(x)}\geq \Omega(1).$
\end{restatable}
\vspace{-7mm}
\begin{proof}[Proof of \cref{thm:impossibilityPositiveSamples}]
        Consider any $k\geq 1$.
        We reduce the problem of realizable $(\eps,\delta)$-PAC learning {$\hyH_{\cap k}$} from (labeled) samples $(x,y)$ where $x\sim \cDtrue$ to $(\eps,\delta)$-PAC learning {$\hyH$} from \textit{only} positive examples under the assumptions in the theorem.
        
        This reduction will work when the optimal hypothesis $\optset$ has sufficient mass in given PAC learning instance, in particular, $\cDtrue(\optset)\geq \nfrac{1}{2}$.
        The desired result can be deduced from this reduction due to the following sample complexity lower bound for learning intersections of $k$ halfspaces by taking the limit $k\to\infty$.
        \begin{theorem}[\protect{\cite[Theorem 5.3]{neuralNetwork2009bartlett}}]
                \label{thm:PAClearningLowerbound}
                The sample complexity of $(\eps,\delta)$-PAC learning {$\hyH_{\cap k}$} under the assumption that $\cD(\optset)\geq \nfrac{1}{2}$ is $S_{\eps,\delta}(k)\coloneqq \wt{\Omega}\inparen{(\nfrac{1}{\eps})(\vc{}(\hyH_{\cap k})+\log{\nfrac{1}{\delta}})}$ for $\eps<\nfrac{1}{8}$ and $\delta < \nfrac{1}{100}$.
                {In particular, if $\lim_{k\to \infty} \vc{}(\hyH_{\cap k})=\infty$, then $\lim_{k\to \infty} S_{\eps,\delta}(k)=\infty$.}
        \end{theorem}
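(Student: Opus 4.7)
The plan is to prove the bound by decomposing it into its two additive terms, $\widetilde{\Omega}((1/\eps)\log(1/\delta))$ and $\widetilde{\Omega}((1/\eps)\vc{(\hyH_{\cap k})})$, then taking the max. Both are instances of the standard realizable-PAC sample complexity lower bound (as in Ehrenfeucht--Haussler--Kearns--Valiant, and \cite{neuralNetwork2009bartlett}), and my main adaptation will be to keep the extra class-imbalance constraint $\cDtrue(\optset)\geq \nfrac{1}{2}$ satisfied throughout.

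For the $\widetilde{\Omega}((1/\eps)\log(1/\delta))$ term, I would use a two-point construction. Since $\vc{(\hyH_{\cap 1})}\geq 1$, there exist a point $y$ that is always labeled positive by two hypotheses $H_0,H_1\in\hyH_{\cap k}$ (e.g., any hypothesis in $\hyH$ together with itself intersected $k$ times for $H_0$, and a shattering witness for $H_1$) and a point $z$ on which they disagree. Define $\cDtrue=\cD$ by placing mass $1-2\eps$ on $y$ and $2\eps$ on $z$; then $\cDtrue(\optset_{H_b})\geq 1-2\eps\geq \nfrac{1}{2}$ for $\eps<\nfrac{1}{8}$. The two PIU instances $(H_0,\cDtrue,\cD)$ and $(H_1,\cDtrue,\cD)$ are indistinguishable to any learner that never sees $z$; the probability of this event after $n$ i.i.d.\ draws is $(1-2\eps)^n\geq e^{-4\eps n}$, and a Le Cam / coupling argument shows the learner must err with probability $\geq 2\delta$ on one of the two instances unless $n=\Omega((1/\eps)\log(1/\delta))$.

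For the $\widetilde{\Omega}(\vc{(\hyH_{\cap k})}/\eps)$ term, let $d=\vc{(\hyH_{\cap k})}$ and let $\{x_0,x_1,\dots,x_{d-1}\}$ be shattered by $\hyH_{\cap k}$. Restrict attention to the $2^{d-1}$ labelings that assign $+1$ to $x_0$; by shattering, each is realized by some $h^\star_b\in\hyH_{\cap k}$. Choose $\eta=\Theta(\eps/d)$ and place mass $1-(d-1)\eta$ on $x_0$ and mass $\eta$ on each $x_i$ for $i\geq 1$; one verifies $\cDtrue(\optset_{h^\star_b})\geq 1-(d-1)\eta\geq \nfrac{1}{2}$. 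Draw $h^\star_b$ by choosing $b\in\{0,1\}^{d-1}$ uniformly at random. With $n$ samples, a standard coupon-collector / entropy argument (or the Ben-David/Ehrenfeucht--Haussler--Kearns--Valiant technique) shows that if $n\leq c\cdot d/\eps$ for some small $c$, then in expectation a constant fraction of the hard points $x_i$ is unseen; on these unseen points the posterior over $b_i$ is uniform, so any learner incurs expected error $\Omega(\eta)$ per unseen point, totaling $\Omega(\eta\cdot d)=\Omega(\eps)$. A Markov/Yao-style averaging converts this into the existence of a concrete $b$ and a constant failure probability. Combining the two bounds yields the claimed $\widetilde{\Omega}((1/\eps)(\vc{(\hyH_{\cap k})}+\log{\nfrac{1}{\delta}}))$.

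The main obstacle is bookkeeping around the constraint $\cDtrue(\optset)\geq \nfrac{1}{2}$: one cannot freely distribute all the mass among the hard shattered points as in the textbook argument, because then the positive set might have mass below $\nfrac{1}{2}$. My fix is to always designate one shattered point (the anchor $x_0$) to carry mass $\geq \nfrac{1}{2}$ and be labeled $+1$ by every target in the hard family, which is legitimate since shattering of $d$ points by $\hyH_{\cap k}$ implies that restricting to labelings with $b_0=1$ still realizes every labeling on the remaining $d-1$ points. A second (minor) subtlety is that the statement is about the class $\hyH_{\cap k}$ itself, not the ambient class $\hyH$, so the shattered-set construction must live inside $\hyH_{\cap k}$; this is immediate from the definition $d=\vc{(\hyH_{\cap k})}$.
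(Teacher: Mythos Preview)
Your proposal is correct and takes essentially the same approach as the paper: the paper's proof is a one-line sketch that says to verify the hard instance from \cite[Theorem~5.3]{neuralNetwork2009bartlett} satisfies $\cDtrue(\optset)\geq\nfrac{1}{2}$, and you have carried out exactly that verification in detail, reconstructing the standard Ehrenfeucht--Haussler--Kearns--Valiant construction and checking that the anchor point carrying mass $1-\Theta(\eps)$ can always be labeled positive.
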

        \vspace{-5mm}
        \begin{proof}[Proof sketch of \cref{thm:PAClearningLowerbound}]
                The proof follows by verifying that the hard instance created in Theorem 5.3 of \cite{neuralNetwork2009bartlett} satisfies $\cD(\optset)\geq \nfrac{1}{2}$ and that {$\vc{}(\hyH_{\cap k})$ goes to $\infty$ with $k$ by definition}. 
            \end{proof}

        \noindent In the remainder of the proof, we describe the reduction and prove its correctness.

        \paragraph{Reduction.}
        Fix an instance of PAC-learning {$\hyH_{\cap k}$}, covariate distribution $\cDtrue$, and the optimal set $\optset$ satisfying $\cDtrue(\optset)\geq \nfrac{1}{2}$.
        The reduction is as follows.
        \begin{mdframed}
            \begin{enumerate}[itemsep=-1pt]
                \item[]  \hspace{-6mm} \textbf{Input:} 
                    An accuracy parameter $\eps>0$, a confidence parameter $\delta>0$, and
                    $n$ labeled samples $\inbrace{\inparen{x_i,y_i}\colon 1\leq i\leq n}$ where $x_1,x_2,\dots,x_n$ are \iid{} from $\cDtrue$ and $y_i=\mathds{1}\sinbrace{x_i\in \optset}$
                \item[]  \hspace{-6mm}  \textbf{Oracle Access:}
                    Query access to a PAC learner $\cL$ that learns from only positive samples 
                \vspace{0mm}
                \item Let the set of positive examples be $P=\inbrace{x_i \colon i\in \inbrace{1, 2, \dots,n},~~~ y_i=1}$.
                \item Return the hypothesis $\wh{H}$ obtained by querying the learner $\cL$ with $\inparen{\eps,\nfrac{\delta}{k},P}$.
            \end{enumerate}
        \end{mdframed} 
        Since $\cDtrue(\optset)\geq \nfrac{1}{2}$, we get that with high probability $\abs{P}\geq \Omega(n)$ and, hence, we only lose a constant factor of samples in the reduction.

        \paragraph{Soundness and Correctness.}
        Let $\wh{H}$ be the hypothesis output by the learner $\cL$ when queried with $\inparen{\eps,\delta/k,P}$.
        We will apply the PAC guarantee of $\cL$ to at most $k$ auxiliary realizable positive-only learning instances and then take a union bound.
        This will imply that, with probability at least $1-\delta$ over the randomness of $P$ and $\cL$,
        $\cDtrue\sinparen{\wh{H}\triangle \optset}\leq 2\eps\,,$
        proving the correctness of the reduction.
        Let the optimal hypothesis be $\optset=H^\star_1\cap H^\star_2\cap \dots \cap H^\star_k$ where $H^\star_1,H^\star_2,\dots,H^\star_k \in \hyH$.
        Without loss of generality, suppose that there is no set $S\subsetneq[k]$ with $\optset\neq \bigcap_{i\in S} H^\star_i$.
        Otherwise, we can select the smallest such $S$, and continue with the remainder of the proof with $k=\abs{S}$ and the halfspaces $\inbrace{H^\star_i \colon i\in S}$.
        Define the following sets, which are useful in the analysis:
        \[
            T_1 = \cX \setminus H^\star_1\,,\quad 
            T_2 = H^\star_1 \setminus H^\star_2\,,\quad 
            T_3 = \sinparen{H^\star_1\cap H^\star_2} \setminus H^\star_3\,,\quad \dots\quad 
            T_k = {\bigcap\nolimits_{i=1}^{k-1}H^\star_i} \setminus H^\star_k
            \,.
        \] 
        We will only need the following properties of $T_1,T_2,\dots,T_k$: for distinct $1\leq i,j\leq k$
        \[
            T_i \neq \emptyset\,,\quad 
            T_i \cap T_j = \emptyset\,,\quad 
            T_i \cap \optset = \emptyset\,,
            \quad T_i \subseteq \cX\setminus H^\star_i\,,
            \quadand 
            \cX=\optset \sqcup T_1\sqcup T_2\sqcup \dots \sqcup T_k\,.
            \yesnum\label{eq:onesided:propertiesofT}
        \]
        The first condition follows from the assumption that there is no set $S\subsetneq [k]$ with $\optset=\bigcap_{i\in S}H^\star_i$.
        The remaining conditions follow from construction.
        Let $I\coloneqq \inbrace{i\in [k]\colon \cDtrue(T_i)>0}.$
        For each $i\in I$, define the auxiliary distribution
        \[
            \cD'_i = \frac{1}{2}\cDtrue_{\optset} + \frac{1}{2}\cDtrue_{T_i}\,,
        \]
        where, for each set $S$ and distribution $\cQ$, $\cQ_S$ is the truncation of $\cQ$ to $S$.
        First, we verify that each $(\cD'_i,H_i^\star)$ is a realizable positive-only learning instance for \mbox{$\hyH$ with $\alpha=\nfrac{1}{2}$ and positive distribution $\cP$.}
        \begin{fact}
            For each $i\in I$, $\cD'_i(H_i^\star)=\nfrac{1}{2}$ and $(\cD'_i)_{H_i^\star}=\cP.$
        \end{fact}
        \begin{proof}
            Fix any $i\in I$.
            Since $\optset\subseteq H_i^\star$ and $T_i\subseteq \cX\setminus H_i^\star$ by \eqref{eq:onesided:propertiesofT},
            $\cD'_i(H_i^\star)
                =
                \frac{1}{2}\cDtrue_{\optset}(H_i^\star)
                +
                \frac{1}{2}\cDtrue_{T_i}(H_i^\star)
                =
                \nfrac{1}{2}.$
            Moreover, conditioning $\cD'_i$ on $H_i^\star$ removes the $T_i$ component and leaves exactly $\cDtrue_{\optset}=\cP$.
        \end{proof}
        By the PAC guarantee of $\cL$ and the choice of confidence parameter $\nfrac{\delta}{k}$, for each fixed $i\in I$,
        \[
            \Pr\inparen{
                \cD'_i(\wh{H}\triangle H_i^\star)\leq \eps
            }
            \geq 1-\nfrac{\delta}{k}.
        \]
        Since $\abs{I}\leq k$, with probability at least $1-\delta$, the following holds simultaneously for every $i\in I$:
        \[
            \cD'_i(\wh{H}\triangle H_i^\star)\leq \eps.
        \]
        Condition on this event for the remainder of the proof.
        For each $i\in I$, the $\supp(\cD'_i)\subseteq \optset\cup T_i$.
        Moreover, $\optset\subseteq H_i^\star$ and $T_i\subseteq \cX\setminus H_i^\star$, so $H_i^\star$ and $\optset$ agree on the support of $\cD'_i$.
        Hence
        \[
            \cD'_i(\wh{H}\triangle \optset)
            =
            \cD'_i(\wh{H}\triangle H_i^\star)
            \leq \eps.
        \]
        Expanding the definition of $\cD'_i$ gives, for each $i\in I$,
        $\frac{1}{2}\cDtrue_{\optset}(\wh{H}\triangle \optset)
            +
            \frac{1}{2}\cDtrue_{T_i}(\wh{H}\triangle \optset)
            \leq \eps.$
        Hence,
        \[
            \cDtrue_{\optset}(\wh{H}\triangle \optset)\leq 2\eps
            \qquadand
            \forall_{i\in I},\quad
            \cDtrue_{T_i}(\wh{H}\triangle \optset)\leq 2\eps.
            \yesnum\label{eq:onesided:errorGuarantee}
        \]
        Now, we are ready to bound $\cDtrue(\wh{H}\triangle \optset)\leq O(\eps)$:
        \begin{align*}
            \cDtrue(\wh{H}\triangle \optset)
            &= \cDtrue(\optset)\,\cDtrue_{\optset}(\wh{H}\triangle \optset)
            + \sum\nolimits_{i\in I} \cDtrue(T_i)\,\cDtrue_{T_i}(\wh{H}\triangle \optset)
                \tag{since $\cX=\optset\sqcup T_1\sqcup T_2\sqcup \dots \sqcup T_k$ by \eqref{eq:onesided:propertiesofT}, and $\cDtrue(T_i)=0$ for $i\notin I$}\\
            &\leq 2\eps\,\cDtrue(\optset)
            + 2\eps \sum\nolimits_{i\in I} \cDtrue(T_i)
                \tag{by \eqref{eq:onesided:errorGuarantee}}\\
            &= 2\eps\,.\qedhere{}
        \end{align*} 
    \end{proof}  
\end{document}